\theoremstyle{definition}
\title{Bridging the gap between safety and real-time performance in receding-horizon trajectory design for mobile robots}
\author{Shreyas Kousik\thanks{These authors contributed equally to this work},~Sean Vaskov\footnotemark[1],~Fan Bu,\\ Matthew Johnson-Roberson,~Ram Vasudevan
\footnote{S. Kousik, S. Vaskov, F. Bu, and R. Vasudevan are with University of Michigan Department of Mechanical Engineering.
M. Johnson-Roberson is with the University of Michigan Department of Naval Architecture and Marine Engineering.
This work is supported by the Ford Motor Company via the Ford-UM Alliance under award N022977; the National Science Foundation under Contract CNS-1239037; and by the Office of Naval Research under Award Number N00014-18-1-2575.
Corresponding author: Shreyas Kousik (\texttt{skousik@umich.edu}).}}
\newtheorem{defn}{Definition}
\newtheorem{rem}[defn]{Remark}
\newtheorem{lem}[defn]{Lemma}
\newtheorem{assum}[defn]{Assumption}
\newtheorem{ex}[defn]{Example}
\newtheorem{thm}[defn]{Theorem}
\newenvironment{customthm}[1]
  {\innercustomthm}
  {\endinnercustomthm}
\newenvironment{customlem}[1]
  {\innercustomlem}
  {\endinnercustomlem}
\providecommand{\R}{\ensuremath \mathbb{R}}
\providecommand{\N}{\ensuremath \mathbb{N}}
\providecommand{\X}{\ensuremath \mathcal{X}}
\renewcommand{\P}{\ensuremath \mathcal{P}}
\newcommand{\Lf}{\mathcal{L}_f}
\newcommand{\Lg}{\mathcal{L}_g}
\newcommand{\norm}[1]{\left\Vert#1\right\Vert}
\newcommand{\defemph}[1]{\emph{#1}}
\newcommand{\vep}{\varepsilon}
\newcommand{\inv}{^{-1}}
\newcommand{\bd}[1]{\partial #1}
\newcommand{\ts}[1]{\textsuperscript{#1}}
\newcommand{\z}{\zeta}
\newcommand{\kp}{\kappa}
\newcommand{\regtext}[1]{\mathrm{\textnormal{#1}}}
\newcommand{\hi}{_\regtext{hi}}
\newcommand{\hio}{_{\regtext{hi},0}}
\newcommand{\hii}{_{\regtext{hi},i}}
\providecommand{\frs}{_\mathrm{FRS}}
\providecommand{\obs}{_\mathrm{obs}}
\providecommand{\pln}{_\mathrm{plan}}
\providecommand{\stp}{_\mathrm{stop}}
\providecommand{\safe}{_\mathrm{safe}}
\newcommand{\plan}{_\regtext{plan}}
\newcommand{\sense}{_\regtext{sense}}
\newcommand{\brk}{_\regtext{brake}}
\newcommand{\brkmax}{_\regtext{brake,max}}
\newcommand{\move}{_\regtext{move}}
\newcommand{\des}{_\regtext{des}}
\newcommand{\mc}[1]{\mathcal{#1}}
\providecommand{\obsi}{_{\regtext{obs},i}}
\providecommand{\obsj}{_{\regtext{obs},j}}
\newcommand{\tfin}{T}
\newcommand{\rbar}{{\overline{r}}}
\newcommand{\bbar}{{\overline{b}}}
\newcommand{\idx}{\regtext{proj}_X}
\newcommand{\idv}{\regtext{proj}_V}
\newcommand{\vmax}{{v_\mathrm{max}}}
\newcommand{\irbar}{I_{\rbar}}
\newcommand{\pirbar}{P_{\!I_{\rbar}}}
\newcommand{\dels}{{\delta_\pm}}
\newcommand{\delh}{{\delta_x}}
\newcommand{\delv}{{\delta_y}}
\newcommand{\al}{\alpha}
\newcommand{\gm}{\gamma}
\newcommand{\ta}{\theta}
\newcommand{\proj}{\mathrm{proj}}
\newcommand{\SE}{\mathsf{SE}}
\begin{document}

% \keywords{autonomous robots, reachability analysis, guaranteed safety, real-time nonlinear model control}

\maketitle

\begin{abstract}
To operate with limited sensor horizons in unpredictable environments, autonomous robots use a receding-horizon strategy to plan trajectories, wherein they execute a short plan while creating the next plan.
However, creating safe, dynamically-feasible trajectories in real time is challenging; and, planners must ensure persistent feasibility, meaning a new trajectory is always available before the previous one has finished executing.
Existing approaches make a tradeoff between model complexity and planning speed, which can require sacrificing guarantees of safety and dynamic feasibility.
This work presents the Reachability-based Trajectory Design (RTD) method for trajectory planning.
RTD begins with an offline Forward Reachable Set (FRS) computation of a robot's motion when tracking parameterized trajectories; the FRS provably bounds tracking error.
At runtime, the FRS is used to map obstacles to parameterized trajectories, allowing RTD to select a safe trajectory at every planning iteration.
RTD prescribes an obstacle representation to ensure that obstacle constraints can be created and evaluated in real time while maintaining safety.
Persistent feasibility is achieved by prescribing a minimum sensor horizon and a minimum duration for the planned trajectories.
A system decomposition approach is used to improve the tractability of computing the FRS, allowing RTD to create more complex plans at runtime.
RTD is compared in simulation with Rapidly-Exploring Random Trees and Nonlinear Model-Predictive Control.
RTD is also demonstrated in randomly-crafted environments on two hardware platforms: a differential-drive Segway, and a car-like Rover.
The proposed method is safe and persistently feasible across thousands of simulations and dozens of real-world hardware demos.
\end{abstract}
\section{Introduction}\label{sec:intro}

\begin{figure*}%[ht]
\centering
    \begin{subfigure}[t]{0.95\columnwidth}%{0.45\textwidth}
        \centering
        \includegraphics[width=0.95\columnwidth]{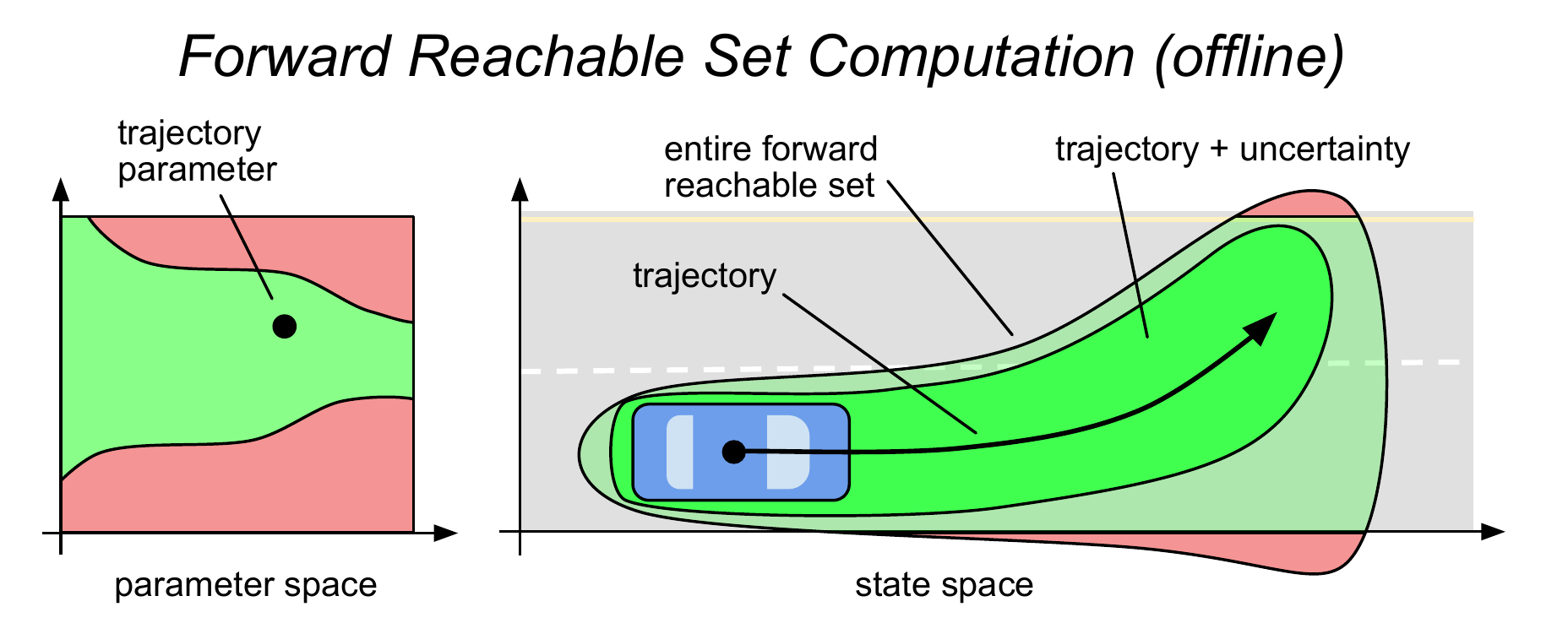}
        \caption{\centering}
        \label{subfig:intro_FRS_overview}
    \end{subfigure}
    %\hspace{1cm}
    \begin{subfigure}[t]{0.95\columnwidth}%{0.45\textwidth}
        \centering
        \includegraphics[width=0.95\columnwidth]{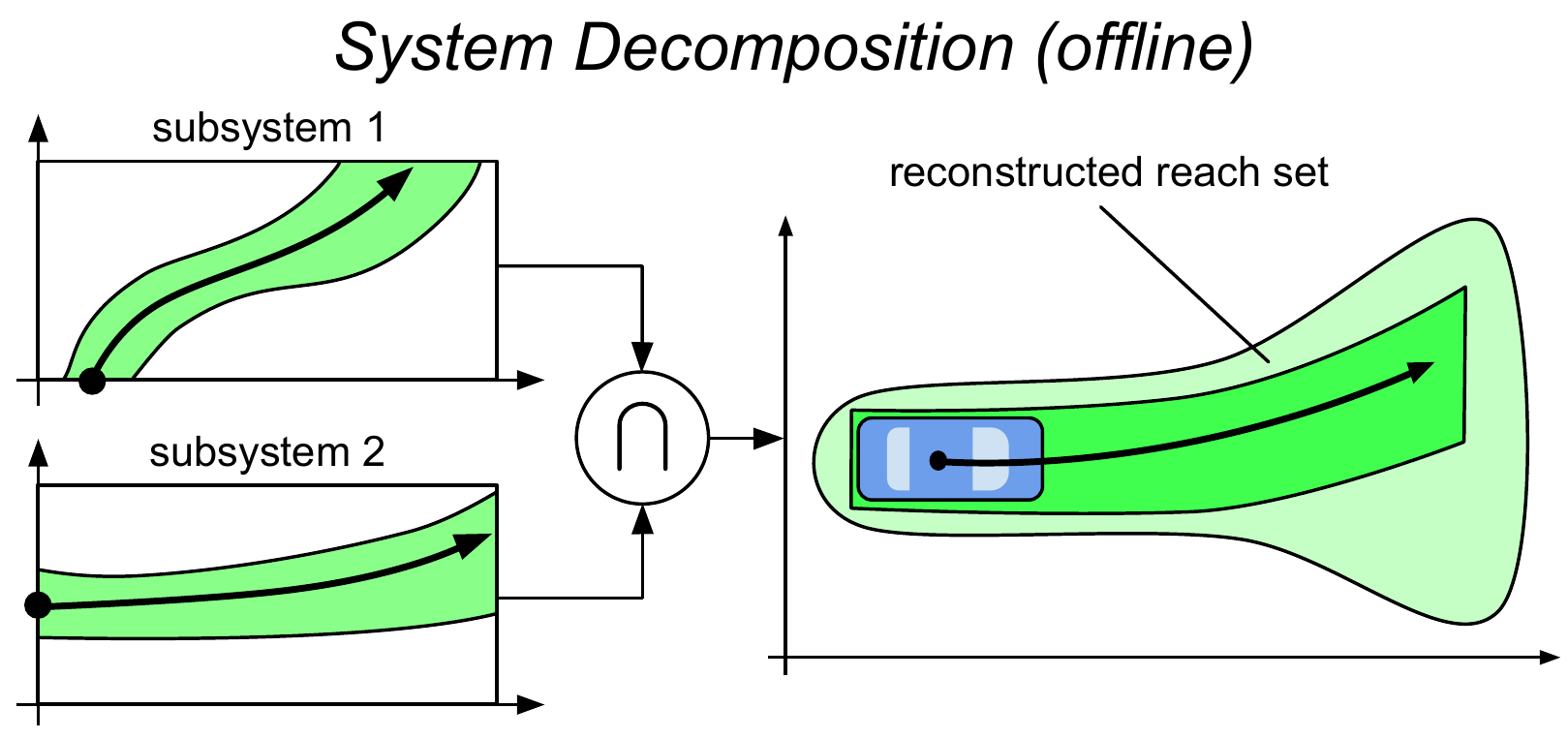}
        \caption{\centering}
        \label{subfig:intro_system_decomp}
    \end{subfigure}
    \begin{subfigure}[t]{0.95\columnwidth}%{0.45\textwidth}
        \centering
        \includegraphics[width=0.95\columnwidth]{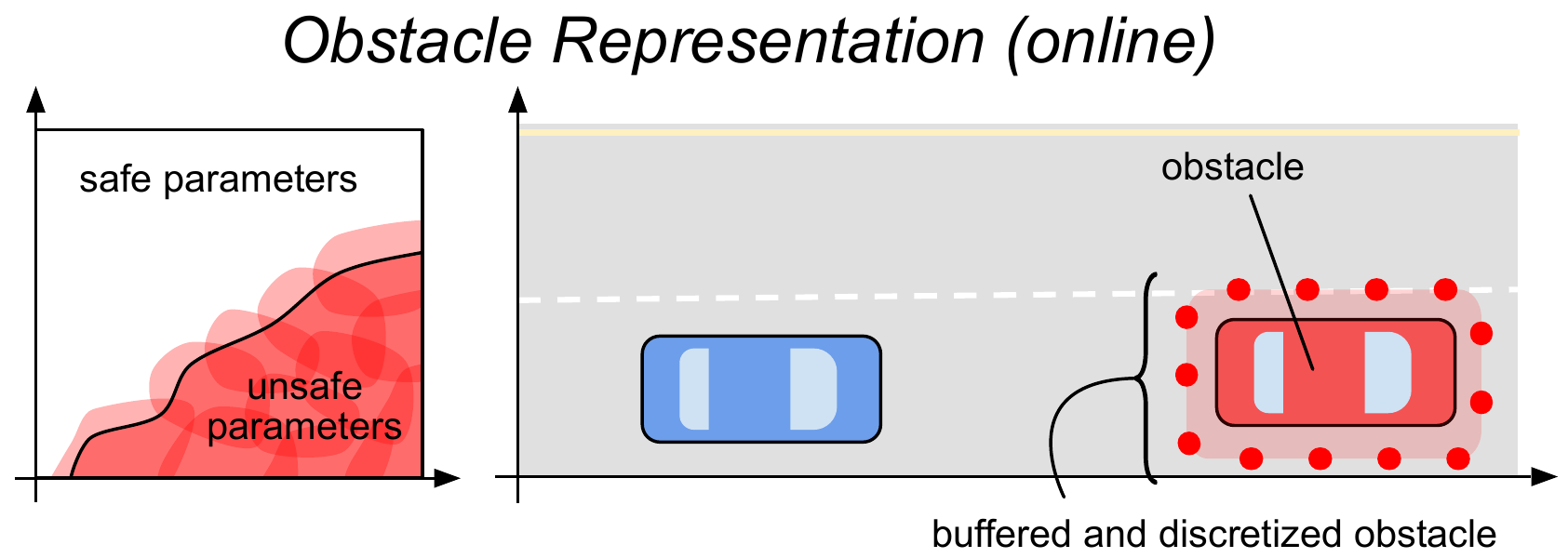}
        \caption{\centering}
        \label{subfig:intro_obstacle_representation}
    \end{subfigure}
    %\hspace{1cm}
    \begin{subfigure}[t]{0.95\columnwidth}%{0.45\textwidth}
        \centering
        \includegraphics[width=0.95\columnwidth]{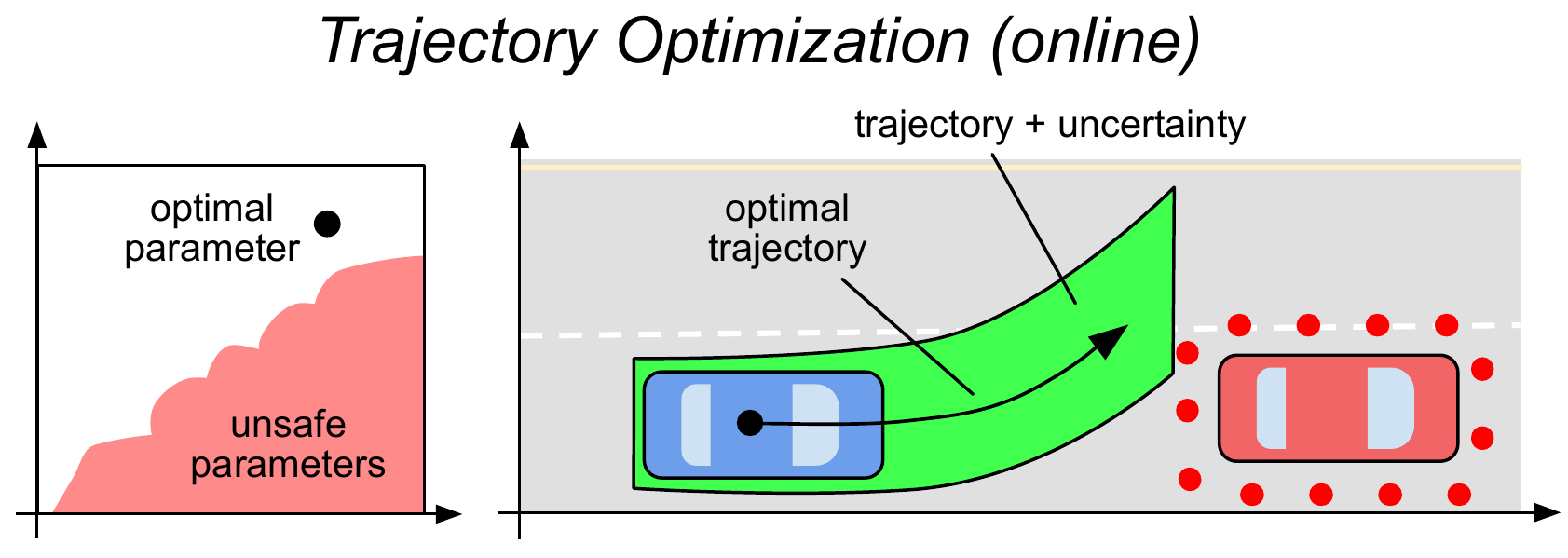}
        \caption{\centering}
        \label{subfig:intro_traj_opt}
    \end{subfigure}
    \caption{
    The contributions of this paper.
    Figure \ref{subfig:intro_FRS_overview} shows a parameterized trajectory space on the left, and the state space of the mobile robot on the right.
    The large bell-shaped contour on the right is the forward reachable set (FRS) corresponding to the robot attempting to track any trajectory from the parameter space.
    The sections of the FRS that leave the road correspond to unsafe trajectory parameters on the left.
    A single trajectory parameter is chosen and shown as a trajectory in the state space, plus the uncertainty in the robot's model, which results in a subset of the FRS corresponding to that parameter.
    The proposed method extends the existing RTD method from \citet{kousik2017safe} to higher-dimensional systems via a system decomposition approach adapted from \citet{chen2016exact} and \citet{chen2016journal}, shown in Figure \ref{subfig:intro_system_decomp}.
    For online trajectory optimization, this paper presents a method of representing obstacles discretely (Figure \ref{subfig:intro_obstacle_representation}) that allows real-time computation without sacrificing safety in Section \ref{sec:obstacle_representation}.
    Each point in the discretized obstacle representation is mapped to the subset of all trajectory parameters that could cause the robot to reach that point.
    At run-time, this mapping is expressed as a finite list of nonlinear constraints for online trajectory optimization (Figure \ref{subfig:intro_traj_opt}), which enables real-time operation.
    The unsafe parameters corresponding to the discretization are a superset of the exact set of unsafe parameters corresponding to the obstacle.
    Therefore, the discretized representation defines the feasible trajectory parameter space and ensures that the online trajectory optimization is safe as we prove in Section \ref{subsec:proving_X_p_works}.
    These two contributions enable provably safe, real-time trajectory planning (Figure \ref{subfig:intro_traj_opt}) in Section \ref{sec:trajectory_optimization}.}
    \label{fig:overview}
\end{figure*}

\begin{figure*}
    \centering
    \begin{subfigure}[h]{0.465\textwidth}
        \centering
        \includegraphics[height=5cm]{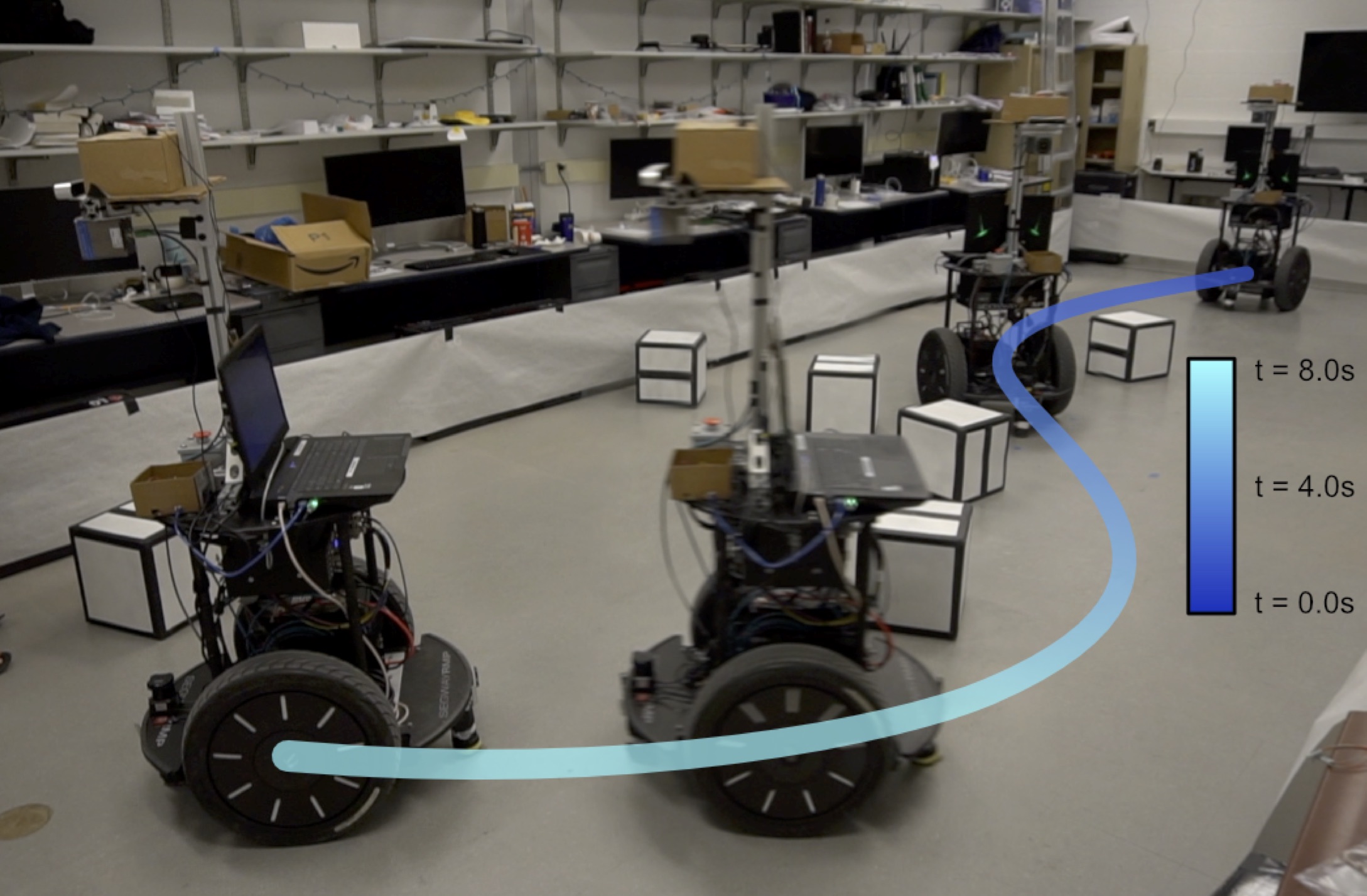}
        \caption{\centering}
        \label{subfig:segway_time_lapse}
    \end{subfigure}
    \begin{subfigure}[h]{0.465\textwidth}
        \centering
        \includegraphics[height=5cm]{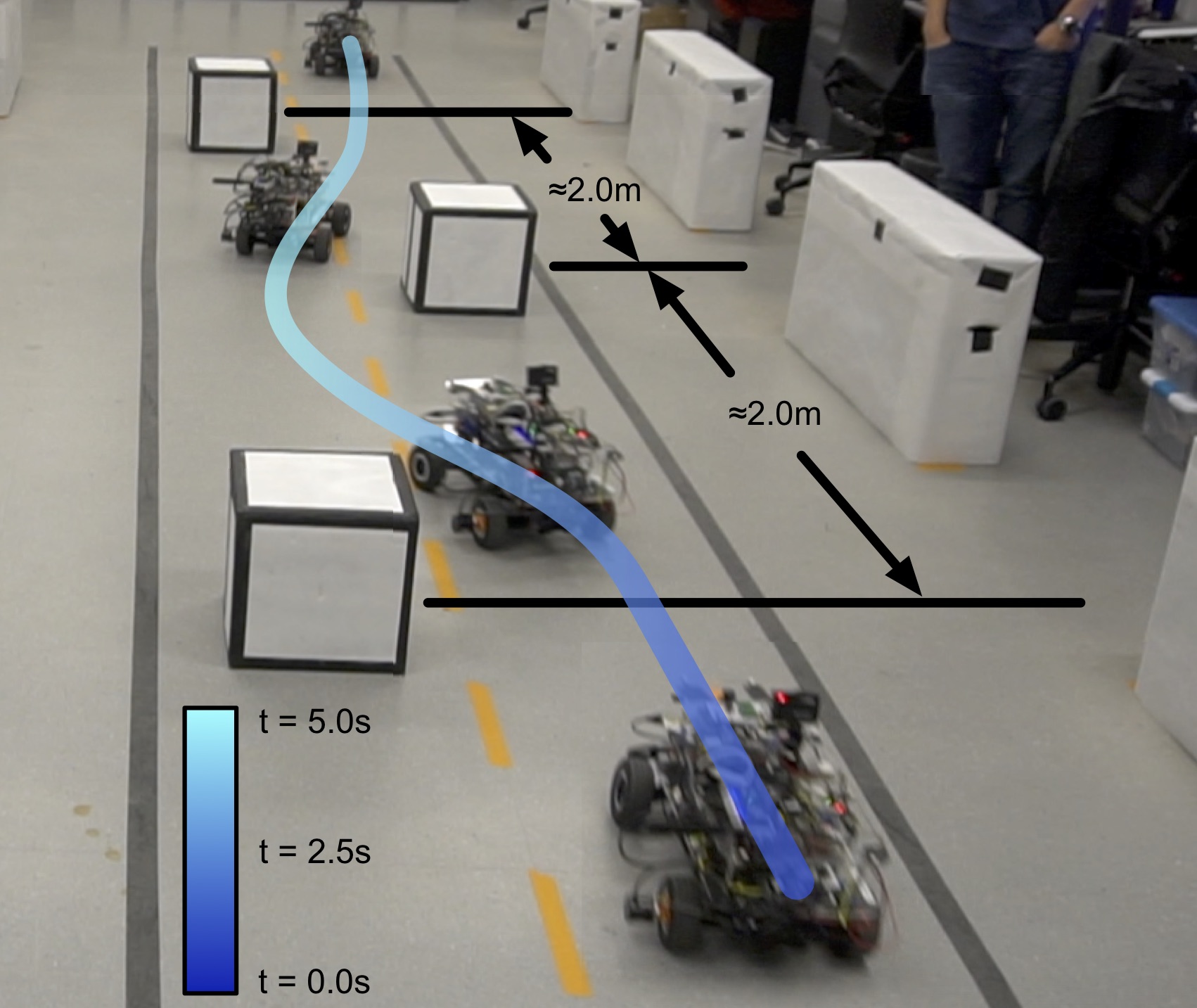}
        \caption{\centering}
        \label{subfig:rover_time_lapse}
    \end{subfigure}
    \caption{
    Example trajectories demonstrating RTD on two hardware platforms: the differential-drive Segway in Figure \ref{subfig:segway_time_lapse} and the car-like Rover in Figure \ref{subfig:rover_time_lapse}.
    The Segway travels at up to $1.25$ m/s around box obstacles distributed randomly around a rectangular room.
    The Rover travels at $1.5$ m/s on a mock road with randomly spaced box obstacles.
    Both use a planar lidar to sense the obstacles in real time.
    The Segway replans its trajectory every $0.5$ s, and the Rover every $0.375$ s, using Algorithm \ref{alg:trajopt} from Section \ref{sec:trajectory_optimization}.
    Both robots safely traverse their respective scenarios despite error in each robot's ability to track planned trajectories.
    Videos of the robots are available at \url{https://youtu.be/FJns7YpdMXQ} for the Segway and \url{https://youtu.be/bgDEAi_Ewfw} for the Rover.}
    \label{fig:hardware_time_lapse}
\end{figure*}

Autonomous mobile robots, such as autonomous cars, unmanned ground vehicles, and drones, are required to operate in unpredictable environments with limited sensor horizons.
To do so, they typically employ a \defemph{receding-horizon} strategy, wherein the robot simultaneously plans a short trajectory, then executes it while planning a subsequent trajectory. 
This strategy is necessary because the robot receives new sensor information as it moves through the environment.

To successfully perform receding-horizon planning, the robot must guarantee safety and persistent feasibility.
Planning is \defemph{safe} if the robot avoids collision with obstacles in the environment while executing a planned trajectory.
Planning is \defemph{persistently feasible} if there always exists a safe trajectory or stopping maneuver before the robot completes executing the previously-planned trajectory.
Therefore, the receding horizon strategy imposes a real-time requirement on trajectory planning, because the time required to generate a plan is less than or equal to the duration of the plan.
The main \textbf{contribution} of this work is a provably safe and persistently feasible receding-horizon trajectory planner for ground (planar) mobile robots in static environments.

This introduction section presents an overview of the literature and challenges in receding-horizon planning (Section \ref{subsec:related_work}); a statement of contributions (Section \ref{subsec:contributions}); and notation used throughout the paper (Section \ref{subsec:notation}).

\subsection{Related Work}\label{subsec:related_work}

To address the requirements of safety and persistent feasibility, a three level hierarchical control architecture is often used \citep{buehler2009darpa,falcone2007predictive,gonzalez2016review,gray2012predictive,boss2008urbanchallenge}.
At the top of the hierarchy, a high-level planner performs coarse route planning on a map using, e.g., Dijkstra's algorithm.
To construct paths rapidly, the high-level planner typically does not use a dynamic model of the robot, and thus cannot make safety guarantees.
At the bottom of the hierarchy, a low-level tracking controller translates kinematic commands into actuator torques.
This controller is not concerned with the robot's environment, and real-time applications of these controllers are widely used and well-studied.
The middle level is called a \defemph{trajectory planner}, which is the focus of this paper.
Trajectory planners take in high-level route guidance and local environmental constraints (e.g. walls, other robots, lane boundaries), and output a trajectory for the low-level controller to track.
Typically, since the trajectory planner uses knowledge of both the environment and a robot's dynamic model, it is used for planning obstacle-avoidance maneuvers in real time.
In this work, we propose a trajectory planner that is both safe and persistently feasible.

We now discuss the literature to show that existing trajectory planners incur a tradeoff: they typically must attempt to encourage either safety and persistent feasibility, or performance (meaning, quickly and successfully completing a task).
The proposed RTD method enables strict safety guarantees without a severe performance penalty.
Here, we discuss three general classes of trajectory planners: sample-based methods, model predictive control methods, and reachability-based methods.

\subsubsection{Sample-Based Methods}
Sample-based methods plan trajectories by drawing samples from a robot's control input and/or state space, resulting in temporal and/or spatial discretization of a robot's dynamic model.
A finer discretization typically enables stronger statements about the safety of such approaches, but with increased computational cost, and therefore a performance impact \citep{lavalle_textbook}.
Here, we first discuss several examples, then discuss how they attempt to enforce safety and persistent feasibility.

There are many different sample-based methods.
A widely-used example is the Rapidly-exploring Random Tree (RRT) algorithm, which plans trajectories by sampling the control input space or the state space to generate nodes in a graph representing a tree that explores the state space \citep{lavalle2001randomized}.
One can guarantee that this method will eventually find a path to a goal location, and even that  variants of that method can construct a path that is optimal with respect to an arbitrary cost function \citep{karaman2011sampling}, though such optimality may not be critical to ensure safety or real-time performance.
Other examples, with similar methods and guarantees, include Probabilistic Road Maps (PRM), which builds a graph that can includes loops \citep{kavraki1996probabilistic}, and Fast Marching Trees (FMT) \citep{janson2015fast}, which combine the tree structure of RRTs with dynamic programming to rapidly find paths.

Sample-based methods attempt to achieve safety in the following ways.
Since plans must incorporate the dynamics of a robot to certify safety, these methods must either have an explicit solution for a robot's trajectory to use for sampling, or must numerically integrate a dynamic model \citep{lavalle2001randomized,elbanhawi2014sampling}.
In addition to representing the dynamics, plans must not pass through obstacles; sample-based methods check if nodes, and potentially the edges between them, are in collision with obstacles, and then omit those nodes and edges \citep{lavalle_textbook}.
Since collision checking is challenging when the edges represent trajectories of a dynamic model, these methods typically linearize the robot's model to rapidly produce edges for collision checking \citep{elbanhawi2014sampling}; furthermore, since dynamic models typically only represent a robot's center of mass dynamics, obstacles in the environment must be buffered (i.e., padded or dilated) to compensate for the robot's shape.
If the dynamic model is not accurate, some sample-based approaches buffer obstacle to compensate \citep{lavalle_textbook}; others treat the robot's dynamics as linear, and propagate Gaussian distributions that can be used for collision checking \citep{Luders2010_RRTchanceconstraints}.
When sampling-based methods are used for receding-horizon planning, uncertainty can be mitigated by repropagating the tree from the robot's current position each time step \citep{kuwata2009rrt}.

Sample-based methods attempt to achieve persistent feasibility in the following ways.
Ensuring a plan always exists can be achieved by repropagating a preexisting tree in each planning iteration, and by attempting to end every plan with a braking maneuver \citep{kuwata2009rrt}; however, planning the stopping trajectory requires additional computation and still suffers from the above trade-offs.
To achieve real-time performance, a balance must be struck between the dimensionality of the dynamic model, the number of obstacles considered, and the discretization fineness, which is typically done by applying a heuristic \citep{elbanhawi2014sampling, kuwata2009rrt}.
The representation of constraints to attempt to ensure real-time performance and persistent feasibility, such as enforcing a minimum time horizon or distance, can be easy to check, but may impact performance.
Altogether, it is challenging to enforce persistent feasibility because it requires real-time performance, but this comes at the expense of simplifying the dynamics and collision checking; this means that one must either lose safety guarantees, or buffer obstacles more (which reduces performance by reducing the free space available for planning).

\subsubsection{Nonlinear Model Predictive Control}
Nonlinear Model Predictive Control (NMPC) methods plan trajectories by formulating an optimization program over a robot's control inputs, with the dynamics and obstacles treated as constraints.
They typically discretize time to make the optimization program tractable, and therefore incur the same tradeoff as sample-based methods.
We now present several examples, then discuss safety and persistent feasibility.

While there are many NMPC methods, most share a strategy of discretizing time and linearizing  the robot's dynamics at each discrete point in time \citep{falcone2007predictive,falcone2008low,howard2007roughterrainplanning,boss2008urbanchallenge,wurts2018collision}.
To avoid linearization, recent pseudo-spectral methods approximate the NMPC program with polynomial functions \citep{gpopsii}.
This approach can increase performance and computational efficiency over linearization methods, but still requires discretization.
An alternative to these types of discretizations and simplifications of the dynamics is Sequential Action Control.
This NMPC method uses a single control input, applied for a short duration, as a decision variable; optimality is checked by directly forward-simulating the system dynamics without discretizing or linearizing them \textit{a priori}, sacrificing a longer ``lookahead'' time for real-time planning speed.

NMPC methods attempt to enforce safety as follows.
Obstacles are represented as polygons or ellipses; collision-checking  is performed by evaluating if a discrete set of points (along a planned trajectory) lies within an obstacle (see, e.g., the work of \citet{wurts2018collision} for a focus on collision avoidance).
These methods typically represent only the center of mass dynamics of the robot, and must therefore buffer obstacles to compensate for the robot's nonzero volume.
Besides representing the robot's dynamics faithfully, safety also requires handling uncertainty.
A variety of methods exist to let NMPC handle different types of uncertainty.
For example, Robust NMPC treats the nonlinear parts of a robot's dynamic model as a bounded disturbance \citep{Gao2014_robustMPC,gao2014tube}, but relies on linearizing about a precomputed reference trajectory that can be difficult to generate for complex environments.
Sequential Action Control can be used to estimate uncertain parameters, and then plan with the estimate incorporated into the dynamics \citep{wilson2015SAC}; however, this has not been shown for safe control of mobile robots in arbitrary environments.
In general, because NMPC approaches must use simplified representations of a robot's dynamics to ensure fast computation, they cannot make safety guarantees without potentially buffering obstacles by a large amount.

NMPC methods attempt to enforce persistent feasibility as follows.
In general, persistent feasibility requires real-time solving.
But, the more complex (i.e., higher fidelity) the robot's dynamics, the slower an algorithm runs \citep{howard2007roughterrainplanning,boss2008urbanchallenge}.
This can be addressed by tuning hyperparameters (such as discretization fineness) \citep{wurts2018collision} and by linearizing the dynamics \citep{howard2007roughterrainplanning}.
Other ways to improve solving speed are to precompute a dynamically-feasible reference trajectory, then attempt to adjust it \citep{Frash2013_ACADO_MPC}; to exploit environment structure \citep{boss2008urbanchallenge}; to use a lookup table of initial guesses for the nonlinear solver \citep{howard2007roughterrainplanning}; or to use Sequential Action Control \citep{ansari2017SAC,wilson2015SAC}.
Viability kernels have been computed to establish persistent feasibility for MPC \citep{liniger2017real}; however, these kernels have to be computed offline while assuming the environment is known.
In general, it is unclear when persistently feasible planning is possible with arbitrary obstacle configurations.

To summarize, NMPC methods suffer the same tradeoff between safety and real-time performance (which is required for persistent feasibility) as sample-based methods.

\subsubsection{Reachability-based Methods}
Reachability-based methods precompute a reachable set containing the motion of the robot, then use these reachable sets to ensure collision avoidance at runtime.
The precomputed reachable sets can be used to synthesize safe tracking controllers and incorporate uncertainty in the dynamics.
These methods are focused on real-time planning with guaranteed safety, at the expense of some performance.
This means that they introduce some conservatism into trajectory planning, which may cause a robot to stop safely instead of reaching a goal.
We first discuss several methods for computing reachable sets, then discuss how reachability-based methods address safety and persistent feasibility.

A variety of methods exist to compute reachable sets.
Sums-Of-Squares (SOS) programming can be used to find reachable sets and associated tracking controllers, for a single trajectory \citep{majumdar2016funnel}, or a set of operating points \citep{majumdar2014convex}.
These methods use semi-algebraic set representations, and compute polynomial controllers, but require polynomial system dynamics.
To avoid a polynomial representation, one can use Hamilton-Jacobi-Bellman (HJB) reachability \citep{ding2011reachability,herbert2017fastrack}, which solves a partial differential equation by gridding the robot's state space and time (and therefore suffers the curse of dimensionality).
To avoid gridding, one can use zonotope reachable sets \citep{althoff_cora}; here, the user provides dynamics, a tracking controller, a reference trajectory \citep{althoff2014online}.
The reachable set is produced by partitioning time into small intervals, linearizing the dynamics in each time interval, then overapproximating the reachable set with a zonotope for each time interval.
This incurs a tradeoff where a coarser time partition or more nonlinearity in the dynamics results in the reachable set becoming conservative quickly, so these reachable sets can be difficult to compute for high-fidelity models of robots.

Reachability-based methods address safety as follows.
The SOS and zonotope approaches compute overapproximations of the reachable sets of robots in state space \citep{majumdar2016funnel,althoff_cora}.
Therefore, when planning, one seeks to ensure that the reachable set corresponding to any plan lies outside of obstacles.
Unfortunately, these approaches must pre-specify a finite set of trajectories for the offline reachability analysis; this can limit performance if the finite set is not rich enough to plan in arbitrary scenarios.
The HJB approach, on the other hand, poses its offline reachability analysis as a differential game between a high-fidelity model of a robot and a simplified planning model, which allows the planning model to choose from a continuum of possible plans.
The reachability analysis computes the tracking error between the high-fidelity and planning models, and an associated controller to keep the error within the computed bound at runtime.
At runtime, one buffers obstacles by this bound, then ensures that the planning model can only plan outside of the buffered obstacles.
Though this approach is conservative in theory because the planning model attempts to escape from the high-fidelity model \citep{herbert2017fastrack}, the numerical solution of this offline reachability analysis is not provably overapproximative \citep{mitchell2005time}.
Nevertheless, this approach can be excessively conservative in practice; since the planning model is allowed to act arbitrarily, the reachability analysis must bound tracking error for ``too many'' trajectories (as opposed to ``too few'' trajectories for \citet{majumdar2014convex} and \citet{althoff2014online}).

Reachability-based methods address persistent feasibility as follows.
For the SOS approach, with a finite library of reachable sets, one attempts to compose the reachable sets sequentially at runtime \citep{majumdar2016funnel}, though it is unclear how to proceed when no reachable sets are available.
In the zonotope case, since this approach is used to validate a single maneuver \citep{althoff2014online}, one simply does not execute an invalid maneuver; however, it is unclear how to always generate valid maneuvers.
Finally, for the HJB approach, one can simultaneously plan exploration trajectories and trajectories that return the robot to a previously-known safe location \citep{fridovich2019safely}; however, the robot may become ``stuck'' with this approach due to the underlying conservatism of the reachability analysis, which limits the free space available to the robot.

To summarize, reachability analysis enables strict safety guarantees, and some persistent feasibility guarantees.
However, the existing methods potentially suffer from either pre-specifying particular trajectories \citep{majumdar2016funnel,althoff2014online}, or allowing the trajectory planning model to select from completely arbitrary trajectories \citep{herbert2017fastrack,fridovich2019safely}.
% In other words, the performance of reachability-based methods can suffer from conservatism.

Next, we introduce our proposed approach to address the challenge of the tradeoff between safety and performance in real-time mobile robot trajectory planning.

\subsection{Proposed Method}\label{subsec:proposed method}

This work proposes Reachability-based Trajectory Design (RTD), which addresses the limitations in the literature discussed above.
We discuss how RTD relates to the literature, then present an overview of the method.

\subsubsection{Proposed Method in Context}
RTD uses reachability analysis, which allows the user to tune hyperparameters arbitrarily without losing safety guarantees, thereby addressing the tradeoff that sample-based and NMPC methods face between safety and performance.
RTD avoids the challenge faced by other reachability-based methods in choosing either too few or too many possible trajectories, by using a continuum of parameterized trajectories.
Furthermore, RTD enables strict persistent feasibility guarantees by ensuring the existence of a fail-safe maneuver in every trajectory plan.

Note that RTD was originally introduced in our prior work \citep{kousik2017safe}, which ensured safety, but could not perform real-time planning.
Additionally, that method was limited to low-dimensional systems such as Dubins cars.
The present work builds significantly upon the prior work to enable provably safe, persistently feasible, real-time trajectory planning for higher-dimensional representations of ground mobile robots.

\subsubsection{Proposed Method Overview}
RTD begins with an offline reachability analysis.
First, we specify a high-fidelity model describing a robot's dynamics, and a simplified trajectory-producing model that generates parameterized trajectories used real-time planning at runtime.
Second, we conservatively estimate the tracking error between the high-fidelity model and the parameterized trajectories.
Finally, we use the tracking error and the trajectory-producing model to compute a Forward Reachable Set (FRS) that provably contains the motion of the high-fidelity model (i.e., the robot) when tracking any of the parameterized trajectories.

Online (at runtime), RTD plans safe trajectories in an iterative, receding-horizon manner.
Assume a safe plan exists in the first planning iteration.
In each subsequent planning iteration, RTD operates as follows.
First, obstacles are intersected with the FRS, which contains all reachable points corresponding to all parameterized trajectories, to identify the set of unsafe trajectory parameters (i.e., those that could cause a collision).
Second, RTD performs trajectory optimization over the set of all safe trajectory parameters.
% If no new safe parameterized trajectory is found this way, then RTD commands the robot to continue executing the safe trajectory found in the previous planning iteration, ensuring persistent feasibility.
By designing each trajectory to be long enough for the robot to stop safely, in case no new trajectory can be found, we ensure safety and persistent feasibility for all time.

\subsection{Contributions}\label{subsec:contributions}

As depicted in Figure \ref{fig:overview}, the present work contains \textbf{three contributions}, which address the shortcomings of prior methods and demonstrate the present work's application.
\textbf{First}, we adapt system decomposition techniques, which have been effectively applied to reduce computational memory requirements for Backwards Reachable Sets \citep{chen2016exact, chen2016journal}, to RTD and forward reachability.
\textbf{Second}, we present a method for representing obstacles with discrete, finite sets, which enables provably safe, real-time planning.
\textbf{Third}, we demonstrate RTD on two autonomous mobile robots to illustrate that the method is safe and persistently feasible. %without being conservative.
The platforms are a differential-drive ``Segway'' and a car-like ``Rover,'' depicted in Figures \ref{subfig:segway_time_lapse} and \ref{subfig:rover_time_lapse} respectively.
Code used for the reachable set computation and simulations is available at \url{https://github.com/skvaskov/RTD}.

\subsection{Organization and Notation}

\subsubsection{Paper Structure}
% The paper is organized as follows.
Section \ref{sec:dynamic_models} introduces dynamic models used to describe the robot and generate plans.
Section \ref{sec:FRSmethod} describes a general method for offline computation of the Forward Reachable Set (FRS).
Section \ref{sec:system_decomp} presents a system decomposition technique to compute the FRS for higher-dimension systems.
Section \ref{sec:pers_feas} prescribes conditions required to ensure safety and persistent feasibility.
Section \ref{sec:obstacle_representation} presents an obstacle representation that enables safe, real-time planning.
Section \ref{sec:trajectory_optimization} describes the online receding-horizon trajectory optimization procedure.
Section \ref{sec:application} describes the application of RTD to the Segway and Rover.
Section \ref{sec:simulation_results} presents results of a simulation comparing RTD to RRT and NMPC for the Segway and Rover.
Section \ref{sec:hardware_demo} describes the hardware demonstrations on the FRS on the Segway and Rover.

\subsubsection{Notation}\label{subsec:notation}

The real numbers are $\R$.
The natural numbers are $\N$.
Euclidean space in $n \in \N$ dimensions is $\R^n$.
The special Euclidean group associated with $\R^2$ is $\SE(2)$.

Given a set $K$, its boundary is $\partial K$, its closure is $\regtext{cl}(K)$, its complement is $K^C$, its interior is $\text{int}(K)$, and its cardinality is $|K|$.
The power set of $K$ is $\P(K)$.
The set of continuous (resp. absolutely continuous) functions on a compact set $K$ is $C(K)$ (resp. $AC(K)$).
The Lebesgue measure on $K$ is denoted by $\lambda_K$, and the volume of $K$ is $\text{vol}(K) = \int_K\lambda_K$.
% The $i$-th component of a vector $v \in \R^n$ is denoted by $v_i$.

The ring of polynomials in $x$ is $\R[x]$, and the degree of a polynomial is the degree of its largest multinomial; 
the degree of the multinomial $x^\alpha,\,\alpha\in \N$ is $|\alpha|=\|\alpha\|_1$.
$\R_d[x]$ is the set of polynomials in $x$ with degree $d$.
The vector of coefficients of a polynomial $p$ is denoted $\regtext{vec}(p)$.
For a pair of vector-valued functions $f$ and $g$ with domain $\R^n$, the notation $\circ$ denotes the elementwise (Hadamard) product: $f\circ g = [f_1\cdot g_1,\ \cdots,\ f_n\cdot g_n]^\top$.

For points, sets, and functions, subscripts are used to indicate an index, subspace or subset.
Let $\R_{> 0}$ (resp. $\R_{\geq 0}$) denote the set $(0,\infty)$ (resp. $[0,\infty)$).
For a state space $Z$ with state variable $\z \in Z$, $Z_0$ indicates a set of initial conditions and $Z_j$ indicates a lower-dimensional subspace $j$ of $Z$, where $j$ can be an index or a coordinate of the state $\z$.
When referring to states, subscripts are used to indicate a particular index or subspace the state belongs to.
For example $\z_i$ can be used to indicate the $i^\mathrm{th}$ component of $\z$, or if $Z_j$ indicates a lower-dimensional subspace $j$ of $Z$ then $\z_j\in Z_j$ indicates a state in subspace $j$.
Superscripts are associated with the degree of a function or dimension of a space, for example $f^d$  may refer to a polynomial function, $f$, of maximum degree $d$.
Note one important exception: for the set of polynomials $\R_d[x]$, $d$ appears as a subscript, to avoid confusion with the superscript on $\R$ commonly used to indicate dimension.
\section{Dynamic Models}
\label{sec:dynamic_models}

This section introduces the dynamic models used to describe the robot and to plan trajectories in a receding-horizon fashion.
First, we present a high-fidelity model and simplified trajectory-producing model (Section \ref{subsec:models}).
Second, we present several states that must be included in the models to enable RTD (Section \ref{subsec:states}).
Third, we discuss the low-level controller used to track parameterized trajectories (Section \ref{subsec:low_level_controller}).
Finally, we present a model of tracking error required to enable safe planning (Section \ref{subsec:error}).

The proposed RTD method controls a robot described by a high-fidelity model using a low-level feedback controller to track parameterized trajectories generated by a lower dimensional trajectory-producing model.
Each trajectory is of duration $T > 0$, and planning is performed in a receding horizon fashion, where a new trajectory is chosen every $\tau\plan$ seconds (see Assumption \ref{ass:tau_plan}).
Planning over the low-dimensional space of trajectory parameters enables our trajectory planner to operate in real-time.
Safety is achieved by bounding trajectory tracking error, and by formulating a reachability-based constraint for obstacle avoidance.

Note, casual readers can gain an overview of this section from the previous paragraph and the Example blocks.

\subsection{High-Fidelity and Trajectory-Producing Models}\label{subsec:models}

Let the \defemph{high-fidelity model} have time $t \in [0,T]$, state $\z\hi(t) \in Z\hi$ at a particular time, feedback controller $u: [0,T]\times Z\hi \to U$, and dynamics described by:
\begin{equation}
\label{eq:high-fidelity_model}
\dot{\z}\hi(t) = f\hi(t,\z\hi(t),u(t,\z\hi(t))),
\end{equation}
where $f\hi:[0,T] \times Z\hi \times U \to \R^{n_{Z\hi}}$, $T > 0$, $ Z\hi \subset \R^{n_{Z\hi}}$ and $U \subset \R^{n_U}$.
We call $T$ the \defemph{planning time horizon}.

Since planning directly with a high-fidelity model in real time is challenging, we use a simpler \defemph{trajectory-producing model} to generate plans at runtime.
We write this model as:
\begin{equation}
\label{eq:traj-producing_model}
\begin{bmatrix}\dot{\z}(t) \\ \dot{k}(t) \end{bmatrix} = \begin{bmatrix}  f(t,\z(t),k(t)) \\ 0 \end{bmatrix}
\end{equation}
where $f: [0,T]\times Z \times K \to \R^{n_Z}$.
A trajectory produced by this model in the space $Z$ is called a \defemph{desired trajectory}.
The trajectory states $\z$ belong to a subspace $Z$ of $Z\hi$, where $\dim(Z) \leq \dim(Z\hi)$; the states (i.e., dimensions) mutual to $Z$ and $Z\hi$ are called \defemph{shared states}.
The parameters $k$ are drawn from a set $K \subset \R^{n_K}$, and are fixed over the planning time horizon $[0,T]$, as we describe in Section \ref{sec:trajectory_optimization}, and as is written in \eqref{eq:traj-producing_model} where $\dot{k}(t) = 0$.
Note that, to lighten notation, we drop the state and input arguments in the dynamics when they are are clear from context.
For example, we may write $f\hi(t,\z\hi,u)$ instead of $f\hi(t,\z\hi(t),u(t,\z\hi(t)))$.

Notice that the dynamic models above are defined over a compact time interval $[0,T]$.
This means that every plan generated by RTD is of duration $T > 0$; we set time to $0$ at the beginning of each planned trajectory without loss of generality.
This compact time horizon imposes a limit on the amount of time that RTD can spend planning in any receding-horizon planning iteration.
We formalize this with the following assumption.

\begin{assum}\label{ass:tau_plan}
In each receding-horizon planning iteration, the robot has a constant maximum allowed amount of time, denoted $\tau\plan \in (0,T)$, within which to find a new plan.
The planning time is fixed offline, then enforced at runtime.
\end{assum}

\noindent Though we do not prove that the trajectory planning time of the proposed method is bounded, we do enforce a time limit of $\tau\plan$ on online computation, after which it is terminated.
As stated, Assumption \ref{ass:tau_plan} does not prescribe what the robot should do after $\tau\plan$ has passed, or how this planning time relates to the robot hardware.
We address these concerns in Section \ref{sec:pers_feas}.
For now, stating the existence of $\tau\plan$ is sufficient to proceed.

Next, we place assumptions on the dynamics to make computation of tracking error and reachable sets tractable.

\begin{assum}\label{ass:dyn_are_lipschitz_cont}
The dynamics $f\hi$ from \eqref{eq:high-fidelity_model} are Lipschitz continuous in $t$, $\z\hi$, and $u$.
The dynamics $f$ from \eqref{eq:traj-producing_model} are Lipschitz continuous in $t$, $\z$, and $k$.
Since planning occurs in a receding-horizon fashion, a new trajectory parameter $k$ can be chosen every $\tau\plan$ seconds, i.e. the desired trajectory can vary discontinously from one planning iteration to the next.
\end{assum}

\begin{assum}\label{ass:state_and_control_sets_are_compact}
The sets $U$, $Z\hi$, $Z$, and $K$ are compact.
The robot's set of \defemph{initial conditions} are represented as a compact set $Z\hio \subset Z\hi$ for the high-fidelity model, and $Z_0 \subset Z$ in the shared states of the trajectory-producing model.% with nonzero volume in the $xy$-subspace of $Z$.
\end{assum}

\subsection{Required States}\label{subsec:states}

Now, we point out several states that must be in the spaces $Z$ and $Z\hi$ for RTD.
In this work we focus on ground applications where the robot's pose and environment can be represented in 2-D, i.e. the space $\R^2$ with coordinates denoted $x$ and $y$.
\begin{defn}\label{def:X_and_X_0}
Let $X$ denote the \defemph{$xy$-subspace of $Z$} with $\dim(X) = 2$.
We also refer to $X$ as the \defemph{spatial coordinates} of the robot's body.
Let $X_0$ denote the projection of $Z_0$ into the $xy$-subspace.
We call $X_0$ the robot's \defemph{footprint at time $0$}.
\end{defn}

\noindent All of the points on the robot's body lie in the state space $X$, with initial condition set $X_0$ at time $t = 0$.
Therefore, the high-fidelity dynamics in \eqref{eq:high-fidelity_model} must include the dynamics of every point on the robot's body.
However, as per \citet{elbanhawi2014sampling}, dynamic models typically only describe the position of a single point on the robot (typically the center of mass), and the dynamics of the rest of the robot's body are written relative to this point, because the robot is treated as a rigid body.
To perform safe trajectory planning, it is insufficient to ensure that just a single point on the robot avoids collision with obstacles; therefore, we consider the dynamics of the robot's entire body, leading to the following assumption.

\begin{assum}\label{ass:rigid_body_dynamics_ctr_of_mass}
Since the robot is traveling in the plane, we assume that the trajectory-producing state $\z$ includes a pair of coordinates $[x_c,y_c]^\top \in X$ that describe the position of the center of mass of the robot.
We further assume that the robot is a rigid body.
Let $\theta$ be the robot's heading.
The motion of every point on the robot's body are given by the states $[x,y]^\top \in X$ with the following dynamics:
\begin{align}
\begin{split}\label{eq:rigid_body_dynamics}
    \dot{x} &= \dot{x}_c - \dot{\theta}\cdot(y - y_c) \\
    \dot{y} &= \dot{y}_c + \dot{\theta}\cdot(x - x_c).
\end{split}
\end{align}
\end{assum}

\noindent The equations of rigid body motion in \eqref{eq:rigid_body_dynamics} are available in any introductory dynamics course, e.g., \citet[Lecture 7]{dynamics_MIT_OCW}.

\begin{rem}\label{rem:points_on_rigid_body_have_same_dyn_as_ctr_of_mass}
Assumption \ref{ass:rigid_body_dynamics_ctr_of_mass} requires the trajectory-producing dynamics \eqref{eq:traj-producing_model} to describe the motion of the robot's entire body, not just its center of mass.
This means that \eqref{eq:traj-producing_model} includes a pair of states for $[x_c,y_c]^\top$, and a pair of states for $[x,y]^\top$ with dynamics \eqref{eq:rigid_body_dynamics}.
So, the dimension of the trajectory-producing state space $Z$ is at least 4.
However, there are two cases where the dynamics of every point $[x,y]^\top$ can be treated as identical to $[x_c,y_c]^\top$ for the purpose of obstacle avoidance.
The first case is when the robot has a circular footprint, so rotating the robot's body does not change the subset of $X$ that the robot occupies.
The second case is when the robot's footprint does not have any yaw motion, in which case $\dot{\theta} = 0$ in \eqref{eq:rigid_body_dynamics}.
\end{rem}
\noindent Note that treating all points on the robot's body as $[x_c,y_c]^\top$ is useful because, the higher the dimension of the dynamics \eqref{eq:traj-producing_model}, the more difficult it is to compute trajectory plans in real time \citep{herbert2017fastrack,kousik2017safe,karaman2011sampling,howard2007roughterrainplanning}.
Section \ref{sec:obstacle_representation} (see Definition \ref{def:R_t_translation_and_rotation_family}) provides more detail on the motion of the robot's footprint through time.

Next, we define the robot's speed and yaw rate.

\begin{assum}\label{ass:max_speed_and_yaw_rate}
We assume that the robot has a speed coordinate $v$ in its high-fidelity model state $\z\hi$.
The robot is limited to a scalar max speed (its rate of travel in the subspace $X$), denoted $v_\regtext{max}$.
If the robot has a yaw (i.e., heading) state, its time derivative is limited to a scalar maximum, denoted $\omega_\regtext{max}$.
\end{assum}

\noindent Recall that by Assumption \ref{ass:rigid_body_dynamics_ctr_of_mass}, the robot has 2-D spatial coordinates of its center of mass $[x_c,y_c]^\top$, with dynamics in the high-fidelity model $f\hi$ from \eqref{eq:high-fidelity_model}.
If $f\hi$ has no speed state, then we append the coordinate $v = \sqrt{\dot{x}_c^2 + \dot{y}_c^2}$ to the state $\z\hi$, which now evolves in the space $Z\hi\times[0,\vmax]$, which preserves the compactness of the state space in Assumption \ref{ass:state_and_control_sets_are_compact}.

We now present examples of the high-fidelity model \eqref{eq:high-fidelity_model} and trajectory-producing model that satisfy the requirements above.

\begin{ex}\label{ex:segway}
Consider the Segway, depicted in Figure \ref{subfig:segway_time_lapse}.
This type of differential-drive robot can be described by a high-fidelity model as follows.

Let $\z\hi = [x_c, y_c, \theta, \omega, v]^\top$ be the states, where $x_c$ and $y_c$ describe the robot's center of mass as in Assumption \ref{ass:rigid_body_dynamics_ctr_of_mass}.
Heading is $\theta$, yaw rate is $\omega$, and speed is $v$, ensuring we satisfy Assumption \ref{ass:max_speed_and_yaw_rate}.
The dynamics $f\hi$ are:
\begin{align}\label{eq:high-fidelity_segway}
    \frac{d}{dt}\begin{bmatrix}
    x_c \\ 
    y_c \\ 
    \theta \\
    \omega \\
    v \end{bmatrix}
    \quad=\quad
    \begin{bmatrix} v\cos\theta \\ 
    v\sin\theta \\ 
    \omega \\
    \regtext{sat}_\gamma\Big(\beta_\gamma\cdot\big(u_1 - \omega\big)\Big) \\
    \regtext{sat}_\alpha\Big(\beta_\alpha\cdot\big(u_2 - v\big)\Big)
    \end{bmatrix},
\end{align}
where the control input is $u = [u_1, u_2]^\top \in U \subset \R^2$, $\mathrm{sat}_{\gm}$ (resp $\mathrm{sat}_{\al}$) saturates the yaw (resp. longitudinal) acceleration input to keep it in an interval $[\underline{\gm},\overline{\gm}]$ (resp. $[\underline{\al},\overline{\al}]$), and $\beta_\gamma,\ \beta_\alpha > 0$ are constants found from system identification.
In this case, the robot has a circular footprint, so all points on the robot can be described by the center of mass dynamics as per Remark \ref{rem:points_on_rigid_body_have_same_dyn_as_ctr_of_mass}.
\end{ex}

\noindent See Section \ref{subsec:segway_application} for the parameter values used for the Segway hardware in Figure \ref{subfig:segway_time_lapse} and simulation in Section \ref{sec:simulation_results}.

\begin{ex}\label{ex:segway_traj_producing_model}
We produce desired trajectories for the Segway as Dubins paths parameterized by a desired yaw-rate, $k_1$, and a desired speed, $k_2$.
Note, these trajectory parameters obey the max yaw rate and speed in Assumption \ref{ass:max_speed_and_yaw_rate}.
Let $\z = [x,y]^\top$.
The trajectory-producing model $f$ is:
\begin{align}\label{eq:traj_prod_segway}
    \frac{d}{dt}\begin{bmatrix}x \\ y \end{bmatrix} =
    \begin{bmatrix} k_2 - k_1\cdot(y - y_{c,0}) \\ k_1\cdot(x - x_{c,0}) \end{bmatrix},
\end{align}
where, at the beginning of each planning iteration, $x_{c,0}$ and $y_{c,0}$ are the initial position of the center of mass in a global reference frame that is rotated so the positive $x$-direction points in the robot's longitudinal direction of travel.
Therefore, the initial heading is $\theta(0) = 0$, and $\theta(t) = k_1t$; since the heading is only a function of time, it does not need to be included in the trajectory-producing model.
So, in this case, $Z = X$.
Recall that the trajectory parameters $[k_1,k_2]^\top \in K$ are constant over the planning time horizon $[0,T]$ as per \eqref{eq:traj-producing_model}, so their dynamics are omitted from \eqref{eq:traj_prod_segway}.
\end{ex}

Next, we discuss the tracking controller used to drive the high-fidelity model to the desired trajectories.

\subsection{Low-Level Controllers}\label{subsec:low_level_controller}
Recall the planning hierarchy noted in the introduction.
A high-level planner generates coarse plans, a trajectory planner transforms them into dynamically-feasible plans, and finally, a low-level controller tracks them.
RTD is a trajectory planner, and gives the user freedom to design their own low-level controller to track parameterized trajectories.
Here, we state the general form of these controllers, then provide an example for the Segway.

Given $k\in K$, we call the low-level controller a \defemph{feedback controller for $k$},
\begin{align}
u_k:[0,T]\times Z\hi\to U.\label{eq:fdbk_controller_uk}
\end{align}
Note that one could use entirely feedforward control, resulting in $u_k: [0,T] \to U$; we state the feedback controller as a more general case.

When controlled by $u_k$, we say that the high-fidelity model \defemph{tracks $k$} as a shorthand to mean that the high-fidelity model tracks the trajectory parameterized by $k$.
As mentioned above, RTD is agnostic to the type of feedback used (e.g., PID, LQR, MPC).

Note that designing such a controller is simplified by our use of a trajectory producing model defined over a compact time interval and compact parameter space.
That is, the user need not design a controller to track any possible trajectory, only the ones that are parameterized.
We find in practice that PD or PID control performs satisfactorily with low tracking error, as we show in Sections \ref{sec:simulation_results} and \ref{sec:hardware_demo}.

Consider the following example feedback controller for the Segway.
\begin{ex}\label{ex:segway_feedback_controller}
Recall the high-fidelity model in Example \ref{ex:segway}, with inputs $u_1$ (yaw acceleration) and $u_2$ (longitudinal acceleration).
Recall that the trajectory-producing model in Example \ref{ex:segway_traj_producing_model} has a yaw rate parameter $k_1$, and a longitudinal speed parameter $k_2$.
Define $u_k = [u_1, u_2]^\top$.
For the Segway, we use a PD controller to drive the high-fidelity model towards the parameterized trajectories:
\begin{align}\label{eq:segway_pd_controller}
    u_k(t,\z\hi(t)) = \begin{bmatrix}
        \beta_\theta(k_1t - \theta(t)) + \beta_\omega(k_1 - \omega(t)) + \beta_y e_y(t) \\
        \beta_v(k_2 - v(t)) + \beta_x e_x(t)
    \end{bmatrix},
\end{align}
where the position error terms are given by
\begin{align}
    \begin{bmatrix}
        e_x(t) \\ e_y(t)
    \end{bmatrix} = 
    \begin{bmatrix}
        \cos(\theta(t)) & \sin(\theta(t)) \\
        -\sin(\theta(t)) & \cos(\theta(t))
    \end{bmatrix}\begin{bmatrix}
        x(t) - x_c(t) \\
        y(t) - y_c(t)
    \end{bmatrix},
\end{align}
where $[x,y]^\top$ (resp. $[x_c,y_c]^\top$) are the position states of the trajectory-producing (resp. high-fidelity) model.
The scalars $\beta_\theta,\ \beta_\omega,\ \beta_v,\ \beta_x,$ and $\beta_y$ are non-negative control gains.
We report the particular values used in Section \ref{sec:application}.
\end{ex}

Recall that RTD accounts for tracking error between the high-fidelity model and trajectory-producing model when planning trajectories at runtime.
To understand the tracking error, we now introduce projection operators to directly relate the high-fidelity and trajectory-producing models.

\subsection{Projection Operators}\label{subsec:projection_operators}

The previous discussion introduces a variety of subspaces of the robot's state space $Z\hi$.
To better understand the relationship between these various subspaces, we define projection operators, adapted from \citet[Section III A, (15,16,18)]{chen2016exact}:

\begin{defn}\label{def:projection_operators}
The \defemph{projection operator} $\proj_{Z_i}: \P(Z) \to \P({Z_i})$ maps sets from the higher-dimensional space $Z$ to a lower-dimensional subspace $Z_i$.
For a set containing a single point, $\z\in Z$, $\proj_{Z_i}$ is defined as:
\begin{flalign}
        {\proj}_{Z_i}(\z)=\z_i,\label{eq: projection full to sub traj}
\end{flalign}
where $\z_i$ contains the components of $\z$ that lie in subspace $Z_i$.
For a set, $S \subseteq Z$, $\proj_{Z_i}$ is defined as:
\begin{flalign}
        {\proj}_{Z_i}(S) = \Big\{\z_i\in Z_i\ :\ \exists\ \z\in S\ \text{s.t.}\ {\proj}_{Z_i}(\z)=\z_i\Big\}. 
\end{flalign}
Similar to $\proj_{Z_i}$, let the operator $\proj_Z: \P({Z\hi}) \to \P(Z)$ project points or sets from the high-fidelity model state space into the lower-dimensional, trajectory-producing space.

We also define $\proj\inv: \P(Z_i) \to \P(Z)$ to be the \defemph{back-projection operator} from a subset $S_i \subseteq Z_i$ to the full space $Z$ is defined as:
\begin{flalign}
        {\proj}^{-1}(S_i) = \Big\{\z\in Z\ :\ \exists\ \z_i\in S_i\ \text{s.t.}\ {\proj}_{Z_i}(\z)=\z_i\Big\}.\label{eq:back_proj}
\end{flalign}
\end{defn}
\noindent 
Note that the projection operator is continuous \citep[Theorem 18.2(b)]{Munkres2000}, and that $S_i$ can be a subset of a subspace $Z_i$.
As an example of Definition \ref{def:projection_operators}, the operator $\idx: \P(Z) \to \P(X)$ projects points or sets into the $xy$-subspace, so $X = \idx(Z)$.
If $\z \in Z$, then $\idx$ maps $\z$ to $[\z_x,\z_y]^\top \in X$ where $\z_x$ and $\z_y$ are the $x$ and $y$ components of $\z$ respectively.

To simplify exposition, we abuse notation and also use $\idx$ to project directly from $Z\hi$ to the $xy$-subspace $X$, as opposed to composing $\idx$ with $\proj_Z$, when the intent is clear from context.
We also occasionally pass system dynamics to the projection operators to select the dynamics in a subspace.
For example, we may write $\proj_Z(f\hi(\cdot))$ to mean the high-fidelity model's dynamics in the shared states $Z$, even though the range of $f\hi$ does not return elements of $\P(Z\hi)$; this is a minor abuse of notation because $\dim(Z\hi) = \dim(f\hi(\cdot))$.

Next, we use these projection operators to examine the tracking error.
Note, we also use them in Section \ref{sec:system_decomp} to perform reachability analysis in multiple subspaces of a high-dimensional trajectory-producing model.

\subsection{Bounding Sources of Error}\label{subsec:error}

With the high-fidelity model, trajectory-producing model, and tracking controller established, we now address the robot's ability to track trajectories.

Our approach requires that that we can quantify and bound all error observed when tracking the parameterized trajectories.
There are two sources of error.
The first comes from model uncertainty between the robot and high-fidelity model which is used to estimate its future state for the next planning iteration.
This is expressed as \defemph{state estimation error}.
The second comes from the fact that the robot cannot perfectly track desired trajectories from \eqref{eq:traj-producing_model}; we call this \defemph{tracking error}.

We place bounds on the state estimation error and tracking error as follows.

\begin{assum}\label{ass:predict}
Let $k \in K$ be arbitrary and $u_k$ the corresponding feedback controller as in \eqref{eq:fdbk_controller_uk}.
Suppose the robot is at time $t$, with estimated state $\z\hio \in Z\hio$, and recall that $T$ is the planning time horizon.
The robot's \defemph{future state prediction} $\z_\regtext{pred}$ at any $t' \in [t, t+\tau\plan]$ is given by forward-integrating the high-fidelity model to get the trajectory $\z\hio: [0,\tau\plan] \to Z\hi$:
\begin{align}\begin{split}\label{eq:predicted_trajectory}
    &\z_\regtext{pred}(t';\z\hio,k) = \z\hio~+ \\ 
    &+ \int_0^{t'-t}\!\!f\hi(\tau,\z_\regtext{pred}(\tau+t),u_k(\tau,\z _\regtext{pred}(\tau+t))) d\tau,
\end{split}\end{align}
where time is shifted to be in the correct domain, $[0,T]$, for the dynamics $f\hi$ and controller $u_k$.
We assume the robot has a \defemph{state estimator} such that the state estimation error in the robot's spatial coordinates $x$ and $y$ is bounded for every $t' \in [t, t+\tau\plan]$.
In other words, at the start of every planning iteration, there exist $\vep_x, \vep_y \geq 0$ such that the position of the actual robot is within $\vep_x$ (resp. $\vep_y$) of its estimated position in the $x$ (resp. $y$) coordinate.
Note that this is trivially satisfied by picking large $\vep_x$ and $\vep_y$.
\end{assum}

\noindent In practice, since $\tau\plan$ is small, $\vep_x$ and $\vep_y$ are small (e.g., on the order of centimeters when $\tau\plan = 0.5$ s).
Next, we place a bound on the trajectory tracking error:

\begin{assum}\label{ass:tracking_error_fcn_g}
For each $i \in \{1,\ldots,n_Z\}$, there exists a bounded function $g_i: [0,T] \times Z\times K \to \R$ such that:
\begin{align}\label{eq:error_func}
    \max_{\z\hi \in A_{\z}} \, \left|\z\hii(t;\z\hio,k) - \z_i(t;\z_0,k)\right| \leq \int_0^t g_i(\tau,\z,k)d\tau,
\end{align}
for all $\z \in Z$, $t \in [0,T]$, and $k \in K$ where $A_{\z} := \{\z\hi \in Z\hi \mid \proj_Z(\z\hi) = \z\}$ is the set in which the high-fidelity model matches the trajectory-producing model in all the shared states.
We call $g = [g_1,\ldots,g_{n_Z}]^\top$ the \defemph{tracking error function}.
As with $f$ and $f\hi$ in Assumption \ref{ass:dyn_are_lipschitz_cont}, we assume that $g$ is Lipschitz continuous in $t$, $\z$, and $k$.
\end{assum}

\noindent The existence of the tracking error function means that the error in the shared states subspace $Z$ is bounded while the robot is tracking any desired trajectory given by \eqref{eq:traj-producing_model}.
The tracking error function can be determined empirically by simulating the high-fidelity model or by applying Sums-Of-Squares (SOS) optimization techniques \citep{lasserre2009moments}.
The construction of such a function is not the focus of this paper; however we make the following remark about its existence, and point the reader to Example \ref{ex:segway_traj_tracking_model} and Figure \ref{fig:g_fit_segway_error}:

\begin{rem}\label{rem:trackin_error_bound_existance}
Since the dynamics of the trajectory producing model and high-fidelity model are Lipschitz continuous, and we are considering compact time, state and parameter spaces, it is reasonable to assume that the tracking error can be bounded.
In particular, if one is not confident that their robot can track trajectories closely, one can augment the tracking error function $g$ with a large positive function.
In subsequent sections, a larger $g$ results in a larger Forward Reachable Set in the offline RTD computation, which results in more conservative trajectory optimization at runtime.
Recall, the goal of RTD at runtime is to choose trajectory parameters that do not cause the robot to reach obstacles.
Then, in a given planning iteration, a larger FRS means the robot reaches more of the state space for each trajectory parameter; so, a larger portion of the trajectory parameters would correspond to the robot reaching obstacles, meaning the planner would have to be more conservative.
Figure \ref{fig:g_fit_rover_error} shows an example of conservative $g$ functions for the Rover (see \eqref{eq:high-fidelity_rover} and Example \ref{ex:traj-producing_rover} for the high-fidelity and trajectory-producing models) planning lane changes at high speed and yawrate.

\end{rem}
\begin{figure}
    \centering
    \begin{subfigure}[t]{0.47\textwidth}
        \centering
        \includegraphics[width=0.9\textwidth]{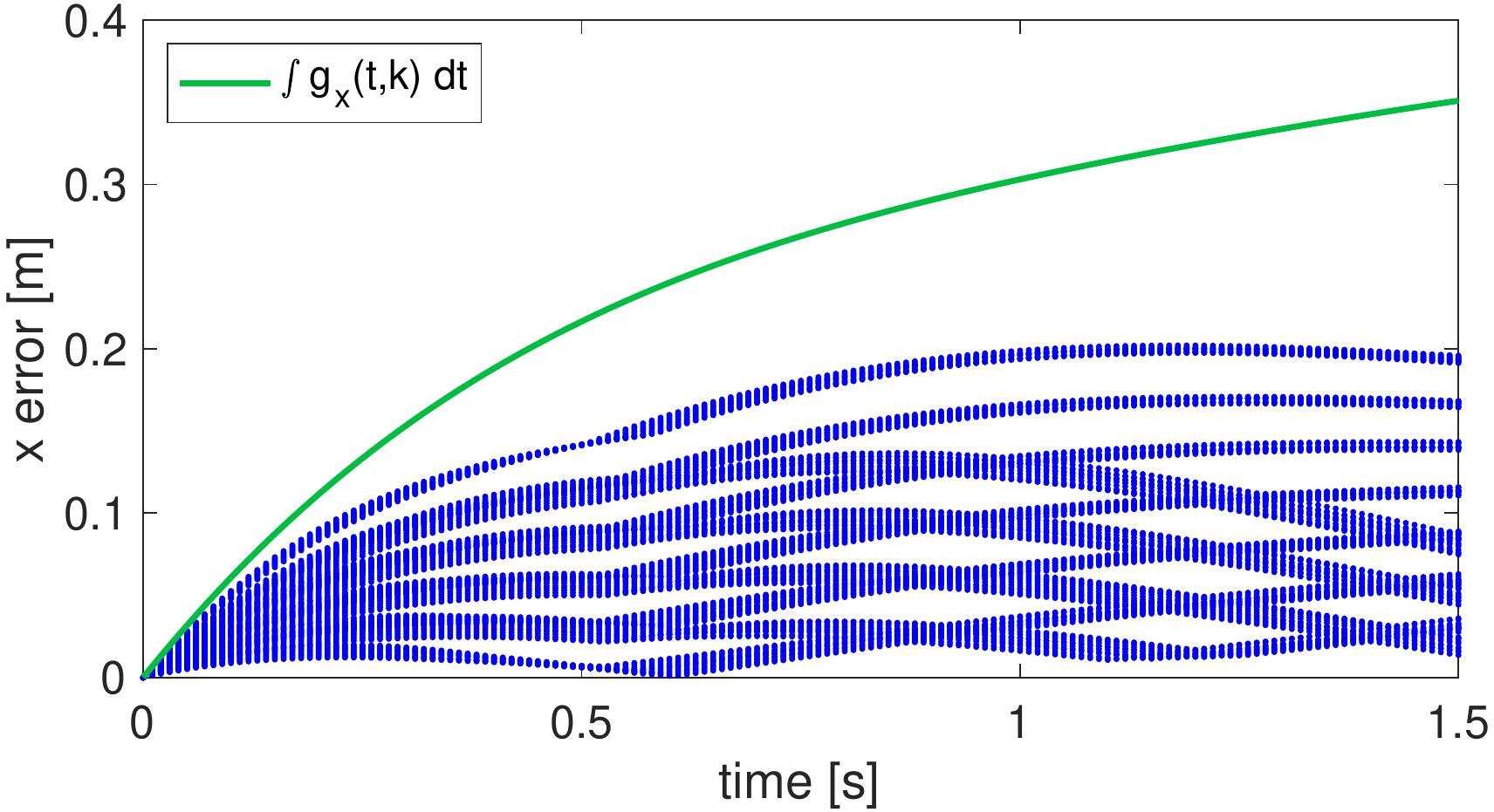}
        \caption{\centering}
        \label{subfig:x_error_rover_hi_speed}
    \end{subfigure}
    \begin{subfigure}[t]{0.47\textwidth}
        \centering
        \includegraphics[width=0.9\textwidth]{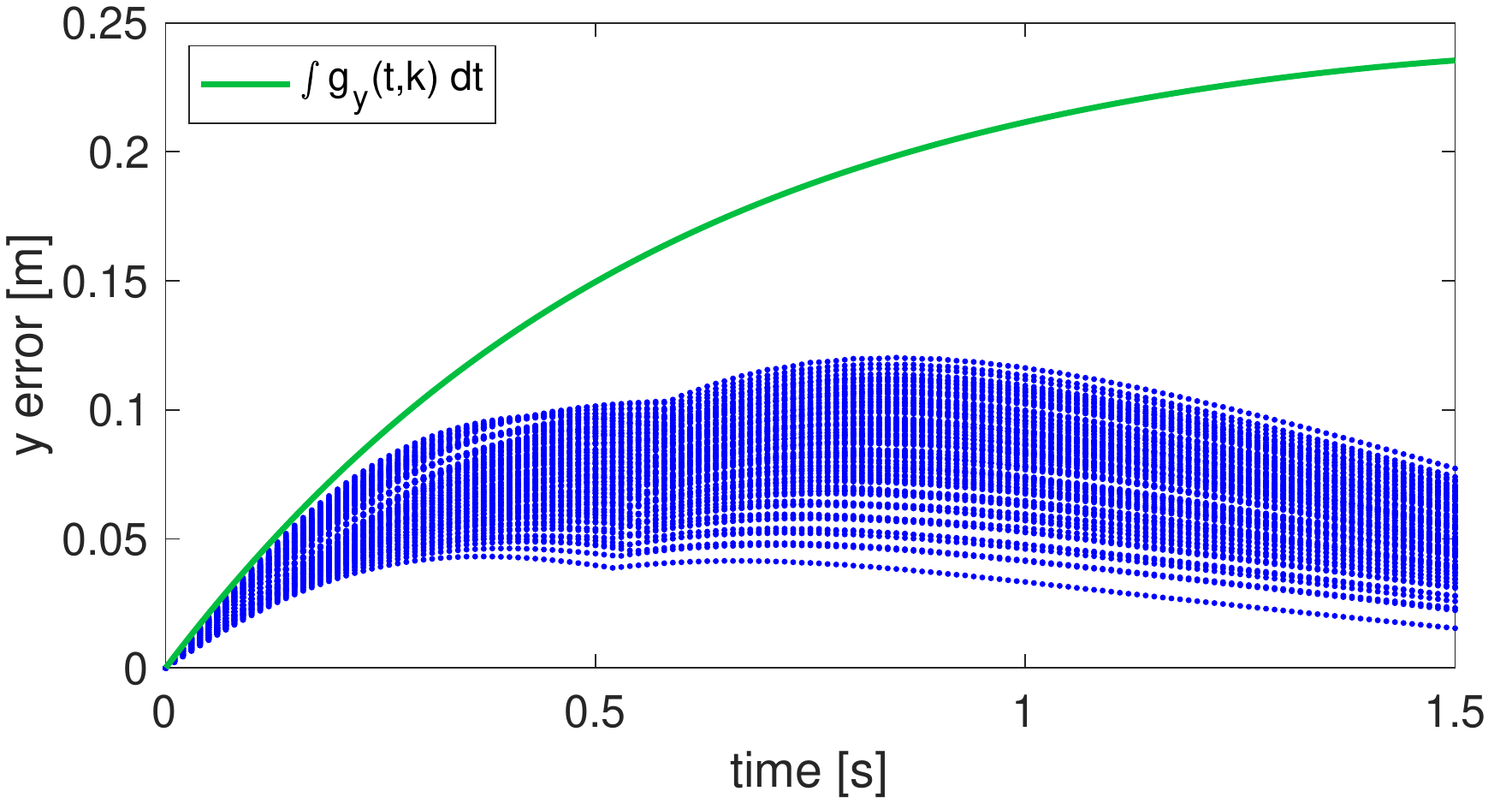}
        \caption{\centering}
        \label{subfig:y_error_rover_hi_speed}
    \end{subfigure}
    \caption{Error in $x$ (Figure \ref{subfig:x_error_rover_hi_speed}) and $y$ (Figure \ref{subfig:y_error_rover_hi_speed}) of the Rover's high-fidelity model \eqref{eq:high-fidelity_rover} when tracking a lane change maneuver generated as in Example \ref{ex:traj-producing_rover} in Section \ref{sec:system_decomp}.
    The parameterized trajectory has a velocity of 1.94 m/s and initial yaw rate of 0.95 rad/s.
    Error is the expression $\left|\z\hii(t;\z\hio,k) - \z_i(t;\z_0,k)\right|$  in Assumption \ref{ass:tracking_error_fcn_g}, where $i$ selects the $x$ and $y$ components of $\z\hi$ and $\z$.
    The blue dashed lines are example error trajectories created by sampling possible initial conditions.
    The green solid lines represent the functions $g_x$ and $g_y$, which conservatively bound all of the error trajectories.
    }
    \label{fig:g_fit_rover_error}
\end{figure}

\noindent 
We also note that we can construct a $g$ function that satisfies Assumption \ref{ass:tracking_error_fcn_g} (i.e bounds the tracking error) for the actual robot.
This can be done by checking to see if $g$ satisfying \eqref{eq:error_func} is conservative, and/or augmenting it with a positive function as described in Remark \ref{rem:trackin_error_bound_existance}.
Additionally, if the robot's footprint, $X_0$, is not a circle, error in the rigid body dynamics between the center of mass and other points on the footprint can also be bounded by $g$, by observing that the yawrate in \eqref{eq:rigid_body_dynamics} is bounded via Assumption \ref{ass:max_speed_and_yaw_rate}.

We now relate these types of error to understand how to ensure safety of the actual robot.

\begin{rem}\label{rem:error_relationship}
By Assumptions \ref{ass:predict} and \ref{ass:tracking_error_fcn_g}, while executing a trajectory parameterized by $k\in K$, every point on the actual robot's body lies within a box of size $\vep_x \times \vep_y$ of the same point on the robot's body described by using the high-fidelity model.
Therefore we can expand obstacles by $\pm\vep_x$ and $\pm\vep_y$ in the $x$- and $y$-directions respectively, to account for the gap between the high fidelity model and our actual robot. Note that the buffers $\pm\vep_x$ and $\pm\vep_y$ pertain to the hardware demonstrations; the simulations in Section \ref{sec:simulation_results} use the high-fidelity model to describe the robots motion.
\end{rem}

\noindent By Remark \ref{rem:error_relationship}, to make the actual robot safe, we ensure that the high-fidelity model is safe while planning with the trajectory-producing model \eqref{eq:traj-producing_model}.
We explicitly define the ``expanded obstacle'' in Section \ref{sec:pers_feas}.
Expanding an obstacle alone does not ensure safety, because we still have to represent the obstacle in a way that can be used to plan trajectories that avoid it at runtime; we address this in Section \ref{sec:obstacle_representation}.
Moreover, we still have to compensate for the tracking error that results from planning trajectories for the high-fidelity model \eqref{eq:high-fidelity_model} with the trajectory-producing model \eqref{eq:traj-producing_model}.
To relate these two models, we introduce a ``trajectory-tracking model'' as follows.

Let $L_d := L^1([0,T], [-1,1]^{n_Z})$ denote the space of absolutely integrable functions from $[0,T]$ to $[-1,1]^{n_Z}$.
We include $g$ in the trajectory-producing dynamics to create the \defemph{trajectory-tracking model} with dynamics:
\begin{align}
\dot{\z}(t,\z(t),k,d) = f(t,\z(t),k) + g(t,\z(t),k)\circ d(t) \label{eq:traj_tracking_model}
\end{align}
where we have reused the state $\z$ of the trajectory-producing model to emphasize that the trajectory-tracking model trajectories evolve in the state space $Z$.
Here, $d \in L_d$, so $d(t) \in [-1,1]^{n_Z}$ almost everywhere $ t \in [0,T]$.
Recall that $\circ$ denotes the Hadamard product.
Note, $d$ can be chosen to describe worst-case error behavior.
Similarly, we can use $d$ to make the trajectory-producing model ``match'' the high-fidelity model in the shared states:
\begin{lem}\label{lem:traj_prod_matches_hi_fid_model}
Suppose $\z\hio \in Z\hio$, $k \in K$, and $\z_0 = \proj_Z(\z\hio)$.
Then there exists $d \in L_d$ such that the high-fidelity model and the trajectory-tracking model agree on the shared state space $Z$, i.e.,
\begin{align}
\proj_Z\left(\z\hi(t;\z\hio,k)\right) = \z(t;\z_0,k)
   \label{eq:hi_and_tracking_trajs_match_using_d}
\end{align}
for all $t \in [0,T]$, where $\z\hi$ (resp. $\z$) is a trajectory produced by \eqref{eq:high-fidelity_model} (resp. \eqref{eq:traj_tracking_model}) with initial condition $\z\hio$ (resp. $\z_0$).
\end{lem}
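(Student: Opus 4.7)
The plan is to construct the disturbance signal $d$ explicitly and pointwise in time so that inserting it into \eqref{eq:traj_tracking_model} forces the tracking trajectory to reproduce the projection of the high-fidelity trajectory onto the shared states. Let $\z\hi(t) := \z\hi(t;\z\hio,k)$ denote the high-fidelity trajectory driven by $u_k$, and define $\tilde{\z}(t) := \proj_Z(\z\hi(t))$. Because $\proj_Z$ is a linear coordinate selection, $\tilde{\z}$ is absolutely continuous, $\tilde{\z}(0) = \z_0$, and
\[
\dot{\tilde{\z}}(t) = \proj_Z\bigl(f\hi(t,\z\hi(t),u_k(t,\z\hi(t)))\bigr)
\]
for almost every $t \in [0,T]$.

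For each $i \in \{1,\ldots,n_Z\}$, I would then define
\[
d_i(t) := \begin{cases} \dfrac{\dot{\tilde{\z}}_i(t) - f_i(t,\tilde{\z}(t),k)}{g_i(t,\tilde{\z}(t),k)}, & g_i(t,\tilde{\z}(t),k) > 0, \\[4pt] 0, & g_i(t,\tilde{\z}(t),k) = 0. \end{cases}
\]
By construction, $\tilde{\z}$ satisfies the trajectory-tracking dynamics \eqref{eq:traj_tracking_model} with this $d$ and initial condition $\z_0$. Lipschitz continuity of $f$ and $g$ (Assumption \ref{ass:dyn_are_lipschitz_cont}) together with $d(t) \in [-1,1]^{n_Z}$ guarantee that \eqref{eq:traj_tracking_model} admits a unique solution from $\z_0$, so $\tilde{\z}(t) = \z(t;\z_0,k)$ on $[0,T]$, which is precisely \eqref{eq:hi_and_tracking_trajs_match_using_d}.

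The main obstacle is verifying $d \in L_d$, i.e., measurability together with $\lvert d_i(t)\rvert \leq 1$ almost everywhere. Measurability follows quickly, since $\z\hi$ is absolutely continuous and $f\hi,u_k,f,g$ are continuous, so $d$ is a measurable quotient on the complement of the measurable set $\{t : g_i(t,\tilde{\z}(t),k) = 0\}$ and defined to be zero on that set. The magnitude bound is the real work: I would appeal to Assumption \ref{ass:tracking_error_fcn_g}, applied along the trajectory with base initial condition $\z_0 = \tilde{\z}(t_0)$ at any $t_0 \in [0,T)$ (which is consistent with the receding-horizon reading of the assumption in which each planning iteration resets time to zero). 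The integral inequality \eqref{eq:error_func} then gives, for small $h > 0$,
\[
\bigl\lvert \proj_Z(\z\hi(t_0+h))_i - \z_i(t_0+h;\tilde{\z}(t_0),k)\bigr\rvert \leq \int_0^{h} g_i(\tau,\tilde{\z}(t_0),k)\,d\tau ;
\]
dividing by $h$ and letting $h \downarrow 0$ yields the pointwise divergence-rate bound $\lvert \dot{\tilde{\z}}_i(t_0) - f_i(t_0,\tilde{\z}(t_0),k)\rvert \leq g_i(t_0,\tilde{\z}(t_0),k)$ at every Lebesgue point $t_0$. This simultaneously forces the numerator in the definition of $d_i$ to vanish whenever the denominator does, and otherwise bounds $\lvert d_i(t_0)\rvert$ by $1$. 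The delicate point is that Assumption \ref{ass:tracking_error_fcn_g} as written states the integral bound only from $t=0$, so one must really rely on the fact that the hypothesis is an \emph{instantaneous}-rate condition on the tracking error that is identified as in Remark \ref{rem:trackin_error_bound_existance} and Figure \ref{fig:g_fit_rover_error}; once this interpretation is invoked, the pointwise bound on $\lvert d_i \rvert$ holds a.e.\ on $[0,T]$, so $d \in L_d$ as required.
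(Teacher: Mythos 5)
Your construction is correct and follows essentially the same route as the paper's proof: both arguments define $d$ so that the projection $\proj_Z(\z\hi(\cdot))$ itself satisfies the trajectory-tracking dynamics \eqref{eq:traj_tracking_model}, the paper writing this in integrated form as \eqref{eq:projZ_fhi_minus_f_equals_g} and you in differentiated form. The only substantive difference is that the paper simply asserts that $d(t)\in[-1,1]^{n_Z}$ can be chosen, whereas you explicitly derive the pointwise bound $\lvert\dot{\tilde{\z}}_i - f_i\rvert \le g_i$ from Assumption \ref{ass:tracking_error_fcn_g} via a Lebesgue-differentiation argument; the ``delicate point'' you flag (that \eqref{eq:error_func} is an integral bound anchored at $t=0$ and must be read as an instantaneous-rate condition valid from any restart time) is precisely the implicit strengthening the paper's proof also relies on without stating it.
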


\begin{proof}
From Assumption \ref{ass:tracking_error_fcn_g} and \eqref{eq:error_func}, recall that $g$ bounds the maximum possible absolute error in each shared state for all $t \in [0,T]$.
Therefore, almost everywhere $t \in [0,T]$, we can pick $d(t) \in [-1,1]^{n_Z}$ such that
\begin{align}
    \proj_Z\left(\z\hi(\cdot)\right) - \left(\int_0^t f(\tau,\cdot)d\tau + \z_0\right) = \int_0^t g(\tau,\cdot)\circ d(\tau)d\tau,\label{eq:projZ_fhi_minus_f_equals_g}
\end{align}
where arguments to $\z\hi$, $f$, and $g$ are dropped for readability.
Rearrange \eqref{eq:projZ_fhi_minus_f_equals_g} and use \eqref{eq:traj_tracking_model} to fulfill \eqref{eq:hi_and_tracking_trajs_match_using_d}.
\end{proof}

We conclude this section with an example tracking error function for the Segway.
\begin{ex}\label{ex:segway_traj_tracking_model}
The Segway's PD controller in Example \ref{ex:segway_feedback_controller} cannot perfectly drive the high-fidelity model (Example \ref{ex:segway}) to the Dubins paths of the trajectory-producing model (Example \ref{ex:segway_traj_producing_model}).
We represent the tracking error between the high-fidelity and trajectory-producing models as a function $g: [0,T] \to \R^2$ given by
\begin{align}
    g(t) = \begin{bmatrix}
    g_x(t) \\ g_y(t)
    \end{bmatrix},
\end{align}
where $g_x \in \R_4[t]$ and $g_y \in \R_3[t]$.
That is, the tracking error functions are time-varying polynomials of degree $4$ for $x$ and degree $3$ for $y$.
\end{ex}

\noindent See Figure \ref{fig:g_fit_segway_error} for example tracking error functions $g_x$ and $g_y$.

\begin{figure}
    \centering
    \begin{subfigure}[t]{0.47\textwidth}
        \centering
        \includegraphics[width=0.9\textwidth]{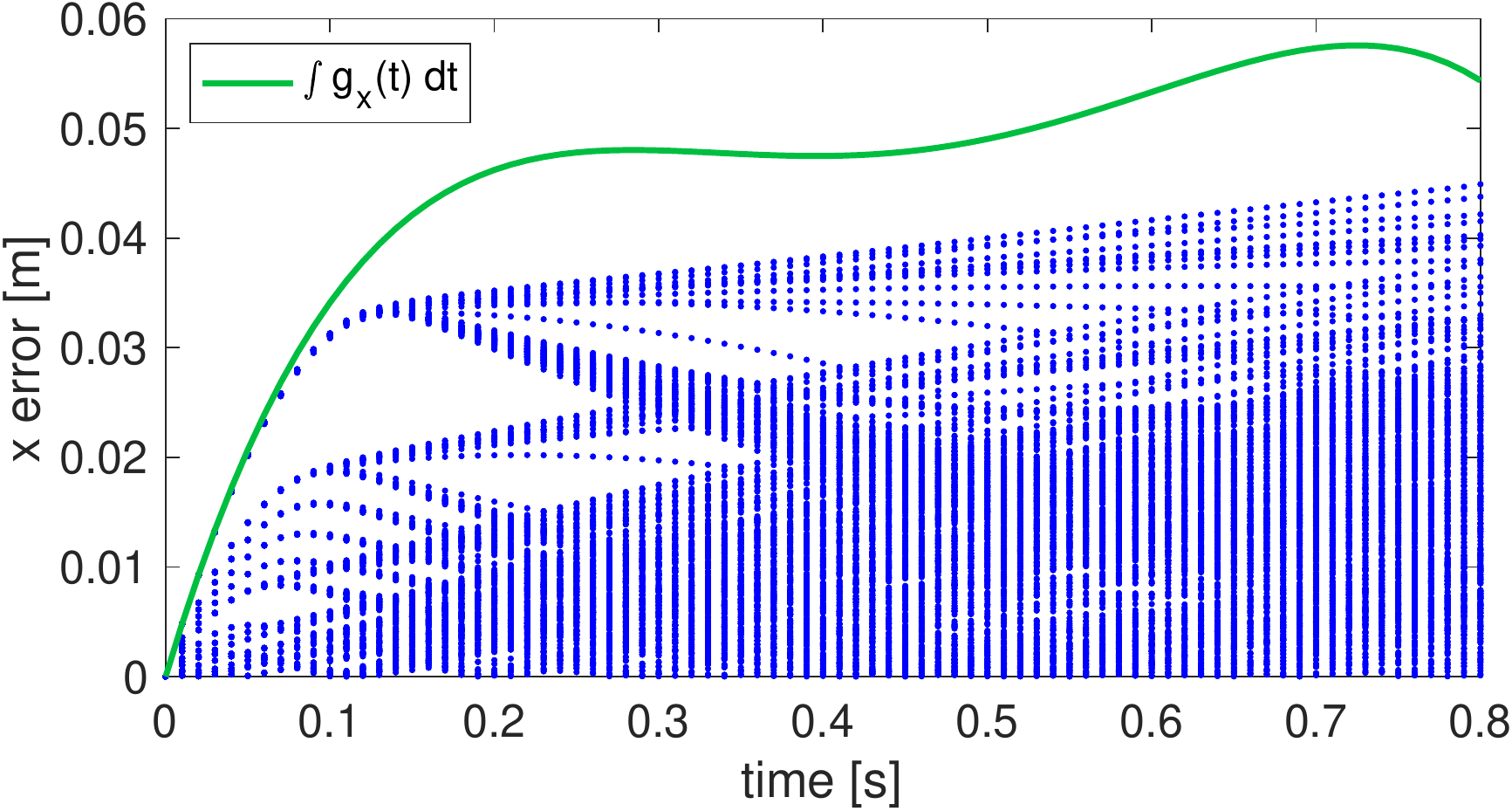}
        \caption{\centering}
        \label{subfig:x_error_segway_hi_speed}
    \end{subfigure}
    \begin{subfigure}[t]{0.47\textwidth}
        \centering
        \includegraphics[width=0.9\textwidth]{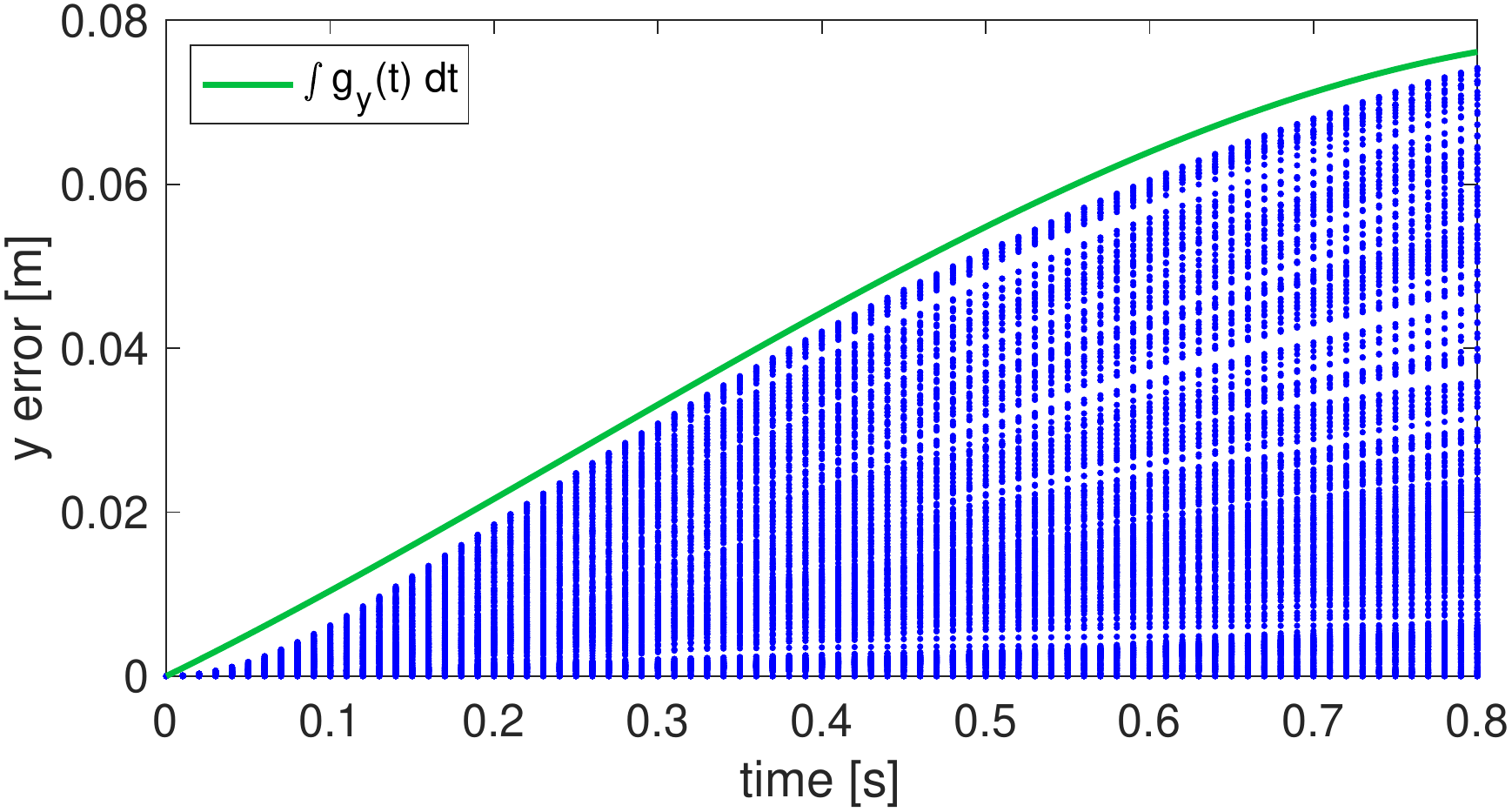}
        \caption{\centering}
        \label{subfig:y_error_segway_hi_speed}
    \end{subfigure}
    \caption{
    Error in $x$ (Figure \ref{subfig:x_error_segway_hi_speed}) and $y$ (Figure \ref{subfig:y_error_segway_hi_speed}) of the Segway's high-fidelity model \eqref{eq:high-fidelity_segway} when tracking Dubins paths generated as in Example \ref{ex:segway_traj_tracking_model}.
    The robot has a maximum yaw rate of $1$ rad/s and a maximum speed of 1.5 m/s.
    The blue dashed lines are example error trajectories created by sampling possible initial conditions.
    The  solid lines represent the functions $g_x$ and $g_y$, which bound all of the error trajectories, as described in Assumption \ref{ass:tracking_error_fcn_g}.}
    \label{fig:g_fit_segway_error}
\end{figure}

Next, in Section \ref{sec:FRSmethod}, we use the trajectory-tracking model to precompute a Forward Reachable Set (FRS) of the robot.
We use the FRS at runtime to identify unsafe trajectories at runtime in Section \ref{sec:trajectory_optimization}.
\section{Forward Reachable Set Computation}
\label{sec:FRSmethod}

The Reachability-based Trajectory Design (RTD) method for provably safe planning has two steps:
\begin{enumerate}
    \item Precompute a \defemph{Forward Reachable Set} (FRS) that captures all possible trajectories and associated parameters over a time interval $[0,T]$.
    \item Perform \defemph{trajectory optimization} online, with a user-specified cost function, to select trajectory parameters $k\in K$ that are \defemph{safe}, meaning if the robot follows a trajectory parameterized by $k$, it will not collide with an obstacle.
\end{enumerate}
The method is illustrated in Figure \ref{fig:FRS_cartoon}.
This section describes the FRS precomputation step, first introduced in \citep{kousik2017safe}.
We discuss how to choose the time horizon $T$ to ensure persistent feasibility in Section \ref{sec:pers_feas}.

Note that \citet{kousik2017safe} introduced an additional step called \defemph{set intersection}, which is performed during online operation before the trajectory optimization step.
Set intersection uses SOS programming to intersect obstacles, represented as semi-algebraic sets, with the FRS, resulting in a polynomial approximation of the safe set of trajectory parameters.
A brief discussion of set intersection is included in Appendix \ref{app:set_intersection}.
This intersection procedure is found to be too slow for real-time trajectory planning with 1-D obstacles (see Section V in \cite{kousik2017safe} for more details), which motivates the discrete obstacle representation presented in Section \ref{sec:obstacle_representation}.

This section proceeds as follows.
Section \ref{subsec:FRS_computation} formalizes the FRS and poses an infinite-dimensional program over continuous functions to compute it.
Section \ref{subsec:FRS_implementation} presents a Sums-Of-Squares (SOS) program to conservatively approximate solutions to the infinite-dimensional program.
Section \ref{subsec:FRS_memory_usage} discusses the memory required to implement the SOS program, which motivates the system decomposition approach in Section \ref{sec:system_decomp}.
The casual reader can examine \eqref{eq:frs_def} and Lemma \ref{lem:w_geq_1_on_frs} to understand the primary results of this section.

\begin{figure}
    \centering
    \includegraphics[width=0.95\columnwidth]{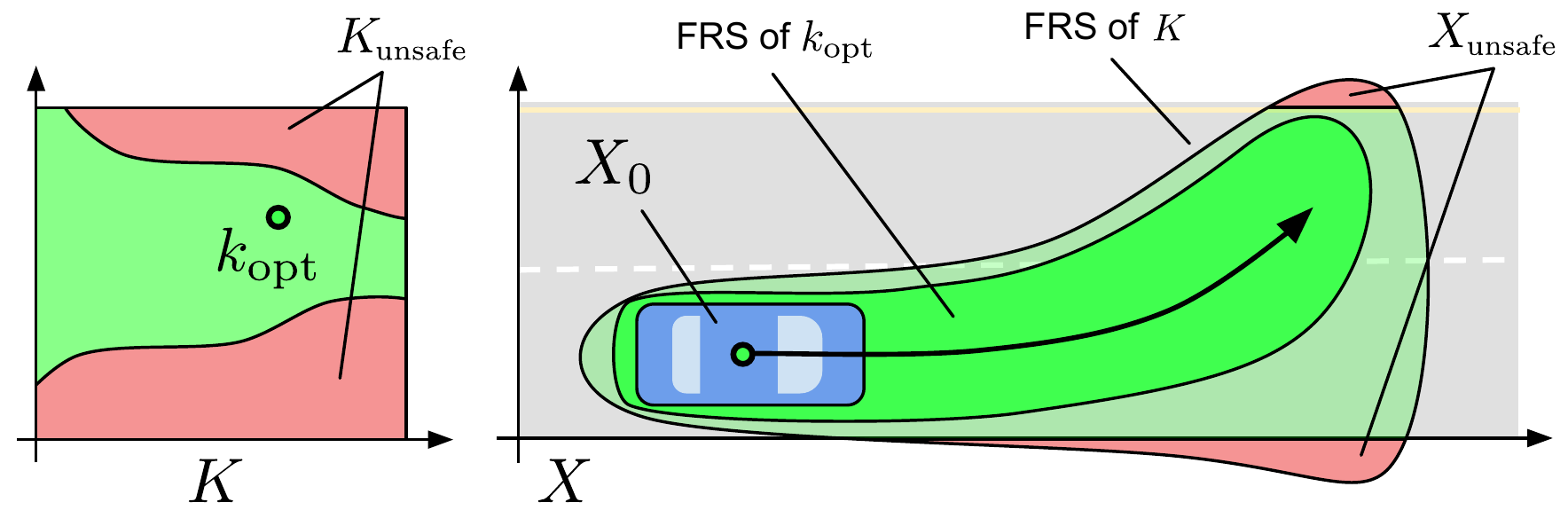}
    \caption{An illustration of the general approach to trajectory planning with RTD.
    One receding-horizon planning iteration is shown.
    The autonomous robot's trajectory parameter space $K$ is on the left, and the $xy$-subspace $X$ of its state space is on the right.
    The bell-shaped contour in $X$ shows the total extent of the forward reachable set (FRS), corresponding to the entirety of $K$.
    In $X$, areas are labeled as unsafe, corresponding to the labeled sets of trajectory parameters on the left.
    A safe parameter $k_\regtext{opt}$ is selected in $K$, and the corresponding trajectory (the arrow) and corresponding subset of the FRS (the contour around the arrow) in $X$ are shown on the right.
    }
    \label{fig:FRS_cartoon}
\end{figure}

\subsection{Problem Formulation}\label{subsec:FRS_computation}

The FRS contains positions in the $xy$-subspace $X$ that are reachable by a robot described by the high-fidelity model \eqref{eq:high-fidelity_model} while tracking trajectories produced by the trajectory-producing model \eqref{eq:traj-producing_model}, despite tracking error, over a time horizon $T$.
Formally, we define the FRS:
\begin{equation}\label{eq:frs_def}
\begin{split}
\X\frs = \Big\{&(x,k) \in\ X \times  K\; |\; \exists~ \z_0 \in Z_0, \tau \in [0,T],\\
&\mathrm{and}~d \in L_d~\mathrm{s.t.}~x = \idx(\tilde{\z}(\tau)),\\
&\mathrm{where}~\dot{\tilde{\z}}(t) = f(t,\tilde{\z}(t),k)+ g(t,\tilde{\z}(t),k)\circ d(t)\\
&\mathrm{a.e.}~t \in [0,T]~\mathrm{and}~\tilde{\z}(0) = \z_0 \Big\}.
\end{split}
\end{equation}
Recall that $L_d := L^1([0,T], [-1,1]^{n_Z})$ denotes the space of absolutely integrable functions from $[0,T]$ to $[-1,1]^{n_Z}$.

To understand the gap between the high-fidelity model and the trajectory-producing model, we rely upon a pair of linear operators, $\Lf, \Lg: AC\big([0,T] \times Z \times K \big) \to C\big( [0,T] \times Z \times K \big)$ which act on a test function $v$ as follows:
\begin{align}
\Lf v(t,\z,k) &= \frac{\partial v}{\partial t}(t,\z,k) + \sum_{i = 1}^{n} \frac{\partial v}{\partial \z_i}(t,\z,k) f_i(t,\z,k)\\
\Lg v(t,\z,k)  &= \sum_{i = 1}^{n} \frac{\partial v}{\partial \z_i}(t,\z,k) g_{i}(t,\z).
\end{align}
With these operators, we can compute the FRS by solving the following linear program, adapted from \citet[Section 3.3, Program $(D)$]{majumdar2014convex}.
The program has been altered for forward reachability, for uncertainty propagation in place of control synthesis, and to restrict the decision variable $w$ to $X$.
% \begin{flalign}
% 		& & \underset{v,w,q}{\text{inf}} \hspace*{0.25cm} & \int_{X \times K} w(x,k) ~ d\lambda_{X \times K} &&&& (D) \nonumber \\
% 		& & \text{s.t.} \hspace*{0.25cm} & \Lf v(t,\z,k) + q(t,\z,k) \leq 0, && ~\forall~(t,\z,k) \in [0,T] \times Z \times K && (D1) \nonumber\\
%         & & & \Lg v(t,\z,k) + q(t,\z,k) \geq 0, && ~\forall~(t,\z,k) \in [0,T] \times Z \times K  && (D2) \nonumber \\
%         & & & -\Lg v(t,\z,k) + q(t,\z,k) \geq 0, && ~\forall~(t,\z,k) \in [0,T] \times Z \times K  && (D3) \nonumber \\
%         & & & q(t,\z,k) \geq 0, && ~\forall~(t,\z,k) \in [0,T] \times Z \times K  && (D4) \nonumber \\
%         & & & -v(0,\z,k) \geq 0, && ~\forall~(\z,k) \in Z_0 \times K  && (D5) \nonumber \\
%         & & & w(x,k) \geq 0,&& ~\forall~(x,k) \in X \times K && (D6) \nonumber \\
%         & & & w(x,k) + v(t,\z,k) - 1 \geq 0, && ~\forall~(t,\z,k) \in [0,T] \times Z \times K && (D7) \nonumber
% \end{flalign}
\begin{flalign}
		& & \underset{v,w,q}{\text{inf}} \hspace*{0.25cm} & \int_{X \times K} w(x,k) ~ d\lambda_{X \times K} && \tag{$D$}\label{prog:sos_frs_inf_dim}\\
		& & \text{s.t.} \hspace*{0.25cm} & \Lf v(t,\z,k) + q(t,\z,k) \leq 0, && (D1) \nonumber\\
        & & & \Lg v(t,\z,k) + q(t,\z,k) \geq 0, && (D2) \nonumber \\
        & & & -\Lg v(t,\z,k) + q(t,\z,k) \geq 0, && (D3) \nonumber \\
        & & & q(t,\z,k) \geq 0, && (D4) \nonumber \\
        & & & -v(0,\z,k) \geq 0, && (D5) \nonumber \\
        & & & w(x,k) \geq 0, && (D6) \nonumber \\
        & & & w(x,k) + v(t,\z,k) - 1 \geq 0, && (D7) \nonumber
\end{flalign}
where $x = \idx(\z)$.
Constraints $(D1), (D2), (D3), (D4)$, and $(D7)$ apply for all $(t,\z,k) \in [0,T] \times Z \times K$.
Constraint $(D5)$ applies for all $(\z,k) \in Z_0 \times K$.
Constraint $(D6)$ applies for all $(x,k) \in X \times K$.
The given data for the problem are $f,\ g,\ Z,\ Z_0,\ K,\ \text{and } T$. The infimum is taken over $(v,w,q)\in C^1([0,T]\times Z\times K)\times C(X\times K)\times C([0,T]\times Z\times K)$. 
Theorem 3 of \citet{majumdar2014convex} shows that feasible solutions to \eqref{prog:sos_frs_inf_dim} conservatively approximate $\X\frs$.
We adapt this result in the following lemma.

\begin{lem}\label{lem:v_is_negative}
If $(v,w,q)$ satisfies the constraints in \eqref{prog:sos_frs_inf_dim}, then $v$ is non-positive and decreasing along trajectories of the trajectory-tracking system \eqref{eq:traj_tracking_model}.
In other words, let $\z\in Z$ and $x=\proj_X(\z)$; then $(x,k) \in \X\frs$ implies that $\exists~t \in [0,T]$ such that $v(t,\z,k) \leq 0$.
\end{lem}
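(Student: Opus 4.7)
The plan is to show that along any trajectory $\tilde{\z}$ of the trajectory-tracking model \eqref{eq:traj_tracking_model}, the composition $t \mapsto v(t,\tilde{\z}(t),k)$ starts non-positive and is non-increasing, and hence stays non-positive on $[0,T]$. Applying this to any witness trajectory for $(x,k) \in \X\frs$ then yields the lemma.

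First I would fix an arbitrary $k \in K$, $\z_0 \in Z_0$, and $d \in L_d$, and let $\tilde{\z}(\cdot)$ be the corresponding absolutely continuous solution of \eqref{eq:traj_tracking_model}. Constraint $(D5)$ gives $v(0,\z_0,k) \leq 0$ immediately. For the monotonicity step, the chain rule applies because $v \in C^1$ and $\tilde{\z} \in AC([0,T])$, yielding
\begin{equation*}
\tfrac{d}{dt} v(t,\tilde{\z}(t),k) = \Lf v(t,\tilde{\z}(t),k) + \sum_{i=1}^{n_Z} \tfrac{\partial v}{\partial \z_i}(t,\tilde{\z}(t),k)\, g_i(t,\tilde{\z}(t))\, d_i(t)
\end{equation*}
almost everywhere on $[0,T]$. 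Constraint $(D1)$ bounds the first summand above by $-q$, while $(D2)$ and $(D3)$ together force $|\Lg v| \leq q$; using $|d_i(t)| \leq 1$ a.e., the disturbance sum is controlled by $q$ as well, so the derivative is at most $-q + q = 0$. Combined with the initial bound, this gives $v(t,\tilde{\z}(t),k) \leq 0$ for every $t \in [0,T]$.

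To finish, suppose $(x,k) \in \X\frs$. The definition \eqref{eq:frs_def} furnishes $\z_0 \in Z_0$, $\tau \in [0,T]$, and $d \in L_d$ with $x = \idx(\tilde{\z}(\tau))$. Setting $\z := \tilde{\z}(\tau)$ and $t := \tau$, the preceding argument gives $v(t,\z,k) \leq 0$, which is exactly the stated implication; moreover, since this held for arbitrary $\z_0, k, d$, the same computation establishes that $v$ is both non-positive and non-increasing along every trajectory of the trajectory-tracking system.

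The step I expect to be the main obstacle is bounding the disturbance sum $\sum_i \tfrac{\partial v}{\partial \z_i} g_i\, d_i$ by $q$ using only the scalar inequality $|\Lg v| \leq q$. Because $d$ is vector-valued in $[-1,1]^{n_Z}$ and enters componentwise via the Hadamard product, the instantaneous worst case over $d$ is $\sum_i |\tfrac{\partial v}{\partial \z_i} g_i|$, which in general exceeds $|\sum_i \tfrac{\partial v}{\partial \z_i} g_i| = |\Lg v|$. Closing this gap rigorously would presumably be done by reinterpreting $(D2)$ and $(D3)$ componentwise, with $q$ dominating each $|\tfrac{\partial v}{\partial \z_i} g_i|$ separately, or by restricting attention to situations in which only a single component of $g$ is nonzero in each coordinate direction; once that technical point is handled, the remainder of the argument is a direct application of the chain rule and integration.
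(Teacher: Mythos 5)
Your proof is correct and follows essentially the same route as the paper's: $(D5)$ gives $v(0,\z_0,k)\le 0$, the Fundamental Theorem of Calculus (your chain-rule-plus-integration step) expresses $v(\tau,\z(\tau),k)$ as the initial value plus integrals of $\Lf v$ and the disturbance term, and $(D1)$--$(D3)$ bound those integrands by $-q$ and $+q$ respectively. The technical wrinkle you flag --- that the scalar bound $|\Lg v|\le q$ with $\Lg v=\sum_i \tfrac{\partial v}{\partial \z_i}g_i$ does not by itself dominate $\sum_i \tfrac{\partial v}{\partial \z_i}g_i\,d_i$ when the components $d_i\in[-1,1]$ vary independently --- is genuine, but it is equally present in the paper's own proof, which writes the disturbance integrand as $\Lg v(t,\z(t),k)\circ d(t)$ as though $\Lg v$ were vector-valued; the intended reading, inherited from \citet{majumdar2014convex}, is exactly the componentwise one you propose (constraints $(D2)$--$(D3)$ enforced per component of $\Lg v$, with $q$ dominating each $|\tfrac{\partial v}{\partial \z_i}g_i|$ or replaced by a sum of componentwise multipliers), after which your argument closes.
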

\noindent The proof is available in Appendix \ref{app:reachability_proofs}.

The result of Lemma \ref{lem:v_is_negative} and constraint $(D7)$ is that the 1-superlevel set of any feasible $w$ contains $\X\frs$ \citep[Lemma 11]{kousik2017safe}.
In fact, the solution to this infinite dimensional linear program allows one to compute $\X\frs$:
\begin{lem}\citep[Theorem 4]{majumdar2014convex} \label{lem:w_geq_1_on_frs}
Let $(v,w,q)$ be a feasible solution to \eqref{prog:sos_frs_inf_dim}.
The 1-superlevel set of $w$ contains $\X\frs$.
Furthermore, there is a sequence of feasible solutions to \eqref{prog:sos_frs_inf_dim} whose second component $w$ converges from above to an indicator function on $\X\frs$ in the $L^1$-norm and almost uniformly.
\end{lem}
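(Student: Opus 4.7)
The plan is to handle the two assertions separately: first, the deterministic pointwise bound $w \geq 1$ on $\X\frs$ drops out of the previous lemma plus constraint $(D7)$; second, the convergence claim requires building a minimizing sequence whose objective converges to $\vol(\X\frs)$, and this in turn rests on a duality argument between \eqref{prog:sos_frs_inf_dim} and an occupation-measure LP.

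For the containment, I would fix $(x,k) \in \X\frs$ and use \eqref{eq:frs_def} to produce $\tau \in [0,T]$, $\z_0 \in Z_0$, and $d \in L_d$ with $\idx(\tilde{\z}(\tau)) = x$. Writing $\z = \tilde{\z}(\tau)$, Lemma \ref{lem:v_is_negative} gives $v(\tau,\z,k) \leq 0$, and constraint $(D7)$ then yields $w(x,k) \geq 1 - v(\tau,\z,k) \geq 1$. This shows $\X\frs \subseteq \{(x,k) \in X \times K : w(x,k) \geq 1\}$, and in particular $w \geq \ind_{\X\frs}$ almost everywhere, so $\int_{X\times K} w\, d\lambda_{X\times K} \geq \vol(\X\frs)$ for every feasible triple.

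For the convergence part, I would follow the occupation-measure strategy of \citet{majumdar2014convex}. The dual of \eqref{prog:sos_frs_inf_dim} is a linear program over nonnegative Borel measures on $[0,T]\times Z\times K$ (the occupation measure) and on $Z_0 \times K$ (the initial measure), with equality constraints encoding a Liouville-type equation for the trajectory-tracking model \eqref{eq:traj_tracking_model}; the role of $q$ together with $(D2)$--$(D4)$ is to absorb the disturbance component $d(t) \in [-1,1]^{n_Z}$ by decomposing $\Lg v$ into its positive and negative parts. Under the compactness assumptions on $Z$, $K$, and $[0,T]$ (Assumption \ref{ass:state_and_control_sets_are_compact}) and the Lipschitz continuity of $f$ and $g$ (Assumptions \ref{ass:dyn_are_lipschitz_cont}, \ref{ass:tracking_error_fcn_g}), the dual feasible set is weak-$\ast$ compact and corresponds exactly to measures whose $X \times K$-marginal is supported on $\X\frs$. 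Maximizing mass against the uniform measure on $X \times K$ shows the dual optimum equals $\vol(\X\frs)$; a no-duality-gap argument (Fenchel--Rockafellar, using interior feasibility of the primal obtained by adding slack and a strictly positive test function) then gives the same value for the primal infimum.

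Given this, a minimizing sequence $(v_n,w_n,q_n)$ of feasible solutions with $\int w_n \to \vol(\X\frs)$ exists, and by the first part each $w_n$ satisfies $w_n \geq \ind_{\X\frs}$ a.e. Hence $0 \leq w_n - \ind_{\X\frs}$ and $\int (w_n - \ind_{\X\frs})\, d\lambda \to 0$, which is $L^1$ convergence from above; almost uniform convergence follows by passing to an a.e.-convergent subsequence (standard for $L^1$ convergence on a finite-measure space) and applying Egorov's theorem on $X \times K$. The smoothness requirement $v_n \in C^1$, $w_n, q_n \in C$ is obtained by mollifying the piecewise characterizations produced by the duality argument and using Stone--Weierstrass density on the compact domain. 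The main obstacle I expect is precisely this duality/density step: establishing zero duality gap and simultaneously approximating the extremal measures by smooth test functions without losing feasibility of $(D1)$--$(D7)$, particularly the coupling between $q$ and $\Lg v$ in $(D2)$--$(D4)$. Once strong duality and the density of smooth feasible triples are in hand, the pointwise bound and almost uniform convergence are immediate consequences.
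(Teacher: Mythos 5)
Your proposal matches the paper's treatment: the containment is obtained exactly as you describe, by combining Lemma \ref{lem:v_is_negative} with constraint $(D7)$ to get $w(x,k) \geq 1 - v(\tau,\z,k) \geq 1$ on $\X\frs$, and the convergence claim is precisely the content of the cited Theorem 4 of \citet{majumdar2014convex}, whose occupation-measure strong-duality argument you have correctly sketched. The paper itself defers the convergence part entirely to that citation rather than reproving it, so your outline of the duality and minimizing-sequence step is the intended route.
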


\subsection{Implementation} \label{subsec:FRS_implementation}
We now implement \eqref{prog:sos_frs_inf_dim} using SOS programming as in our prior work \citep{kousik2017safe}.
To do so, we require the following assumptions:
\begin{assum}\label{ass:f_and_g_are_poly}
The functions $f$ and $g$ are polynomials of finite degree in $\R[t,\z,k]$.
\end{assum}
\noindent Note that, to fulfill Assumption \ref{ass:tracking_error_fcn_g}, if $f$ is Taylor-expanded to be a polynomial, then $g$ must bound the error introduced by the Taylor-expansion.

\begin{assum} \label{ass:set}
The sets $K,\ Z,$ and $Z_0$ have semi-algebraic representations:
\begin{align}
K &= \left\{ k \in \R^{n_K} \mid h_{K_i}(k) \geq 0, ~\forall\ i = 1,...,n_{K} \right\} \label{eq:ass_2_1}\\
Z &= \left\{ \z \in \R^{n_Z} \mid h_{Z_i}(\z) \geq 0, ~\forall\ i = 1,...,n_{Z} \right\} \label{eq:ass_2_3} \\
Z_0 &= \left\{ \z \in Z \mid h_{0_i}(\z) \geq 0, ~\forall\ i = 1,...,n_{0} \right\}  \label{eq:ass_2_2}
\end{align}
where $h_{K_i} \in \R[k]$ and $h_{0_i}, h_{Z_i} \in \R[\z]$.
Since $X$ and $X_0$ are projected (as in Definition \ref{def:projection_operators}, using $\idx$) from semi-algebraic sets, they can also be represented semi-algebraically:
\begin{align}
    X &= \{p \in \R^2 \mid h_x(p) \geq 0,\ h_y(p) \geq 0\} \\
    X_0 &= \{p \in X \mid h_{x_0}(p) \geq 0,\ h_{y_0}(p) \geq 0\}.
\end{align}
\end{assum}

Assumption \ref{ass:set} is not prohibitive since common boxes and ellipses, and even non-convex sets, have semi-algebraic representations (see, e.g., \citet{majumdar2014convex}).
Typically, $Z$, $X_0$, and $Z_0$ are box- or ellipse-shaped.
The parameter space $K$ can be represented by a box or ellipse; more complex restrictions of the parameters can be enforced in the online optimization program described in Algorithm \ref{alg:trajopt}.
Also note that, for the SOS program posed next, we require that there exists $N \in \N$ such that, for any $q = (t,\z,\z_0,k) \in [0,T]\times Z\times Z_0 \times K$, the value of $N - \norm{q}_2^2 \geq 0$ \citep[Theorem 2.15]{lasserre2009moments}.
This is trivially satisfied since $[0,T]$, $Z,\ Z_0,$ and $K$ are compact by Assumption \ref{ass:state_and_control_sets_are_compact}.

\subsubsection{Computing the FRS}\label{subsubsec:compute_the_FRS}
To solve \eqref{prog:sos_frs_inf_dim}, we follow the implementation from Section 4 of \cite{kousik2017safe}.
We construct a sequence of convex SOS programs indexed by $l \in \N$ by relaxing the continuous function in \eqref{prog:sos_frs_inf_dim} to polynomial functions with degree truncated to $2l$. 
The inequality constraints in \eqref{prog:sos_frs_inf_dim} then transform into SOS constraints, so \eqref{prog:sos_frs_inf_dim} becomes a Semi-Definite Program (SDP) \citep{parrilo2000structured}.
To formulate this problem, let $h_T = t(T - t)$, and $H_T = \{h_T\}$.
Recalling the definitions in Assumption \ref{ass:set}, for $Z$, $Z_0$, and $K$, collect the polynomials that represent them in the sets $H_Z = \left\{h_{Z_1},\ldots,h_{Z_{n_{Z}}}\right\}$, $H_{Z_0} = \left\{h_{0_1},\ldots,h_{0_{n_0}}\right\}$, $H_K = \left\{h_{K_1},\ldots,h_{K_{n_K}}\right\}$, and $H_X = \left\{h_x, h_y\right\}$.

Let $Q_{2l}(H_T,H_Z,H_K) \subset \R_{2l}[t,\z,k]$ be the set of polynomials $p \in \R_{2l}[t,\z,k]$ (i.e., of total degree less than or equal to $2l$) expressible as:
\begin{equation}\label{eq: general sos}
  p = s_0 + s_1 h_T + \sum_{i=1}^{n_{Z}} s_{i+2} h_{Z_i} + \sum_{i=1}^{n_{K}} s_{i+{n_{Z}}+2} h_{K_i},
\end{equation}
for some polynomials $\{s_i\}_{i=0}^{n_{Z}+n_{K}+1} \subset \R_{2l}[t,\z,k]$ that are SOS of other polynomials.
Note that every such polynomial is non-negative on $[0,T] \times Z \times K$ \citep[Theorem 2.14]{lasserre2009moments}.
Similarly, define $Q_{2l}(H_{Z_0},H_K) \subset \R_{2l}[\z,k]$, and $Q_{2l}(H_X,H_K) \subset \R_{2l}[x,k]$, where $x = \idx(\z)$.

Employing this notation, the $l$\ts{th}-order relaxed SOS programming representation of \eqref{prog:sos_frs_inf_dim}, denoted $(D^l)$, is defined as follows:
\begin{align} 
		& & \underset{v^l,w^l,q^l}{\text{inf}} \hspace*{0.25cm} & y_{X \times K}^T \textrm{vec}(w^l) &&&& (D^l) \nonumber \\
		& & \text{s.t.} \hspace*{0.25cm} & -\Lf v^l - q^l &&\in Q_{2d_f}(H_T,H_Z,H_K) && (D^l1)\nonumber\\
        & & & \Lg v^l+ q^l &&\in Q_{2d_g}(H_T,H_Z,H_K) && (D^l2)\nonumber \\
        & & & -\Lg v^l + q^l&&\in Q_{2d_g}(H_T,H_Z,H_K)  && (D^l3)\nonumber \\
        & & & q^l &&\in Q_{2l}(H_T,H_Z,H_K)  && (D^l4)\nonumber \\
        & & & -v^l(0,\;\cdot\,) &&\in Q_{2l}(H_{Z_0},H_K) && (D^l5)\nonumber \\
        & & & w^l &&\in Q_{2l}(H_X,H_K) && (D^l6)\nonumber \\
        & & & w^l + v^l - 1 &&\in Q_{2l}(H_T,H_Z,H_K), && (D^l7)\nonumber 
\end{align}
where the infimum is taken over the vector of polynomials $(v^l,w^l,q^l) \in \R_{2l}[t,\z,k] \times \R_{2l}[x,k] \times \R_{2l}[t,\z,k]$, with $x = \idx(\z)$.
The vector $y_{X\times K}$ contains moments associated with the Lebesgue measure $\lambda_{X\times K}$, so $\int_{X\times K} w(x,k)d\lambda_{X\times K} = y_{X\times K}^T \mathrm{vec}(w)$ for $w \in \R_{2l[x,k]}$ \citep{majumdar2014convex}.
The numbers $d_f$ and $d_g$ are the smallest integers such that $2d_f$ and $2d_g$ are respectively greater than the total degree of $\Lf v^l$ and $\Lg v^l$.
To implement $(D^l)$, we consider the dual program, which is an SDP \citep{lasserre2009moments}.

\begin{rem}\label{rem:w_geq_1_is_in_FRS}
It can be shown that Lemma \ref{lem:v_is_negative} holds for functions that satisfy the constraints of $(D^l)$ \citep[Theorem 6]{majumdar2014convex}.
Additionally, one can apply the last constraint in $(D^l)$ to prove that the 1-superlevel set of any feasible $w^l$ is an outer approximation to $\X\frs$ \citep[Theorem 7]{majumdar2014convex}.
Furthermore, one can prove that $w^l$ converges from above to an indicator function on $\X\frs$ in the $L^1$-norm \citep[Theorem 6]{majumdar2014convex}.
\end{rem}

\subsection{Sums-of-Squares Memory Usage}\label{subsec:FRS_memory_usage}

We now inspect the memory required to implement \eqref{prog:sos_frs_inf_dim}.
For higher-dimensional systems, the large memory requirement motivates the system decomposition approach in Section \ref{sec:system_decomp}.

In this paper, the FRS is computed with Spotless \citep{tobenkin2013spotless}, a MATLAB-based SOS toolbox.
Spotless transforms the SOS optimization program into an SDP, which is solved with MOSEK \citep{mosek2010mosek}.
As the degree $l$ increases, the approximation of the FRS becomes a provably less conservative outer approximation of $\X\frs$ \citep[Theorem 7]{majumdar2014convex}.

However, solving $(D^l)$  is memory intensive, as the monomials of each polynomial are free variables; and a polynomial of degree $2l$ and dimension $n$ has $\binom{2l+n}{n}$ monomials.
The memory required by $(D^l)$ grows as $\mc{O}((n+1)^l)$ for fixed $l$ and $\mc{O}(l^{n+1})$ for fixed $n$ \citep[Section 4.2]{majumdar2014convex}.
Furthermore each free variable is stored as a 64-bit double, and MOSEK computes the Hessian of each SOS constraint \citep[Section 11.4]{mosek2010mosek}, which is proportional to the number of free variables squared (see, e.g., \citet[Chapter 14]{nocedal2006numericaloptimization}.

To estimate the amount of free variables generated for Program $(D^l)$, one can sum up the monomials in each decision variable polynomial.
These consist of the polynomials $v^l$, $w^l$, $q^l$ which are degree $2l$, and the $s$ polynomials for each semialgebraic set, defined in \eqref{eq: general sos}, whose degree are specified in $(D^l)$.

The Segway trajectory-tracking model (in Examples \ref{ex:segway_traj_producing_model} and \ref{ex:segway_traj_tracking_model}) has dimension $5$.
Solving $(D^5)$ for this system requires approximately $1.4\times 10^5$ free variables.
The highest dimension system for which we have computed reachable sets for is the car-like rover, shown in Figure \ref{subfig:rover_time_lapse}, which is described in the following section.
It has a $7$-dimensional state space model.
The program $(D^3)$ for this system requires approximately $1.1\times 10^5$ free variables, and MOSEK used 504 GB of RAM (i.e., memory).
We were unable to solve $(D^4)$, which has approximately $3.8\times 10^5$ free variables, on a computer with 3.5 TB of RAM.

This drastic increase in memory required as dimension increases motivates the system decomposition approach in Section \ref{sec:system_decomp}, where we can solve separate, lower dimension FRS computations, then combine them into the full dimension system with a separate SOS program.
\section{System Decomposition}
\label{sec:system_decomp}

Motivated by the memory issues presented in Section \ref{subsec:FRS_memory_usage}, this section presents a method to apply the FRS computation to higher-dimensional systems.
This section is broken into four parts.
First, we introduce the Rover robot as an example system for which the method presented in Section \ref{sec:FRSmethod} is intractable (Section \ref{subsec:decomp_rover_example}).
Second, we present a system decomposition that makes computing an FRS tractable for each subsystem of the decomposed system (Section \ref{subsec:self_contained_subsystems}).
Third, we present how to reconstruct an FRS of the full system from FRS's computed for the subsystems (Section \ref{subsec:FRS_reconstruction}).
Fourth, we explain how to implement the FRS reconstruction using a SOS program (Section \ref{subsec:FRS_reconstruction_implementation}).

The casual reader can examine Example \ref{ex:traj-producing_rover}, Definition \ref{def: scs}, Example \ref{ex:rover_self-contains_subsystems}, and Theorem \ref{thm:compute_FRS} to understand the primary results from this section.

As a further motivation, consider Example \ref{ex:segway_traj_producing_model}, where the trajectory-producing model creates arcs with constant speed and yaw rate.
In some applications, it is beneficial to plan with more complicated trajectories.
For example a passenger vehicle on a road would plan lane change, lane keeping, and lane return maneuvers, requiring a higher-dimensional model than the one that produces arcs.
As noted in Section \ref{subsec:FRS_memory_usage}, for fixed relaxation degree $l$, increasing the dimension $n$ of the trajectory-producing model increases memory usage of $(D^l)$ as $\mathcal{O}((n+1)^l)$ \citep[Section 4.2]{majumdar2014convex}.

This section adapts a general method from \citet{chen2016exact} and \citet{chen2016journal} for computing backwards reachable sets by system decomposition.
We adapt the method to forward reachability, illustrate how to apply SOS programming, and analyze the memory savings that result from using system decomposition.
This type of decomposition applies when the robot's dynamic model can be split into subsystems of lower dimension.
For example, the Segway and Rover model presented in this work, or the quadrotor model presented in \cite{chen2016exact}. 
Note that the recovery of the exact forward reachable set is not always possible with this approach. 
However, the resulting reachable set is guaranteed to be an overapproximation; hence is useful for the presented application of collision checking.
We focus on the case with two subsystems, though the approach generalizes to any finite number.
As per \citet{chen2016exact} and \citet{chen2016journal}, after separating the system, we compute a reachable set for each subsystem, then reconstruct the reachable set for the full system by intersecting the subsystem reachable sets.

\subsection{An Example System}\label{subsec:decomp_rover_example}

We begin with a practical example of trajectory-producing dynamics with a system dimension that makes computing the FRS intractable as discussed in Section \ref{subsec:FRS_memory_usage}.

\begin{ex}\label{ex:traj-producing_rover}
Recall the Rover from Figure \ref{subfig:rover_time_lapse}.
This robot uses the following bicycle model as the trajectory-producing model \eqref{eq:traj-producing_model}, with states $\z = [x_c,y_c,\theta]^\top \in Z\subset \R^3$ where $x_c$ and $y_c$ track the center of mass as in Remark \ref{rem:points_on_rigid_body_have_same_dyn_as_ctr_of_mass}.
% The trajectory-producing dynamics are:
\begin{flalign}
   \label{eq:traj-producing_rover}
     \frac{d}{dt}\begin{bmatrix}x_c(t)\\y_c(t)\\\theta(t)\\
        \end{bmatrix}
        &=\begin{bmatrix}k_3\cos (\theta(t)) -l_r\omega(t,k) \sin (\theta(t))\\k_3\sin (\theta(t)) +l_r\omega(t,k) \cos (\theta(t))\\
       \omega(t,k) \end{bmatrix}\\
       \omega(t,k)&=-2\frac{T_hk_1 - k_2}{T_h^2}t+k_1\label{eq: lane change yaw rate}
\end{flalign}
where $\omega$ is yaw rate, $k_3$ is longitudinal speed, and $l_r$ is the distance from the robot's rear-wheel to center of mass.
The trajectory parameters, $k \in K \subset \R^{3}$, produce lane change, lane keeping, and lane return maneuvers for an autonomous car driving on a straight road.
$T_h$ is the time required to complete a lane change; $k_1$ determines the initial yaw rate of the trajectory; and $k_2$ is the final heading of the trajectory.

To understand this parameterization, integrate \eqref{eq: lane change yaw rate} over time with initial condition $\theta(0) = 0$ to get the robot's heading:
\begin{flalign}\label{eq: lane change heading}
\theta(t)=-\frac{T_h\,k_1 - k_2}{T_h^2}t^2+k_1t
\end{flalign}
Notice that $k_1$ determines the final lateral displacement of the robot, and setting $k_2$ to the difference between the road and robot heading will create trajectories that align the robot with the road.
Sample maneuvers generated by \eqref{eq:traj-producing_rover} and \eqref{eq: lane change yaw rate} are depicted in Figure \ref{fig:lane_change_parameters}.
This parameterization captures lane change, lane keeping, and lane return maneuvers.
The total dimension of \eqref{eq:traj-producing_rover}, including time, is $n = n_Z + n_K + 1 = 7$, which is intractable for the FRS computation as in Section \ref{subsec:FRS_implementation}.
However, the full system \eqref{eq:traj-producing_rover} is separable into ``self-contained subsystems,'' which we use in this section to compute the FRS of the full system.
\end{ex}

\noindent
The Rover's high-fidelity model (as in \eqref{eq:high-fidelity_model}) and controller for tracking the trajectory-producing model in Example \ref{ex:traj-producing_rover} are presented in Section \ref{subsec:rover_application}.

\begin{figure}
    \centering
   \begin{subfigure}[t]{0.45\textwidth}
        \centering
        \includegraphics[width=\textwidth]{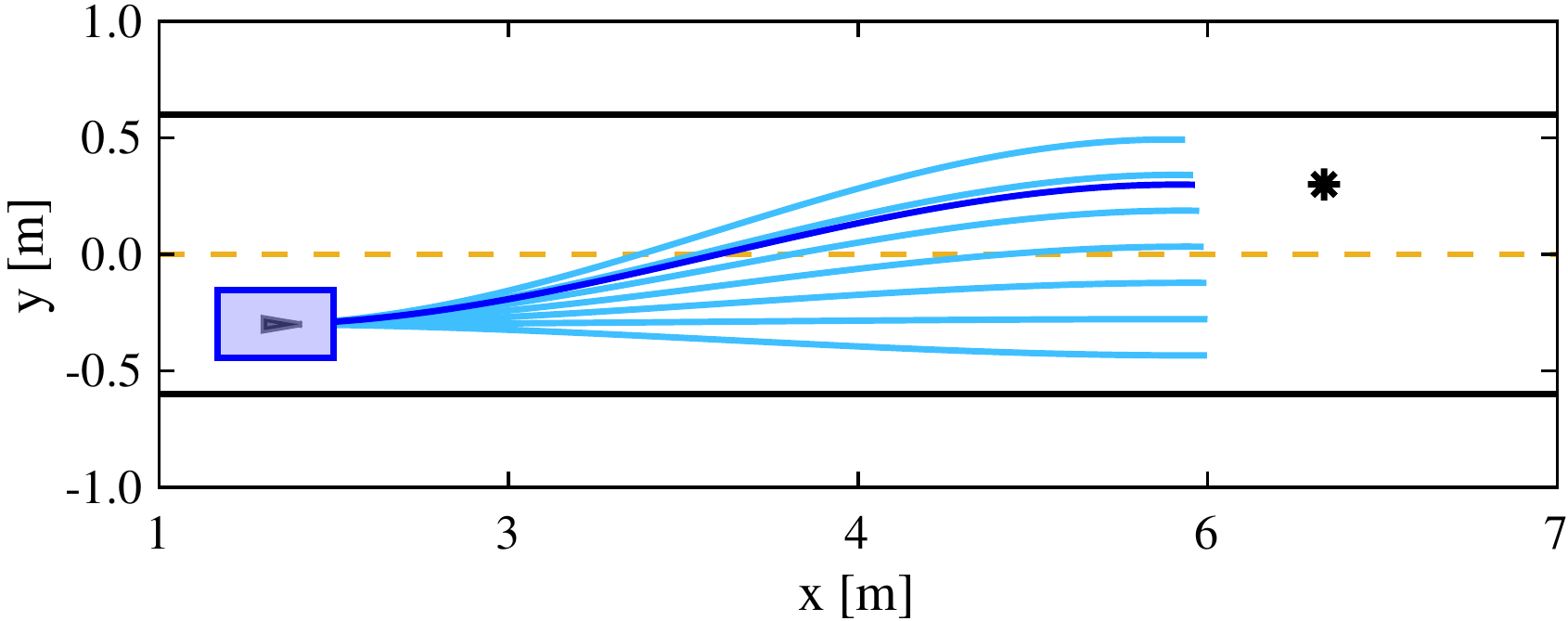}
        \caption{\centering}
        \label{subfig: rover_lane_change_a}
    \end{subfigure}
    \hfill
    \begin{subfigure}[t]{0.45\textwidth}
        \centering
        \includegraphics[width=\textwidth]{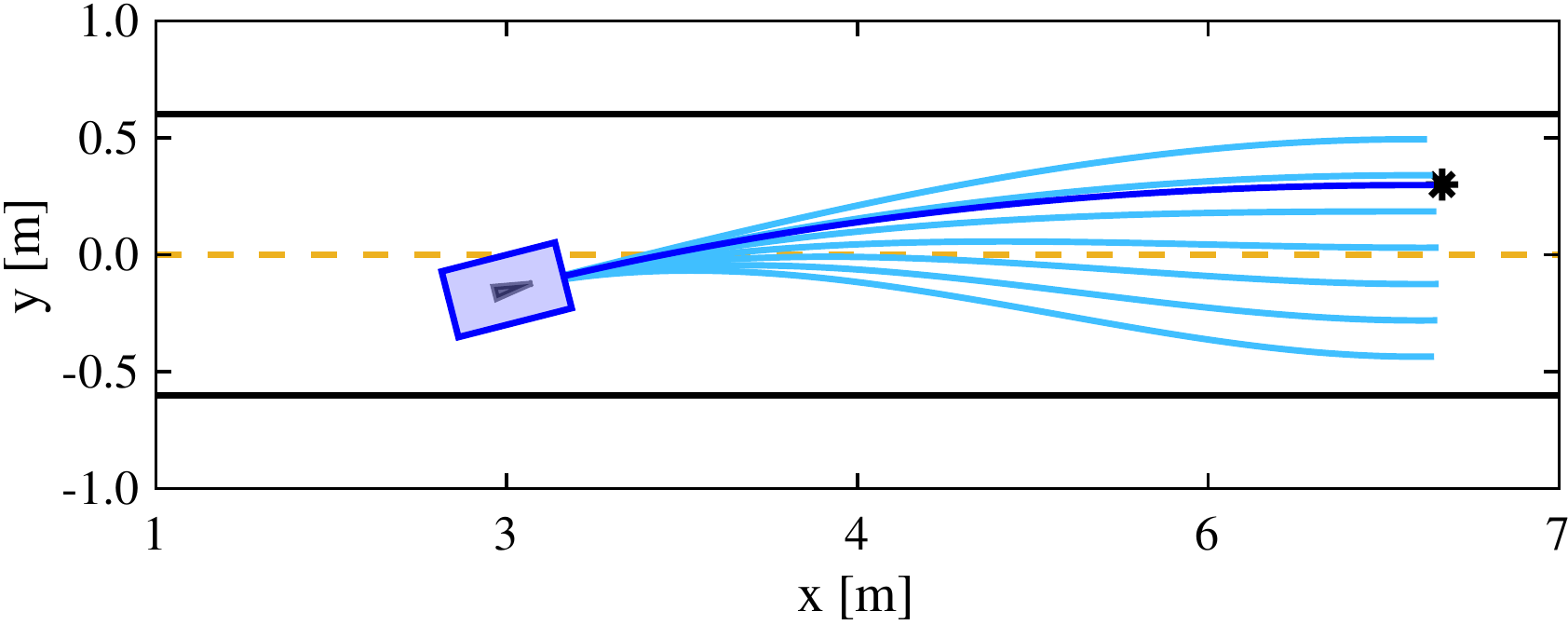}
        \caption{\centering}
        \label{subfig: rover_lane_change_b}
    \end{subfigure}
    \caption{Sample of lane change trajectories generated by \eqref{eq:traj-producing_rover} with the control law in \eqref{eq: lane change yaw rate}.
    The rectangle containing a triangle ``pointer'' represents the Rover and its initial heading. %Initial conditions of $\z_0=(-0.5,-0.3,0.0)$ and $\z_0=(0.4,-0.2,0.3)$ are respectively shown in Subfigures (a) and (b).
    Initial headings of $0.0$ and 0.25 are shown in subfigures (a) and (b), respectively.
    In subfigure (a), the Rover is driving straight in its lane and the sample trajectories consists of lane keeping and lane change maneuvers.
    In subfigure (b), the Rover has begun a lane change, and the sample trajectories consist of lane return maneuvers, and trajectories that complete a lane change.
    The parameters used are $T_h= 2$ s and $k_3= 2$ m/s, with $k_2 = 0.0$ in subfigure (a) and $k_2 = -0.25$ in (b).
    The light trajectories are generated with a sample of values of $k_1$ and plotted over a time horizon of 2 s. 
    The dark trajectory is the optimal trajectory to reach a desired waypoint, shown as an asterisk.}
    \label{fig:lane_change_parameters}
\end{figure}

\subsection{Self-Contained Subsystems}
\label{subsec:self_contained_subsystems}

This section describes how to decompose the trajectory-producing model \eqref{eq:traj-producing_model} into two subsystems.
We follow the methodology introduced by \citet[Section III A]{chen2016journal}, adapting the notation and dynamics to system \eqref{eq:traj-producing_model}, which we refer to as the \defemph{full system}.
Let the state, $\z\in Z$, be partitioned as $\z=(z_1,z_2,z_s)$ with $z_1\in \R^{n_1}$, $z_2\in \R^{n_2}$, $z_s\in \R^{n_s}$, $n_1,n_2>0$, $n_s\geq 0$, and $n_1+n_2+n_s=n_Z$.
Note, we use the notation $(z_1,z_2,z_s)$ as opposed to $[z_1^\top,z_2^\top,z_s^\top]^\top$ in this section for readability.
The states $z_1$ and $z_2$ belong to subsystems 1 and 2, respectively, and the states in $z_s$ belong to both subsystems. 
Therefore, the dynamics can be written:
\begin{flalign}\label{eq: general subsystems}
\begin{split}
    \dot{z}_1(t)&=f_1(t,z_1(t),z_2(t),z_s(t),k)\\
    \dot{z}_2(t)&=f_2(t,z_1(t),z_2(t),z_s(t),k)\\
  \dot{z}_s(t)&=f_s(t,z_1(t),z_2(t),z_s(t),k)\\
    \dot{k}(t)&=0.
    \end{split}
\end{flalign}

Next define the subsystem states and spaces $\z_1=(z_1,z_s)\in Z_1\subset \R^{n_1+n_s}$ and $\z_2=(z_2,z_s)\in Z_2\subset \R^{n_2+n_s}$.
The subspaces $Z_1$ and $Z_2$ are compact and have semi-algebraic representations.
Just as in Assumption \ref{ass:set}, the initial conditions and state space of subsystem $i$ are defined as:
\begin{flalign}
  Z_{0,i} &= \left\{ \z_i \in \R^{n_i+n_s}:\ h_{0_{j}}({\proj}\inv(\z_i)) \geq 0\ \forall\,j = 1,..,n_0 \right\}\label{eq:x_subsys_initial_set}\\
  Z_{i} &= \left\{ \z_i \in \R^{n_i+n_s}:\ h_{Z_{j}}({\proj}\inv(\z_i))\geq 0\ \forall\,j = 1,...,n_Z \right\}\label{eq:x_subsys_state_space}
\end{flalign}
for $i=1,2$.
Recall that $\proj\inv$ is the back-projection operator from any subspace $Z_i$ into $Z$ as in \eqref{eq:back_proj}.
These definitions lead to the following lemma, which confirms that the projection and back-projection operators work ``as expected'' in mapping between $Z$ and $Z_i$:

\begin{lem}\citep[Section IV, Lemma 1]{chen2016exact} \label{lem:projection_chen}
Let $\z\in Z$, $\z_i={\proj}_{Z_i}(\z)$, and $S_i \subseteq Z_i$ for some subsystem, $i$. 
Then $\z_i\in S_i \iff \z \in {\proj}^{-1}(S_i)$.
\end{lem}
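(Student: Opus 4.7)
The plan is to unwind the definitions of $\proj_{Z_i}$ and $\proj^{-1}$ given in Definition \ref{def:projection_operators} and verify each direction of the biconditional by pure set-theoretic bookkeeping. The key observation is that on a single point, $\proj_{Z_i}$ is single-valued: it simply selects the coordinates of $\z$ belonging to subspace $Z_i$. So $\z_i = \proj_{Z_i}(\z)$ uniquely determines $\z_i$ from $\z$, while back-projection is one-to-many, since many full-space points share the same projection.

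For the forward direction, I would assume $\z_i \in S_i$. Since $\z_i = \proj_{Z_i}(\z)$ by hypothesis, the element $\z_i$ itself serves as a witness in the defining property of $\proj^{-1}(S_i)$ from \eqref{eq:back_proj}: there exists $\z_i \in S_i$ such that $\proj_{Z_i}(\z) = \z_i$, so $\z \in \proj^{-1}(S_i)$.

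For the converse, I would assume $\z \in \proj^{-1}(S_i)$. Then by \eqref{eq:back_proj}, there exists some $\z_i' \in S_i$ with $\proj_{Z_i}(\z) = \z_i'$. Since $\proj_{Z_i}$ is a well-defined function on points (it selects the $Z_i$-components of $\z$), and the hypothesis gives $\proj_{Z_i}(\z) = \z_i$, we conclude $\z_i = \z_i' \in S_i$.

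There is no real obstacle here; the statement is essentially a sanity check on the notation. The only subtlety worth flagging explicitly in the write-up is the well-definedness (single-valuedness) of $\proj_{Z_i}$ applied to a single point, which is what lets us equate the two witnesses in the reverse direction. I would keep the proof to a short paragraph, stating both implications with reference to the defining expressions in Definition \ref{def:projection_operators}, and noting that compactness and the semi-algebraic structure of $Z_i$ introduced in \eqref{eq:x_subsys_initial_set}--\eqref{eq:x_subsys_state_space} are not needed for this lemma.
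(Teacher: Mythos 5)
Your proof is correct: both directions follow by unwinding Definition \ref{def:projection_operators}, with the reverse direction resting on the single-valuedness of $\proj_{Z_i}$ on points, exactly as you flag. The paper itself states this lemma by citation to \citet{chen2016exact} without reproducing a proof, and your definitional argument is the standard (and essentially only) one, so there is nothing further to add.
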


Next, we restate the definition of a self-contained subsystem:

\begin{defn}\label{def: scs} \citep[Definition 5]{chen2016journal} Consider the following special case of \eqref{eq: general subsystems}:
\begin{flalign}\label{eq:full_sys_SCS}
\begin{split}
    \dot{z}_1(t)&=f_1(t,z_1(t),z_s(t),k)\\
    \dot{z}_2(t)&=f_2(t,z_2(t),z_s(t),k)\\
    \dot{z}_s(t)&=f_s(t,z_s(t),k)\\
    \dot{k}(t)&=0.
    \end{split}
\end{flalign}
We call each of the subsystems with states defined as $\z_i = (z_i,z_s)$, for $i = 1,\ 2$, a \defemph{self-contained subsystem} (SCS).
We call \eqref{eq:full_sys_SCS} the \defemph{full system}.
\end{defn}
The SCS's in \eqref{eq:full_sys_SCS} show that the evolution of each subsystem depends only on the subsystem states: $\dot{\z_i}$ depends only on $\z_i=(z_i,z_s,k)$.
Notice that the trajectory parameters $k$ can appear in both SCS's. 
Given some initial condition $\z_0 \in Z_0$, let, $\z:[0,T]\to Z$ be a trajectory of the full system \eqref{eq:full_sys_SCS}.
% Then $\z$ satisfies \eqref{eq:traj-producing_model} for all $t\in[0,T]$.
Similarly, if $\z_i:[0,T]\to Z_i$ is the trajectory of subsystem $i$, then $\z_i$ satisfies the following subsystem dynamics for all $t\in[0,T]$:
\begin{flalign}
\begin{split}
    \dot{\z_i}(t)&=\begin{bmatrix}f_i(t,z_i(t),z_s(t),k)\\
    f_s(t,z_s(t),k)\end{bmatrix}\\
    \dot{k}(t)&=0.
    \end{split}
\end{flalign}
Trajectories of the full system are related to the trajectories of the subsystem via the projection operator, $\proj_{Z_i}(\z(t)) = \z_i(t)$, from \eqref{eq: projection full to sub traj} \citep[Equation (12)]{chen2016journal}.

To account for tracking error, each error function ($g_i$ from Assumption \ref{ass:tracking_error_fcn_g}) must be defined independently for each subsystem, so that subsystems 1 and 2 are still SCS.
The error function is added to \eqref{eq:full_sys_SCS} as defined below: 
\begin{flalign}\label{eq:full_sys_SCS with disturbances}
\begin{split}
    \dot{z_1}(t)&=f_1(t,z_1(t),z_s(t),k)+g_1(t,z_1(t),z_s(t),k)\circ d(t)\\
    \dot{z_2}(t)&=f_2(t,z_2(t),z_s(t),k)+g_2(t,z_2(t),z_s(t),k)\circ d(t)\\
    \dot{z_s}(t)&=f_s(t,z_s(t),k)+g_s(t,z_s(t),k)\circ d(t)
    \end{split}
\end{flalign}
In the remainder of this section, we assume subsystems 1 and 2, with states $\z_1=(z_1,z_s)$ and $\z_2=(z_2,z_s)$, have dynamics defined in \eqref{eq:full_sys_SCS with disturbances}.
Subsystems 1 and 2 are SCS's, and Lemma \ref{lem:projection_chen} and \eqref{eq: projection full to sub traj} hold.

\begin{ex}\label{ex:rover_self-contains_subsystems}
Recall the Rover's trajectory-producing model in \eqref{eq:traj-producing_rover}.
Solving $(D^l)$ for this model is memory intensive since the total dimension is $n = 7$, as discussed at the end of Section \ref{subsec:FRS_memory_usage}.
However, we can decompose this system into two separate SCS's:
\begin{flalign}
    \dot{\z_1}(t)\begin{split}\label{eq:rover_SCS_x} &= \begin{bmatrix}k_3\cos (\theta(t)) -l_r\omega(t,k) \sin (\theta(t))\\
       \omega(t,k) \\\end{bmatrix}\\%+g_1(t,\z_1(t),k)~d(t)\\
    \end{split} \\
   \dot{\z_2}(t) \begin{split}\label{eq:rover_SCS_y} &= \begin{bmatrix}k_3\sin (\theta(t)) +l_r\omega(t,k) \cos (\theta(t))\\
       \omega(t,k) \\\end{bmatrix}\\%+g_2(t,\z_2(t),k)~d(t)\\
    \end{split}
\end{flalign}
where $\z_1 = [x,\theta]^\top \in Z_1$ and $\z_2 = [y,\theta]^\top \in Z_2$.
We produce the trajectory-tracking model \eqref{eq:traj_tracking_model} for each SCS, by including error functions $g_1$ and $g_2$ as in Assumption \ref{ass:tracking_error_fcn_g}; more details are provided in Section \ref{subsec:rover_application}.
\end{ex}

\subsection{FRS Reconstruction}\label{subsec:FRS_reconstruction}

Since subsystems 1 and 2 are SCS's, an FRS can be found for each using $(D)$.
Denote the two applications of $(D)$ as $(D_1)$ and $(D_2)$, respectively.
This section formulates an optimization program that overapproximates the intersection of the back-projections of the subsystems, thus overapproximating the FRS.
We refer to this program as \defemph{reconstruction}.
First define the intersection of the back-projections as:
\begin{align}\label{eq:v_backproj_intersection}
    \begin{split}
    \mathcal{V}=\{(\z,k)\ |\ v_1(t,\z_1,k)\leq 0,\ v_2(t,\z_2,k)\leq 0,&\\
    t\in[0, T],\ \z \in Z,\ k\in K\},&
    \end{split}
\end{align}
which uses Lemma \ref{lem:v_is_negative}; since each $v_i$ is negative on trajectories of subsystem $i$, the intersection of the back-projections is the set where both $v_1$ and $v_2$ are negative.

Next, let $(v_1,w_1,q_1)$ (resp. $(v_2,w_2,q_2)$) be a feasible solution to $(D_1)$ (resp. $(D_2)$).
An outer approximation of $\X\frs$ can be reconstructed with the following optimization program:

\begin{align} %\label{lp:construct w}
	 \underset{w_r}{\text{inf}} \hspace*{0.25cm} & \int_{X \times K} w_r(x,k) ~ d\lambda_{X \times K} &&(R)\nonumber\\%\label{cons:wr_geq_1_on_v}
        & w_r(x,k)\geq 1,\ \forall (x,k)\in \mathcal{V} &&(R1)\nonumber\\%\label{cons:w_r_geq_0}
       & w_r(x,k) \geq 0,\ \forall (x,k)\in X \times K &&(R2)\nonumber,
\end{align}
where $x = \idx(\z)$.
% $(R2)$ applies for all $(x,k) \in X \times K$.
% Note that the notion of intersecting the back-projections of the subsystem FRS's is implied by the fact that $v_1$ and $v_2$ are non-positive along trajectories of the full system.
Figure \ref{fig:system_decomp} shows the intersection of back-projections of the subsystem FRS's for the Rover.
Now, we prove that the solution to $(R)$ contains the FRS.

\begin{thm}\label{thm:compute_FRS}
Let $w_r$ be a feasible solution to $(R)$.
Then $\X\frs$ is a subset of the $1$-superlevel set of $w_r$.
\end{thm}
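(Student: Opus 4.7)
The plan is to take an arbitrary point in $\X\frs$, use the self-contained subsystem structure to produce a corresponding trajectory in each subsystem, invoke Lemma \ref{lem:v_is_negative} on each subsystem FRS program to certify that the associated $v_i$ is nonpositive, and then apply constraint $(R1)$ to conclude the point lies in the $1$-superlevel set of $w_r$.

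More concretely, first I would fix $(x^*,k^*)\in \X\frs$. Unpacking the definition \eqref{eq:frs_def}, there exist $\z_0\in Z_0$, $\tau\in[0,T]$, and $d\in L_d$, together with a trajectory $\tilde{\z}:[0,T]\to Z$ of the trajectory-tracking model \eqref{eq:full_sys_SCS with disturbances} starting at $\z_0$, such that $x^*=\idx(\tilde{\z}(\tau))$. Second, I would project this trajectory onto each subsystem via $\tilde{\z}_i(t)=\proj_{Z_i}(\tilde{\z}(t))$ for $i=1,2$. The key observation — and where the SCS property of Definition \ref{def: scs} is essential — is that because $\dot z_1$ does not depend on $z_2$, $\dot z_2$ does not depend on $z_1$, and $\dot z_s$ depends only on $z_s$ and $k$, the projected curve $\tilde{\z}_i$ is itself a solution of the subsystem-$i$ trajectory-tracking dynamics driven by the same disturbance $d$ (restricted to the relevant coordinates), with initial condition $\proj_{Z_i}(\z_0)\in Z_{0,i}$ by \eqref{eq:x_subsys_initial_set} and Lemma \ref{lem:projection_chen}. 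Hence $\tilde{\z}_i$ is an admissible trajectory in the definition of the subsystem FRS produced by $(D_i)$.

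Third, for each $i$, let $(v_i,w_i,q_i)$ be the feasible solution of $(D_i)$ used to build $\mathcal{V}$ in \eqref{eq:v_backproj_intersection}. Applying Lemma \ref{lem:v_is_negative} to subsystem $i$ with the trajectory $\tilde{\z}_i$ and time $\tau$ gives $v_i(\tau,\tilde{\z}_i(\tau),k^*)\leq 0$. Since this holds for both $i=1$ and $i=2$, and since $\tilde{\z}(\tau)\in Z$ and $k^*\in K$, the pair $(\tilde{\z}(\tau),k^*)$ satisfies every defining inequality of $\mathcal{V}$ at $t=\tau$, so $(\tilde{\z}(\tau),k^*)\in \mathcal{V}$. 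Finally, constraint $(R1)$ applied at this point (understanding $w_r$ as evaluated at the $X$-projection of $\z$, consistent with $(R2)$ and with $w_r\in\R[x,k]$) yields $w_r(x^*,k^*)=w_r(\idx(\tilde{\z}(\tau)),k^*)\geq 1$, so $(x^*,k^*)$ lies in the $1$-superlevel set of $w_r$.

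The main obstacle is the second step: cleanly justifying that the projection of a full-system trajectory is a subsystem trajectory with a valid disturbance in $L_d$ of the appropriate dimension. This is where one has to use that \eqref{eq:full_sys_SCS with disturbances} decouples in exactly the SCS sense, that $\proj_{Z_i}$ commutes with time differentiation along absolutely continuous curves, and that $Z_{0,i}$ and $Z_i$ are defined as the projections of $Z_0$ and $Z$ so that the projected initial condition and state trajectory remain admissible for the subsystem program $(D_i)$. Everything else is a direct chaining of Lemma \ref{lem:v_is_negative} with the constraint $(R1)$.
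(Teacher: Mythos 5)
Your proposal is correct and follows essentially the same route as the paper's proof: project the full-system trajectory onto each self-contained subsystem, invoke Lemma \ref{lem:v_is_negative} to get $v_i\leq 0$ for $i=1,2$, and conclude via the definition of $\mathcal{V}$ and constraint $(R1)$. If anything, you are more explicit than the paper about why the projected curve is an admissible subsystem trajectory with a valid restricted disturbance, which the paper handles implicitly via the SCS structure and Lemma \ref{lem:projection_chen}.
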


\noindent The proof is available in Appendix \ref{app:reachability_proofs}.

\subsection{Implementation}\label{subsec:FRS_reconstruction_implementation}

In this section, we implement a relaxation of $(R)$ with SOS polynomials.
We show that the system decomposition method reduces the upper bounds on memory usage.

Suppose $l \in \N$, and suppose $(v_1^l,w_1^l,q_1^l)$ and $(v_2^l,w_2^l,q_2^l)$ are feasible solutions to $(D_1^l)$ and $(D_2^l)$, which are $(D^l)$ applied to Subsystems 1 and 2 respectively.
Recall the sets $H_T, H_Z, H_K$, and $H_X$ from Section \ref{subsec:FRS_computation}, which contain the polynomials defining the sets $[0,T],\ Z,\ K$, and $X$ respectively.
Let $\alpha \in \N$ and $\alpha\geq l$.
We pose the following SDP to reconstruct the FRS:

\begin{align} %\label{lp:construct_w_sos}
		\underset{w_r^\alpha}{\text{inf}} \hspace*{0.25cm} & y_{X\times K}^\top\text{vec}(w_r^\alpha) && &&(R^\alpha)\nonumber\\
        & w_r^\alpha -1 &&\in Q_{2\alpha}(-v_1,-v_2,H_T,H_Z,H_K) &&(R1^\alpha)\nonumber\\
         & w_r^\alpha &&\in Q_{2\alpha}(H_X,H_K)&&(R2^\alpha)\nonumber
\end{align}
where $x = \idx(\z)$ and $w_r^\alpha \in \R_{2\alpha}[x,k]$.
As in $(D^l)$, the vector $y_{X\times K}$ contains moments associated with the Lebesgue measure $\lambda_{X\times K}$, so $\int_{X\times K} w_r^\al(x,k)d\lambda_{X\times K} = y_{X\times K}^T \mathrm{vec}(w_r^\al)$ for $w \in \R_{2\al[x,k]}$ \citep{majumdar2014convex}.

The proposed system decomposition approach reduces memory usage since solving $(D^l)$ for each subsystem reduces the problem dimension. 
In particular, the reconstruction program $(R^\alpha)$, only has two SOS constraints of degree $2\alpha$; hence, it has a less stringent memory requirement than $(D^l)$.
For the rover example, solving $(D^4)$ for subsystems \eqref{eq:rover_SCS_x} and \eqref{eq:rover_SCS_y} each requires approximately $1.5\times 10^5$ free variables and used $473$ GB of RAM.
The reconstruction program $(R^5)$ requires approximately $5.5\times 10^4$ free variables and used 227 GB of RAM.
In contrast, recall from Section \ref{subsec:FRS_memory_usage} that the RAM required for the full system was 504 GB.
Figure \ref{fig:compare system decomp to normal} compares the FRS computed with the decomposed and reconstruction programs to an FRS computed for the full system \eqref{eq:traj-producing_rover} by solving $(D^3)$.

With this section complete, we can compute a conservative approximation of the FRS for a wide class of mobile ground robots; in other words, we have completed the offline portion of RTD.
Note, by Theorem \ref{thm:compute_FRS}, the 1-super level sets of $w_r$ (and $w_r^\alpha$) contain $\X\frs$; therefore, subsequent theorems and lemmas pertaining to $w$ (and $w^l$) also hold for $w_r$ (and $w_r^\alpha$).
Next, in Sections \ref{sec:pers_feas}, we discuss conditions that must be met to ensure safety and persistent feasibility.
Then, in Sections \ref{sec:obstacle_representation} and \ref{sec:trajectory_optimization}, we address the online trajectory optimization portion of RTD.

\begin{figure*}
    \centering
    \includegraphics[width=\textwidth]{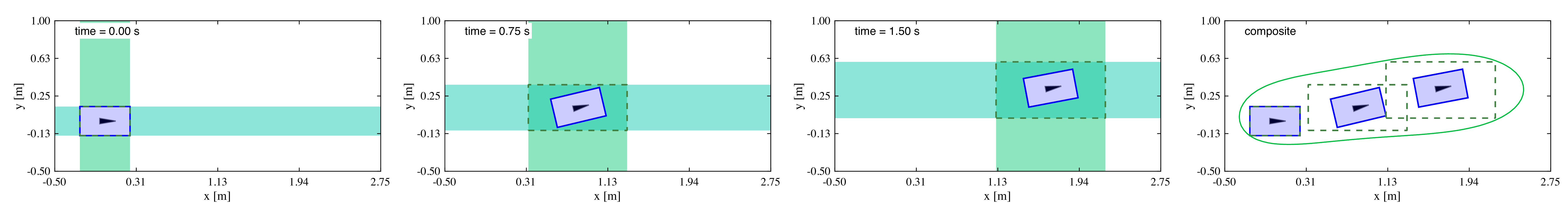}
    \caption{Example of the system decomposition and reconstruction for the FRS of the Rover's trajectory-producing system \eqref{eq:traj-producing_rover}.
    The robot is the rectangle with a triangle indicating its heading.
    The FRS and robot at 0.0, 0.75, and 1.5 s following a trajectory with parameters $k = (0.5 \ \text{rad/s},\ 0.0\ \text{rad},\ 1.1\ \text{m/s})$ are depicted from left to right.
    The vertical and horizontal bars show back-projections of the 0 sub-level sets of $v_i^4$ by $(D_i^4)$ for $i=1,2$.
    The dashed rectangle indicates the intersection of the back projections.
    The far right figure shows the intersections at each time, along with the 1-level set of $w_r^5$ as a solid line.}
    \label{fig:system_decomp}
\end{figure*}
\begin{figure}
    \centering
    \begin{subfigure}[t]{0.95\columnwidth}%{0.45\textwidth}
        \centering
        \includegraphics[width=\textwidth]{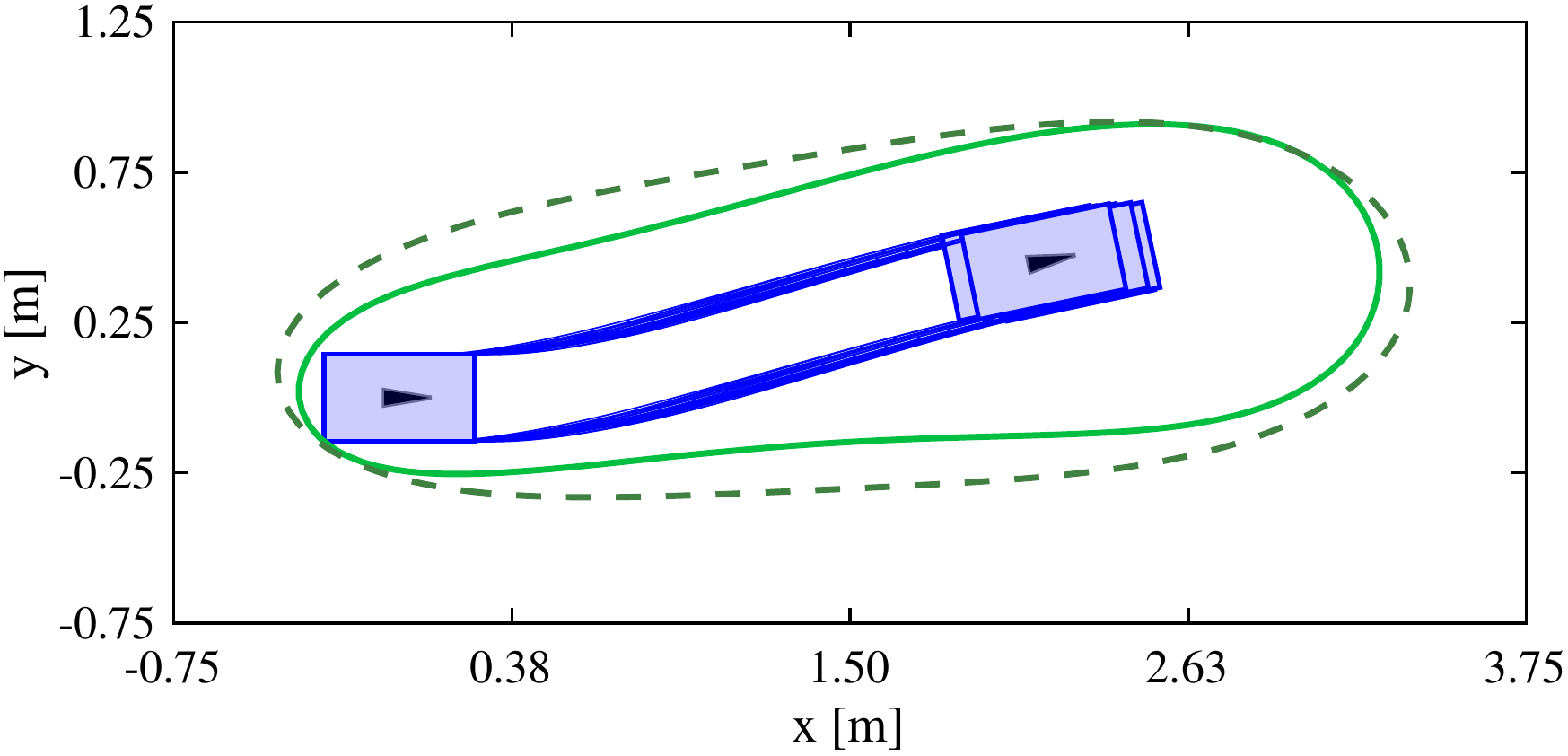}
        \caption{\centering}
        \label{subfig: rover_comparison_a}
    \end{subfigure}
    \hfill
    \begin{subfigure}[t]{0.95\columnwidth}%{0.45\textwidth}
        \centering
        \includegraphics[width=\textwidth]{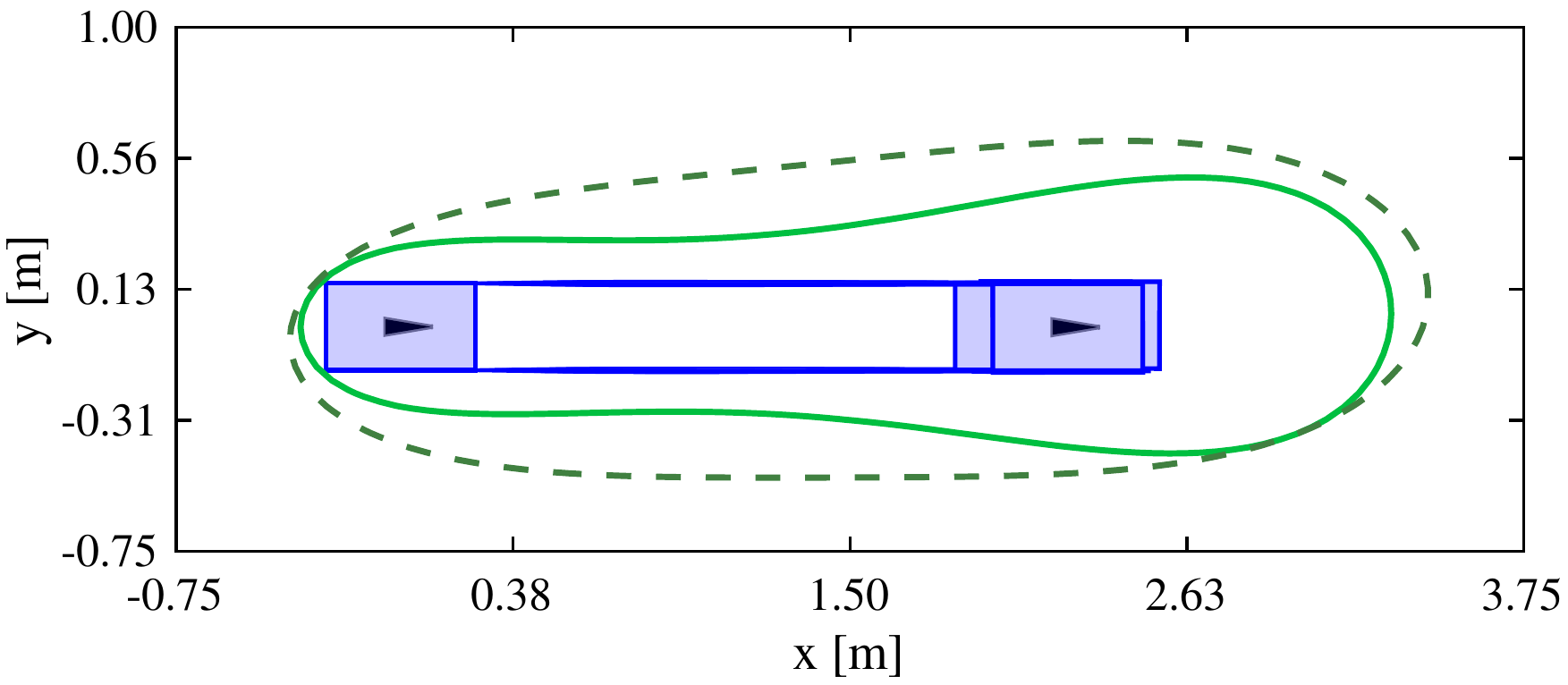}
        \caption{\centering}
        \label{subfig: rover_comparison_b}
    \end{subfigure}
    \caption{Comparison of reach sets computed for lane change trajectories produced by \eqref{eq:traj-producing_rover}. 
    The dark, dashed contours represent the 1-level set of $w^3$ computed for the full system. 
    The light contours represent the 1-level set of $w_r^5$, computed using the system decomposition and reconstruction methods.
    The reach sets are computed with a time horizon of $1.5$ s.
    Notice that the FRS computed with system decomposition is almost entirely contained within the FRS that does not use system decomposition; so, system decomposition reduces conservatism by enabling the computation of a higher-degree FRS.
    Example trajectories are generated by simulating the high-fidelity model described in \eqref{eq:high-fidelity_rover} for initial velocities and wheelangles between 0.8 and 1.5 m/s and -0.05 and 0.05 rad. Subfigure (a) shows the trajectory parameter $k$ = (0.5 rad/s, 0.0 rad, 2.0 m/s). Subfigure (b) shows the trajectory parameter $k$ = (0.0 rad/s, 0.0 rad, 1.6m/s).
    }
    \label{fig:compare system decomp to normal}
\end{figure}

\section{Conditions for Safety and Persistent Feasibility}\label{sec:pers_feas}

In this section, we state properties of a robot's environment, sensors, computation speed, and braking behavior that are required to ensure that planning with RTD is safe and persistently feasible.
We lower bound the planning time horizon in Remark \ref{rem:T_geq_tau_plan_plus_tau_stop}.
The main result in this section is Theorem \ref{thm:D_sense}, which determines a lower bound on sensor performance required to guarantee safety and persistent feasibility.

In Section \ref{subsec:obstacles_and_sensors}, we prescribe how obstacles must be sensed and processed at runtime.
In Section \ref{subsec:safety}, we formally define safety using the FRS.
In Section \ref{subsec:pers_feas} we provide conditions for persistent feasibility using the FRS and the robot's ability to brake to a stop.

The casual reader can understand the primary results from this section by reading Definition \ref{def:safety_non-intersection_condition}, Assumption \ref{ass:brake_in_pi_X}, and Theorem \ref{thm:D_sense}.

\subsection{Obstacles and Sensors}\label{subsec:obstacles_and_sensors}

\begin{defn}\label{def:obs}
An \defemph{obstacle} is a compact, connected subset of $X$ that must be avoided by the robot, and is assumed to be static with respect to time.
At any instance in time, there is a finite, maximum number of obstacles $n_{\text{obs}}$ within the robot's sensor horizon.
\end{defn}
\noindent Without loss of generality, we assume that the number of obstacles within the robot's sensor horizon at every instance in time is equal to $n_{\text{obs}}$. 
If there are fewer than $n_{\text{obs}}$ obstacles, then we treat the remaining obstacles as the empty set.
We now define how the robot senses obstacles.

\begin{assum}\label{ass:sense}
The robot has a finite \defemph{sensor horizon} $D_\regtext{sense}$, which is a radius around the robot within which all obstacles are observed, meaning that the robot has access to the size, shape, and location each such obstacle.
Occlusions and unexplored areas outside the sensor horizon are treated as static obstacles at each planning instance.
During operation, obstacles appear from outside the robot's sensor horizon and are sensed as soon as they are within the horizon; obstacles do not spontaneously appear within the sensor horizon.
\end{assum}

When running RTD on hardware, we also require that the following obstacle processing step happens before trajectory planning.

\begin{assum}\label{ass:obs_error_buffer}
By Assumption \ref{ass:predict}, the robot's current state estimate is bounded by $\vep_x$ in the $x$-coordinate and $\vep_y$ in the $y$-coordinate.
We assume that any sensed obstacle $X\sense \subset X$ is expanded by $\pm\vep_x$ (resp. $\pm\vep_y$) in the $x$ (resp. $y$) direction before being passed to the trajectory planner, i.e.
\begin{align}
    X\obs = X\sense \oplus \left\{[-\vep_x,\vep_x] \times [-\vep_y,\vep_y]\right\}\label{eq:Xobs_expanded}
\end{align}
is the set passed to the trajectory planner, where $\oplus$ indicates the Minkowski sum, i.e., $A \oplus B = \{a+b~|~a \in A, b \in B\}$.
\end{assum}

\noindent As noted in Remark \ref{rem:error_relationship}, this buffer pertains to the hardware demos, where the gap between the high fidelity model and actual robot must be accounted for.

In addition to describing how obstacles are perceived, we place assumptions on the timing allotted for planning.
Recall Assumption \ref{ass:tau_plan}, which establishes the planning time limit $\tau\plan$.
Here, we elaborate upon this assumption.

\begin{assum}\label{ass:tau_trajopt_and_tau_process}
The time required to process sensor data has a finite upper bound, $\tau_\regtext{process}$.
There is also a maximum allowed execution time for trajectory planning, $\tau_\regtext{trajopt}$.
We require $\tau\plan \geq \tau_\regtext{process} + \tau_\regtext{trajopt}$.
\end{assum}

\noindent In practice, most modern obstacle detectors have a bounded processing time for camera, lidar, or radar data \citep{johnson2016driving,liu2016ssd}.
Though we do not prove that the trajectory planning time of the proposed method is bounded, we do enforce a time limit of $\tau\plan$ on online computation, after which it is terminated.

\begin{figure}[t]
    \centering
    \includegraphics[width=0.9\columnwidth]{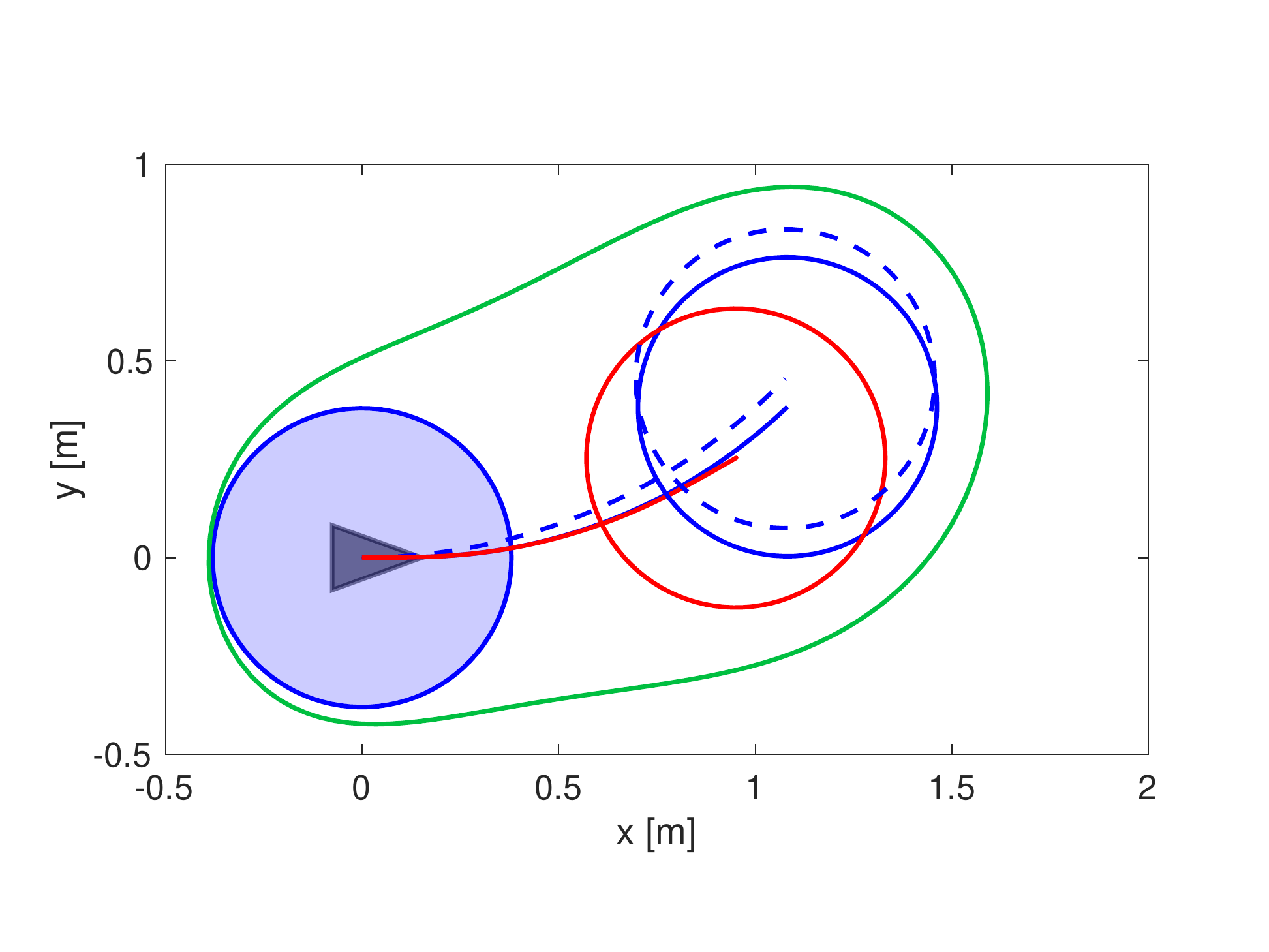}
    \caption{
    The Segway robot, as in Example \ref{ex:segway}, tracking trajectories planned in the $xy$-subspace $X$ using the trajectory-producing model Example \ref{ex:segway_traj_producing_model}.
    The robot begins with $[x_0,y_0]^\top = [0,0]^\top$ and the initial heading $\theta(0) = 0$ rad pointing to the ``right.''
    The robot has a circular footprint with radius $0.38$ m, and the initial state in is $[x_c,y_c,\theta,\omega,v]^\top = [0\ \regtext{m}, 0\ \regtext{m}, 0\ \regtext{rad}, 0.0\ \regtext{rad/s}, 1.5\ \regtext{m/s}]^\top$, plotted in $X$ as the solid circle on the left.
    The desired yaw rate $\omega\des$ (corresponding to $k_1$ in Example \ref{ex:segway_traj_producing_model}), is $1.0$ rad/s; the desired speed ($k_2$ in Example \ref{ex:segway_traj_producing_model}) is $v\des = 1.5$ m/s.
    The desired trajectory, with time horizon $T = 0.8$ s, is shown in dashed blue, with the robot's footprint plotted at the end.
    The high-fidelity model trajectory, and corresponding footprint at time $\tfin$ (using the tracking controller $u_k$ in Example \ref{ex:segway_feedback_controller}) is shown in solid blue.
    At $t = 0.5$ s, the robot begins using the braking controller from Example \ref{ex:segway_braking}.
    The robot is plotted with a solid line where it comes to a stop.
    The trajectory of the robot using $u_k$ for $t \in [0.5,1]$ s (as opposed to switching to $u\brk$) is shown with a dotted line.
    The contour is the FRS spatial projection for each $\omega\des$ (using the map $\pi_X^l$ from \eqref{eq:pi_X_on_points} with $l=5$).
    Notice that the braking trajectory stays within the FRS, as required by Assumption \ref{ass:brake_in_pi_X}.
    }
    \label{fig:segway_braking}
\end{figure}

\subsection{Safety}\label{subsec:safety}

Here, we address what it means for RTD to be safe.
Recall that RTD uses a receding-horizon strategy wherein it selects a new parameter $k$ at each planning iteration.
We create a ``non-intersection condition'' for safety in each planning iteration by stating how a safe subset of the FRS must not intersect with any obstacles.

Before proceeding, note that we compute an FRS for trajectories of the trajectory-tracking model \eqref{eq:traj_tracking_model}, as shown by \citet{kousik2017safe} and in Section \ref{sec:FRSmethod} of this paper.
We can think of the FRS as a map that associates trajectory parameters in $K$ with reachable points in $X$.
Suppose that the tuple $(v,w,q)$ is an optimal solution to $(D)$ from Section \ref{subsec:FRS_computation}.
Then, by Lemma \ref{lem:w_geq_1_on_frs}, $w: X\times K \to \{0,1\}$ is an indicator function on $\X\frs$ as follows.
Define the \defemph{FRS spatial projection map} $\pi_X: K \to \P(X)$ for which
\begin{align}
    \pi_X(k) = \left\{x \in X~|~w(x,k) = 1 \right\},\label{eq:pi_X_on_points}
\end{align}
which returns the set in $X$ of all points that are reachable by the robot's trajectory-tracking model \eqref{eq:traj_tracking_model} at any time in $[0,T]$.
This map lets us define safety in each plan:

\begin{defn}\label{def:safety_non-intersection_condition}
Suppose that $X\obs \subset X$ is a set of obstacles as in Definition \ref{def:obs}.
Then, at each planning iteration, we must pick a $k$ for which the FRS does not intersect any obstacles, i.e., $\pi_X(k) \cap X\obs = \emptyset$.
This \defemph{non-intersection condition} defines safety when planning with RTD.
\end{defn}

Now, recall that we cannot solve $(D)$ exactly; but, suppose that the tuple $(v^l,w^l,q^l)$ is an optimal solution to $(D^l)$ from Section \ref{subsec:FRS_computation} for some fixed degree $l \in \N$.
As per Remark \ref{rem:w_geq_1_is_in_FRS}, 1-superlevel sets of $w^l: X\times K \to \R$ contain $\X\frs$.
The map $\pi_X^l: K \to \P(X)$ is defined as:
\begin{align}
    \pi_X^l(k) = \left\{x \in X~|~w^l(x,k) \geq 1 \right\}.\label{eq:pi_X_l_on_points}
\end{align}
By Remark \ref{rem:w_geq_1_is_in_FRS}, for any $k \in K$, $\pi_X^l(k) \supseteq \pi_X(k)$, i.e., this map over-approximates the FRS.
Therefore, $k \in K$ is safe for the high-fidelity model to track if $\pi_X^l(k) \cap X\obs = \emptyset$.

\begin{defn}\label{def:safe_plan}
Any $k$ for which $\pi_X^l(k) \cap X\obs = \emptyset$ is called a \defemph{safe plan}.
\end{defn}

\noindent See Algorithm \ref{alg:trajopt} in Section \ref{sec:trajectory_optimization} for how we use the non-intersection condition from Definition \ref{def:safe_plan} online.

\subsection{Persistent Feasibility}\label{subsec:pers_feas}

We now prescribe how the robot must brake (Section \ref{subsubsec:pers_feas_braking}), how long its planning time horizon must be (Section \ref{subsubsec:pers_feas_time_horizon}), and how long its sensor horizon must be (Section \ref{subsubsec:pers_feas_sensor_horizon}, to ensure persistent feasibility.

Recall that the trajectory-producing model \eqref{eq:traj-producing_model} does not incorporate a braking maneuver.
But, in any receding-horizon planning iteration, if the robot cannot find a new trajectory plan, it must be able to safely brake to a stop.
Suppose that, at the beginning of a planning iteration, the robot is currently tracking a safe trajectory found in the previous planning iteration.
Further suppose that the robot is unable to identify a new safe trajectory in the current iteration or any subsequent iterations.
Then, the trajectory from the previous iteration must bring the robot to a safe stop.
However, the parameterized trajectories in this paper do not necessarily explicitly include braking.
To compensate for this, we use the fact that the FRS allows us to identify a subset of the state space within which the robot is collision-free (using $\pi_X$ from \eqref{eq:pi_X_l_on_points}).
Consequently, if the robot can stop within this safe subset, we say it can ``brake within the FRS," which enables us to guarantee safety and persistent feasibility.

Next, we formalize braking within the FRS, and provide conditions on the robot's braking behavior, the planning time horizon $T$, and the robot's sensor horizon to make it possible to brake within the FRS.
For readability, we define these ideas here, then provide more detail in Appendix \ref{app:pers_feas}.

\subsubsection{Braking Within the FRS}\label{subsubsec:pers_feas_braking}

We now restate the above reasoning for braking within the FRS more formally.
First, recall that each trajectory parameterized by $k$ is of duration $T$.
Suppose that $X\obs$ is an obstacle and, at time $0$, we have a safe plan given by $k_0 \in K$, as in Definition \ref{def:safe_plan}.
We only know that $k_0$ is safe for $t \in [0,T]$, but persistent feasibility requires us to ensure safety for all $t \geq 0$.
While tracking $k_0$, the robot must simultaneously plan its next trajectory, i.e., find some $k_1$ for which $\pi_X^l(k_1) \cap X\obs = \emptyset$.
By Assumption \ref{ass:tau_plan}, the robot has a duration of $\tau\plan < T$ to find $k_1$.
If a safe $k_1$ is not found by $\tau\plan$, the only way for the robot to be safe for all $t > \tau\plan$ is to brake to a stop.
Since $\pi_X^l(k_0) \cap X\obs = \emptyset$, we require that the robot brakes within the set $\pi_X^l(k_0)$, i.e., it brakes ``within the FRS.''
This section describes conditions to ensure that it is possible for the robot to brake safely starting at $\tau\plan$ of the current planning iteration.
We begin by stating how the robot brakes:

\begin{assum}\label{ass:brake_ctrl_and_traj}
At time $0$, let the robot, with high-fidelity dynamics \eqref{eq:high-fidelity_model}, be at an arbitrary initial condition $\z\hio$ and about to track an arbitrary $k \in K$.
We assume that there exists a finite \defemph{braking time} given by $\tau\brk: Z\hio\times K \to \R_{\geq 0}$, within which the robot can come to a stop using a \defemph{braking controller},
\begin{align}
    u\brk: [0,\infty)\times Z\hi \to U.\label{eq:ubrk_braking_controller}
\end{align}
If the braking controller is applied at $\tau\plan$ then for any $\hat{\tau} > \tau\plan +\tau\brk(\z\hio,k)$ the robot will be stopped:
\begin{align}
    \idv\bigg(f\hi\Big(\hat{\tau},\z\brk(\hat{\tau}),u\brk\big(\hat{\tau},\z\brk(\hat{\tau})\big)\Big)\bigg) = 0,
\end{align}
where $V$ is the subspace of the state space $Z\hi$ corresponding to the yaw rate and speed states as described in Assumption \ref{ass:max_speed_and_yaw_rate}.
Here, $\z\brk: [0,\infty) \to Z\hi$ is a trajectory of the high-fidelity model \eqref{eq:high-fidelity_model} produced when the braking controller $u\brk$ is used (the time domain of the high-fidelity model is extended to infinity to allow this).
\end{assum}

Now we formally specify braking within the FRS.

\begin{assum}\label{ass:brake_in_pi_X}
Consider an arbitrary initial condition $\z\hio$ at time $0$, and arbitrary $k \in K$.
Suppose the robot, described by the high-fidelity dynamics \eqref{eq:high-fidelity_model}, tracks $k$ for $t \in [0,\tau\plan)$, then applies the braking controller \eqref{eq:ubrk_braking_controller} for $t \geq \tau\plan$.
We assume that, at every $t \in [\tau\plan, \tau\plan + \tau\brk(\z\hio,k)]$, the spatial component of the robot's braking trajectory $\z\brk: [0,T] \to Z\hi$ lies within the set of points reachable by trajectory-tracking model:
\begin{align}
    \idx\left(\z\brk(t)\right) \in \pi_X(k).
\end{align}
\end{assum}

Note that, since $\z\hio$ is arbitrary, Assumption \ref{ass:brake_in_pi_X} requires that every point on the robot's body stays within $\pi_X(k) \subset X$ when braking, as per Assumption \ref{ass:rigid_body_dynamics_ctr_of_mass}.
There are several ways to satisfy Assumption \ref{ass:brake_in_pi_X}.
One way is to directly include braking maneuvers in the parameterized trajectories.
However, this increases the complexity of the offline reachability analysis by either increasing the degree or dimension of the trajectory-producing model, or introducing time-switching dynamics.
To avoid the complexity increase, in this paper, we instead choose the planning time horizon $T$ of the parameterized trajectories to be long enough that the robot can brake within the FRS as discussed above.

To proceed, we first present an example of a braking controller $u\brk$ as in \eqref{eq:ubrk_braking_controller} for the Segway.
Then we discuss a way to choose $T$ to ensure this controller can bring the robot to a stop within the FRS, to satisfy Assumption \ref{ass:brake_in_pi_X}.
We present more detail on choosing $T$ in Appendix \ref{app:pers_feas}.

To design a braking controller, first suppose we know $\tau\brk$ (e.g., from physical properties of the robot such as maximum acceleration).
Then we can proceed as in the following example for the Segway.

\begin{ex}\label{ex:segway_braking}
Consider again the Segway's high-fidelity model from Example \ref{ex:segway}.
Now, we use $\tau\brk$ to design a braking controller $u\brk$ as in \eqref{eq:ubrk_braking_controller}.
Suppose that the robot is applying its low-level controller $u_k$ from Example \ref{ex:segway_feedback_controller} to track a trajectory $k \in K$, over the time span $t \in [0,\tau\plan]$.
Let $\z\hi: [0,\tau\plan] \to Z\hi$ denote the trajectory of the high-fidelity model using $u_k$, and let $\tilde{\tau} = \tau\brk(\z\hi(\tau\plan),k)$.
At time $\tau\plan$, we switch to $u\brk$, given by
\begin{align}\label{eq:segway_braking_controller}
    u\brk(t,\z\hi(t)) = \begin{cases}\left(\frac{\tilde{\tau} - t}{\tilde{\tau} - \tau\plan}\right)^4u_k(t,\z\hi(t)) & t \in [\tau\plan,\tilde{\tau}) \\
        0 & t \geq \tilde{\tau},
    \end{cases}
\end{align}
where $u_k$ is as in Example \ref{ex:segway_traj_tracking_model}.
Recall that $u_k$, and therefore $u\brk$, produce two control inputs.
The first input is a commanded yaw rate, and the second a commanded speed (see \eqref{eq:high-fidelity_segway}).
Therefore, \eqref{eq:segway_braking_controller} reduces the commanded yaw rate and speed quartically to zero, but still uses feedback around the desired position and heading to cause the Segway to brake ``along'' the trajectory parameterized by $k$.
\end{ex}
\noindent An example braking trajectory for the Segway is shown in Figure \ref{fig:segway_braking}.

Next, we relate the planning time horizon to braking.

\subsubsection{Planning Time Horizon}\label{subsubsec:pers_feas_time_horizon}

Now we relate the braking maneuvers above to the planning time horizon $T$, to ensure that the FRS is computed so that robot can satisfy Assumption \ref{ass:brake_in_pi_X}.
Recall that, for any $k \in K$, the robot is able to generate a feedback controller $u_k$ as in \eqref{eq:fdbk_controller_uk}.
So, for any $\z\hio \in Z\hio$ and $k \in K$, the robot's \defemph{non-braking trajectory}, given by the high-fidelity model, \eqref{eq:high-fidelity_model} can be written as $\z\hi: [0,T] \to Z\hi$ for which
\begin{align}
    \z\hi\left(t;\z\hio,k\right) = \z\hio + \int_0^T f\hi\left(t,\z\hi(t),u_k(t,\z\hi(t))\right)dt. \label{eq:non-braking_traj}
\end{align}
To pick $T$, first recall that the robot's braking time $\tau\brk$ is finite for every initial condition and trajectory parameter.
So, there exists a maximum \defemph{stopping time} $\tau\stp$, given by:
\begin{align}
    \tau\stp = \max_{\z\hio \in Z\hio,\ k \in K} \tau\brk(\z\hio,k), \label{eq:tau_stop}
\end{align}
where the maximum is achieved because $Z\hio$ and $K$ are compact by Assumption \ref{ass:state_and_control_sets_are_compact}.

\begin{rem}\label{rem:T_geq_tau_plan_plus_tau_stop}
By Theorem 10 of \citet{kousik2017safe}, $\tau\stp$ as in \eqref{eq:tau_stop} can be used to lower-bound the planning time horizon $T$.
If $T$ is larger than $\tau\plan + \tau\stp$, and the robot at time $0$ has a safe plan of length $\tau\plan + \tau\stp$, then the robot always has enough time to brake if it cannot find a new safe trajectory within $\tau\plan$.
In other words, if $T \geq \tau\plan + \tau\stp$, the robot will travel farther along a non-braking trajectory (as in \eqref{eq:non-braking_traj}) than along a braking trajectory from the same initial condition.
Note that $\tau\stp$ may be large, leading to a large (and therefore conservative) FRS.
To combat this conservatism, we can pick $T < \tau\plan + \tau\stp$ empirically, by ensuring that $T$ is long enough such that, for any $k \in K$, the robot travels at least as far as its braking distance along a non-braking trajectory of duration $T$.
\end{rem}

\noindent We illustrate how to pick $T < \tau\plan + \tau\stp$ by continuing the previous Segway example.

\begin{ex}\label{ex:segway_min_time_horizon}
Consider computing an FRS for the Segway from Example \ref{ex:segway} with $\vmax = 1.25$ m/s and $\tau\plan = 0.5$ s.
On the hardware, we find empirically that stopping from $1.25$ m/s requires $\tau\stp \approx 1.5$ s, over a stopping distance of no more than $0.625$ m.
If we set $T = \tau\plan + \tau\stp$, the robot's non-braking trajectories would be up to $T\cdot\vmax = 2.5$ m long.
However, suppose we choose a number $\tau_v = (0.625~\regtext{m}) / (1.25~\regtext{m/s}) = 0.5$ s, and then set $T = \tau\plan + \tau_v = 1$ s.
Then, non-braking trajectories would be at most $1.25$ m long, which leaves enough distance in the FRS for the robot to stop if it begins braking after traveling for $\tau\plan\cdot\vmax = 0.625$ m.
\end{ex}
\noindent A detailed discussion of $\tau_v$ is in Appendix \ref{app:pers_feas}.
Next, to conclude this section, we specify a minimum sensor horizon required to ensure it is possible for the robot to achieve safety and persistent feasibility.

\subsubsection{Minimum Sensor Horizon}\label{subsubsec:pers_feas_sensor_horizon}
Now, to ensure that the robot is safe for all $t$, i.e., that the trajectory planning procedure is persistently feasible, we need to enforce a lower bound on the robot's sensor horizon $D\sense$ (from Assumption \ref{ass:sense}).
This is to ensure that the robot senses obstacles that are far enough away that it can plan a safe non-braking trajectory of duration $T$ every $\tau\plan$ seconds.
The following result is a modified version of Theorem 10 from \citet{kousik2017safe}.

\begin{thm}\label{thm:D_sense}
Let $X\obs \subset X$ be a set of obstacles as in Definition \ref{def:obs}.
Let $v_\regtext{max}$ be the robot's maximum speed as in Assumption \ref{ass:max_speed_and_yaw_rate}.
Let $\tau\plan$ be the planning time as in Assumption \ref{ass:tau_plan}.
Suppose that $T$ is large enough that Assumption \ref{ass:brake_in_pi_X} holds; so, for any $\z\hio \in Z\hio$ and any $k \in K$, the spatial component of the robot's braking trajectory lies within $\pi_X(k)$.
At time $0$, suppose that the robot has a safe plan $k_0 \in K$ (as in Definition \ref{def:safe_plan}).
Recall that $\vep_x$ and $\vep_y$ are the robot's maximum state estimation error in the $x$ and $y$ coordinates of $X$ as in Assumption \ref{ass:predict}, and let $\vep = \sqrt{\vep_x^2 + \vep_y^2}$.
Suppose the sensor horizon $D\sense$ obeys Assumption \ref{ass:sense} and satisfies
\begin{align}
D\sense & \geq (T + \tau\plan)\cdot\vmax + 2\vep.\label{eq:min_sensor_horizon}
\end{align}
Then, the robot can find either find a new safe plan every $\tau\plan$ seconds, or can brake safely if no new safe plan is found.
\end{thm}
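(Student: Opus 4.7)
The plan is to prove the statement by induction on the discrete planning times $t_j = j\,\tau\plan$, $j \in \N$, with induction hypothesis $P(j)$: for every $t \in [0, t_j]$ the robot's trajectory is collision-free, and at $t_j$ the robot either is about to execute a plan $k_j$ with $\pi_X^l(k_j) \cap X\obs = \emptyset$ (a safe plan per Definition~\ref{def:safe_plan}), or has already initiated the braking controller $u\brk$ from Assumption~\ref{ass:brake_ctrl_and_traj}. The base case $P(0)$ is immediate because $k_0$ is safe by hypothesis. For the inductive step, assume $P(j)$: during $[t_j, t_{j+1}]$ the robot executes $k_j$ (collision-free by $P(j)$) and simultaneously senses, processes, and optimizes a candidate $k_{j+1}$ within the budget $\tau\plan \geq \tau_{\text{process}} + \tau_{\text{trajopt}}$ from Assumption~\ref{ass:tau_trajopt_and_tau_process}.

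In the braking subcase at $t_{j+1}$, no safe $k_{j+1}$ is returned and the robot switches to $u\brk$. Assumption~\ref{ass:brake_in_pi_X} places the entire spatial projection of the braking trajectory (of duration at most $\tau\stp$, see \eqref{eq:tau_stop}) inside $\pi_X(k_j)$, which is contained in $\pi_X^l(k_j)$ by Remark~\ref{rem:w_geq_1_is_in_FRS}; since $\pi_X^l(k_j) \cap X\obs = \emptyset$ by $P(j)$, the braking trajectory is collision-free. Picking $T$ as discussed in Remark~\ref{rem:T_geq_tau_plan_plus_tau_stop} (illustrated for the Segway in Example~\ref{ex:segway_min_time_horizon}) is what makes Assumption~\ref{ass:brake_in_pi_X} satisfiable in the first place, so this subcase establishes $P(j+1)$.

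In the new-plan subcase the robot returns $k_{j+1}$ with $\pi_X^l(k_{j+1}) \cap X\obs = \emptyset$ against the sensed-and-buffered $X\obs$ from \eqref{eq:Xobs_expanded}; I must upgrade this planner-frame certificate to a world-frame safety guarantee. The key is the geometric bound that every point of $\pi_X^l(k_{j+1})$ lies within $(T+\tau\plan)\vmax + 2\vep$ of the sensor location at $t_j$. I would derive it in four steps chained from the planner's origin outward to the world: (i) $\pi_X^l(k_{j+1})$ is contained in the ball of radius $T\vmax$ about the estimated position at $t_{j+1}$, by the maximum-speed bound in Assumption~\ref{ass:max_speed_and_yaw_rate} combined with the FRS overapproximation property from Lemma~\ref{lem:w_geq_1_on_frs}; (ii) the estimated position at $t_{j+1}$ is within $\vep$ of the true position at $t_{j+1}$ by Assumption~\ref{ass:predict}, which is precisely the $\vep$-offset absorbed into the obstacle expansion in Assumption~\ref{ass:obs_error_buffer} and Remark~\ref{rem:error_relationship}; (iii) the true position at $t_{j+1}$ is within $\tau\plan\vmax$ of the true position at $t_j$, again by Assumption~\ref{ass:max_speed_and_yaw_rate}; and (iv) the sensor at $t_j$, physically centered on the true position but registered into the planner's estimated frame, picks up a further $\vep$ slack by Assumption~\ref{ass:predict}. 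Summing the four contributions yields the bound; together with the hypothesis $D\sense \geq (T+\tau\plan)\vmax + 2\vep$ and Assumption~\ref{ass:sense} (which forbids spontaneous obstacles inside the sensor horizon during $[t_j, t_{j+1}]$), this implies every obstacle able to intersect $\pi_X^l(k_{j+1})$ is already in $X\obs$. Hence the planner's check $\pi_X^l(k_{j+1}) \cap X\obs = \emptyset$ certifies actual safety and $P(j+1)$ holds.

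The main obstacle I anticipate is bookkeeping the two $\vep$ terms without double-counting: the obstacle buffer in Assumption~\ref{ass:obs_error_buffer} is expressed in the planner's frame, $D\sense$ is physically measured from the true robot, and the FRS lives in yet a third convention. Making explicit which $\vep$ comes from Assumption~\ref{ass:predict} at $t_j$ and which at $t_{j+1}$, and confirming that the max-speed bound from Assumption~\ref{ass:max_speed_and_yaw_rate} applies to the high-fidelity (not planning) model on $[t_j, t_{j+1}]$, is the delicate part; once done, stringing $P(j)$ over all $j \in \N$ delivers safety and persistent feasibility for all $t \geq 0$.
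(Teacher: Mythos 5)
Your proposal is correct and follows essentially the same route as the paper's proof: the paper likewise argues in two parts (braking within $\pi_X(k_j)$ via Assumption \ref{ass:brake_in_pi_X}, then bounding the reach of any new plan by $D_T = T\cdot\vmax + \vep$ plus the inter-iteration displacement $D\plan = \tau\plan\cdot\vmax + \vep$), which is exactly your four-step chain summing to $(T+\tau\plan)\vmax + 2\vep$. Your explicit induction on $j$ and the careful separation of the two $\vep$ terms merely make rigorous what the paper states informally ("similarly, if the robot has a safe plan at $t = j\cdot\tau\plan$\ldots").
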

The proof is in Appendix \ref{app:pers_feas}.

\noindent Theorem \ref{thm:D_sense} guarantees that the proposed RTD method is safe and persistently feasible, concluding this section.

In the next section, we address how to represent obstacles so that the online planning algorithm, i.e., picking a new trajectory parameter $k \in K\safe$ at every iteration, can be done in real time.
\section{Obstacle Representation}\label{sec:obstacle_representation}

This section presents a method of representing the robot's environment as a finite, discrete set, to enable real-time performance of RTD.
Note, the approach developed in this section is general, and can be applied to robots with arbitrary convex, compact footprints in arbitrary environments.
The casual reader can understand the primary results of this section by reading Section \ref{subsubsec:b_and_r_motivation} and Section \ref{subsec:proving_X_p_works}.

The main result of this section is Theorem \ref{thm:X_p}, which proves that the proposed obstacle representation can be used to represent safe plans.
By safe plans, we mean the set of safe trajectory parameters at each receding-horizon planning iteration:

\begin{defn}\label{def:K_safe}
Given an obstacle $X\obs \subset X$, let $K_\text{safe}$ denote the set of \defemph{safe trajectory parameters}.
No point on the robot's body, described by the high dimensional model \eqref{eq:high-fidelity_model}, can collide with the obstacle when tracking a trajectory parameterized by any $k \in K_\text{safe}$.
\end{defn}

\noindent Theorem \ref{thm:X_p} proves that the proposed obstacle representation enables inner approximating $K\safe$.
Note, in our prior work, we approximated $K\safe$ by solving an SDP \citep{kousik2017safe}.
We show in Appendix \ref{app:set_intersection} that solving this SDP is too slow for real-time planning, whereas the proposed obstacle representation is not.

We proceed as follows.
In Section \ref{subsec:FRS_projections}, we explain how to use the FRS computed in Sections \ref{sec:FRSmethod} and \ref{sec:system_decomp} to identify safe trajectory parameters in a single planning iteration.
In Section \ref{subsec:robot_obstacle_geometry_motivation}, we identify several geometric quantities used to construct our novel obstacle representation for arbitrary convex robot footprints.
Finally, in Section \ref{subsec:proving_X_p_works}, we explain how to construct the obstacle representation, and prove that it enables identifying safe trajectory parameters.
All of the proofs for this section are in Appendix \ref{app:obs_rep}.

\subsection{FRS Projections}\label{subsec:FRS_projections}
To relate obstacles to unsafe trajectories, we use the representation of the FRS from Section \ref{sec:FRSmethod} to project a point on an obstacle in $X$ to the corresponding set of parameters in $K$ for which the robot would reach that point on that obstacle.

Recall that obstacles are sets $X\obs \subset X$, where $X$ is the $xy$-subspace of the trajectory producing model's state space $Z$.
Also recall the FRS spatial projection map $\pi_X: K \to \P(X)$ in \eqref{eq:pi_X_on_points}, which maps a set of trajectory parameter $k \in K$ to all points of $X$ that are reachable within the time horizon $[0,T]$ by the robot's trajectory-tracking model \eqref{eq:traj_tracking_model}.
We define a related map $\pi_K$ that maps a subset $X'$ of $X$ to the set $\pi_K(X') \subset K$ for which any trajectory tracking some $k \in \pi_K(X')$ travels through at least one point in $X'$.

Suppose that the tuple $(v,w,q)$ is an optimal solution to Program $(D)$ from Section \ref{subsec:FRS_computation}.
Then by Lemma \ref{lem:w_geq_1_on_frs}, $w: X\times K \to \{0,1\}$ can is an indicator function on $\X\frs$.
Define the set-valued map $\pi_K: \P(X) \to \P(K)$ as
\begin{align}
    &\pi_K(X') = \{ k \in K~\mid \exists~x \in X'~\mathrm{s.t.}~w(x,k) = 1 \} \label{eq:pi_K}.
\end{align}

\noindent We call $\pi_K$ the \defemph{FRS parameter projection map}.
If $X' \subset X$, we say that $\pi_K(X')$ are the parameters corresponding to $X'$.
We use the word ``projection'' for these operators to relate them to the projection operators $\proj_{Z_i}$ in Definition \ref{def:projection_operators}.
Recall that $\proj_{Z_i}$ returns points in a subspace $Z_i$ of the state space $Z\hi$ that are identified by an identity relationship.
Similarly, $\pi_X$ and $\pi_K$ return points in a subspace of the reachable set $\X\frs$ that are identified by the indicator function $w$.
The following lemma demonstrates the utility of $\pi_K$.

\begin{lem}\label{lem:when_param_cant_cause_crash}
Consider an arbitrary point $p \in X \setminus X_0$.
Let $k \in \pi_K(p)^C$.
At $t = 0$, let the robot, described by the high-fidelity model \eqref{eq:high-fidelity_model}, be at the state $\z_{\text{hi},0} \in Z\hi$.
Suppose the robot tracks the trajectory parameterized by $k$, producing the high-fidelity model trajectory $\z\hi: [0,T] \to Z\hi$.
Then, no point on the robot's body ever reaches $p$.
More precisely, there does not exist any pair $(t,\z\hio) \in [0,T]\times Z\hio$ such that $p = \idx(\z\hi(t))$.
\end{lem}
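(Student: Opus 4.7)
The plan is to proceed by contrapositive: assume there exists $(t,\z\hio)\in [0,T]\times Z\hio$ such that $p=\idx(\z\hi(t))$, and then deduce that $k\in\pi_K(p)$. The only machinery needed is Lemma \ref{lem:traj_prod_matches_hi_fid_model} (which lets us transfer a high-fidelity trajectory into a trajectory-tracking trajectory on the shared states), the FRS definition \eqref{eq:frs_def}, and the indicator-function property of $w$ from Lemma \ref{lem:w_geq_1_on_frs}.

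Concretely, I would first invoke Lemma \ref{lem:traj_prod_matches_hi_fid_model} with the given $\z\hio$ and $k$ to obtain a disturbance $d\in L_d$ and a corresponding trajectory-tracking trajectory $\tilde{\z}:[0,T]\to Z$ from initial condition $\z_0=\proj_Z(\z\hio)\in Z_0$ that agrees with the high-fidelity trajectory on the shared state subspace, i.e., $\proj_Z(\z\hi(\tau))=\tilde{\z}(\tau)$ for all $\tau\in[0,T]$. Since the $xy$-subspace $X$ is contained in the shared states (Definition \ref{def:X_and_X_0}), applying $\idx$ (using the composition convention noted after Definition \ref{def:projection_operators}) gives $\idx(\tilde{\z}(t))=\idx(\z\hi(t))=p$. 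The triple $(\z_0,t,d)$ therefore satisfies every clause inside the FRS definition \eqref{eq:frs_def}, so $(p,k)\in\X\frs$.

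Next, I would appeal to Lemma \ref{lem:w_geq_1_on_frs}: the tuple $(v,w,q)$ used in the definition of $\pi_K$ in \eqref{eq:pi_K} satisfies $w(p,k)=1$ on $\X\frs$ (and in the relaxed setting $w^l(p,k)\geq 1$, which is all $\pi_K$ uses). Instantiating \eqref{eq:pi_K} with $X'=\{p\}$ then gives $k\in\pi_K(p)$, contradicting the hypothesis $k\in\pi_K(p)^C$.

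The main obstacle is the projection-matching step: one must check carefully that Lemma \ref{lem:traj_prod_matches_hi_fid_model} really does transfer the property ``reaches $p$'' from the high-fidelity trajectory to its trajectory-tracking counterpart. This rests on two points: (i) the $xy$-subspace $X$ is part of the shared states $Z$ (Definition \ref{def:X_and_X_0} together with Assumption \ref{ass:rigid_body_dynamics_ctr_of_mass}, so that every point on the rigid body lives in $X$ and has a representative in $Z$); and (ii) $\z_0=\proj_Z(\z\hio)$ is an admissible initial condition in $Z_0$, which follows from Assumption \ref{ass:state_and_control_sets_are_compact} where $Z_0$ is introduced as the set of initial conditions in the shared states of the trajectory-producing model. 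With these two facts in hand, the remainder of the argument is a short chain of definitional rewrites.
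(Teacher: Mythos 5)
Your proof is correct and follows essentially the same route as the paper's: argue by contradiction, use Lemma \ref{lem:traj_prod_matches_hi_fid_model} to transfer the high-fidelity trajectory to a trajectory-tracking trajectory agreeing on the shared states, conclude $(p,k)\in\X\frs$ so that $w(p,k)=1$ by Lemma \ref{lem:w_geq_1_on_frs}, contradicting $k\in\pi_K(p)^C$. The extra care you take in verifying that $X$ lies in the shared states and that $\proj_Z(\z\hio)\in Z_0$ is a sound elaboration of steps the paper leaves implicit.
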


\noindent The proof is in Appendix \ref{app:obs_rep}.
See Figure \ref{fig:point_obs} for an illustration of Lemma \ref{lem:when_param_cant_cause_crash}.
This lemma lets us find parameters in $K$ for which the robot avoids points in $X$.
So, by representing obstacles with points in $X$, we can find obstacle-avoiding trajectories, motivating the next discussion.

\subsection{Robot and Obstacle Geometry}\label{subsec:robot_obstacle_geometry_motivation}

Suppose $X\obs$ represents one or more obstacles in $X$.
The overall purpose of Section \ref{sec:obstacle_representation} is to find a finite set of discrete points $X_p \subset X$ to represent $X\obs$ such that the trajectory parameters corresponding to $X_p$ are a conservative approximation of those corresponding to $X\obs$, i.e. $\pi_K(X_p) \supseteq \pi_K(X\obs) = K_\text{safe}^C$.
Then, as in Lemma \ref{lem:when_param_cant_cause_crash}, if the robot cannot collide with any of the points in $X_p$, it cannot collide with the obstacle $X\obs$.
This is illustrated in Figure \ref{fig:obs_with_buffer}.
We call $X_p$ the \defemph{discretized obstacle}.

The motivation behind discretizing the obstacle in this manner is that $\pi_K(X_p)$ can be implemented as a list of point constraints at runtime for the path planning optimization program in Section \ref{sec:trajectory_optimization}; in practice, this allows the online trajectory planner to run in real time.

In Section \ref{subsec:robot_obstacle_geometry_motivation}, we find four \defemph{geometric quantities}, $\rbar$, $\bbar$, $r$, and $a > 0$ that are determined by the geometry of the robot and by a user-specified buffer distance $b > 0$.
Then, in Section \ref{subsec:proving_X_p_works}, we use these quantities and buffer to construct $X_p$.

The remainder of Section \ref{subsec:robot_obstacle_geometry_motivation} proceeds as follows.
Section \ref{subsubsec:generality_of_obs_rep} places assumptions on the robot and obstacle geometry to express the generality of the proposed method.
Section \ref{subsubsec:geometric_objects} introduces several geometric objects used throughout the section.
In Section \ref{subsubsec:b_and_r_motivation}, we introduce the buffer $b$ and the geometric quantities $\bbar$, $r$, $\rbar$, and $a$, which are used to produce the discretized obstacle representation.
Finally, in Section \ref{subsubsec:geom_trans_family}, we present a geometric expression for the robot's dynamics.
Next, in Section \ref{subsec:finding_geometric_quantities}, we find the geometric quantities $\rbar$, $\bbar$, $r$, and $a$.

To build intuition for these geometric quantities, the reader can skip to Figure \ref{fig:rbar_b_r_and_a_examples_rect_and_circ} at the end of Section \ref{subsec:finding_geometric_quantities}, which shows each quantity for rectangular and circular robot footprints.

\subsubsection{Generality of Proposed Method}\label{subsubsec:generality_of_obs_rep}
Before proceeding, we introduce assumptions on the shape of the robot and obstacles.
This is to clarify the generality of the proposed obstacle representation.

We use the following general robot representation:

\begin{assum}\label{ass:X0_cpt_cvx}
The robot's footprint $X_0 \subset X$ is compact and convex with nonzero volume.
\end{assum}
\noindent Footprints fulfilling this assumption, such as circles and rectangles, are common for ground robots (consider the Segway and Rover in Figure \ref{fig:hardware_time_lapse}).
If the robot's footprint is not convex, it can be contained within a convex hull or rectangular bounding box \citep{rectangle_bound_curve}.
We emphasize that the method in this section applies to \textit{arbitrary} convex robot footprints, not just the circle and rectangle examples for the Segway and Rover.

We use the following general obstacle representation:

\begin{assum}\label{ass:obs_are_polygons}
Each obstacle $X\obsi \subseteq X\obs$ is a closed polygon with a finite number of vertices and edges.
\end{assum}
\noindent Note that these polygons are not necessarily convex.
This assumption holds for common obstacle representations such as occupancy grids or line segments fit to planar point clouds.
If an obstacle is not a closed polygon within the sensor horizon (such as a long wall), it can be closed by intersection with the sensor horizon $D\sense$ (as in Assumption \ref{ass:sense}), which can be over-approximated by a regular polygon (the intersection is a closed set \citep[Theorem 17.1]{Munkres2000}).
Note that $X\obs$ may contain one or more obstacles; the definitions and proofs in this section still hold if $X\obs$ is a union of polygons, which is itself a (potentially disjoint) polygon \citep{minkowski_sum_fogel}.
Therefore, we refer to $X\obs$ as the singular obstacle for ease of exposition.

Next, we define several geometric objects used throughout the remainder of the section.

\subsubsection{Geometric Objects}\label{subsubsec:geometric_objects}
Before defining how to construct the discretized obstacle representation, we define several geometric objects used throughout the remainder of the section.
Examples of these objects are shown in Figure \ref{fig:passing_through}.

\begin{defn}\label{def:line_segment_I}
Let $I \subset \R^2$ be a \defemph{line segment}, also called an \defemph{interval} when it lies on either the $x$- or $y$-axis.
Let $E_I = \{e_1,e_2\} \subset I$ denote the \defemph{endpoints of $I$}, such that $I$ can be written as $I = \{e_1 + s\cdot(e_2 - e_1)~\mid~s \in [0,1]\}$.
The \defemph{length} of $I$ is $\norm{e_1 - e_2}_2$.
% Note that a line segment can have length $0$ if $e_1 = e_2$.
Suppose $I$ has a pair of distinct endpoints $\{e_1,e_2\}$, and we create the set $\ell_I = \{e_1 + s\cdot(e_2 - e_1)~\mid~s \in \R\} \subset \R^2$, i.e. a line that passes through $e_1$ and $e_2$.
We call $\ell_I$ \defemph{the line defined by $I$}.
\end{defn}
\noindent Note that a line segment can have a length of $0$ if $e_1 = e_2$.
We also define a specific type of line segment called a chord:
\begin{defn}\label{def:chord}
Let $A \subset \R^2$ be a set with a boundary and $a_1, a_2 \in \bd A$.
The line segment $\kp = \{a_1 + s\cdot(a_2 - a_1)~\mid~s \in [0,1]\}$ is a \defemph{chord} of $A$.
\end{defn}
\noindent Note that $\kp$ need not be a subset of $A$, e.g., if $A$ is not convex.
Finally, we define an arc and its circle:
\begin{defn}\label{def:circle_and_arc}
A \defemph{circle} $C \subset \R^2$ of radius $R \geq 0$ with center $p \in \R^2$ is the set $\left\{p' \in \R^2~\mid~\norm{p' - p}_2 = R\right\}$.
An \defemph{arc} $A \subset \R^2$ is any connected, closed, strict subset of a circle; this means that any arc has two \defemph{endpoints} $a, b \in \R^2$.
\end{defn}
\noindent Note, that given two arc endpoints $a, b$ and a radius $R$, we can produce an arc $A$ as follows: find $\theta_1 = 2\tan\inv\left(\frac{a_y - p_y}{a_x - p_x + R}\right)$ and $\theta_2 = 2\tan\inv\left(\frac{b_y - p_y}{b_x - p_x + R}\right)$.
If $\theta_1 < \theta_2$, set $\Theta = [\theta_1, \theta_2] \subset \R$ or $\Theta = [\theta_2, \theta_1 + 2\pi]$ (to choose the direction of the arc), and similarly if $\theta_2 < \theta_1$.
Then $A = \left\{q + R\cdot[\cos\theta, \sin\theta]^\top~\mid~\theta \in \Theta\right\} \subset \R^2$.

\subsubsection{Buffer and Point Spacing Motivation}\label{subsubsec:b_and_r_motivation}

\begin{figure}\centering
    \begin{subfigure}[t]{0.48\textwidth}
        \centering
        \includegraphics[width=\columnwidth]{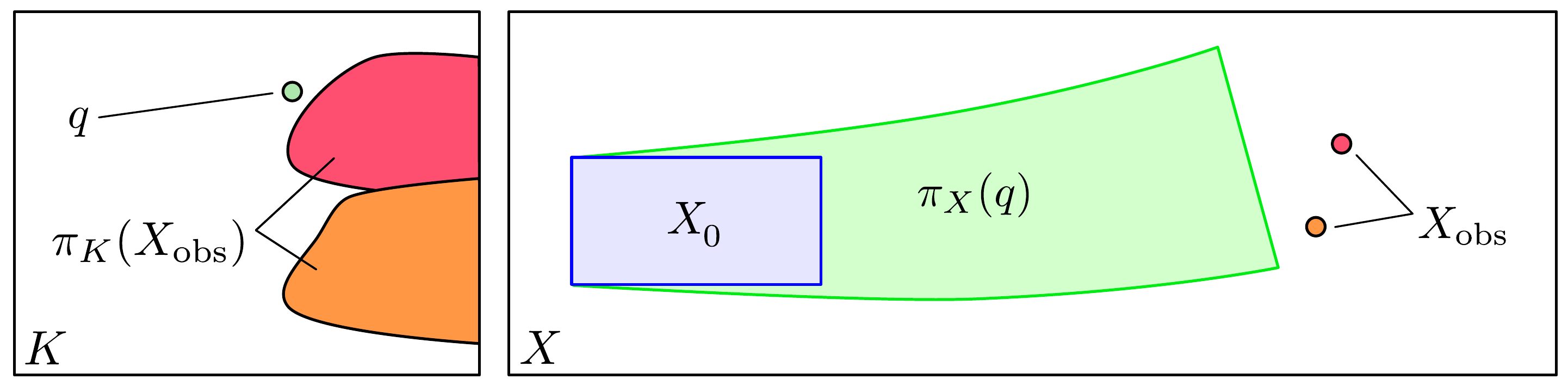}
        \caption{\centering}
        \label{fig:point_obs}
    \end{subfigure}
    \hspace{0.25cm}
    \begin{subfigure}[t]{0.48\textwidth}
        \centering
        \includegraphics[width=\columnwidth]{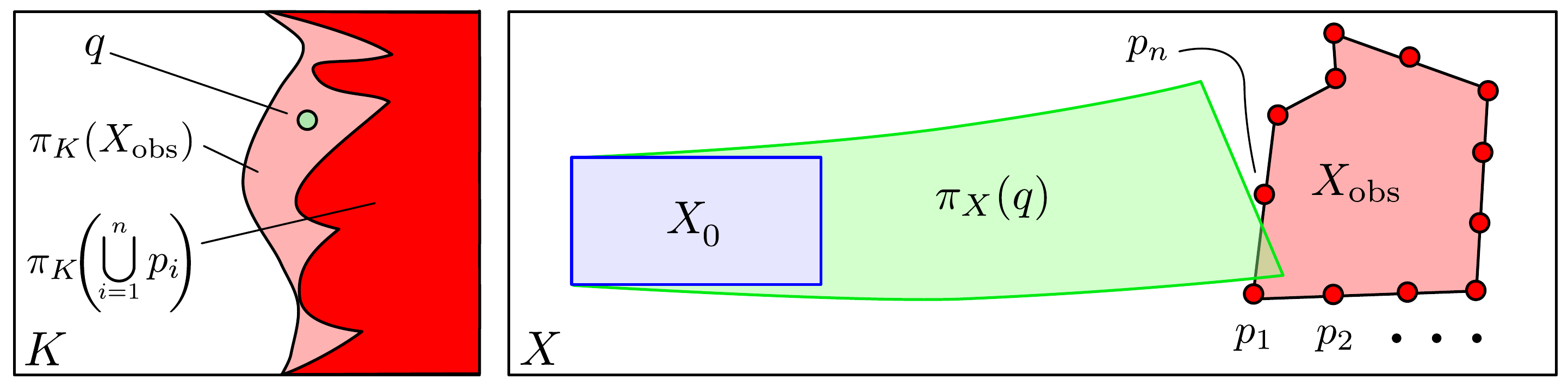}
        \caption{\centering}
        \label{fig:obs_no_buffer}
    \end{subfigure}
    \begin{subfigure}[t]{0.48\textwidth}
        \centering
        \includegraphics[width=\columnwidth]{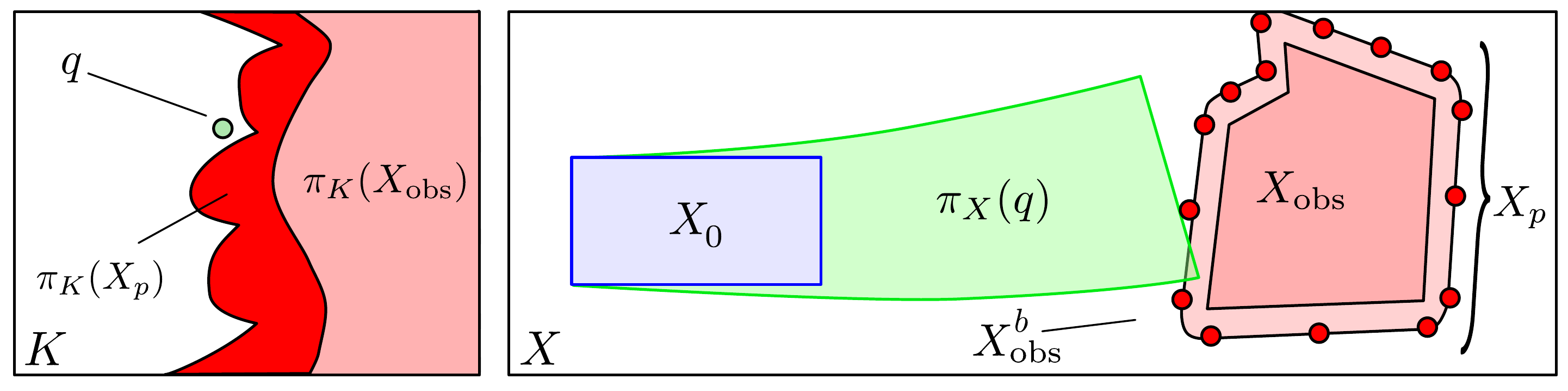}
        \caption{\centering}
        \label{fig:obs_with_buffer}
    \end{subfigure}
    \caption{
    Motivation and method for buffering and discretizing obstacles.
    The robot has footprint $X_0$ in the $xy$-subspace $X$ on the right, and the trajectory parameter space $K$ is on the left.
    In Figure \ref{fig:point_obs}, the obstacle $X\obs$ consists of two points, to illustrate the map $\pi_K$, which maps each point to a subset of $K$ containing all trajectory parameters that could cause the robot to reach either point; since $q \in \pi_K(X\obs)^C$, by Lemma \ref{lem:when_param_cant_cause_crash}, the robot cannot collide with either obstacle point.
    Figure \ref{fig:obs_no_buffer} shows an arbitrary polygonal obstacle (as in Assumption \ref{ass:obs_are_polygons}) with a set of discrete points $\{p_1,\cdots,p_n\}$ sampled from its boundary.
    These points are mapped to the subset of the parameter space $K$ labeled $\pi_K\left(\bigcup_{i = 1}^n p_i\right)$.
    A parameter $q$ is chosen outside of the parameters corresponding to these points, but still lies within the projection of the actual obstacle $\pi_K(X\obs)$, and therefore may cause a collision as illustrated by the set $\pi_X(q)$.
    Figure \ref{fig:obs_with_buffer} shows the same obstacle, but buffered.
    The boundary of the buffered obstacle is sampled to produce the discrete, finite set $X_p$.
    The trajectory parameters corresponding to $X_p$ are a superset of the unsafe parameters $\pi_K(X\obs)$, so the robot cannot collide with the obstacle despite the FRS spatial projection $\pi_X(q)$ penetrating between two of the points of $X_p$.
    }
\end{figure}

Recall that the goal of this entire section is to construct the discretized obstacle, $X_p \subset X$.
To that end, consider constructing $X_p$ from points on the boundary of $X\obs$, as illustrated in Figure \ref{fig:obs_no_buffer}.
Since the high-fidelity model of the robot \eqref{eq:high-fidelity_model} produces continuous trajectories in the subspace $X$ (see Assumption \ref{ass:dyn_are_lipschitz_cont}), the robot cannot collide with an obstacle without passing through the obstacle's boundary.

However, constructing $X_p$ with a finite number of points on $\partial X\obs$ may be insufficient to prevent collisions.
To see why, consider a candidate discretized obstacle $X_p = \{p_1, p_2, \cdots, p_n\} \subset \partial X\obs$, with  $n \in \N$.
Then any $k \in \pi_K(X_p)$ may cause the robot to reach one or more $p_i \in X_p$.
Suppose $q \in \pi_K(X_p)^C$.
There is no guarantee that $\pi_X(q) \cap X\obs = \emptyset$, i.e. that $q$ would not cause a collision with the obstacle, because the robot may be able to travel between adjacent points in $X_p$ as shown in Figure \ref{fig:obs_no_buffer}.
To address this issue, we buffer the obstacle, then select points from its boundary with a maximum point spacing allowed between the points.

The purpose of this section, then, is to rigorously define the \defemph{buffer} and \defemph{point spacing} to enable constructing $X_p$.
First, we define the buffer:

\begin{defn}\label{def:buffer}
Let $b > 0$ be a distance, called a \defemph{buffer}.
The \defemph{buffered obstacle}, $X\obs^b \supset X\obs$, is a compact subset of $X$ such that the maximum Euclidean distance between $X\obs$ and $X\obs^b$ is $b$:
\begin{align}
    X\obs^b = \left\{p \in X\ \mid\ \exists~p' \in X\obs~\text{\textnormal{s.t.}}~\norm{p - p'}_2 \leq b\right\}. \label{eq:X_obs_b_buffered_obstacle}
\end{align}
\end{defn}

\noindent Buffering an obstacle by $b$ reduces the amount of free space available for the robot to navigate through.
To address this, we find an upper bound $\bbar$ on $b$ in Section \ref{subsubsec:bbar}.

Note that our use of buffering in RTD is different from typical trajectory planning methods.
Trajectory planners that only consider the dynamics of the center of mass of the robot require obstacles to be buffered to compensate for the robot's footprint \citep{elbanhawi2014sampling,kuwata2009rrt}.
However, in RTD, the footprint is already accounted for in the set $X_0$; buffering is only necessary to construct the discretized obstacle representation.

Having established the buffer $b$ and its upper bound $\bbar$, we can now define the point spacing.
To do so, we first inspect the geometry of the buffered obstacle $X\obs^b$, because the point spacing is used to discretize the boundary of $X\obs^b$.
The following lemma describes the geometry of the buffered obstacle.

\begin{lem}\label{lem:buffered_obs_arcs_and_lines}
The boundary of the buffered obstacle, consists of a finite set of line segments $L$ and a finite set of arcs $A$ of radius $b$.
More precisely, let $n_L \in \N$ (resp. $n_A \in \N$) denote the number of line segments (resp. arcs).
Let $L_i \in L$ (resp. $A_i \in A$) denote the $i$\ts{th} line segment (resp. arc).
Note that each $L_i$ and $A_i$ is a subset of $X$.
Then the boundary of the buffered obstacle can be written as the union of all of the lines and arcs:
\begin{align}
    \bd X\obs^b\quad=\quad\left(\bigcup_{i = 1}^{n_L} L_i\right)~\cup~\left(\bigcup_{i = 1}^{n_A} A_i\right).
\end{align}
\end{lem}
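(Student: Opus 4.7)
The plan is to identify the buffered obstacle as a Minkowski sum of the polygon with a disk, and then exploit the polygon's finite combinatorial structure. Letting $B_b := \{d \in \R^2 : \norm{d}_2 \leq b\}$ denote the closed disk of radius $b$ centered at the origin, Definition \ref{def:buffer} gives $X\obs^b = X\obs \oplus B_b$, so $X\obs^b$ is precisely the closed $b$-neighborhood of $X\obs$. Consequently $\bd X\obs^b = \{q \in X : d(q, X\obs) = b\}$, where $d(\cdot, X\obs)$ denotes Euclidean distance to $X\obs$.

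Next I would use Assumption \ref{ass:obs_are_polygons} to decompose $\bd X\obs$ into a finite edge set $E_1,\ldots,E_{n_E}$ and vertex set $v_1,\ldots,v_{n_V}$. For any $q \in \bd X\obs^b$, compactness of $X\obs$ yields a nearest point $q^\ast \in \bd X\obs$ with $\norm{q - q^\ast}_2 = b$, and $q^\ast$ either lies in the relative interior of some $E_i$ or coincides with some $v_j$. In the first case, a neighborhood of $q$ in $\bd X\obs^b$ lies on the line parallel to $E_i$ at perpendicular offset $b$ on the outward-normal side, and hence on a line segment. In the second case, $q$ lies at distance exactly $b$ from $v_j$, so a neighborhood of $q$ in $\bd X\obs^b$ lies on the circle of radius $b$ centered at $v_j$, and hence on an arc of radius $b$. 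Taking maximal connected pieces of $\bd X\obs^b$ labeled by their nearest feature on $\bd X\obs$ produces a finite decomposition into line segments and arcs of radius $b$, with $n_L \leq n_E$ and $n_A \leq n_V$ after discarding any empty contributions.

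The main obstacle is the non-convex case: at reflex features of $X\obs$, the offset lines associated with adjacent edges can intersect so that one edge's contribution ``shadows'' another's, and the offset arc associated with a reflex vertex may degenerate to a single point rather than a positive-length arc. To make the decomposition rigorous, I would show that the boundary pieces join end-to-end by verifying that, as $q$ traverses a connected component of $\bd X\obs^b$, the nearest-feature label of $q$ changes only at finitely many transition points where $q$ is equidistant from an edge and one of its endpoints. Because the combinatorial data of $X\obs$ is finite, so is the number of such transitions, and therefore so is the total number of line segments and arcs comprising $\bd X\obs^b$, completing the proof.
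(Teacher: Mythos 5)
Your approach is correct, and it differs from the paper's in an instructive way: the paper's proof is essentially a citation — it identifies $X\obs^b$ as the Minkowski sum of the polygon with a closed disk of radius $b$ and then defers entirely to Section 9.2 of the cited reference on polygon offsetting for the segment-and-arc structure and its finiteness. You reconstruct the underlying argument from first principles via a nearest-feature decomposition of $\bd X\obs$ into edges and vertices, which is exactly the standard proof behind the cited result; what this buys is a self-contained lemma with explicit bounds $n_L \leq n_E$ and $n_A \leq n_V$ (modulo trimming), at the cost of having to confront the non-convex degeneracies yourself. Two small cautions. First, $\bd X\obs^b$ is only \emph{contained in} the level set $\{q \in X : d(q,X\obs) = b\}$, not equal to it (e.g., a point equidistant from two obstacle components facing each other at gap exactly $2b$ lies on the level set but in the interior of $X\obs^b$); fortunately your argument only uses the inclusion you actually need. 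Second, your final paragraph asserts rather than proves that the nearest-feature label changes only at finitely many transition points along $\bd X\obs^b$; making this rigorous amounts to observing that the transition locus is contained in the (generalized) Voronoi diagram of the finitely many edges and vertices, whose bisectors are lines and parabolic arcs meeting the offset curve in finitely many points. That step is standard but is precisely the content the paper outsources to the reference, so in a self-contained write-up it should not be left as a one-sentence claim.
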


Now, consider a discretized obstacle $X_p$ that is generated by selecting a set of points from $\bd X\obs^b$ such that the points are spaced by a distance $r > 0$ along the line segments and by a distance $a > 0$ along the arcs, as illustrated in Figure \ref{fig:obs_with_buffer}.
\begin{defn}\label{def:point_and_arc_spacing}
We call $r > 0$ the \defemph{point spacing} and $a > 0$ the \defemph{arc point spacing}.
\end{defn}
\noindent We prove in Section \ref{subsec:finding_geometric_quantities} that, by selecting $r$ and $a$ as function of the buffer $b$, the robot cannot pass completely between any pair of points in $X_p$ and collide with an obstacle.

Similar to the upper bound $\bbar$ on the buffer, we find an upper bound $\rbar$ for $r$.
Recall that $\bbar$ limits the buffer $b$, to prevent obstacles from taking up too much free space.
On the other hand, $\rbar$ makes sure that the point spacing is small enough that the discretized obstacle can be used to ensure safety; that is, the points in the $X_p$ must be close enough that our robot cannot pass between them.
We use $r$ itself as an upper bound of $a$.

Now we have motivated the geometric quantities $b$, $\bbar$, $r$, $a$, and $\rbar$.
However, we still have not specified how to actually find these quantities.
To do so, we first need a geometric representation of the robot's dynamics, presented next.

\subsubsection{Geometric Representation of the Dynamics}\label{subsubsec:geom_trans_family}

To understand how to relate the motion of the robot's body to the discretized obstacle representation, we now provide a geometric expression for the robot's trajectories.

Notice that, along any trajectory of the high-fidelity model, we can treat the robot's body as the footprint $X_0$ subject to a planar translation and rotation (about the robot's center of mass).
This leads to the following definition.

\begin{defn}\label{def:R_t_translation_and_rotation_family}
We define a \defemph{transformation} $R_t \in \SE(2)$.
Each $R_t$ is given by a rotation angle $\theta_t \in [0,2\pi)$ and a translation vector $s_t \in \R^2$, so $R_t$ transforms a point $p \in \R^2$ as
\begin{align}
    R_tp = \begin{bmatrix}\cos\theta_t &-\sin\theta_t \\ \sin\theta_t &\cos\theta_t \end{bmatrix}(p - c) + s_t + c,
\end{align}
where $c \in \R^2$ is the center of rotation (which we typically consider as the geometric center of $X_0$ when applying $R_t$ to the robot).
The subscript indicates that the transformation is indexed by time $t \in [0,T]$.
We define a \defemph{transformation family} $\{R_t\ |\ t \in [0,T]\}$ of planar translations and rotations that is continuous with respect to $t$.
\end{defn}

\noindent To simplify exposition, we leave out ``$t \in [0,T]$'', and instead write $\{R_t\}$, when the time index is clear from context.
Note that the continuity of $\{R_t\}$ is important because we use transformation families to express the motion of the robot's body through space geometrically.

Though we are examining the motion of the robot's body, Definition \ref{def:R_t_translation_and_rotation_family} allows us to consider arbitrary rotations and translations of the robot's footprint independent of trajectories of the high-fidelity model.
This is important because the discretized obstacle should not depend on the high-fidelity model, only on the geometry of the robot's body.
To this end, we define the application of an arbitrary $R_t$ to the entire set $X_0 \subset \R^2$ as:
\begin{align}
    R_tX_0 = \{R_tp~\mid~p \in X_0\}.
\end{align}

To ensure that any $\{R_t\}$ is well-defined in the robot's spatial coordinates $X$, and to simplify exposition, we make the following assumption.

\begin{assum}\label{ass:X_subset_R2_contains_origin_xy-axes}
Recall that, in Definition \ref{def:X_and_X_0}, $X$ is called the $xy$-subspace of the robot's state space $Z\hi$, so $X \subset \R^2$.
We assume that the $X$ contains the origin.
\end{assum}

Next, in Section \ref{subsec:finding_geometric_quantities}, we use the geometric objects from Section \ref{subsubsec:geometric_objects} along with transformation families find the geometric quantities $\rbar$, $\bbar$, $r$, and $a$.

\subsection{Finding the Geometric Quantities}\label{subsec:finding_geometric_quantities}

In this section, we describe how to compute the geometric quantities described in Section \ref{subsec:robot_obstacle_geometry_motivation}.
The arguments presented in this section describe a procedure to compute those quantities for arbitrary convex, compact robot footprints.
The more casual reader can skip to the Section \ref{subsubsec:example_geometric_quantities}, which includes examples of these quantities for rectangular and circular footprints.

This section proceeds as follows.
First, in Section \ref{subsubsec:rbar} we find the maximum point spacing $\rbar$.
Second, in Section \ref{subsubsec:bbar} we upper bound the buffer distance with $\bbar$, which we call the maximum penetration distance.
Third, in Section  \ref{subsubsec:find_r}, given a choice of buffer $b \in (0,\bbar)$, we find the point spacing $r$.
Fourth, in Section \ref{subsubsec:find_a}, we find the arc point spacing $a$.
Finally, in Section \ref{subsubsec:example_geometric_quantities}, we give examples of these quantities.

\subsubsection{Bounding the Point Spacing}\label{subsubsec:rbar}

We now seek to understand how close together points must be in the discrete obstacle representation.
We do this by upper bounding the point spacing with the geometric quantity $\rbar$.
We find $\rbar$ first because the other quantities, $\bbar$, $\rbar$, $r$, and $a$ all depend on $\rbar$.

This discussion builds on Theorem 1 from \citet{width_of_a_chair}.
To build intuition, imagine a wall in $X$ with a gap that is large enough for the robot to pass through without touching the wall.
If we keep shrinking this gap, eventually the robot is unable to pass through.
In this subsection, informally, we find the largest gap that the robot cannot pass all the way through.
We use the size of the gap as the upper bound $\rbar$ on the spacings $r$ and $a$ when constructing $X_p$.
Imagine that the buffered obstacle's boundary is treated as the wall.
If the wall is sampled so that points are closer than $\rbar$ apart, this is akin to a gap of width at most $\rbar$ between each pair of points.

To proceed, we first formally define the notion of passing the robot's footprint through a line segment.
Then, we find the size of the ``largest gap'' discussed above.

To define ``passing through'' a gap, represented a line segment $I$, we first establish a half-plane $P_I$ that is ``defined'' by $I$; we use $P_I$ as a region that the robot begins in, so that, to pass through $I$, the robot must leave the half-plane $P_I$.
To create this half-plane, consider the function $\dels: \R^2\times\R^2\times\R^2 \to \R$ for which
\begin{align}\begin{split}
    \dels(e_1,e_2,p) =~&\frac{1}{\norm{e_2 - e_1}_2}\Big((e_{2y}-e_{1y})p_x ~- \\
    &- (e_{2x}-e_{1x})p_y - e_{2y}e_{1y} - e_{2y}e_{1x}\Big),\label{eq:delta_distance_func}
\end{split}\end{align}
where the subscript $x$ or $y$ denotes the corresponding coordinate of a point in $\R^2$.
If $I$ has distinct endpoints $\{e_1,e_2\}$, then $\dels(e_1,e_2,p)$ is the perpendicular distance from the point $p$ to the line defined by $I$.
The sign of $\dels(e_1,e_2,p)$ is positive if $p$ lies to the ``left'' of the line defined by $I$, relative to the ``forward'' direction from $e_1$ to $e_2$.
The function $\dels$ is illustrated in Figure \ref{subfig:signed_delta}.
We use $\dels$ to define a half-plane in $\R^2$ as follows:

\begin{defn}\label{def:halfplane_P_I}
Let $c \in X_0$ denote the center of mass $[x_{c,0},y_{c,0}]^\top$ of the robot's footprint at time $0$, as in Assumption \ref{ass:rigid_body_dynamics_ctr_of_mass}.
Let $I$ be a line segment as in Definition \ref{def:line_segment_I} with two distinct endpoints $E_I = \{e_1,e_2\}$.
Then $P_I \subset \R^2$ denotes the closed \defemph{half-plane defined by $I$}; this half-plane is determined by the line defined by $I$ and by $c$ as:
\begin{align}
    P_I = \left\{p \in X~\mid~\mathrm{sign}(\dels(e_1,e_2,p)) = \mathrm{sign}(\dels(e_1,e_2,c))\right\},\label{eq:P_I_halfplane_defined_by_I}
\end{align}
where $\mathrm{sign}(a) = 1$ for $a \geq 0$ and $-1$ otherwise.
Now suppose that $I$ is a line segment of length $0$, i.e. $e_1 = e_2$, so we cannot directly define $P_I$ as in \eqref{eq:P_I_halfplane_defined_by_I}.
Suppose that $e_1 \neq c$.
So, we can pick a point $e'$ for which $(e' - e_1)\cdot(c - e_1) = 0$ where $\cdot$ denotes the standard inner product on $\R^2$, so the line segment from $e_1$ to $c$ is perpendicular to the line segment from $e_1$ to $e'$.
Then, $P_I$ is given by \eqref{eq:P_I_halfplane_defined_by_I}, but using $e'$ in place of $e_2$.
\end{defn}
\noindent In the case where $e_1 = e_2 = c$, $P_I$ is undefined.
See Figures \ref{fig:passing_through} and \ref{subfig:signed_delta} for illustrations of the different cases of $P_I$.
Notice that, except when $e_1 = e_2 = c$, $P_1$ is always a closed half-plane, even if $c$ lies on the line defined by $I$.
The utility of $P_I$ is that, if the line defined by $I$ does not intersect $X_0$, then $X_0 \subset P_I$, i.e. $P_I$ contains $X_0$.
So, we can use $P_I$ as a region that the robot starts in at time $0$.

\begin{figure*}
    \centering
    \begin{subfigure}[t]{0.24\textwidth}
        \centering
        \includegraphics[width=\columnwidth]{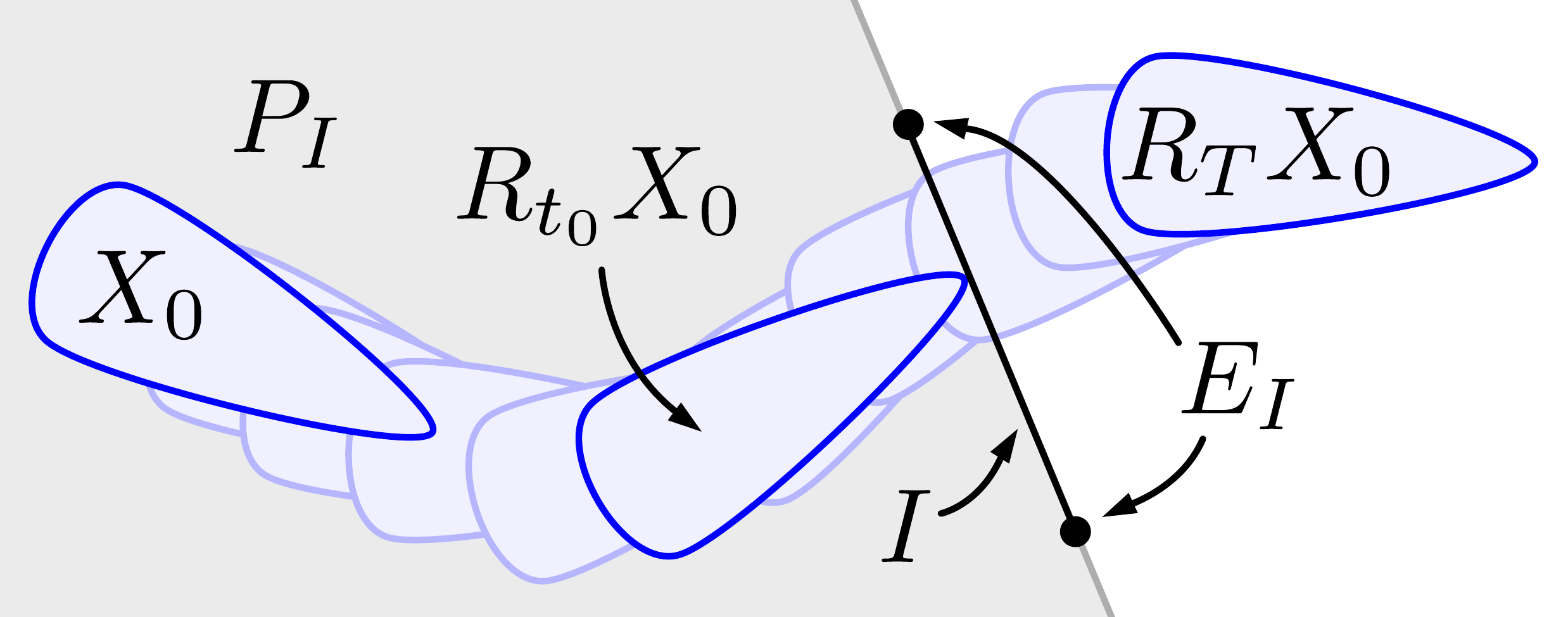}
        \caption{\centering}
        \label{subfig:pass_through_fully}
    \end{subfigure}
    \begin{subfigure}[t]{0.24\textwidth}
        \centering
        \includegraphics[width=\columnwidth]{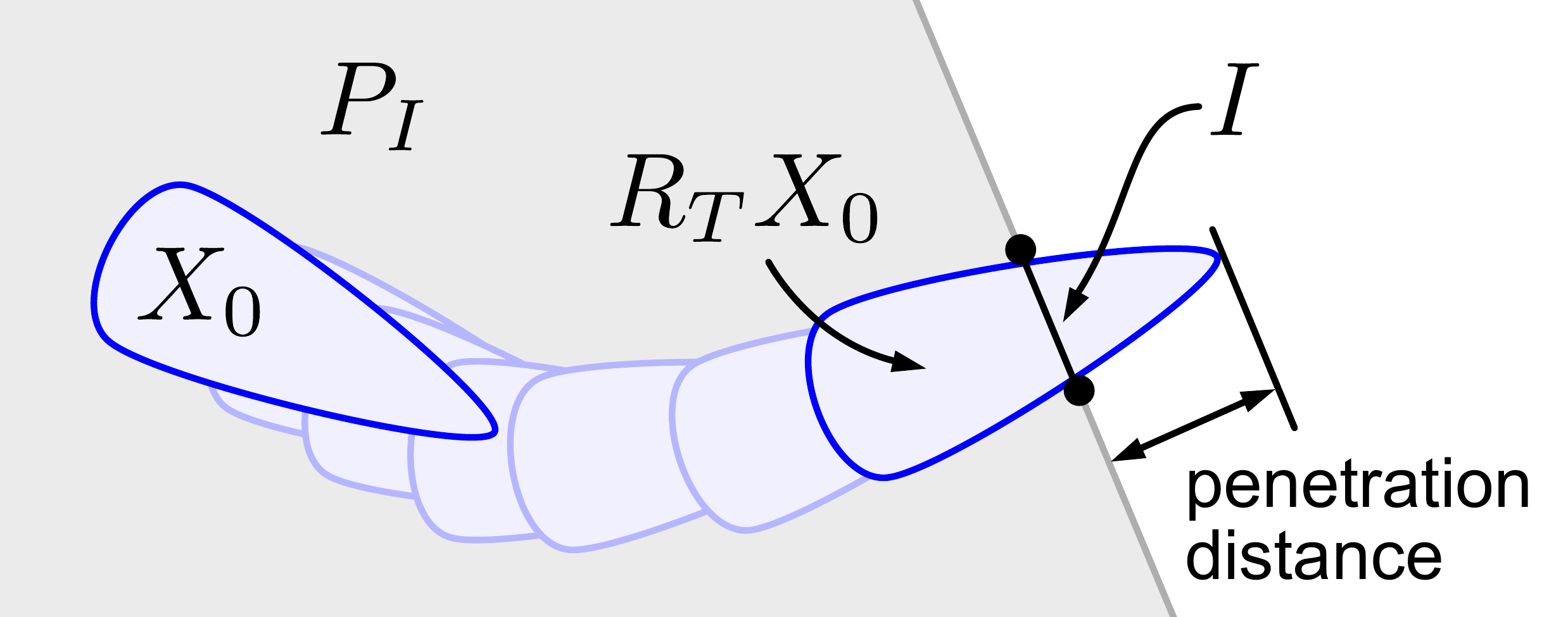}
        \caption{\centering}
        \label{subfig:penetrate}
    \end{subfigure}
    \begin{subfigure}[t]{0.24\textwidth}
        \centering
        \includegraphics[width=\columnwidth]{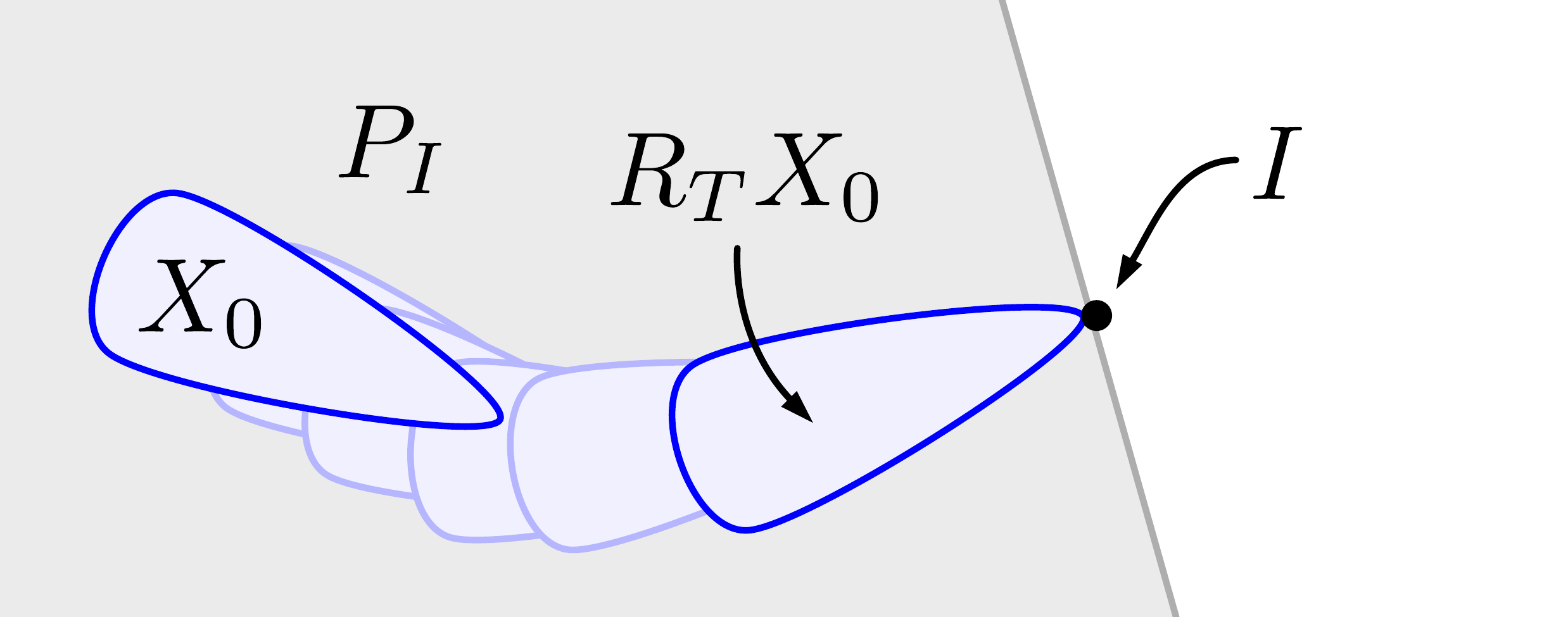}
        \caption{\centering}
        \label{subfig:pass_through_point}
    \end{subfigure}
    \begin{subfigure}[t]{0.24\textwidth}
        \centering
        \includegraphics[width=\columnwidth]{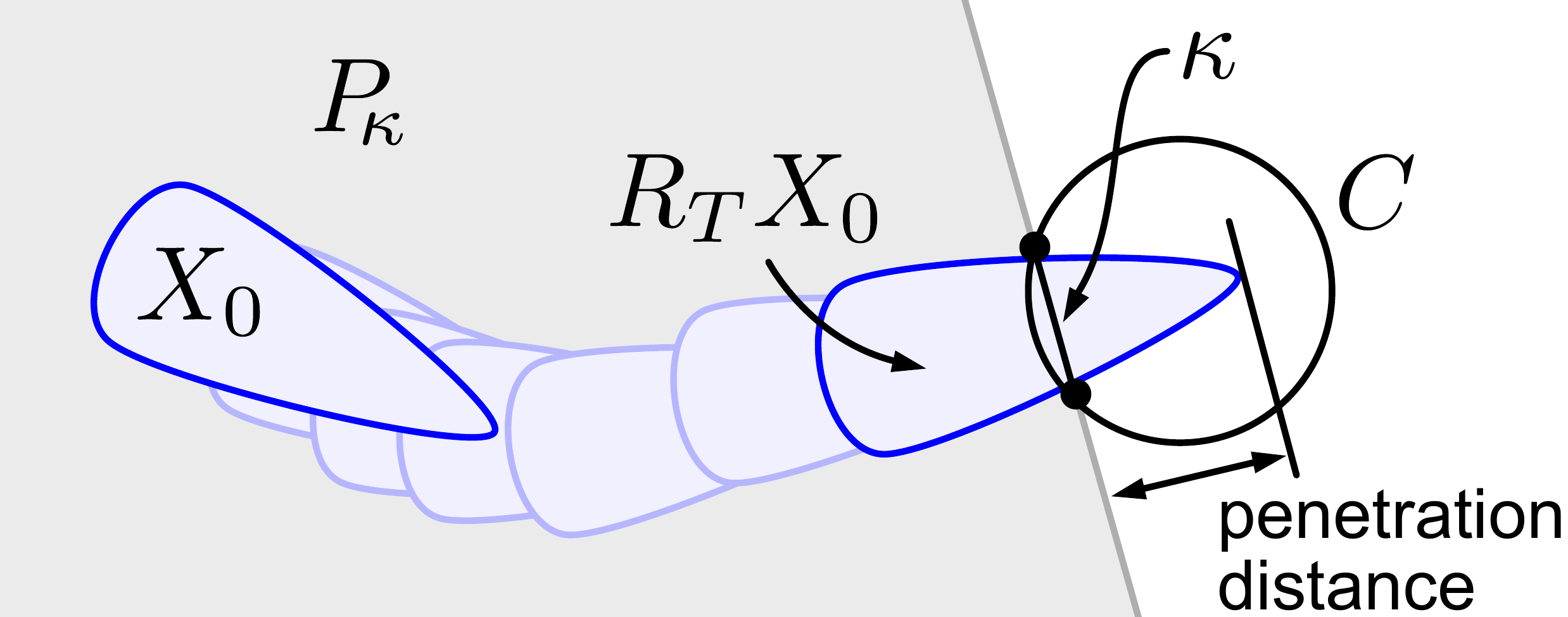}
        \caption{\centering}
        \label{subfig:pass_through_circle}
    \end{subfigure}
    \caption{
    Passing through (as in Definition \ref{def:pass_through}), penetrating (as in Definition \ref{def:penetrate}), and penetrating into a circle (as in Definition \ref{def:pass_and_penetrate_circles_and_arcs}).
    In each subfigure, a family $\{R_t\}_{t \in [0,T]}$ of continuous rotations and translations attempts to pass the convex, compact set $X_0$ through the line segment $I$ with endpoints $E_I$.
    At $t = 0$, $X_0$ lies in the half-plane $P_I$, defined by $I$ as in Definition \ref{def:halfplane_P_I}.
    Each figure contains $X_0$ at its initial position $R_0X_0$ and final position $R_TX_0$ indicated by a dark outline.
    The lighter outlines between these positions show examples of $X_0$ being translated and rotated as $\{R_t\}$ is applied.
    In Figure \ref{subfig:pass_through_fully}, $X_0$ is able to pass fully through $I$; the index $t_0 \in [0,T]$ where $X_0$ first touches $I$ is also shown with a dark outline.
    In Figure \ref{subfig:penetrate}, $X_0$ is unable to pass fully through $I$, but penetrates through $I$ by some distance into $P_I^C$.
    In Figure \ref{subfig:pass_through_point}, the line segment $I$ has length $0$, so $X_0$ cannot pass through it, but instead stops as soon as it touches $I$, and achieves $0$ penetration distance through $I$.
    Note that, in this case, $P_I$ is defined by a line perpendicular to the line segment from $I$ to the center of mass of $X_0$, as per Definition \ref{def:halfplane_P_I}.
    In Figure \ref{subfig:pass_through_circle}, the circle $C$ has a chord $\kappa$, and $X_0$ penetrates into $C$ through $\kappa$ by the penetration distance shown.
    The half-plane defined by $\kappa$ is denoted $P_\kp$.}
    \label{fig:passing_through}
\end{figure*}

\begin{defn}\label{def:pass_through}
Let $I \subset X\setminus X_0$ be a line segment with endpoints $E_I$ as in Definition \ref{def:line_segment_I}, and $P_I$ be the half-plane defined by $I$ as in Definition \ref{def:halfplane_P_I}.
Suppose that the robot lies fully within $P_I$ at time $0$, i.e. $X_0 \subset \mathrm{int}(P_I)$.
Let $\{R_t\}$ be a transformation family as in Definition \ref{def:R_t_translation_and_rotation_family}.
Let $t_0, t_1$ be indices in $(0,T]$ such that $R_tX_0$ intersects the ``middle'' of $I$, i.e. $R_tX_0 \cap (I\setminus E_I) \neq \emptyset$, for all $t \in [t_0, t_1]$.
Furthermore, suppose that $R_tX_0 \subset P_I$ for all $t \in [0,t_0)$, and that no $R_tX_0$ can intersect the endpoints $E_I$ (i.e. $R_tX_0 \cap E_I = \emptyset$) except at $t = T$.
We say that such a transformation family attempts to \defemph{pass $X_0$ through $I$}.
If $X_0$ is able to leave $P_I$ while passing through $I$, i.e. $R_TX_0 \subset P_I^C$, then $X_0$ is said to \defemph{pass fully through $I$}.
\end{defn}

\noindent See Figure \ref{fig:passing_through} for an illustration of passing through and passing fully through.
The motion of the robot at each $t$ is represented by each set $R_tX_0$.

Notice that, if $X_0$ must pass through $I$, it is not allowed to go ``around'' $I$ when passing through.
Furthermore, over the time horizon $[t_0,t_1]$ in Definition \ref{def:pass_through}, the set made by the intersection $R_tX_0 \cap I$ is a chord (as in Definition \ref{def:chord}) of $R_tX_0$ \citep[Theorem 1]{width_of_a_chair}.
We now state a property of $X_0$ used to bound the size of the aforementioned ``gap in a wall'' in Lemma \ref{lem:largest_gap_rbar} below.

\begin{defn}\label{def:thickness_and_width}
Given a unit vector in $\R^2$ at an angle $\theta$, the \defemph{thickness} of $X_0$ along this unit vector is the distance between the two unique lines that are tangent to $X_0$ and perpendicular to the vector.
The \defemph{width} of $X_0$ is defined as the minimum thickness of $X_0$ when searching over all $\theta \in [0,2\pi)$, and the \defemph{diameter} of $X_0$ is, similarly, the maximum thickness \citep[Section 1]{width_of_a_chair}.
\end{defn}

\begin{figure}
    \centering
    \begin{subfigure}[t]{0.49\columnwidth}
        \centering
        \includegraphics[width=0.75\columnwidth]{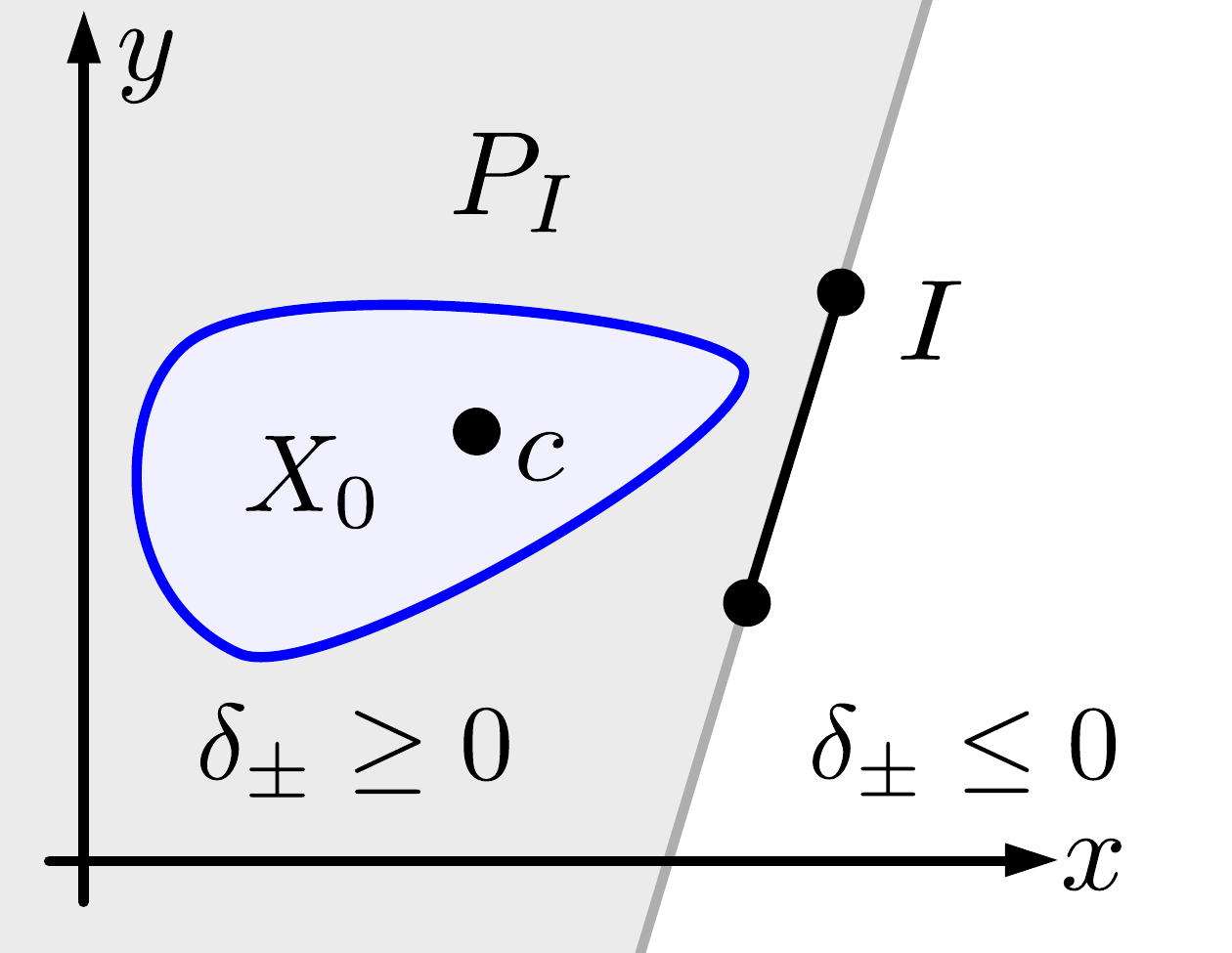}
        \caption{\centering}
        \label{subfig:signed_delta}
    \end{subfigure}
    \begin{subfigure}[t]{0.49\columnwidth}
        \centering
        \includegraphics[width=0.75\columnwidth]{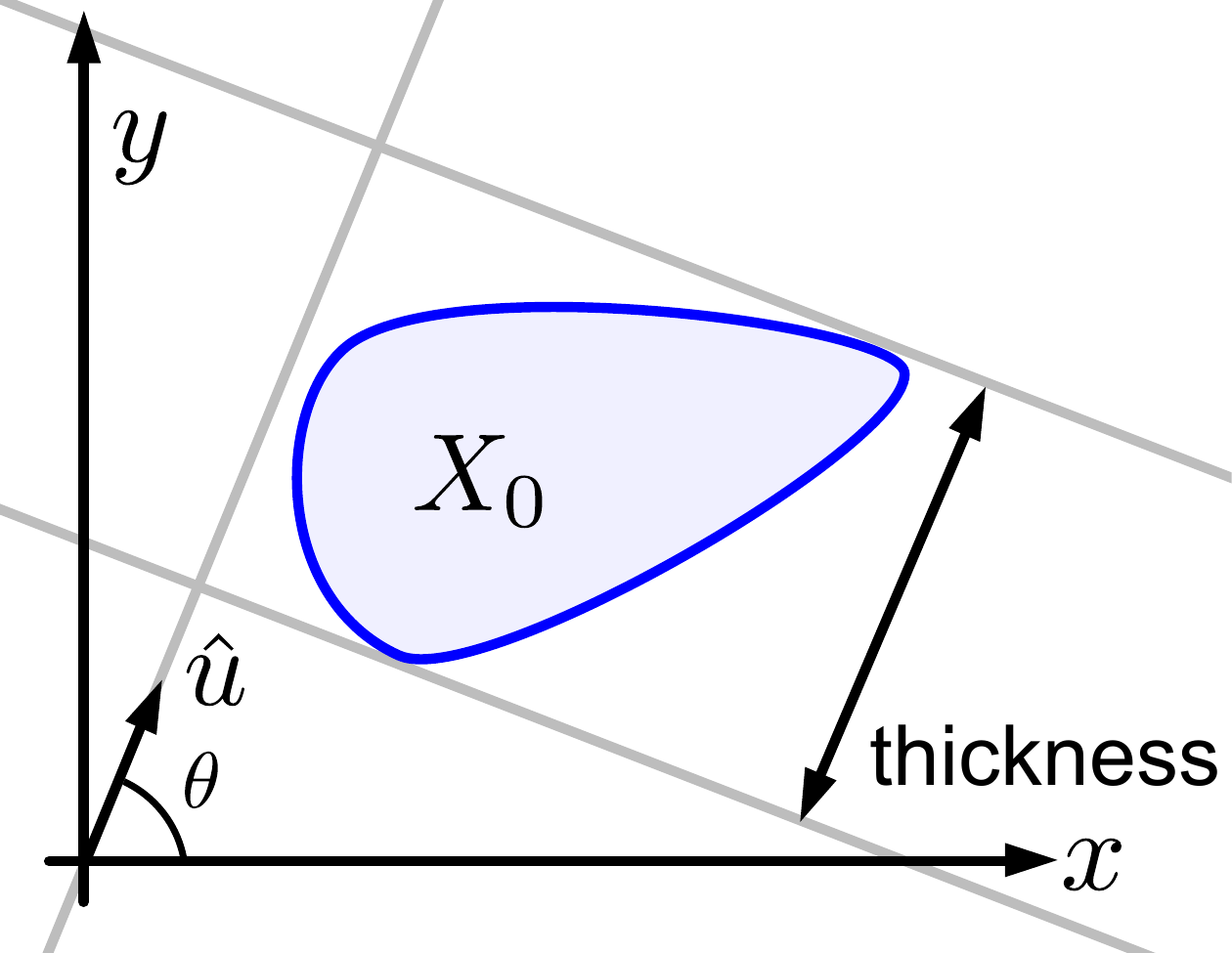}
        \caption{\centering}
        \label{subfig:thickness}
    \end{subfigure}
    \caption{An arbitrary, compact, convex set $X_0$ lies in the plane.
    In Figure \ref{subfig:signed_delta}, the line segment $I$ defines the closed half-plane $P_I$ (the filled grey area) using the function $\delta_\pm$ from \eqref{eq:delta_distance_func}.
    If the endpoints of $I$ are labeled $e_1$ and $e_2$, then the set $P_I$ contains all points $p \in \R^2$ for which the sign of $\delta_\pm(e_1,e_2,p)$ is the same as the sign of $\delta_\pm(e_1,e_2,c)$, where $c$ is the center of mass of $X_0$.
    In Figure \ref{subfig:thickness}, a unit vector $\hat{u}$ is fixed to the origin with angle $\theta$.
    The thickness of $X_0$ is given by the distance between the two unique lines that are tangent to $X_0$ and perpendicular to $\hat{u}$.}
\end{figure}

\noindent See Figure \ref{subfig:thickness} for an illustration of thickness.
Note that the width is nonzero and finite because $X_0$ is compact and has nonzero volume by Assumption \ref{ass:X0_cpt_cvx}.

\begin{lem}\label{lem:largest_gap_rbar}
\citep[Theorem 1]{width_of_a_chair} Let $I \subset (X\setminus X_0)$ be a line segment with endpoints $E_I$ and length $L > 0$ (as in Definition \ref{def:line_segment_I}).
Let $X_0$ be the robot's footprint at time $0$ (as in Definition \ref{def:X_and_X_0}), with width $W > 0$ (as in Definition \ref{def:thickness_and_width}).
Then $X_0$ can pass through $I$ (as in Definition \ref{def:pass_through}) if and only if $W < L$.
\end{lem}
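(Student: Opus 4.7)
The plan is to prove the two directions of the biconditional separately, using the geometric definition of width as the minimum thickness over all orientations. Since the statement is attributed to \citep{width_of_a_chair}, I would largely mirror the argument there, but cast it in the notation of Definitions \ref{def:line_segment_I}--\ref{def:pass_through} and \ref{def:thickness_and_width}.

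For the easier direction, assume $W < L$. By Definition \ref{def:thickness_and_width}, there exists a unit vector $\hat{u}^\star$ along which the thickness of $X_0$ equals $W$. Without loss of generality, place $I$ along the $x$-axis from $(0,0)$ to $(L,0)$ with center of mass $c$ in the upper half-plane, so that $P_I$ is the closed upper half-plane. Pick a rotation $R_0$ that aligns $\hat{u}^\star$ with the $x$-axis; then the $x$-projection of $R_0 X_0$ is an interval of length $W < L$, and by translating $R_0 X_0$ along the $x$-axis we may center this projection inside $(0,L)$ with strictly positive margin on either side. Now construct $\{R_t\}_{t \in [0,T]}$ that keeps this orientation fixed and translates $R_0 X_0$ continuously downward along $-\hat{y}$ by a sufficiently large amount. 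Because $X_0$ is compact and convex, the time $t_0$ at which $R_t X_0$ first touches the line $\ell_I$ is well-defined and positive, and the intersection $R_t X_0 \cap \ell_I$ remains a chord of $R_t X_0$ strictly contained in $I \setminus E_I$ for all $t \in [t_0,t_1]$. At $t = T$ the set $R_T X_0$ has crossed $\ell_I$ entirely and lies in $P_I^C$, so this family passes $X_0$ fully through $I$ in the sense of Definition \ref{def:pass_through}.

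For the converse, suppose some transformation family $\{R_t\}$ passes $X_0$ fully through $I$. By continuity of $\{R_t\}$ and compactness of $X_0$, there is a nonempty closed interval $[t_0,t_1] \subset (0,T)$ during which $R_t X_0$ straddles $\ell_I$, i.e. $R_t X_0 \cap \ell_I$ is a chord $\kappa_t$ of $R_t X_0$, and $\kappa_t \subset I \setminus E_I$. Fix any $t^\star \in (t_0,t_1)$ and let $\theta^\star$ be the rotation angle of $R_{t^\star}$. The orthogonal projection of $R_{t^\star} X_0$ onto $\ell_I$ is an interval whose length equals the thickness of $X_0$ along the direction obtained by rotating the direction of $\ell_I$ by $-\theta^\star$; call this thickness $\tau^\star$. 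By Definition \ref{def:thickness_and_width}, $\tau^\star \geq W$. On the other hand this projection contains $\kappa_t$ and is itself a subset of the open interval $\ell_I \cap \operatorname{int}(I)$ because $R_{t^\star} X_0 \cap E_I = \emptyset$ by the non-contact condition in Definition \ref{def:pass_through}; its length is therefore strictly less than $L$. Chaining the two bounds yields $W \leq \tau^\star < L$.

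Combining the two directions gives the biconditional. The step I expect to be the most delicate is the $(\Rightarrow)$ direction, specifically justifying that the projection of $R_{t^\star} X_0$ onto $\ell_I$ is strictly shorter than $L$ rather than merely $\leq L$; this requires the strict non-intersection condition $R_t X_0 \cap E_I = \emptyset$ from Definition \ref{def:pass_through} together with compactness to upgrade a non-strict inequality to a strict one. The easier direction reduces to explicitly exhibiting a translation-only family, for which continuity of $\{R_t\}$ and the margin $L - W > 0$ provide the needed clearance near the endpoints $E_I$.
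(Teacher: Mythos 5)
Your sufficiency direction ($W < L$ implies passage is possible) is correct and essentially matches the paper's argument: align the minimum-thickness direction of $X_0$ with $I$ so that the projection of $X_0$ onto $\ell_I$ has length $W < L$, then translate perpendicular to $I$; the paper phrases the same idea via the $W\times D$ bounding rectangle of \citet{rectangle_bound_curve}. The necessity direction, however, has a genuine gap. You bound the orthogonal projection of $R_{t^\star}X_0$ onto $\ell_I$ below by $W$ (fine, it is a thickness) and above by $L$, and the upper bound rests on the claim that this projection lies in $\mathrm{int}(I)$ ``because $R_{t^\star}X_0 \cap E_I = \emptyset$.'' That containment is false. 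The condition $R_tX_0 \cap E_I = \emptyset$ constrains only the \emph{chord} $R_tX_0 \cap \ell_I$, which, being a connected subset of $\ell_I$ that meets $I\setminus E_I$ and misses $E_I$, must lie in $I\setminus E_I$. The projection of the whole body onto $\ell_I$ is in general strictly longer than this chord and can overhang both endpoints of $I$ while the body stays clear of them, because the overhanging mass sits off the line $\ell_I$. Concretely, take $X_0$ a disk of radius $1$ centered at $(0,0.5)$ and $I = [-0.9,0.9]\times\{0\}$: the chord has length $2\sqrt{0.75}\approx 1.73 < 1.8 = L$ and the disk contains neither endpoint, yet its projection onto $\ell_I$ is $[-1,1]\not\subset I$ and $\tau^\star = W = 2 > L$. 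So $\tau^\star < L$ does not follow, and the derivation fails for every choice of $t^\star$.

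The quantity that is genuinely bounded above by $L$ is the chord length $|\kappa_t|$, and what must be shown is that $|\kappa_{t^\star}|\geq W$ for at least one well-chosen $t^\star$ --- not for an arbitrary one, since at a generic straddling instant the chord can be much shorter than the width (picture a triangle crossing vertex-first, where the chord near the vertex is tiny). Producing such a $t^\star$ is the actual content of Theorem 1 of \citet{width_of_a_chair} and requires a continuity or intermediate-value argument over the entire passage; it cannot be replaced by a pointwise projection bound. This is also the step the paper's own sketch glosses over before deferring to the reference, so your instinct that it is the delicate point was right, but the fix is not upgrading a non-strict inequality to a strict one --- it is replacing the projection by the chord and arguing existence of a good instant.
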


From this lemma, the robot's width defines the smallest gap that the robot can pass through.
Therefore, we define $\rbar$ as the robot's width:

\begin{defn}\label{def:rbar}
The quantity $\rbar$ denotes the \defemph{maximum point spacing}, which is equal to the width of the robot footprint $X_0$ as in Definition \ref{def:thickness_and_width}.
\end{defn}

\noindent The maximum point spacing relates to the points in the discretized obstacle $X_p$ as follows.
As illustrated in Figure \ref{fig:obs_with_buffer}, the discretized obstacle $X_p$ is constructed by first buffering an obstacle $X\obs$ by a distance $b$ (see Definition \ref{def:buffer}), then sampling the boundary of $X\obs^b$ such that the distance between consecutive sampled points is strictly less than $\rbar$.

\begin{defn}\label{def:adjacent_points}
We refer to consecutive sampled points as \defemph{adjacent points} of the discretized obstacle $X_p$.
\end{defn}

\noindent We address the notion of adjacent points in more detail in Section \ref{subsubsec:construct_disc_obs_X_p}.
Suppose that we attempt to pass $X_0$ through the gap between two adjacent points of $X_p$, and do not allow $X_0$ to overlap with either of the points while passing through.
Since each pair of adjacent points of $X_p$ are strictly closer than $\rbar$ to each other, we know by Lemma \ref{lem:largest_gap_rbar} that the robot can never pass fully through the gap.
Consequently, finding $\rbar$ correctly is critical, leading to the following remark:

\begin{rem}\label{rem:rbar_exact}
The quantity $\rbar$ must either be found exactly or underapproximated to ensure safety.
If $\rbar$ is overapproximated, then the robot described by $X_0$ may be able to pass between a pair of points spaced slightly less than $\rbar$ apart, as per Lemma \ref{lem:largest_gap_rbar}.
Methods exist to exactly compute the width of arbitrary compact convex sets.
For example, the algorithm by \citet{rectangle_bound_curve} finds the smallest bounding rectangle of the set; then the length of the rectangle's shorter leg is the set's width.
A geometric procedure to find the width using the rotation angle $\theta$ and the thickness (as in Definition \ref{def:thickness_and_width}) is presented by \citet[Section 1]{width_of_a_chair}.
\end{rem}

Next, we use $\rbar$ to bound the buffer with the quantity $\bbar$.

\subsubsection{Bounding the Buffer}\label{subsubsec:bbar}

As in Section \ref{subsubsec:rbar}, imagine a wall with gap of width $\rbar$.
Lemma \ref{lem:largest_gap_rbar} proves that the robot cannot pass fully through this gap.
However, the robot can still penetrate through the gap by some distance before it gets stopped by the wall.
In this section, we find the farthest distance that the robot can penetrate through the gap.
We use this maximum penetration distance as an upper bound $\bbar$ on the obstacle buffer, so $b \in (0,\bbar)$.

Recall that our intention is to discretely sample the boundary of the buffered obstacle in \eqref{eq:X_obs_b_buffered_obstacle} to produce a set $X_p$, so the spacing between adjacent points of $X_p$ (as in Definition \ref{def:adjacent_points}) must be smaller than $\rbar$.
If the robot is not allowed to touch any points in $X_p$, it cannot penetrate farther than the distance $\bbar$ between any pair of adjacent points.
So, obstacles do not need to be buffered by a distance larger than $\bbar$.
The existence of $\bbar$ is proven in Lemma \ref{lem:max_penetration_bbar}.
To proceed, we first define the word ``penetrate'' precisely.

\begin{defn}\label{def:penetrate}
Let $I \subset (X\setminus X_0)$ be a line segment as in Definition \ref{def:line_segment_I}.
Let $P_I$ be the half-plane defined by $I$ as in Definition \ref{def:halfplane_P_I}, and suppose $X_0 \subset P_I$ strictly.
Let $\{R_t\}$ be a transformation family that attempts to pass $X_0$ through $I$ by Definition \ref{def:pass_through}.
Suppose $X_0$ cannot pass fully through $I$, and that $R_TX_0 \cap P_I^C$ is nonempty, so there is some portion of $X_0$ that does pass through $I$.
Consider all line segments perpendicular to $I$ with one endpoint on $I$ and the other at a point in $R_TX_0$ in $P_I^C$.
We call the maximum length of any of these line segments the \defemph{penetration distance of $X_0$ through $I$}.
The set $R_TX_0$ \defemph{penetrates} $I$ by this distance, as in Figure \ref{subfig:penetrate}.
If $I$ is of length $0$, then the penetration distance of $X_0$ through $I$ is always $0$, as in Figure \ref{subfig:pass_through_point}.
\end{defn}

\begin{lem}\label{lem:max_penetration_bbar}
Let $X_0$ be the robot's footprint at time 0 (as in Definition \ref{def:X_and_X_0}), with width ${\rbar}$ (as in Definition \ref{def:rbar}).
Let $\irbar \subset (X\setminus X_0)$ be a line segment of length ${\rbar}$ (as in Definition \ref{def:line_segment_I}).
Then there exists a \defemph{maximum penetration distance} $\bbar$ (as in Definition \ref{def:penetrate}) that can be achieved by passing $X_0$ through $\irbar$ (as in Definition \ref{def:pass_through}).
\end{lem}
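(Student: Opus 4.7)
The plan is to show that the supremum of the penetration distance is finite and then achieved, via a compactness argument.

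First, I will observe that by Lemma \ref{lem:largest_gap_rbar}, since the width $W$ of $X_0$ equals the length $L = \rbar$ of $\irbar$, the robot cannot pass fully through $\irbar$. Hence for every transformation family $\{R_t\}$ attempting to pass $X_0$ through $\irbar$, the final set $R_TX_0$ has a well-defined (possibly zero) penetration distance in the sense of Definition \ref{def:penetrate}. Consider the set $\mathcal{F} \subset \SE(2)$ of all final transformations $R_T$ realized by such an attempt, and define $d: \mathcal{F} \to \R_{\geq 0}$ as the penetration distance of $R_TX_0$ through $\irbar$. Explicitly, $d(R_T)$ is the maximum over $p \in R_TX_0 \cap P_{\irbar}^C$ of $|\delta_\pm(e_1,e_2,p)|$, using the endpoints of $\irbar$ in \eqref{eq:delta_distance_func}. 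The supremum of $d$ is then the target quantity $\bbar$.

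Next, I will bound $d$ from above by the diameter of $X_0$ (from Definition \ref{def:thickness_and_width}), which is finite by Assumption \ref{ass:X0_cpt_cvx}. Thus $\sup_{\mathcal{F}} d \leq \text{diam}(X_0) < \infty$, so a finite supremum exists. To upgrade the supremum to a maximum, I will argue that $\mathcal{F}$ is bounded in $\SE(2)$: the rotation component lies in the compact set $[0,2\pi)$, and since $R_TX_0$ must intersect $\irbar$ (during the passage its chord through $\irbar$ is nonempty, and by continuity of $\{R_t\}$ a representative with the maximal penetration configuration also meets $\irbar$), the translation component is confined to a bounded neighborhood of $\irbar$ of radius at most $\text{diam}(X_0)$.

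Then I will show $d$ is upper semicontinuous on the closure $\overline{\mathcal{F}}$. Given a maximizing sequence $R_n \in \mathcal{F}$ with $d(R_n) \to \sup d$, extract a convergent subsequence $R_n \to R^\ast$ using boundedness. The limit set $R^\ast X_0$ is the Hausdorff limit of the compact sets $R_n X_0$, and the map $R \mapsto \max_{p \in RX_0} |\delta_\pm(e_1,e_2,p)|$ restricted to $P_{\irbar}^C$ is continuous; hence $d(R^\ast) \geq \limsup d(R_n) = \sup d$. Finally, I will verify $R^\ast \in \mathcal{F}$ by exhibiting a concatenation of the transformation families realizing the $R_n$ (or equivalently, a straight-line homotopy from the identity to $R^\ast$) that satisfies the conditions of Definition \ref{def:pass_through}, possibly after reparameterizing in $t$. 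The main obstacle I anticipate is this last step: ensuring that the limiting final configuration $R^\ast$ is itself reachable by a valid passing-through family, rather than only being an accumulation point of such configurations. Handling boundary cases where $R^\ast X_0$ touches an endpoint of $\irbar$ (excluded by Definition \ref{def:pass_through}) will require a small perturbation or a direct geometric construction of the family realizing $R^\ast$.
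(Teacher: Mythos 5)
Your core finiteness argument is the same as the paper's: since the width of $X_0$ equals the length of $\irbar$, Lemma \ref{lem:largest_gap_rbar} forbids passing fully through, and the penetration distance is therefore bounded above by the diameter of $X_0$ (the paper makes this explicit by noting that a penetration exceeding the diameter would force $R_TX_0$ entirely into $P_{\irbar}^C$, i.e., a full pass-through). The paper stops there: it \emph{defines} $\bbar$ as the supremum in a program of the form $\sup_{\{R_t\}} \delta_x(R_TX_0)$ and treats the existence of a finite upper bound as the content of the lemma. So for the part of the claim the paper actually proves, your proposal is on the same track.

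Where you diverge is in trying to upgrade the supremum to an attained maximum via boundedness of the final configurations in $\SE(2)$, a convergent subsequence, and continuity of the penetration functional. This addresses a real subtlety --- the paper later (Lemma \ref{lem:rotate_then_translate_to_penetrate}) invokes a family that \emph{achieves} $\bbar$, which the supremum argument alone does not furnish --- but your argument is incomplete exactly where you say it is: showing that the limit $R^\ast$ is itself the endpoint of a valid family in the sense of Definition \ref{def:pass_through} (in particular that $R^\ast X_0$ avoids the endpoints $E_{\irbar}$ for $t<T$ and that a continuous passing-through homotopy to $R^\ast$ exists). The clean way to close it, consistent with the paper's later machinery, is the rotate-then-translate construction of Lemma \ref{lem:rotate_then_translate_to_penetrate}: for each fixed rotation angle $\theta$ the maximal pure-translation penetration is a well-defined, continuous function of $\theta$ on the compact set $[0,2\pi]$, so it attains its maximum, and that maximum equals $\bbar$. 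If you only need what the paper's proof delivers (a finite upper bound $\bbar$), your first two steps already suffice and the compactness machinery can be dropped.
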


\begin{figure}
    \centering
    \includegraphics[width=0.6\columnwidth]{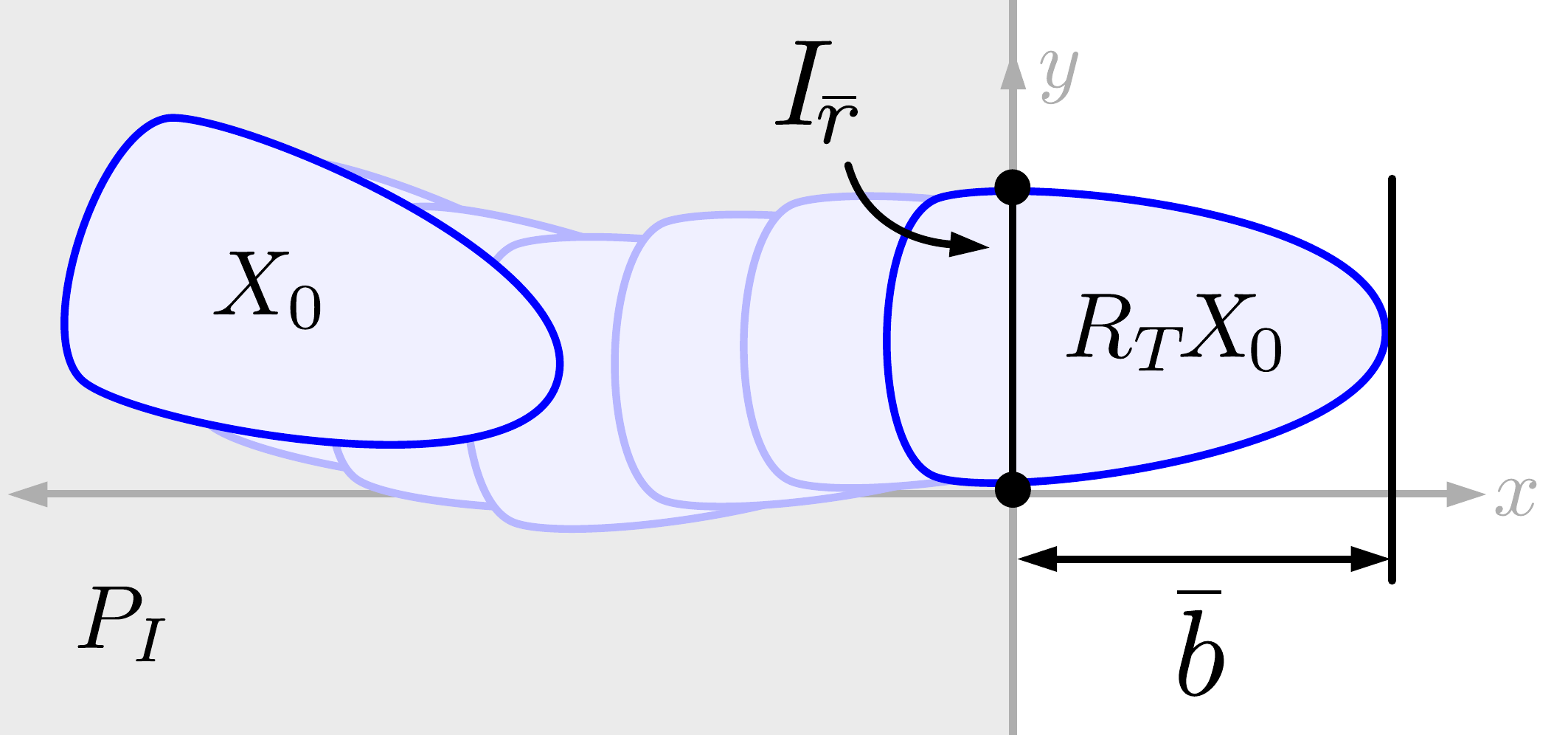}
    \caption{An arbitrary compact, convex set $X_0$ of width $\rbar$ penetrates a line segment $\irbar$ by the distance $\bbar$ when a transformation family $\{R_t\}_{t \in [0,T]}$ is applied to pass $X_0$ through $\irbar$ as in Definition \ref{def:pass_through}.
    Since $\irbar$ is of length $\rbar$, $X_0$ cannot pass fully through by Lemma \ref{lem:largest_gap_rbar}.
    At the initial index $t = 0$ and the final index $t = T$, the sets $R_0X_0$ and $R_TX_0$ are shown with dark outlines.
    A sampling of intermediate indices $t \in (0,T)$ are shown with light outlines.}
    \label{fig:max_penetration_bbar}
\end{figure}

\noindent To relate Lemma \ref{lem:max_penetration_bbar} to the robot, consider the following.
If we buffered an obstacle $X\obs$ by the amount $\bbar$, and spaced points along the boundary of $X\obs$ by a distance less than $\rbar$, then the \textit{farthest} that the robot could pass between any pair of adjacent points \textit{without touching either point} is strictly less than $\bbar$.
Therefore, the robot could not collide with the obstacle without touching one of the points.
In other words, if the robot avoids every such point, then the robot avoids the obstacle.
Consequently, finding $\bbar$ correctly is important, which we emphasize with the following remark:

\begin{rem}\label{rem:bbar_inexact}
To find the point spacing $r$, we must ensure that $b$ is in fact less than the maximum penetration distance. 
As a result, it is critical to underapproximate $\bbar$.
\end{rem}

\noindent A geometric method for finding $\bbar$ for an arbitrary convex robot footprint is presented in Lemma \ref{lem:rotate_then_translate_to_penetrate} in Appendix \ref{app:obs_rep}.

Next, we find the point spacing $r$.

\subsubsection{Finding the Point Spacing}\label{subsubsec:find_r}

Let $\rbar$ be as in Definition \ref{def:rbar} and $\bbar$ as in Lemma \ref{lem:max_penetration_bbar}.
We choose $b \in (0,\bbar)$, then use $b$ to find the point spacing $r$ (Definition \ref{def:point_and_arc_spacing}).
We prove that $r$ exists with the following lemma.

\begin{lem}\label{lem:find_r}
Let $X_0 \subset \R^2$ be the robot's footprint at time $0$ (as in Definition \ref{def:X_and_X_0}), with width $\rbar$ (as in Definition \ref{def:rbar}).
Let $\bbar$ be the maximum penetration depth corresponding to $X_0$ (as in Lemma \ref{lem:max_penetration_bbar}).
Pick $b \in (0,\bbar)$.
Then there exists $r \in (0,\rbar]$ such that, if $I_r$ is a line segment of length $r$ (as in Definition \ref{def:line_segment_I}), and if $\{R_t\}$ is any transformation family that attempts to pass $X_0$ through $I_r$ (as in Definition \ref{def:pass_through}), then the penetration distance of $X_0$ through $I_r$ (as in Definition \ref{def:penetrate}) is less than or equal to $b$.
\end{lem}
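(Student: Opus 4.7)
My plan is to define a maximum-penetration function
\[
  \psi(r) \;=\; \sup\bigl\{\,\text{penetration distance of } X_0 \text{ through } I_r\,\bigr\},
\]
where the supremum ranges over all line segments $I_r\subset X\setminus X_0$ of length $r$ and all transformation families $\{R_t\}$ that attempt to pass $X_0$ through $I_r$ (Definitions \ref{def:pass_through} and \ref{def:penetrate}). I would then show $\psi$ is monotone non-decreasing on $[0,\rbar]$ with $\psi(\rbar)=\bbar$ and $\psi(r)\to 0$ as $r\to 0^+$; picking any $r\in(0,\rbar]$ with $\psi(r)\le b$, which exists for every $b \in (0, \bbar)$, proves the lemma.

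The first two properties are routine. Monotonicity holds because any length-$r_1$ segment $I_{r_1}$ with $r_1 < r_2 \le \rbar$ can be embedded as an interior sub-segment of a length-$r_2$ segment $I_{r_2}$ on the same supporting line, with $P_{I_{r_2}}=P_{I_{r_1}}$; every transformation family admissible for $I_{r_1}$ (Definition \ref{def:pass_through}) is therefore admissible for $I_{r_2}$ and achieves the same penetration, so $\psi(r_1)\le\psi(r_2)$. The value $\psi(\rbar)=\bbar$ is exactly Lemma \ref{lem:max_penetration_bbar}, and $\psi(0)=0$ is the length-zero clause of Definition \ref{def:penetrate}.

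The main obstacle will be proving $\psi(r)\to 0$ as $r\to 0^+$. My plan is a compactness-and-contradiction argument: if some $\delta>0$, a sequence $r_n\to 0$, segments $I_{r_n}$, and admissible transformation families $\{R_t^{(n)}\}$ achieved penetration at least $\delta$, then translating so each $I_{r_n}$ is centered at the origin keeps all $R_T^{(n)}X_0$ in a common bounded region of $X$ (the penetration is at most $\bbar$ and the diameter of $X_0$ is finite). By compactness of the resulting subset of $\SE(2)$ together with Hausdorff-compactness of the closed bounded subsets of that region, a subsequence admits a limit configuration whose penetration through the limiting ``segment'' of length $0$ would still be at least $\delta$; this contradicts the fact that penetration through a length-zero segment is zero (Definition \ref{def:penetrate}). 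The delicate step is showing that the penetration functional is continuous under the relevant joint limit of transformations and segments; I expect this to follow because penetration is the supremum of the signed distance from $R_TX_0$ to $\ell_{I_r}$ on the far side of $P_{I_r}$, and both the signed distance and the convex set $R_TX_0$ pass to Hausdorff limits. Once this vanishing is secured, monotonicity of $\psi$ immediately delivers the desired $r$.
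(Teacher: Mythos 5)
Your strategy---define $\psi(r)$ as the supremal penetration over all length-$r$ gaps and admissible families, then extract $r$ from $\psi(r)\to 0$---is a genuinely different route from the paper's, which instead fixes the penetration at exactly $b$ and \emph{minimizes the chord length} $R_TX_0\cap I_{\rbar}$ over all such configurations (Program \eqref{prog:find_r}), using Lemma \ref{lem:parallel_chords_are_shorter} to show the infimum is positive. The paper's version is constructive (it is what yields the closed-form $r$ for the rectangle and circle in Examples \ref{ex:rectangle_X0} and \ref{ex:circular_X0}); yours is purely existential but would suffice for the lemma as stated. Incidentally, monotonicity of $\psi$ is not needed once $\psi(r)\to 0$ is known, which is just as well: your embedding argument needs an extra check (via convexity and connectedness of $R_tX_0\cap\ell_I$) that a family admissible for the short segment never meets the endpoints of the longer one.

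The genuine gap is in the step you flag yourself: the limit argument for $\psi(r)\to 0$ cannot be closed by citing the clause of Definition \ref{def:penetrate} that assigns penetration $0$ to a length-zero segment. That clause is a convention about a degenerate input, not a continuity statement, and the Hausdorff limit of your configurations is not an admissible ``passing through'' configuration for a degenerate segment (Definition \ref{def:pass_through} requires $R_tX_0\cap(I\setminus E_I)\neq\emptyset$, which is impossible when $I=E_I$), so there is nothing in the definition to contradict. The contradiction must instead come from convexity, and extracting it requires the paper's chord machinery. Writing $K_\infty$ for the limit of $R_T^{(n)}X_0$ and $\ell_\infty$ for the limit line, you need: (i) $K_\infty\cap\ell_\infty$ is at most a single point, which follows because each chord $R_T^{(n)}X_0\cap\ell_n$ lies in the interior of $I_{r_n}$ and so has length $<r_n\to 0$, combined with Lemmas \ref{lem:middle_chord_is_not_shortest} and \ref{lem:parallel_chords_are_shorter} to control the nearby parallel chords when passing to the Hausdorff limit; and (ii) $K_\infty$ has points at positive distance on \emph{both} sides of $\ell_\infty$---the $\delta$-deep point on the far side survives the limit (the requirement in Definition \ref{def:penetrate} that the perpendicular foot lie on $I_{r_n}$ pins it above the shrinking segment), while on the near side each $R_T^{(n)}X_0$ retains a chord of length at least $\rbar$ parallel to $\ell_n$ because it cannot pass fully through (Lemma \ref{lem:largest_gap_rbar}). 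A convex body of nonzero volume cannot satisfy (i) and (ii) simultaneously. Until these two facts are established, the vanishing of $\psi$---and hence the lemma---remains unproved in your write-up.
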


Lemma \ref{lem:find_r} states the existence of the point spacing $r$.
A method compute $r$ for arbitrary compact, convex robot footprints is presented in Appendix \ref{app:obs_rep} in the proof of Lemma \ref{lem:find_r}.
We find $r$ analytically for rectangular and circular footprints in Examples \ref{ex:rectangle_X0} and \ref{ex:circular_X0} below.

We use $r$ as follows.
Suppose our robot has a footprint $X_0$ as in Definition \ref{def:X_and_X_0}, with width $\rbar$ as in Definition \ref{def:rbar}, and associated maximum penetration distance $\bbar$ as in Lemma \ref{lem:max_penetration_bbar}.
Pick $b \in (0,\bbar)$.
Suppose $X\obs \subset X$ is an obstacle as in Definition \ref{def:obs}, and let it consist of polygons as in Assumption \ref{ass:obs_are_polygons}.
Construct $X\obs^b$, the buffered obstacle, with \eqref{eq:X_obs_b_buffered_obstacle}.
Recall by Lemma \ref{lem:buffered_obs_arcs_and_lines} that the boundary of the buffered obstacle consists of line segments and arcs.
Then, $r$ lets us construct the portion of the discretized obstacle $X_p$ that corresponds to the line segments in $\bd X\obs^b$.
In particular, suppose we sample each line segment of $\bd X\obs^b$ such that adjacent points (as in Definition \ref{def:adjacent_points}) are no farther than $r$ apart.
Then, by Lemma \ref{lem:find_r}, if $I_r$ is a line segment between two of these adjacent points, the robot can penetrate no further than $b$ through $I_r$ (as in Definition \ref{def:penetrate}).
In other words, the robot cannot reach $X\obs$ by going ``between'' the adjacent points of the line segments.

However, we have not yet explained how to sample the arcs of $\bd X\obs^b$.
We do so next, by finding the arc point spacing $a$.

\subsubsection{Finding the Arc Point Spacing}\label{subsubsec:find_a}

Note that we cannot necessarily use $r$ as the point spacing distance when sampling the arcs of $\bd X\obs^b$.
To understand why, informally, imagine $X_0$ penetrating into a circle $C$ of radius $b \in (0,\bbar)$ instead of a line segment of length $\rbar$ as in Lemma \ref{lem:find_r}.
Suppose that $X_0$ stops when it touches the center of the circle.
For the sake of argument, suppose that the boundary $\bd X_0$ (which exists because $X_0$ is compact by Assumption \ref{ass:X0_cpt_cvx}) intersects $C$ in exactly two points; then, in the intersection of $X_0$ with $C$, there is an arc of radius $b$ between these two points.
If the length of this arc were equal to $r$, for an arbitrary convex $X_0$, then we could sample ``along'' each arc by the distance $r$.
But this is not true in general; one can check that it is false if $X_0$ is circular, as in Example \ref{ex:circular_X0}.
Therefore, we need a different point spacing for the arcs, which is the \defemph{arc point spacing} $a$ as in Definition \ref{def:point_and_arc_spacing}.

Before finding the arc point spacing $a$, we extend the concepts of passing through and penetrating from line segments to circles and arcs:
\begin{defn}\label{def:pass_and_penetrate_circles_and_arcs}
Let $C \subset \R^2$ be a circle of radius $R$ with center $p$ as in Definition \ref{def:circle_and_arc}.
Let $X_0$ be the robot's footprint at time $0$ as in Definition \ref{def:X_and_X_0}.
Let $\kp$ be a chord of $C$ as in Definition \ref{def:chord}.
Then \defemph{passing $X_0$ into $C$ through $\kp$} is defined as passing $X_0$ through the chord $\kp$ as in Definition \ref{def:pass_through}.
If the length of $\kp$ is less than the width of $X_0$, then, by Lemma \ref{lem:largest_gap_rbar}, $X_0$ cannot pass fully through $\kp$, but does penetrate the chord up to some distance as in Definition \ref{def:penetrate}.
Let $P_{\!\kp}$ be the closed half-plane defined by $\kp$ as in Definition \ref{def:halfplane_P_I}.
The \defemph{penetration of $X_0$ into $C$ through $\kp$} is the maximum Euclidean distance from any point in $X_0\cap C$ to a point in $X_0 \cap P_{\!\kp}^C$.
\end{defn}

\noindent This definition is illustrated in Figure \ref{subfig:pass_through_circle}.
We prove that $a$ exists with the following lemma.

\begin{lem}\label{lem:find_a}
Let $X_0$ be the robot's footprint at time $0$ (as in Definition \ref{def:X_and_X_0}), with width $\rbar$ (as in Definition \ref{def:rbar}).
Let $\bbar$ be the maximum penetration distance corresponding to $X_0$ (as in Lemma \ref{lem:max_penetration_bbar}).
Pick $b \in (0,\bbar)$, and let $C \subset (X\setminus X_0)$ be a circle of radius $b$ centered at a point $p \in X$ (as in Definition \ref{def:circle_and_arc}).
Then there exists a number $a \in (0,\rbar)$ such that, if $\kp_a$ is any chord of $C$ of length $a$ (as in Definition \ref{def:chord}), then the penetration of $X_0$ into $C$ through $\kp$ (as in Definition \ref{def:pass_and_penetrate_circles_and_arcs}) is no larger than $b$.
\end{lem}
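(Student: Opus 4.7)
The plan is to argue existence by combining Lemma \ref{lem:find_r} with a compactness / limiting argument, rather than attempting a slick closed-form bound. Define
\[
p(a) \;=\; \sup\big\{\,\text{penetration of } X_0 \text{ into } C \text{ through } \kp \;:\; \kp \text{ a chord of } C \text{ of length } a\big\},
\]
where the supremum is also taken over all admissible transformation families $\{R_t\}$ passing $X_0$ through $\kp$. I would then show (i) $p(a)$ is well-defined and behaves continuously, and (ii) $p(a) \to 0$ as $a \to 0^+$; choosing any $a \in (0,\rbar)$ with $p(a) \leq b$ finishes the proof.

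For (i), the set of admissible pairs $(\kp,\{R_t\})$ at fixed $a$ is compact: up to the rotational symmetry of $C$, a chord of length $a$ is determined by a single angle parameter in a compact interval, while $R_T \in \SE(2)$ can be constrained to a compact set since $X_0$ and $C$ are bounded and the non-touching-endpoints condition from Definition \ref{def:pass_through} is closed on the relevant subdomain. The penetration, which is the Hausdorff-type quantity $\max\{\|y-z\|_2 : y \in R_TX_0 \cap C,\ z \in R_TX_0 \cap P_{\kp}^C\}$, is continuous in $(\kp, R_T)$, so $p(a)$ is attained and upper semicontinuous in $a$.

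For (ii), I would apply Lemma \ref{lem:find_r}: for each $b' \in (0, \bbar)$ there is $r(b') \in (0,\rbar]$ such that passing $X_0$ through any line segment of length $r(b')$ yields perpendicular penetration at most $b'$. Thus for any chord $\kp_a$ with $a \leq r(b')$, the set $R_TX_0 \cap P_{\kp_a}^C$ is contained in the closed slab of perpendicular thickness $b'$ adjacent to $\kp_a$. Furthermore, since $X_0$ cannot touch the endpoints of $\kp_a$ (Definition \ref{def:pass_through}) and has width $\rbar$, the portion of $X_0$ crossing $\kp_a$ is also bounded in the chord-parallel direction by a quantity that shrinks with $a$. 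Combining these two bounds, $R_TX_0 \cap P_{\kp_a}^C$ is contained in a rectangle whose diameter tends to $0$ as $a \to 0$ and $b' \to 0$. Since $R_TX_0 \cap C \subseteq R_TX_0$, and both sets collapse to a common neighborhood of the degenerating chord, the max distance between them tends to $0$.

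Given the desired bound $b > 0$, upper semicontinuity of $p$ together with $\lim_{a\to 0^+} p(a) = 0$ yields the existence of some $a \in (0,\rbar)$ with $p(a) \leq b$, as required. The main obstacle is step (ii): carefully showing that the set $X_0 \cap P_{\kp_a}^C$ collapses in \emph{both} the perpendicular and parallel directions. The perpendicular collapse is immediate from Lemma \ref{lem:find_r}, but the parallel collapse requires invoking the fact that $X_0$ is strictly separated from the two chord endpoints throughout the transformation, which---combined with continuity of $\{R_t\}$ and compactness of $X_0$---forces the chord-parallel extent of the penetrating region to shrink with $a$. A compactness argument (extracting a convergent subsequence of configurations witnessing the supremum) is the cleanest way to formalize this limit.
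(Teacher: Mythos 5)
Your strategy---define $p(a)$ as the worst-case penetration over all chords of length $a$ and all admissible transformation families, then show $p(a)\to 0$ as $a\to 0^+$---is genuinely different from the paper's. The paper never takes a limit: it anchors the circle $C_b$ at a deepest-penetration point of the line-segment scenario from Lemma \ref{lem:find_r} (so that, by construction, the relevant points of $R_TX_0\cap C_b$ lie at distance exactly $b$ from that point) and defines $a$ as the infimal length of the chord cut out on $C_b$ by $\bd R_TX_0$, via the program \eqref{prog:find_a}. Your route would, if completed, also prove existence, and has the advantage of not needing to exhibit the extremal configuration; the paper's construction, on the other hand, produces the largest admissible $a$, which matters in practice since $a$ is a sample spacing.

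However, step (ii) has a concrete gap. By Definition \ref{def:pass_and_penetrate_circles_and_arcs}, the penetration of $X_0$ into $C$ through $\kp_a$ is the maximum distance from a point of $R_TX_0\cap C$ to a point of $R_TX_0\cap P_{\kp_a}^C$. Your argument controls only the second set: Lemma \ref{lem:find_r} gives the perpendicular collapse, and the endpoint-avoidance-plus-convexity argument you sketch gives the parallel collapse, of $R_TX_0\cap P_{\kp_a}^C$. But you never control $R_TX_0\cap C$; the sentence ``both sets collapse to a common neighborhood of the degenerating chord'' is asserted, not proved, and it is not automatic. At time $T$ the footprint is a large convex body occupying much of the half-plane $P_{\kp_a}$, while the circle $C$ has radius $b$ fixed as $a\to 0$ and hence an arc of essentially fixed extent lying in $P_{\kp_a}$; shrinking the chord does nothing to prevent the footprint from meeting $C$ at points far from $\kp_a$ (up to $2b$ away), in which case $p(a)$ would not tend to $0$. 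To close the gap you must show that $R_TX_0\cap C$ is confined near the chord---for instance, that it lies on the arc subtended by $\kp_a$ on the approach side, whose sagitta $b-\sqrt{b^2-a^2/4}$ tends to $0$, together with points of the opposite arc contained in the collapsing penetration slab---and this confinement depends on which side of the chord the circle's center lies, i.e., on the orientation in which $X_0$ approaches $C$. Until $R_TX_0\cap C$ is pinned down, $\lim_{a\to 0^+}p(a)=0$ is unproven and the conclusion does not follow.
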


\noindent Lemma \ref{lem:find_a} provides the arc point spacing $a \in (0,\rbar)$.
The proof in Appendix \ref{app:obs_rep} explains a method for finding $a$ for arbitrary compact, convex robot footprints.
We can in fact prove a tighter bound, that $a$ is always shorter than the point spacing $r$ from Lemma \ref{lem:find_r}; in other words, $a \in (0,r)$.
This claim is straightforward to prove using the same techniques from the proof of Lemma \ref{lem:find_a}.

As with $r$, we find $a$ analytically for rectangular and circular footprints in Examples \ref{ex:rectangle_X0} and \ref{ex:circular_X0}, presented next.

\subsubsection{Example Geometric Quantities}\label{subsubsec:example_geometric_quantities}

Now, we have completed finding the geometric quantities $\rbar$ (Lemma \ref{lem:largest_gap_rbar}), $\bbar$ (Lemma \ref{lem:max_penetration_bbar}), $r$ (Lemma \ref{lem:find_r}), and $a$ (Lemma \ref{lem:find_a}) that were desired at the outset of Section \ref{subsec:robot_obstacle_geometry_motivation}.
To wrap up, we note what to do if $r$ and $a$ cannot be found exactly, then present two examples of $\rbar$, $\bbar$, $r$, and $a$ for rectangular and circular robot footprints.

\begin{rem}\label{rem:r_and_a_hard_to_find}
Since $r$ and $a$ are point spacings for discretizing obstacles, they must be underapproximated if they cannot be found exactly, by logic similar to that of Remark \ref{rem:rbar_exact}.
Otherwise, the robot may be able to penetrate farther than the distance $b$ between them.

For some convex, compact robot footprints, such as the rectangle and circle in Examples \ref{ex:rectangle_X0} and \ref{ex:circular_X0}, $r$ and $a$ can be found analytically.
For arbitrary convex, compact footprints, one can use the procedure proposed in Appendix \ref{app:obs_rep} (Lemma \ref{lem:rotate_then_translate_to_penetrate}) for finding $\bbar$, but limit the penetration distance to $b$; then one can find $r$ as in Lemma \ref{lem:find_r} with \eqref{eq:delv_find_r_vertical_span} and find $a$ as in Lemma \ref{lem:find_a} by placing a circle $C_b$ and finding the chord $\kp_a$.
\end{rem}

\begin{ex}\label{ex:rectangle_X0}
(Rectangular footprint).
Suppose $X_0$ is a rectangle of length $L$ and width $W$, with $L > W$.
Then $\rbar = W$ and $\bbar = W/2$.
Given $b \in (0,\bbar)$, $r = 2b$ and $a = 2b\sin(\pi/4)$.
A visual proof is in Figure \ref{subfig:ex_rectangle}.
\end{ex}

\begin{ex}\label{ex:circular_X0}
(Circular footprint).
Suppose $X_0$ is a circle of radius $R$.
Then $\rbar = 2R$ and $\bbar = R$.
Pick $b \in (0,\bbar)$.
Define the positive angles $\theta_1 = \cos\inv\left(\frac{R - b}{R}\right)$ and $\theta_2 = \cos\inv\left(\frac{b}{2R}\right)$.
Then we find $r = 2R\sin\theta_1$ and $a = 2b\sin\theta_2$.
A visual proof is in Figure \ref{subfig:ex_circle}.
\end{ex}

\begin{figure}
    \centering
    \begin{subfigure}[t]{0.45\columnwidth}
        \centering
        \includegraphics[width=\columnwidth]{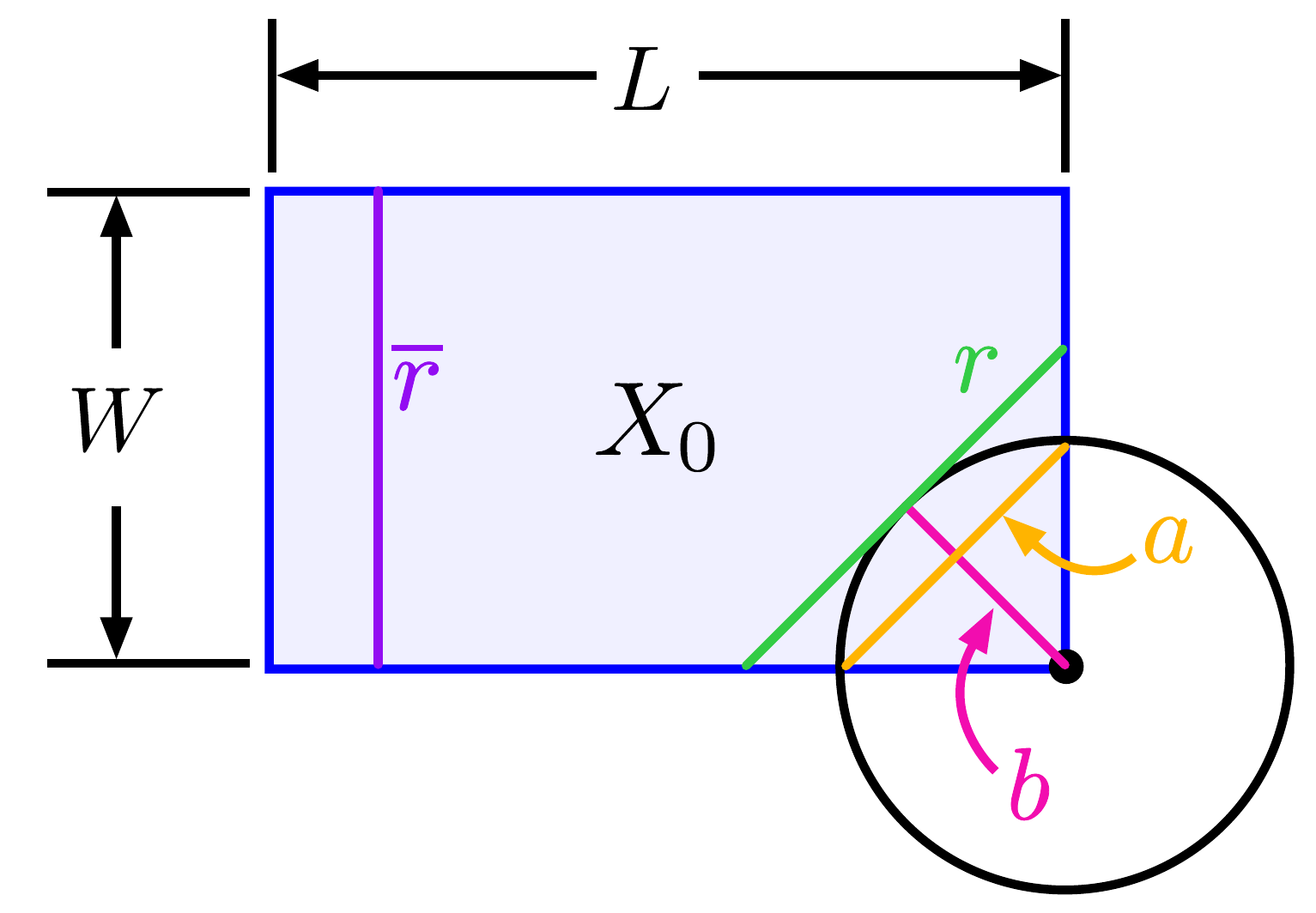}
        \caption{\centering}
        \label{subfig:ex_rectangle}
    \end{subfigure}
    \begin{subfigure}[t]{0.45\columnwidth}
        \centering
        \includegraphics[width=\columnwidth]{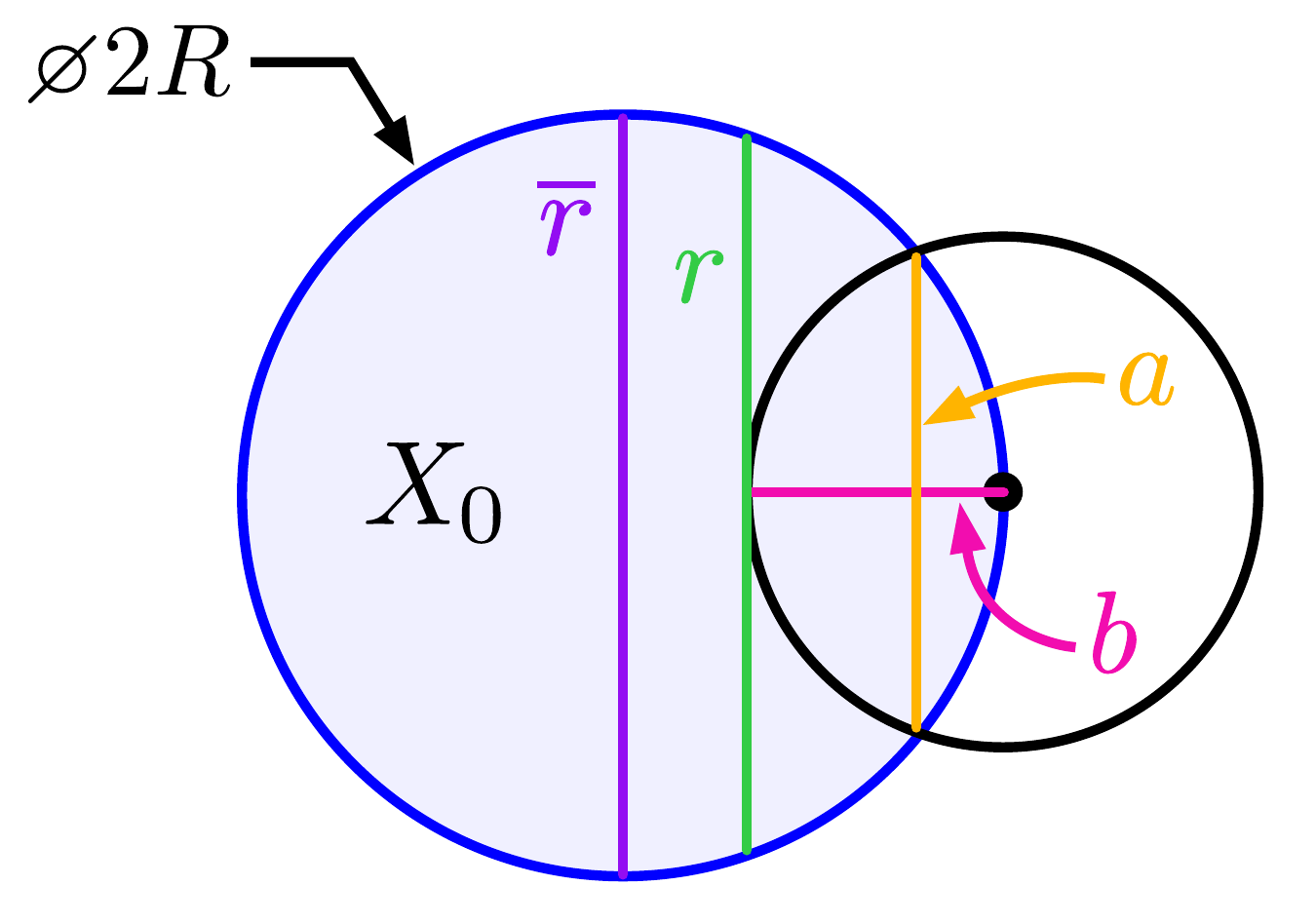}
        \caption{\centering}
        \label{subfig:ex_circle}
    \end{subfigure}
    \caption{
    An illustration of the numbers $\rbar$, $b$, $r$, and $a$ for rectangular and circular robot footprints (see Examples \ref{ex:rectangle_X0} and \ref{ex:circular_X0}).
    The left subfigure shows a rectangular footprint, with length $L$ and width $W$.
    The right subfigure shows a circular robot footprint with diameter $2R$.
    % For both footprints, the maximum point spacing $\rbar$ (as in Definition \ref{def:rbar}) is in purple.
    The maximum penetration distance $\bbar$ (as in Lemma \ref{lem:max_penetration_bbar}) is omitted for clarity.
    % The buffer $b$ is in magenta.
    % The point spacing $r$ is in mint green.
    % The arc point spacing $a$ is in orange.
    The circle centered on $\bd X_0$ corresponds to the circle $C_b$ used in \eqref{prog:find_a} to find $a$ (see Lemma \ref{lem:find_a}).}
    \label{fig:rbar_b_r_and_a_examples_rect_and_circ}
\end{figure}

This completes finding the geometric quantities $\rbar$, $\bbar$, $r$, and $a$.
Next, we use these quantities to construct the discretized obstacle, and prove that this enables identifying the set of safe trajectory parameters $K\safe$.

\subsection{Preserving Safety with Discretized Obstacles}\label{subsec:proving_X_p_works}

Now we present an algorithm to take a buffered obstacle and discretize its boundary, producing the discretized obstacle $X_p$.
We prove in Theorem \ref{thm:X_p} that, if the robot cannot collide with any point in $X_p$, then it also cannot collide with the obstacle.
Finally, we discuss sources of conservatism in the discretization approach.

\subsubsection{Constructing the Discretized Obstacle}\label{subsubsec:construct_disc_obs_X_p}

To proceed, we first get the buffered obstacle, then establish three useful functions for constructing the discretized obstacle with Algorithm \ref{alg:construct_X_p}.
We then prove that the discretized obstacle can be used to identify safe trajectory parameters with Theorem \ref{thm:X_p}.

To get the buffered obstacle, let $X\obs$ consist of polygons (as in Assumption \ref{ass:obs_are_polygons}).
Suppose $X_0$ is the robot's footprint at time $0$ (as in Definition \ref{def:X_and_X_0}), which is compact and convex with nonzero volume (as in Assumption \ref{ass:X0_cpt_cvx})
Suppose that $\rbar$ is found for $X_0$ as in Definition \ref{def:rbar} and $\bbar$ as in Lemma \ref{lem:max_penetration_bbar}.
Select $b \in (0,\bbar)$, then find $r$ with \eqref{prog:find_r} and $a$ with \eqref{prog:find_a}.
Buffer the obstacle to produce $X\obs^b$ as in \eqref{eq:X_obs_b_buffered_obstacle}.
Now, we can discretize $\bd X\obs^b$.

The first two functions extract the lines and arcs from the boundary of the buffered obstacle.
Then, by  Lemma \ref{lem:buffered_obs_arcs_and_lines}, we can rewrite $\bd X\obs^b = L\cup A$ where $L$ is a finite set of closed line segments (as in Definition \ref{def:line_segment_I}) and $A$ is a finite set of closed arcs (as in Definition \ref{def:circle_and_arc}).
Let $n_L$ be the number of line segments and $n_A$ be the number of arcs.
For $i = 1,\cdots, n_L$, let $L_i \subset L$ denote the $i$\ts{th} segment, and similarly $A_i \subset A$ for the $i$\ts{th} arc.
Then the function \defemph{\texttt{extractLines}} takes in the buffered obstacle $X\obs^b$ and returns the set $L$ of all line segments on $\bd X\obs^b$.
Similarly, the function \defemph{\texttt{extractArcs}} takes in $X\obs^b$ and returns the set $A$ of all circular arcs on $\bd X\obs^b$.

We now define a third function, \defemph{\texttt{sample}}$: \P(\R^2)\times \R \to \P(\R^2)$, to discretize the line segments and arcs.
Suppose $S \subset \R^2$ is a connected curve with exactly two endpoints and no self-intersections; note we are conflating a curve with its image.
Let $s > 0$ be a distance.
Then $P = \texttt{sample}(S,s)$ is a set containing the endpoints of $S$.
Furthermore, if the total arclength along $S$ is greater than $s$, then $P$ also contains a finite number of points spaced along $S$ such that, for any point in $P$, there exists at least one other point that is no farther away than the arclength $s$ along $S$.
Note that the line segments in $L$ and the arcs in $A$ can be parameterized; then the \texttt{sample} function can be implemented using interpolation of a parameterized curve.

\begin{algorithm}
\small
\begin{algorithmic}[1]
\State {\bf Require:} $X\obs^b \subset X$, $r\in \R_{\geq 0}$,\ $a \in \R_{\geq 0}$
\State $L \leftarrow \texttt{extractLines}\left(X\obs^b\right),\ A \leftarrow \texttt{extractArcs}\left(X\obs^b\right)$
\State $X_p \leftarrow \emptyset$
\State{\bf For each:}  $i \in \{1,\ldots,n_L\}$
    \State\hspace{.2in} $X_p \leftarrow X_p \cup \texttt{sample}(L_i,r)$
\State{\bf end}
\State{\bf For each:}  $j \in \{1,\ldots,n_A\}$
    \State\hspace{.2in}$X_p \leftarrow X_p \cup \texttt{sample}(A_j,a)$
\State{\bf end}
\State{\bf Return} $X_p$
\end{algorithmic}
\caption{\small Construct Discretized Obstacle (\texttt{discretizeObs})}
\label{alg:construct_X_p}
\end{algorithm}

Suppose that $X_p$ is constructed from a buffered obstacle $X\obs^b$ using Algorithm \ref{alg:construct_X_p}.
Then $X_p$ contains the endpoints of each line segment or arc of $\bd X\obs^b$, since it is constructed using \texttt{sample}.
In addition, for each line segment of $\bd X\obs^b$, $X_p$ contains additional points spaced along the line segment such that each point is within the distance $r$ (in the 2-norm) from at least one other point.
Similarly, for each arc of $\bd X\obs^b$, $X_p$ contains points spaced along the arc such that each point is within the arclength $a$ of at least one other point; this implies that distance between any pair of adjacent points along each arc is no more than $a$.
Finally, note that $|X_p|$ is finite, because there are a finite number of polygons in $X\obs$ (see Assumption \ref{ass:obs_are_polygons}), each polygon has a finite number of edges, and $r, a > 0$.

\subsubsection{Proving Safety}
Now, we formalize the notion that $X_p$ represents the obstacles $X\obs$ without affecting the guarantee of safety.
Recall that the purpose of constructing $X_p$ is to map the obstacles $X\obs$ into the parameter space $K$ via the map $\pi_K: X \to K$ as in \eqref{eq:pi_K}.
To ensure safety, the set $\pi_K(X_p)$ must contain all possible unsafe trajectory parameters $\pi_K(X\obs)$, which is the complement of the set $K\safe$, leading to the following theorem:

\begin{thm}\label{thm:X_p}
Let $X_0$ be the robot's footprint at time 0 as in Definition \ref{def:X_and_X_0}, with width $\rbar$ as in Definition \ref{def:rbar}.
Let $X\obs \subset (X\setminus X_0)$ be a set of obstacles as in Definition \ref{def:obs}.
Suppose that the maximum penetration depth $\bbar$ is found for $X_0$ as in Lemma \ref{lem:max_penetration_bbar}.
Pick $b \in (0,\bbar)$, and find the point spacing $r$ with \eqref{prog:find_r} and the arc point spacing $a$ with \eqref{prog:find_a}.
Construct the discretized obstacle $X_p$ in Algorithm \ref{alg:construct_X_p}.
Then, the set of all unsafe trajectory parameters corresponding to $X\obs$ is a subset of the trajectory parameters corresponding to $X_p$, i.e. $\pi_K(X_p) \supseteq \pi_K(X\obs)$.
\end{thm}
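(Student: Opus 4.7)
The plan is to prove the contrapositive: show that whenever tracking $k$ never causes the robot's body to touch any point of $X_p$, tracking $k$ also never causes the body to touch $X\obs$. Suppose for contradiction that $k \in \pi_K(X\obs) \setminus \pi_K(X_p)$. Then, by Lemma \ref{lem:when_param_cant_cause_crash} and the definition of $\pi_K$ in \eqref{eq:pi_K}, there exist $t^* \in [0,T]$ and $p^* \in X\obs$ with $p^* \in R_{t^*}X_0$, while no point of $X_p$ ever lies in $R_t X_0$ for any $t \in [0,T]$; here $\{R_t\}$ is the transformation family (Definition \ref{def:R_t_translation_and_rotation_family}) describing the motion of the robot's body when tracking $k$.

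First I would locate a first crossing of $\bd X\obs^b$. Since $X\obs \subset X\setminus X_0$ and $b \in (0,\bbar)$ is chosen together with the safe-planning assumption so that $X_0$ starts disjoint from $X\obs^b$, the continuity of $\{R_t\}$ and the intermediate value argument give a first time $t_0 \in (0,t^*]$ at which $R_{t_0}X_0$ touches $\bd X\obs^b$ at some point $q$. By Lemma \ref{lem:buffered_obs_arcs_and_lines}, $q$ lies on some line segment $L_i$ or arc $A_j$ of $\bd X\obs^b$. Because $q \notin X_p$ (otherwise $k \in \pi_K(X_p)$), Algorithm \ref{alg:construct_X_p} places $q$ strictly between two adjacent $X_p$-points $p_1, p_2$ sampled from $L_i$ or $A_j$, and the endpoints of each line segment or arc are themselves included in $X_p$.

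Next I would invoke the penetration bounds. If $q \in L_i$, then $\norm{p_1-p_2}_2 \leq r$, and because no point of $X_p$ (in particular $p_1,p_2$) ever lies on the robot body, the restriction of $\{R_t\}$ to $[t_0,t^*]$ fits Definition \ref{def:pass_through} with $I$ taken as the segment from $p_1$ to $p_2$; Lemma \ref{lem:find_r} then caps the penetration by $b$. If $q \in A_j$, the chord from $p_1$ to $p_2$ has length at most $a$ (chord length is bounded by arclength), and Lemma \ref{lem:find_a} caps the penetration into the associated radius-$b$ circle by $b$. Finally, by the construction $X\obs^b = X\obs \oplus \{p : \|p\|_2 \le b\}$, every point of $X\obs$ lies at Euclidean distance at least $b$ past the locally nearest portion of $\bd X\obs^b$. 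A penetration of at most $b$ through the identified gap therefore prevents $R_{t^*}X_0$ from containing $p^* \in X\obs$, contradicting the existence of $t^*$; the equality case where penetration meets $b$ exactly is absorbed by the strict choice $b < \bbar$ and the open choice $r \in (0,\rbar]$, $a \in (0,r)$ baked into Lemmas \ref{lem:find_r} and \ref{lem:find_a}.

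The main obstacle will be making the reduction to a \emph{single} gap rigorous. Two related subtleties must be handled: (i) showing that, at the instant $t_0$ of first contact, $R_{t_0}X_0$ lies in the half-plane $P_I$ of Definition \ref{def:halfplane_P_I} defined by the relevant segment $I=\overline{p_1p_2}$, so that the hypothesis $X_0 \subset \mathrm{int}(P_I)$ of Definition \ref{def:pass_through} is genuinely satisfied (this uses that the obstacle lies on one side of $\bd X\obs^b$ together with convexity of $X_0$ and the construction of $\dels$); and (ii) ruling out that $R_t X_0$ simultaneously crosses several adjacent gaps in a way that sidesteps the single-segment/chord penetration bounds — which is handled by working locally in $t$ near $t_0$ and observing that once $R_tX_0$ spans multiple $X_p$-points it must have already touched at least one of them, contradicting $k \notin \pi_K(X_p)$. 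Once this local-gap reduction is secured, Lemmas \ref{lem:find_r} and \ref{lem:find_a} close the contradiction.
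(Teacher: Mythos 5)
Your proposal is correct and follows essentially the same route as the paper's proof: both rely on representing the motion as a transformation family, on Lemma \ref{lem:when_param_cant_cause_crash} to forbid the body from containing any sampled point, and on Lemmas \ref{lem:find_r} and \ref{lem:find_a} to cap penetration between adjacent points at $b$, which the buffer construction \eqref{eq:X_obs_b_buffered_obstacle} then converts into non-collision with $X\obs$. The only difference is presentational — you argue by contradiction via a first-crossing time, while the paper argues directly over all adjacent pairs — and the two subtleties you flag (the half-plane hypothesis at first contact and the single-gap reduction) are real but are glossed over in the paper's own proof as well.
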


Theorem \ref{thm:X_p} provides the main result of this section: $\pi_K(X_p)^C \subseteq K\safe$.
In other words, we can use $X_p$ to inner approximate $K\safe$.
Next, we discuss the conservatism of the proposed obstacle representation.

\subsubsection{Conservatism of the Discretized Obstacle in Practice}\label{subsubsec:obs_rep_conservatism}

Our approach has two sources of conservatism.
The first is that we compute overapproximations to the robot's FRS.
The second is that we must buffer obstacles to discretize them.

First, we discuss the FRS overapproximation from Section \ref{sec:FRSmethod}.
Note that, in Theorem \ref{thm:X_p}, the FRS parameter projection map $\pi_K$ is defined by an exact solution to $(D)$.
However, as per Section \ref{subsec:FRS_implementation}, we can only compute solutions to the relaxed problem $(D^l)$.
This does not affect the safety guarantee of the discretized obstacle, as we note in the following remark.

\begin{rem}\label{rem:w_l_for_pi_X}
Suppose $(v^l,w^l,q^l)$ is a solution to $(D^l)$ for some $l \in \N$.
Define the map $\pi_K^l: \P(X) \to \P(K)$ by
\begin{align}
    % \pi_X^l(K') &= \{x \in X\ \mid~\exists~k \in K'~\mathrm{s.t.}~w^l(x,k) \geq 1\}, \label{eq:pi_X_l} \\
    \pi_K^l(X') &= \Big\{k \in K\ \mid~\exists~x \in X'~\mathrm{s.t.}~w^l(x,k) \geq 1 \Big\} \label{eq:pi_K_l}.
\end{align}
By Lemma \ref{rem:w_geq_1_is_in_FRS}, $\X\frs$ is contained in the 1-superlevel set of $w^l$.
This means that, for any $X' \subset X$, $\pi_K^l$ overapproximates $\pi_K$, i.e. $\pi_K^l(X') \supseteq \pi_K(X')$.
Therefore, if $k \in K\safe$ corresponds to some $X\obs$, then $\pi_K^l(X\obs)^C \subseteq \pi_K(X\obs)^C = K\safe$.
In other words, Theorem \ref{thm:X_p} still holds for $\pi^l$.
\end{rem}

Second, to conclude this section, we discuss the effect of choosing different buffers $b \in (0,\bbar)$.
If the buffer is small, then $r$ and $a$ must be small, according to Lemmas \eqref{lem:find_r} and \eqref{lem:find_a}.
If $r$ and $a$ are small, by Algorithm \ref{alg:construct_X_p}, the points of $X_p$ are spaced closer together, increasing $|X_p|$.
Each point in $X_p$ is mapped to a nonlinear constraint at runtime, so increasing $|X_p|$ may increase the execution time of the trajectory optimization.
If the buffer is large, then $r$ and $a$ can be larger, but the trajectory optimization may become more conservative, because buffering obstacles reduces the total free space available for the robot to move through.
A large buffer may also not reduce the trajectory optimization execution time, because increasing $b$ allows for increasing $r$ and $a$ only up to $\rbar$.
So, $b$ should be chosen as small as possible to reduce conservatism, but large enough to enable fast execution times.
We emphasize that \textit{every} choice of $b \in (0,\bbar)$ can be used for safe online planning by Theorem \ref{thm:X_p}, so it is possible to find  $b$ empirically without risking safety.

In summary, this section has presented a method for constructing a discrete, finite obstacle representation.
We have shown in Theorem \ref{thm:X_p} that this representation enables identifying the safe set of trajectory parameters in any planning iteration.
Next, we use the proposed obstacle representation for online planning.
\section{Online Trajectory Optimization}\label{sec:trajectory_optimization}

This section describes the real-time trajectory planning part of RTD, presented in Algorithm \ref{alg:trajopt}.
Recall that, given an arbitrary $k \in K$, the robot generates a feedback controller $u_k$ \eqref{eq:fdbk_controller_uk} that tracks $k$.
Given a user specified cost function $J:K\rightarrow \R$, and an initial robot state $\z_{\text{hi},0}\in X\hi$, Algorithm \ref{alg:trajopt} performs trajectory optimization, i.e., picks a new $k$ at each planning iteration, in a real-time, provably safe manner.

\subsection{Online Planning Algorithm Overview}

Algorithm \ref{alg:trajopt} proceeds as follows.
It begins with a user-specified initial trajectory parameter $k_0$, and predicts the state of the high-fidelity model under the control input $u_{k_0}$, beginning from $\z\hio$ at $t=\tau\pln$ using \eqref{eq:predicted_trajectory} from Assumption \ref{ass:predict}; we assume that $\proj_X(\z\hio) = [x_c,y_c]^\top$, i.e. the robot's state estimate of its position tracks the center of mass as opposed to the robot's entire body.
At each iteration, the algorithm begins by applying the control input $u_{{k_j}^*}$ to the high fidelity model (Line \ref{lin:control}), bringing the robot's state to $\z_{\text{hi},j+1}$.
While this control input is being applied, the algorithm senses surrounding obstacles (Line \ref{lin:obs}), builds a representation for them (Line \ref{lin:union}), applies the \texttt{buffer} to produce the buffered obstacle as in \eqref{eq:X_obs_b_buffered_obstacle} (Line \ref{lin:buffer}), then represents the obstacles as a set of discrete points $X_p$ by applying Algorithm \ref{alg:construct_X_p} (Line \ref{lin:make_Xp}).
In addition, the control input to be applied in the next iteration is computed (Line \ref{lin:opt}).
We describe the behavior of the function \texttt{OptK} in greater detail below; in brief, it uses the obstacle constraints and user-specified cost function to generate an optimal control input $u_{k_{j+1}^*}$ that is applied during the next iteration.
Next, we predict the future position of the high-fidelity model under the optimal control input $u_{k_{j+1}^*}$ beginning from $\z_{\text{hi},j+1}$ using \eqref{eq:predicted_trajectory} as in Assumption \ref{ass:predict} (Line \ref{lin:predict}).

Note, at the same time that Lines \ref{lin:obs}-\ref{lin:predict} are executing, the control input $u_{{k_j}^*}$ is applied to the high fidelity model (Line \ref{lin:control}), bringing the robot's state to $\z_{\text{hi},j+1}$.
As a result, the function \texttt{OptK} takes in the state of the robot after the application of control input $u_{{k_j}^*}$ at $t_{j+1}$, user-specified cost function $J$, $w^l$, obstacle points, and planning time.
% As a result, \texttt{OptK} is given the robot's state at the next iteration.
\texttt{OptK} either finds a new $k_{j+1}^*$ and returns the associated feedback controller $u_{k_{j+1}^*}$ as in \eqref{eq:fdbk_controller_uk}; or it returns the braking controller $u_\mathrm{brake}$ associated with the previous iteration's controller $u_{k_j^*}$ (as in Assumption \ref{ass:brake_ctrl_and_traj}).
If the braking controller is applied, the robot stops within the FRS spatial projection corresponding to $k_j^*$ given by $\pi_X^l(k_j^*) \subset X$ from \eqref{eq:pi_X_on_points}, which is possible by Assumption \ref{ass:brake_in_pi_X}.
Note that uncertainty in state estimation is accounted for by preprocessing obstacles as in Assumption \ref{ass:obs_error_buffer}.

Algorithm \ref{alg:trajopt} can be used for real-time planning due to Algorithm \ref{alg:construct_X_p}, written here as the \texttt{discretizeObs} function (Line \ref{lin:make_Xp}); and due to the enforcement of $\tau_\text{trajopt}$ in the \texttt{OptK} function (Line \ref{lin:opt}).
Recall from Assumption \ref{ass:tau_plan} that the time to execute one planning iteration, $\tau\plan$, is equal to the time to sense and buffer obstacles and predict the next position of the robot, which is denoted by $\tau_\text{process}$, plus the time to run $\texttt{OptK}$, which is denoted by $\tau_\text{trajopt}$.

\begin{algorithm}%[ht]
\small
\begin{algorithmic}[1]
    \State {\bf Require:} $b, r, a, \tau\plan, \tau_\regtext{trajopt}, T \in \R_{\geq 0}$; $w^l \in \R_{2l}[x,k]$; $k_0 \in K$; $\z_{\text{hi},0}\in Z\hi$; and $J: K \to \R$.
    \State {\bf Initialize:} $j = 0$, $t_j = 0$, $k^*_j = k_0$, $\z_{\text{hi},j+1} = \z\hi(t_j + \tau\plan; \z_{\text{hi},0},k_{j}^*)$.
    \State{\bf Loop:}\;
        
        \State\hspace{0.2in}{\bf Apply} $u_{k^*_j}$ to the robot for $[t_j,t_j+\tau\pln)$. \label{lin:control}
    
        /* Lines \ref{lin:obs}--\ref{lin:predict} execute simultaneously with Line \ref{lin:control} */
    
        \State\hspace{0.2in}{\bf Sense} obstacles $\{X\obsi\}_{i=1}^{n\obsj}$.\label{lin:obs}
        
        \State\hspace{0.2in}$X\obs \leftarrow \bigcup_{i=1}^{n\obsj} X\obsi$.\label{lin:union}
        
        \State\hspace{0.2in}$X\obs^{b} \leftarrow \texttt{buffer}(X\obs,b)$.\label{lin:buffer}
        
        \State\hspace{0.2in}$X_p \leftarrow \texttt{discretizeObs}\left(X\obs^b,r,a\right)$.\label{lin:make_Xp}
        
        \State\hspace{0.2in}{\bf Compute} $u_{k^*_{j+1}} \leftarrow \texttt{OptK}\left(\z_{\text{hi},j+1},J, w^l,X_p, \tau_\text{trajopt} \right)$. \label{lin:opt}
        
        \State\hspace{0.2in} $\z_{\text{hi},j+2} \leftarrow$ $\z\hi(2\tau\pln+t_j; \z_{\text{hi},j+1},k_{j+1}^*)$.\label{lin:predict}
        
        \State\hspace{0.2in} $t_{j+1} \leftarrow t_j + \tau\pln$ and $j \leftarrow j + 1$.
        
    \State{\bf End}
\end{algorithmic}
\caption{\small RTD Online Planning}
\label{alg:trajopt}
\end{algorithm}

Next, we comment on the formulation of  \texttt{OptK}. 

\subsection{Trajectory Optimization Formulation}

From Theorem \ref{thm:X_p} and Remark \ref{rem:w_l_for_pi_X}, we know that $X_p$ safely represents the obstacle (i.e. $\pi_K^l(X_p) \subseteq K\safe$).
In other words, if $(v^l,w^l,q^l)$ is a feasible solution to $(D^l)$ (see Section \ref{subsec:FRS_computation}), we can evaluate $w^l$ on the discretized obstacle $X_p$ to conservatively approximate the corresponding set of unsafe trajectory parameters.
As a result, given an arbitrary cost function $J:K \to \R$, the optimization program \texttt{OptK} takes the following form:
\begin{flalign}\label{prog:OptK}
\begin{split}
\underset{k}{\text{min}}&\ J(k)\\
\text{s.t.}&\ w^l(p,k) < 1 \quad ~\forall~p\in X_p.
\end{split}
\end{flalign}
This formulation has a finite list of constraints and a low-dimensional decision variable $k$, which allows \texttt{OptK} to typically terminate within the time limit $\tau_\mathrm{trajopt}$.

\subsection{Proving Safety}
To conclude this section, we confirm that Algorithm \ref{alg:trajopt} is provably safe with the following remark.

\begin{rem}\label{rem:trajopt_alg_is_safe}
From Theorem \ref{thm:D_sense}, we know that the robot can either find a safe plan of duration $T$ (as in Definition \ref{def:safe_plan}) or brake for all $t \geq 0$ as long as it has a safe plan at $t = 0$.
So, in Algorithm \ref{alg:trajopt}, Theorem \ref{thm:D_sense} proves that Lines \ref{lin:control} and \ref{lin:obs} are safe by lower-bounding $D\sense$, and assumes that Lines \ref{lin:union} -- \ref{lin:opt} are safe.
Then, Theorem \ref{thm:X_p} proves that Lines \ref{lin:union} -- \ref{lin:opt} are safe, by constructing $X_p$ such that $\pi_K^l(X_p) \subseteq K\safe$ (as in Definition \ref{def:K_safe}) at each iteration.
By Assumption \ref{ass:predict}, Line \ref{lin:predict} predicts the robot's position to within a box of size $\vep_x\times\vep_y$, and by Assumption \ref{ass:obs_error_buffer}, the obstacle $X\obs$ is expanded to compensate for this state prediction error.
So, since all of the lines inside the loop are safe, Algorithm \ref{alg:trajopt} is safe.
\end{rem}

\section{Application}\label{sec:application}

This section details the application of RTD to two robots: the Segway (Figure \ref{subfig:segway_time_lapse}), and the Rover (Figure \ref{subfig:rover_time_lapse}).
Both robots use Algorithm \ref{alg:trajopt} (as in Section \ref{sec:trajectory_optimization}) for online safe trajectory planning, demonstrated in simulation (Section \ref{sec:simulation_results}) and on hardware (Section \ref{sec:hardware_demo}).

\subsection{Segway}\label{subsec:segway_application}

The Segway is a differential-drive robot.
We apply RTD to the Segway to show that the proposed method can provide collision-free trajectory planning in unstructured, random environments.

The Segway has been used as a running example through this paper.
Example \ref{ex:segway} presents its high-fidelity model \eqref{eq:high-fidelity_segway}.
Example \ref{ex:segway_traj_producing_model} presents its trajectory-producing model.
Example \ref{ex:segway_feedback_controller} presents its tracking controller.
Example \ref{ex:segway_traj_tracking_model} presents its tracking error function.
Example \ref{ex:segway_braking} presents its braking controller.
Example \ref{ex:circular_X0} presents the geometric quantities needed to represent obstacles for the Segway.

Next, we describe the Segway's model parameters, the FRS computation, and obstacle representation geometric quantities.

\subsubsection{Model Parameters}
The robot has a circular footprint with a $0.38$ m radius.
As in Assumption \ref{ass:max_speed_and_yaw_rate}, it is limited to a maximum yaw rate $\omega = \pm 1$ rad/s and a maximum speed of $v = 1.5$ m/s in simulation and $v = 1.25$ m/s on the hardware.
The acceleration bounds are $[\underline{\gamma},\overline{\gamma}] = [-5.9,+5.9]$ rad/s$^2$, and $[\underline{\alpha},\overline{\alpha}] = [-3.75,3.75]$ m/s$^2$.
Given a current yaw rate, $\omega$, the commanded yaw rate, $\omega_\text{des}$, we require $|\omega - \omega\des| \leq 1$ rad/s in simulation, and $|\omega - \omega\des| \leq 0.5$ rad/s on the hardware.
Motion capture data is used to find the parameters $\beta_{\gamma} = 2.95$, $\beta_{\alpha} = 3.00$.

The control gains are as follows, for the tracking controller in Example \ref{ex:segway_traj_tracking_model}.
In simulation, $\beta_x = \beta_y = \beta_\omega = \beta_v = 20$, and $\beta_\theta = 10$.
On the hardware, $\beta_x = \beta_y = \beta_\theta = 0$, and $\beta_v = \beta_\omega = 10$; these hardware gains are estimated as model parameters for the Segway's built-in (black box) controller.

The Segway fulfills the assumptions on state estimation, tracking error, and braking as follows.
State estimation, as in Assumption \ref{ass:predict}, has no error in simulation, so $\vep_x = \vep_y = 0$.
On the hardware, we find that SLAM using Google Cartographer \citep{google_cartographer} with a planar lidar results in $\vep_x = \vep_y = 0.1$ m.
The tracking error function $g$ is constructed to satisfy Assumption \ref{ass:tracking_error_fcn_g} by fitting to simulated tracking error data as shown in Figure \ref{fig:g_fit_segway_error}.
The braking controller is as in Example \ref{ex:segway_braking}.
Assumption \ref{ass:brake_in_pi_X} requires that all braking trajectories lie within the FRS for persistent feasibility.
Designing and validating such a braking controller can be done with SOS programming, but is not the focus of this work.
Figure \ref{fig:segway_braking} illustrates that the braking controller satisfies Assumption \ref{ass:brake_in_pi_X}.

\subsubsection{FRS Computation}\label{subsec:segway_FRS_computation}

The FRS is computed by solving $(D^l)$ in Section \ref{sec:FRSmethod}, with $l = 5$ (from Section \ref{subsec:FRS_implementation}).

We find in practice that the tracking error is proportional to the initial speed, so computing multiple FRS's allows us to reduce conservatism.
At runtime, we select which FRS to use based on the Segway's estimated initial speed at the beginning of the current planning iteration.

For the simulations, we computed one FRS for each of the following initial speed ranges: 0--0.5 m/s, 0.5--1.0 m/s, and 1.0--1.5 m/s.
For the hardware, we computed FRS's for initial speed ranges of 0.0--0.5 m/s, and 0.5--1.25 m/s.

In simulation, the FRS is computed over a time horizon of $T = 0.6$ s for the 0.0--0.5 m/s FRS, and $T = 0.8$ s for the other two FRS's.
For the hardware, the Segway's FRS is computed over a time horizon $T = 1$ s, chosen as per Example \ref{ex:segway_min_time_horizon}.
For all of the Segway FRS's, we used $\tau\plan = 0.5$.

\subsubsection{Obstacle Representation}\label{subsubsec:segway_obstacle_representation}

We use the following geometric quantities (as introduced in Section \ref{sec:obstacle_representation}) to represent obstacles for the Segway.
The width of the Segway is $\rbar = 0.76$ m (Definition \ref{def:rbar}) and the maximum penetration distance is $\bbar = 0.38$ m (Lemma \ref{lem:max_penetration_bbar}).
In the simulations, we empirically chose a buffer size of $b = 0.001$ m.
This choice of $b$ results in a point spacing $r=0.055$ m and arc point spacing $a = 0.002$ m as per Definition \ref{def:point_and_arc_spacing} and Example \ref{ex:circular_X0}.
On the hardware, we use a buffer size of $b = 0.05$ m, so $r = 0.37$ m and $a = 0.10$ m.

Recall from above that we used $\tau\plan = 0.5$.
Our choice of buffer $b$ was the smallest buffer that allowed the runtime trajectory optimization to solve consistently within $\tau\plan$ (recall that the number of constraints for trajectory optimization increases as $b$ decreases).

\subsection{Rover}\label{subsec:rover_application}
The Rover is a front wheel steering, all-wheel drive platform, and demonstrates the utility of RTD in passenger robot applications.
The trajectory producing model is presented in Example \ref{ex:traj-producing_rover}.
We use the system decomposition technique discussed in Section \ref{sec:system_decomp} to compute the FRS's.
We now present the dynamic model of the Rover, the trajectory tracking model, the FRS computation, and the obstacle representation geometric quantities.

\subsubsection{Dynamic Models and Parameters}

The Rover has a rectangular footprint of length $0.5$ m and width $0.29$ m centered at the center of mass. The distance from the rear axle to the center of mass, $l_r$, is 0.0765 m.

The Rover's high-fidelity model has a state vector $\z\hi = [x, y, \theta, v_x, \delta]^\top$, where $v_x$ is longitudinal speed and $\delta$ is the angle of the front (steering) wheels relative to the Rover's longitudinal direction of travel.
The dynamics $f\hi$ as in \eqref{eq:high-fidelity_model} are:
\begin{flalign}
    \label{eq:high-fidelity_rover}
       \frac{d}{dt}\begin{bmatrix}x\\y\\\theta\\ v_x\\ \delta \\
        \end{bmatrix} &=
        \begin{bmatrix}v_x\cos (\theta) -\dot{\theta}\cdot(c_1+c_2 v_x^2) \sin (\theta)\\v_x\sin (\theta) +\dot{\theta}\cdot(c_1+c_2 v_x^2)  \cos (\theta)\\
        \frac{v_x}{c_3+c_4 v_x^2}\tan(\delta)\\
        c_5+c_6\cdot(v_x-u_1)+c_7\cdot(v_x-u_1)^2\\
        c_9\cdot(u_2-\delta)
       \end{bmatrix}.
\end{flalign}
This model utilizes steady-state assumptions for the lateral dynamics, but the constants $c_2$ and $c_4$ account for wheel slip \citep{rajamani2011vehicle}.
Motion capture data was used to fit the constants, $c$.
The steering wheel angle input, $u_1$, is bounded by $|\delta(t)| \leq 0.5$ rad for all $t$, and the speed input, $u_2$, is limited to 0 to 2 m/s.
The tracking controller, for both trajectories and braking maneuvers, is a proportional controller similar to the Segway's \eqref{eq:segway_pd_controller}; hence \eqref{eq:high-fidelity_rover} is Lipschitz continuous in $t, \z\hi,$ and $u$ as required by Assumption \ref{ass:dyn_are_lipschitz_cont}.
Example \ref{ex:traj-producing_rover} presents the Rover's trajectory-producing model \eqref{eq:traj-producing_rover}.
Recall that the trajectory-tracking model \eqref{eq:traj_tracking_model}, is the trajectory-producing model plus the tracking error functions (as in Assumption \ref{ass:tracking_error_fcn_g}).

For the Rover, the trajectory-tracking model dynamics $\dot{\z}_i: [0,T] \times K \to \R^2$ (as in \eqref{eq:traj_tracking_model}) for each SCS are given by:
\begin{flalign}
\label{eq:x subsystem}
       \dot{\z_1} &= \begin{bmatrix}k_3(1-\frac{\theta^2}{2}) -l_r\omega\cdot (\theta-\frac{\theta^3}{6})\\
       \omega \\\end{bmatrix}  + \begin{bmatrix} g_1\\ 0\end{bmatrix}d_1\\
\label{eq:y subsystem}
        \dot{\z_2} &= \begin{bmatrix}k_3(\theta-\frac{\theta^3}{6}) +l_r\omega\cdot(1-\frac{\theta^2}{2})\\
       \omega \\\end{bmatrix} + \begin{bmatrix} g_2\\ 0 \end{bmatrix}d_2
\end{flalign}
where: $g_x, g_y \in \R_3[t,k]$ are degree 3 polynomials that satisfy Assumption \ref{ass:tracking_error_fcn_g}; the yaw rate $\omega(t,k) = \frac{-k_2}{2}t+k_1(1-t)$ is given by \eqref{eq: lane change yaw rate} with $T_h = 2$ s; and $d_1, d_2: [0,T] \to [-1,1]$ are scalar-valued functions.

The Rover satisfies the assumptions on state estimation, tracking error, and braking.
For state estimation, as with the Segway, there is no error in simulation; on the hardware, $\vep_x = \vep_y = 0.1$ m.
The tracking error functions in \eqref{eq:x subsystem} and \eqref{eq:y subsystem} are fit to trajectory data as with the Segway.
The braking controller is verified empirically.

\subsubsection{FRS Computation}\label{subsubsec:rover_frs_comp}
 
For the Rover, we solve $(D_i^4)$ for the subsystems $i = 1, 2$ in \eqref{eq:rover_SCS_x} and \eqref{eq:rover_SCS_y}.
Then, $(R^5)$ reconstructs the full system FRS.
 
As with the Segway, we find that the tracking error for the Rover is reduced by computing multiple FRS's, each corresponding to a different range of initial conditions.
We computed 42 FRS's for the Rover in total.
Each FRS has one of three ranges of initial speeds: 0.0--0.75 m/s, 0.75--1.5 m/s, and 1.5--2.0 m/s; one of seven ranges of initial wheel angles evenly spaced between -0.5 and 0.5 rad; and either positive or negative headings.

The Rover selects an FRS at runtime based on its initial velocity, wheel angle, and heading at each planning iteration.
The time horizons are $T = 1.25$ s for the slowest FRS's and $T = 1.5$ s for the faster FRS's.
All FRS's use $\tau\plan = 0.5$ s in simulation.
On hardware, we use $\tau\plan = 0.375$ s and one FRS that is able to plan trajectories with velocities between 1 and 1.5 m/s due to the limited size of the physical testing area available.

The range of trajectory parameters for each FRS is determined as follows:
The final headings, $k_2$, are between 0 and 0.5 (resp. -0.5) rad for FRS's with negative (resp. positive) initial headings.
The initial yawrates, $k_1$, are between $\max(-1,-1+2k_2)$ and $\min(1,1+2k_2)$ rad/s.
The desired velocities, $k_3$, are set so the change between initial and commanded velocity is less than 1 m/s, and a minimum of 0.5 m/s for the slowest FRS.

\subsubsection{Obstacle Representation}\label{subsubsec:rover_frs_obs}

We use a buffer $b = 0.01$ m for the Rover, resulting in the point spacing $r = 0.02$ m and arc point spacing $a = 0.014$ m as per Example \ref{ex:rectangle_X0}.
\section{Simulation Results}\label{sec:simulation_results}

This section compares RTD against a Rapidly-exploring Random Tree (RRT) planner based on \citet{kuwata2009rrt,theta_star_rrt,unicycle_rrt}; and against the GPOPS-II Nonlinear Model-Predictive Control (NMPC) planner \citep{gpopsii}.
The contribution of this section is the comparison of RTD to RRT and NMPC, and the demonstration of safety of RTD over thousands of simulations.
Code used in the simulations is available at \url{https://github.com/skvaskov/RTD}.

Section \ref{subsec:simulation_environment} presents the timing and environments used for the Segway and Rover simulations.
Section \ref{subsec:sim_traj_planner_implementation} presents the RTD, RRT, and NMPC implementations used.
Section \ref{subsec:simulation_experiments} explains each simulation experiment we ran, what results we expected to see, and what results were found.
The experiments are presented in detail and discussed in Sections \ref{subsec:experiment_1}--\ref{subsec:experiment_3}.
The results are summarized in Table \ref{tab:segway_experiments_1_thru_3} for the Segway and Table \ref{tab:rover_experiments_1_thru_3} for the Rover.
Section \ref{subsec:sim_discussion} discusses the various results.

\subsection{Simulation Timing, Environments, and High-Level Planners}\label{subsec:simulation_environment}

We now discuss the timing parameters, environments, and high-level planners used for the simulations.

Recall the planning hierarchy introduced in Section \ref{sec:intro}.
RTD is a trajectory planner, in the middle level of the hierarchy; therefore, RTD's role is to plan trajectories that attempt to achieve a coarse path plan generated by a high-level planner.
In this work, the high-level planner generates intermediate waypoints, or desired locations, between the robot and the global goal.
We use these waypoints to generate the cost function for trajectory optimization in each planning iteration.

\subsubsection{Timing}
Recall that trajectory planning is performed in a receding-horizon fashion, where the robot computes a plan of duration $T$ s while executing a previously-determined plan.
The robot is also limited by a physical sensor horizon, $D\sense$.
The robot is given a finite amount of time, $\tau\plan$, within which it must find a plan, and it executes a duration $\tau\move \leq T$ of a given plan.
Note that in real-world applications and previous sections of this paper, $\tau\move$ is the same as $\tau\plan$.
We define $\tau\move$ separately in this section because we simulate the RTD, RRT, and NMPC planners with and without real-world timing limits to compare performance.
For the Segway and Rover, we use $\tau\move = 0.5$ s.
Also recall that, as per Assumption \ref{ass:tau_plan}, $\tau\plan = \tau_\regtext{process} + \tau_\regtext{trajopt}$, where $\tau_\regtext{trajopt}$ is the time limit enforced on the trajectory planner at each planning iteration.
For all planners and all simulations, we assume that $\tau_\regtext{process} = 0$.
Finally, recall by Assumption \ref{ass:predict} that the robot can predict its future state to within $\vep_x$ and $\vep_y$ in the $x$- and $y$-directions respectively.
For all planners and all simulations, since the robot is represented as the high-fidelity model \eqref{eq:high-fidelity_model}, there is no state estimation error, so $\vep_x = \vep_y = 0$.

\subsubsection{Segway Simulation Environment}
The simulated environment for the Segway is a $9\times 5$ m\ts{2} room, with the longer dimension oriented east-west.
The room is filled with $6$ to $15$ randomly-distributed box-shaped obstacles with a side length of $0.3$ m.
A random start location is chosen on the west side of the room and a random goal is chosen on the east side.
The simulated environment is similar to the hardware demo depicted in Figure \ref{subfig:segway_time_lapse}.
A trial is considered successful if the Segway reaches the goal without crashing (i.e., touching any obstacles).
Since obstacles are distributed randomly, it may be impossible to reach the goal in some trials; we address this by counting the number of crashes and number of goals reached separately.

\subsubsection{Segway High-Level Planner}
For the Segway's high-level planner, we use Dijkstra's algorithm on a graph representing a grid in the robot's $xy$-subspace $X$; this provides a coarse path and intermediate waypoints between the Segway and the global goal.
At each planning iteration, the cost function given to \texttt{OptK} (as in Section \ref{sec:trajectory_optimization} Algorithm \ref{alg:trajopt}) attempts to minimize the distance to the current waypoint.

\subsubsection{Rover Simulation Environment}
The simulated environment for the Rover is a larger version of the mock road depicted in Figure \ref{subfig:rover_time_lapse}, which mimics a highway environment.
The simulated road lies along the $x$-direction (oriented east-west) and is centered at $y = 0$.
It is 2.0 m wide (including the shoulder), with two $0.6$ m wide lanes centered at $y = 0.3$ m and $y = -0.3$ m.
The Rover plans trajectories with speeds up to 2 m/s.
In each trial, three randomly sized box-shaped obstacles of lengths 0.4--0.6 m and widths 0.2--0.3 m are placed in alternating lanes.
This obstacle arrangement is used to force the Rover to attempt two lane changes per trial; note that the RTD, RRT, and NMPC trajectory planners are all general implementations (as described in Section \ref{subsec:sim_traj_planner_implementation}), not specialized to this particular obstacle arrangement.
The obstacles have a random heading of $\pm$ 2 degrees relative to the road, and their centers are allowed to vary by $\pm$ 0.1 m from lane center in the $y$-dimension.
The spacing between the obstacles in the $x$-direction is given by a normal distribution with a mean of $4$ m and standard deviation of $0.6$ m. 
The Rover begins each trial centered in a random lane, with a velocity of $0$ m/s.
A trial is considered successful if the Rover crosses a line positioned 30 m after the third obstacle without intersecting any obstacle or road boundary (i.e. crashing).

\subsubsection{Rover High-Level Planner}
For high-level path planning, a desired waypoint is placed a set distance ahead of the robot and centered in the current lane.
If the waypoint is inside or behind an obstacle relative to the Rover, the waypoint is switched to the other lane.
It was found empirically that placing the waypoint 4 m ahead of the Rover at each planning iteration causes it to switch lanes soon enough the Rover is typically capable of performing a lane change; this 4 m ``lookahead distance'' was used for all three planners.

\subsection{Trajectory Planner Implementation}\label{subsec:sim_traj_planner_implementation}

RTD, RRT, and NMPC are all implemented in MATLAB on a 2.10 GHz computer with 1.5 TB of RAM.
Timeouts are enforced with MATLAB's \texttt{tic} and \texttt{toc} functions.
For all of the planners, if the Segway has braked to a stop without crashing, one planning iteration is spent rotating in place towards the current waypoint before replanning.
We now discuss implementation details for each planner.

\subsubsection{RTD Implementation Details}

Here, we discuss particular implementation details used for RTD in the simulations; see Section \ref{sec:application} for the general overview of how RTD is applied to the Segway and Rover robots.

We use the following cost functions for \texttt{OptK} in each planning iteration.
For the Segway, the cost function is the robot's Euclidean distance at time $T$ to the waypoint generated by the high-level planner.
For the Rover, the cost function is the Euclidean distance at time $T_h$ from the planned trajectory's endpoint to the waypoint, weighting error in $x$ vs. error in $y$ at a ratio of 1:2.
The final heading parameter, $k_2$ in \eqref{eq: lane change yaw rate}, is set to be the negative of the Rover's initial heading (saturated at $\pm$ 0.5 rad), so the Rover only optimizes over $k_1$ and $k_3$ in each iteration.
In other words, the Rover only optimizes over trajectories that will align the robot with the road.

Both the Segway and the Rover use MATLAB's \texttt{fmincon} generic nonlinear solver to implement the online trajectory optimization \texttt{OptK} (see Program \eqref{prog:OptK} and Algorithm \ref{alg:trajopt} Line \ref{lin:opt}).
For both robots, we use an optimality tolerance of $10^{-3}$.
Since \texttt{fmincon} is a generic gradient-based nonlinear solver, it requires an initial guess each time it is called (i.e., in each planning iteration).
For the Segway, the initial guess of $k \in K$ corresponds to zero yaw rate and maximum speed.
For the Rover, the initial guess is either the trajectory parameters from the previous planning iteration, or parameters corresponding to driving straight if the previous iteration converged to an infeasible result.

We use the following design choice to speed up \texttt{fmincon}.
Recall from Section \ref{sec:obstacle_representation} that we generate a discrete, finite representation of obstacles at each planning iteration.
Each discrete obstacle point becomes a nonlinear constraint for \texttt{OptK} (i.e., \texttt{fmincon}) as per \ref{prog:OptK} in Section \ref{sec:trajectory_optimization}.
Since \texttt{fmincon}'s solve time increases with the number of constraints, we reduce the number of constraints in each planning iteration by discarding points in $X_p$ that lie outside of the FRS for any trajectory parameter $k\in K$.
Note that, since no such points are reachable (because they lie outside of the FRS), this does not impact RTD's safety guarantees.

\subsubsection{RRT Implementation Details}
The Segway and Rover use similar RRT implementations, based on several papers \citep{kuwata2009rrt,theta_star_rrt,unicycle_rrt}, which describe a variety of heuristics for growing a tree of a robot's trajectories with nodes in the high-fidelity state space.
Both RRT implementations use the entire duration $\tau\plan$ to plan a trajectory at each planning iteration.

To account for the robot's footprint, obstacles are buffered by Minkowski sum with a polygonal outer approximation of a closed disk, with radius given by the desired buffer distance (see Experiment 1 in Section \ref{subsec:experiment_1} for how we empirically select the buffer).
This produces a representation of each buffered obstacle as a collection of half-planes.

For the Segway, the RRT planner begins by checking if the previously-planned trajectory is still feasible \citep{kuwata2009rrt}, meaning that none of its nodes lie inside any buffered obstacles.
If the past trajectory is feasible, the tree is initialized with the previous plan's nodes; if the past trajectory is infeasible, the tree is initialized from the robot's initial state.
For the Rover, which operates in a simpler environment, a new tree was initialized for every planning iteration.
New nodes of the tree are created by first choosing a random existing node, with the choice biased towards more recently-generated nodes.
From the randomly-chosen node, the high-fidelity robot model is forward-integrated under a random desired yaw rate (or wheel angle) and desired speed \citep{kuwata2009rrt,unicycle_rrt}.
Forward-integration of the high-fidelity model dynamics returns points in the robot's $xy$-subspace $X$ distributed in time by a \defemph{time discretization}.
A new node is discarded if any of these points lie inside any buffered obstacle, outside of the robot's environment (the room for the Segway and the road for the Rover), or outside the robot's sensor horizon.
In addition, for the Segway, recall (from Section \ref{subsec:simulation_environment}) that Dijkstra's algorithm is used for generating a high-level plan; nodes farther than 1.5 m from the high-level plan are discarded \citep{theta_star_rrt}.
For both the Segway and the Rover, the RRT attempts to plan a braking maneuver at each planning iteration.

Forward integration of the robot's high-fidelity model is required for dynamic feasibility of the RRT trajectory plans, given the complexity of the high-fidelity models of the Segway and Rover \citep{elbanhawi2014sampling}.
The \defemph{edge time}, or total duration of each forward-integration, along with the time discretization, are heuristic choices that affect the computation time and complexity of paths that the RRT can generate; these numbers were selected empirically for each system.
Our implementation makes use of MATLAB's symbolic and function generation toolboxes.
For the Segway, we generate an integration function that takes in an initial condition and returns a trajectory of the robot's high-fidelity model, forward-integrated with an RK4 method, for a predetermined edge time and step size.
We confirmed that calls to this function are as fast or faster than a C++ implementation by forward integrating each robot's high-fidelity model from random initial conditions using the ODEINT C++ library \citep{odeint}.
For the Rover, we used forward Euler integration, as we found it was able to navigate the environments safely (see Section \ref{subsec:experiment_1}).

Recall that, for both the Segway and Rover, a high-level planner generates intermediate waypoints as described in Section \ref{subsec:simulation_environment}.
When growing the RRT, samples are biased to turn towards waypoints as described by \citet{kuwata2009rrt}.
For the Segway, the RRT attempts to find a plan that minimizes distance to the waypoint.
For the Rover, we find that minimizing distance to the waypoint results in the RRT generating long paths with large changes in yaw rate because path smoothness is not included in the cost.
To combat this, we instead set the RRT's cost at each node as the cumulative distance from the root node, plus a penalty for lying close to obstacles \citep{kuwata2009rrt}, which was found empirically to reduce the number of crashes.
Once the RRT has grown for the duration $\tau\plan$, we choose the node with the lowest cost among these to produce the trajectory plan.
The Rover's RRT has an additional heuristic to encourage smoothness: when the waypoint is in the same lane as the rover, the standard deviation of sampled wheel angles is reduced.

\subsubsection{NMPC Implementation Details}
The Segway and Rover both use GPOPS-II for the nonlinear model predictive control planner \citep{gpopsii}.
GPOPS-II is an algorithm that approximates the trajectory planning problem as a polynomial optimization program.
This software uses internal heuristics to choose a finite number of collocation points, then evaluates the polynomial approximation of the robot's high-fidelity model and obstacle avoidance constraints at each of these points.
The accuracy of the solution and the run time of the algorithm is dependent on the tolerance of the polynomial approximation.

The cost function used at each planning iteration is to minimize distance between the last collocation point and the waypoint generated by the high-level planner.

We use the following constraints at each planning iteration.
Obstacles are represented as constraints on the $x$ and $y$ coordinates of the robot's center of mass at each collocation point.
Each obstacle is buffered using a Minkowski sum with a polygonal outer-approximation of a closed disk with radius given by a user-selected buffer distance (see Section \ref{subsec:experiment_1}).
This representation means that, to check for collision of a trajectory with an obstacle, a finite number of half-plane checks are performed per obstacle per collocation point.
We also use the maximum speed and yaw rate (Assumption \ref{ass:max_speed_and_yaw_rate}) as constraints.

We encode persistent feasibility for NMPC in the following manner: if no feasible trajectory can be found within $\tau\plan$, the robot continues executing the last feasible trajectory that NMPC found.
For the Segway, we include an additional constraint, where the end of any planned trajectory must have zero speed and yaw rate, to force NMPC to plan a braking maneuver.
For the Rover, we set the minimum time horizon to 1.5 s, (the braking time from 2 m/s); although potentially less robust than the Segway's constraint, we found it to be sufficient for the environment the Rover is tested in.

The decision variables for NMPC are the robot's state and control input at each collocation point.
For the Segway, the NMPC planner chooses a desired yaw rate and velocity as the control input at each collocation point, and plans with the robot's high-fidelity model \eqref{eq:high-fidelity_segway} from Example \ref{ex:segway}.
For the Rover, the NMPC planner chooses a desired wheel angle and velocity as the control input at each collocation point, and plans with the robot's high-fidelity model \eqref{eq:high-fidelity_rover}.

We initialize GPOPS-II at each planning iteration as follows.
The planner is given a coarse trajectory guess at the first planning iteration, and each subsequent iteration is seeded with last feasible trajectory.
The GPOPS-II parameters used are: 4--10 collocation points per phase and a mesh tolerance of $10^{-6}$.

\subsection{Experiment Overview}\label{subsec:simulation_experiments}

We use a series of experiments to explore the performance of RTD, RRT, and NMPC.
In each experiment, we either relax or enforce requirements of real-time planning and real sensor horizons.
Here, before describing each experiment in detail, we present an overview of their purpose and implementation here.
The results are summarized in Table \ref{tab:segway_experiments_1_thru_3} for the Segway and Table \ref{tab:rover_experiments_1_thru_3} for the Rover.

For each robot, we generate 1,000 random trials that fit the environments described in Section \ref{subsec:simulation_environment}.
Since these are randomly generated, it is not guaranteed that feasible (i.e. collision-free) paths exist from the start to the goal in every trial.
This is useful, because it requires planners to be safe even when the high-level planner can only find infeasible paths to the goal.

\subsubsection{Experiment 1 Overview}
Experiment 1 (Section \ref{subsec:experiment_1}) determines the distance used to buffer obstacles for RRT and NMPC, as discussed in Section \ref{subsec:sim_traj_planner_implementation}; this is because, to the best of our knowledge, it is unclear how large obstacle buffers need to be to ensure safety for these methods.
This experiment does not enforce real-time planning or limited sensor horizon requirements, to ensure that the planners have enough time and information to plan.
To relax the real-time requirement, $\tau\move$ is 0.5 s for both the Segway and Rover, but $\tau\plan$ is large to give each planner time to find a plan.
To relax the sensor-horizon, $D\sense$ is large enough that each planner has obstacle information about the entire scene from $t = 0$ s onwards.

\subsubsection{Experiment 2 Overview}
Experiment 2 (Section \ref{subsec:experiment_2}) explores the effect of enforcing real-time planning and a limited sensor horizon on the planners.
In this case, $\tau\plan = \tau\move$, meaning that each robot can only plan over the duration that it moves at each planning iteration.
The sensor horizon $D\sense$ is chosen to reflect each robot's hardware.
The buffer used for RRT and NMPC is the buffer that performed best in Experiment 1.
The buffer for RTD is chosen in $(0,\bbar)$ as described in Section \ref{sec:application} to ensure safety.

\subsubsection{Experiment 3 Overview}
Experiment 3 (Section \ref{subsec:experiment_3}) shows that RTD performs provably safe trajectory planning in real time when subject to the minimal sensor horizon given by Theorem \ref{thm:D_sense}.
The minimum sensor horizon is significantly smaller than the sensor horizon that each robot's hardware is capable of (e.g., the Segway's minimal sensor horizon is $1.9$ m, whereas its hardware has a sensor horizon of $4$ m), but RTD is still safe.
RRT and NMPC are not tested in this experiment.

\subsection{Experiment 1: Buffer Size for RRT and NMPC}\label{subsec:experiment_1}

\begin{table}%[t]
    \centering
\begin{tabular}{|l|l|r|r|r|r|}
\hline
\multicolumn{2}{|l|}{\multirow{2}{*}{Segway Exp. 1}} & \multicolumn{2}{c|}{RRT} & \multicolumn{2}{c|}{NMPC} \\
\multicolumn{2}{|l|}{} & \multicolumn{1}{c|}{Goals} & \multicolumn{1}{c|}{Crashes} & \multicolumn{1}{c|}{Goals} & \multicolumn{1}{c|}{Crashes} \\ \hline
\multirow{4}{*}{Buffer [m]} & 0.40 & 83.6 & 3.6 & 86.2 & 11.7 \\ \cline{2-6} 
 & 0.45 & 86.2 & 1.4 & \textbf{97.0} & 0.6 \\ \cline{2-6} 
 & 0.50 & 81.9 & 0.4 & 96.0 & 0.4 \\ \cline{2-6}
 & 0.65 & 71.9 & \textbf{0.0} & 83.5 & \textbf{0.0} \\ \hline
\end{tabular}
    \caption{Comparison of success and crash rates for varying buffer sizes for the Segway.
    A buffer size of 0.45 m provides the best balance of performance and safety for both RRT and NMPC.
   The Segway's braking distance of 0.625 m from 1.25 m/s means that the 0.65 m buffer prevents RRT and NMPC from crashing, but both methods become conservative with this buffer.}
     \label{tab:segway_experiment_1}
\end{table}

\begin{table}%[t]
    \centering
\begin{tabular}{|l|l|r|r|r|r|}
\hline
\multicolumn{2}{|l|}{\multirow{2}{*}{Rover Exp. 1}} & \multicolumn{2}{c|}{RRT} & \multicolumn{2}{c|}{NMPC} \\
\multicolumn{2}{|l|}{} & \multicolumn{1}{c|}{Goals} & \multicolumn{1}{c|}{Crashes} & \multicolumn{1}{c|}{Goals} & \multicolumn{1}{c|}{Crashes} \\ \hline
\multirow{3}{*}{Buffer} & 0.29,\ 0.26 & \textbf{99.8}& \textbf{0.0} & 99.6 & \textbf{0.0} \\ \cline{2-6} 
 & 0.34,\ 0.31 & 97.9 & \textbf{0.0} & 98.8 & \textbf{0.0} \\ \cline{2-6} 
 [m] & 0.39,\ 0.36 & 95.8 & \textbf{0.0} & 97.8 & 0.01 \\ \hline
\end{tabular}
    \caption{Comparison of success and crash rates for varying buffer sizes for the Rover.
    Buffers are listed given in m in the $(x, y)$ dimensions. A buffer size of (0.29,\ 0.26) maximizes performance without crashing.}
     \label{tab:rover_experiment_1}
\end{table}

\subsubsection{Goal}
The goal for Experiment 1 is to determine how to buffer obstacles for RRT and NMPC.
To the best of our knowledge, these planners do not prescribe a provably-safe buffer size.
To ensure that the buffer size is the only parameter that influences RRT and NMPC, this experiment relaxes the real-time and limited sensor horizon requirements, giving the planners enough time and information to find a plan in most planning iterations.

\subsubsection{Setup}
The parameters used for Experiment 1 are as follows.
For the Segway, $\tau\move = 0.5$ s, $\tau\plan = 10$ s, and $D\sense = 100$ m.
For the Rover, $\tau\move = 0.5$ s, $\tau\plan = 10$ s, and $D\sense = 30$ m.
Since $\tau\plan > \tau\move$, the real-time requirement is relaxed.
Since $D\sense$ is large, the limited sensor horizon requirement is relaxed.
For both the Segway and the Rover, obstacles are buffered by Minkowski sum with a polygonal outer approximation of a closed disk (see Section \ref{subsec:sim_traj_planner_implementation}).
For the Segway, since the robot's radius is 0.38 m, we test buffer sizes of 0.40, 0.45, and 0.50 m.
We also test a buffer size of 0.65 m; which accounts for the braking distance of the Segway.
For the Rover, obstacles are buffered in the $(x,y)$ dimensions by a rectangle encompassing rotations of up to 0.6 rad (0.29, 0.26 m).
Although a more complicated collision check could be used for the footprint, this type of buffering reduces computational complexity and is commonly used in driving applications \cite{mcnaughton2011motion}. 
Additional buffers of 0.0, 0.05, 0.10 are tested.

\subsubsection{Expected Results}
We expect the results of Experiment 1 to show that, as the buffer size is increased for both planners and both robots, the number of crashes reduces (because any plan that avoids a buffered obstacle places the robot farther away from the actual obstacle for a larger buffer size), and the number of goals reached reduces (because a larger buffer reduces the amount of free space available to each planner).
We expect no crashes for either planner with a buffer size of 0.65 m for the Segway.

\subsubsection{Results}
The results of Experiment 1 are summarized in Table \ref{tab:segway_experiments_1_thru_3} for the Segway and Table \ref{tab:rover_experiments_1_thru_3} for the Rover.
Recall that, since the trials are randomly generated, we do not expect every trial to have a collision-free path from start to goal.
On the Segway, RRT and NMPC fulfill the expectation that, as the buffer size increases, the number of goals and crashes both reduce; a buffer size of 0.45 m provies the best balance between goals and crashes.
On the Rover, we see that the buffer size of the expanded footprint plus 0.0 m has the best performance with no crashes.

Surprisingly, NMPC had a crash with the largest buffer size for the Rover.
In this instance, the solver was unable to find a feasible solution in one planning iteration because too much free space was removed due to the buffered obstacles.
This resulted in the robot colliding with the simulated environment boundary after while trying to emergency brake.

\subsubsection{Discussion}
We now discuss the results of Experiment 1.
Crashes occur for the RRT and NMPC planners for two reasons.
First, the smaller buffer sizes are potentially too small to compensate for both robots' inability to perfectly track a planned trajectory (recall that, in our implementation, RRT plans trajectories with an RK4 or forward Euler approximation of the high-fidelity model, and NMPC uses a polynomial approximation).
Second, if the trajectory planner is unable to find a feasible trajectory at a planning instance, the robot attempts to brake, but there is no guarantee that this is possible while staying safe for these methods.
We address the first cause in subsequent experiments by choosing an RRT and NMPC buffer size of 0.45 m for the Segway and $(0.29, 0.26)$ m for the Rover.
This choice is a balance between a high success rate and a low crash rate.
We do not address the second cause, because this introduces a heuristic that, to the best of our knowledge, is not provided in the literature for RRT or NMPC.

We also test RRT and NMPC on the Segway with buffer sizes of 0.65 m, to check that, if no feasible solution is found in a planning iteration, both methods should always be able to brake without crashing.
We find that the largest buffer size results in the most conservative performance, with 71.8\% of goals reached for RRT, and 83.5\% for NMPC.
As expected, both planners are always able to come to a stop without crashing.

% SEGWAY RTD EXPERIMENT 1
\begin{figure*}
    \centering
    \begin{subfigure}[t]{0.32\textwidth}
        \centering
        \includegraphics[width=\textwidth]{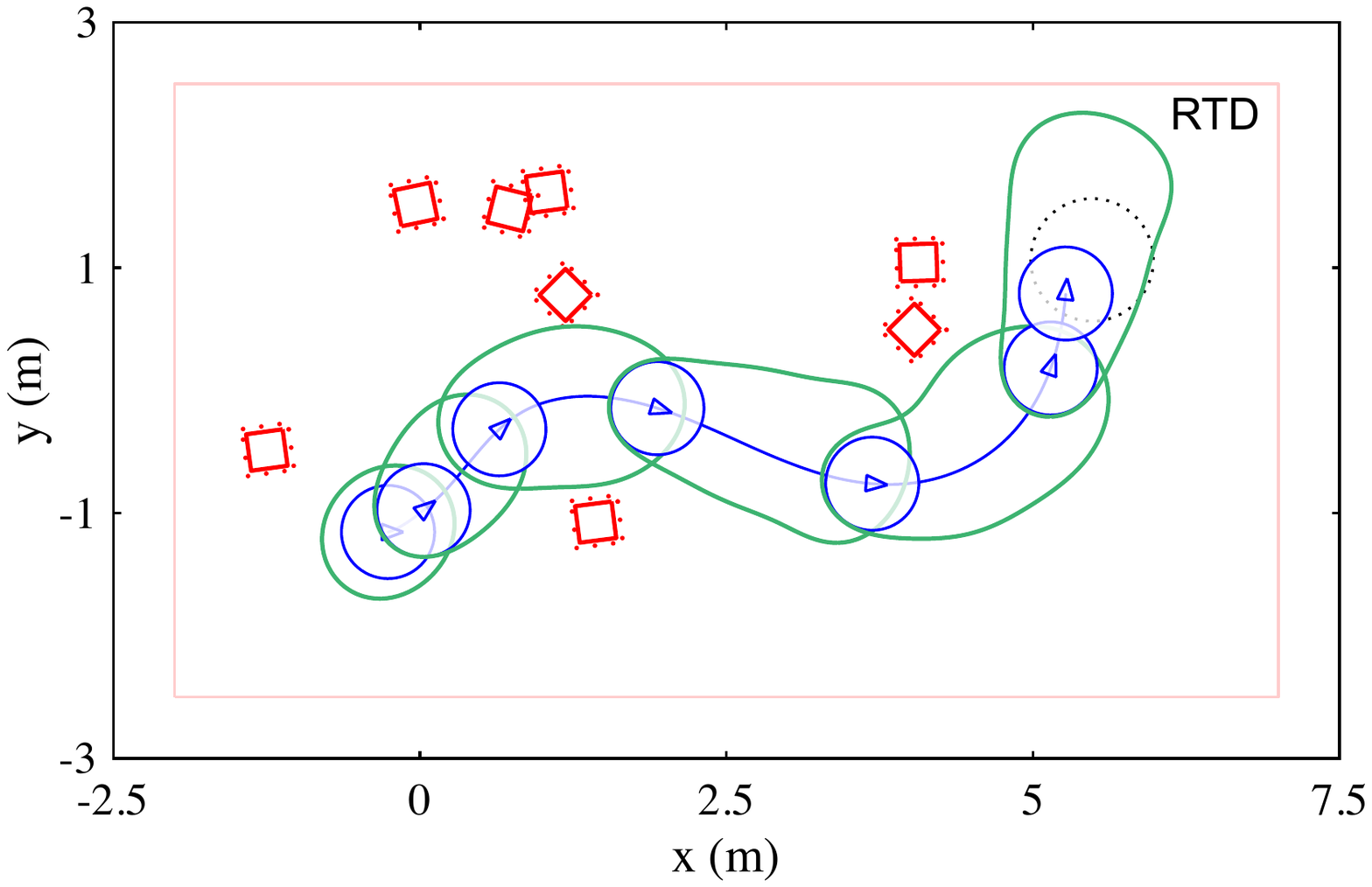}
        \caption{\centering}
        \label{subfig:segway_exp2_all_success_a}
    \end{subfigure}
    \begin{subfigure}[t]{0.32\textwidth}
        \centering
        \includegraphics[width=\textwidth]{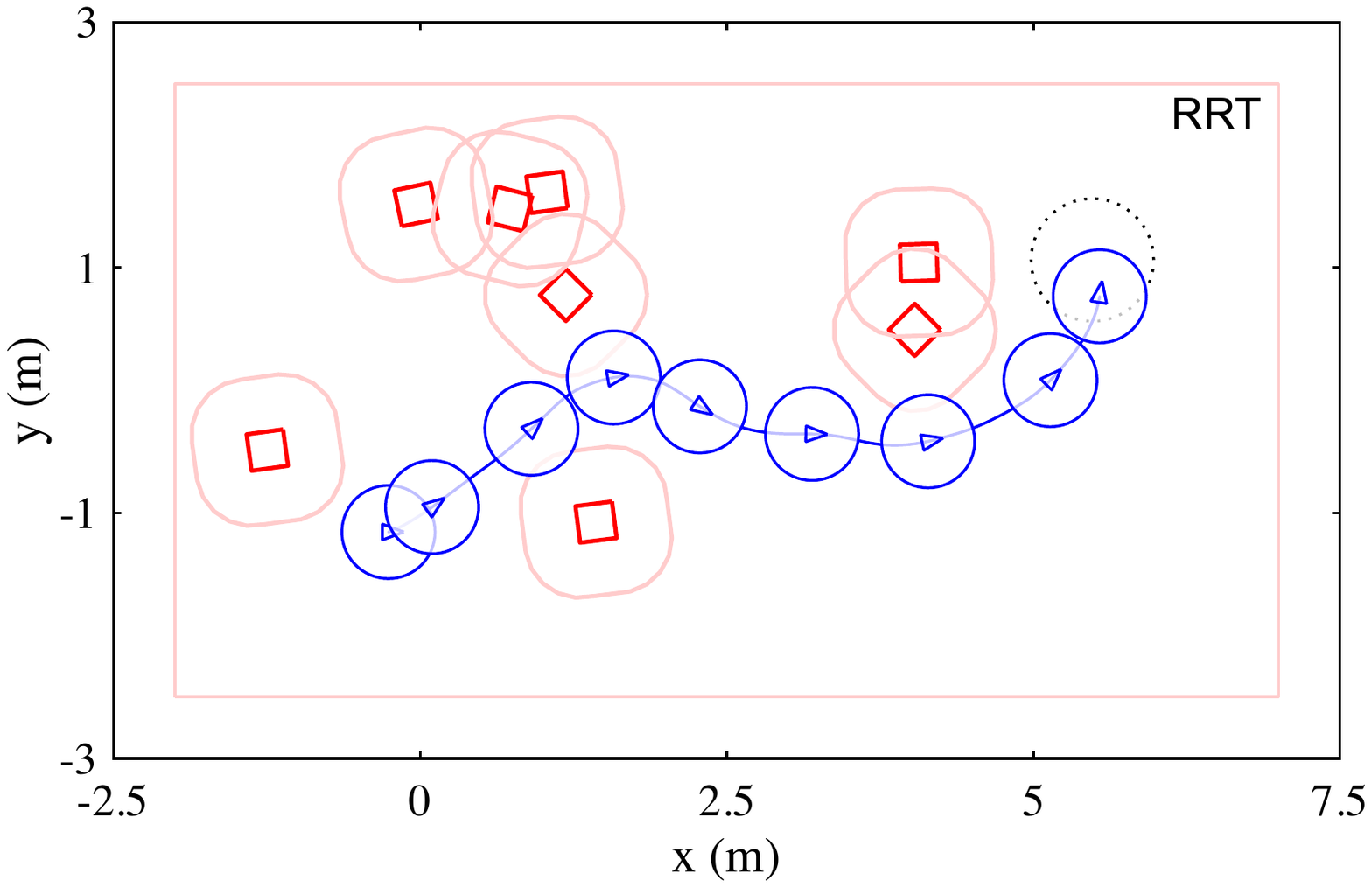}
        \caption{\centering}
        \label{subfig:segway_exp2_all_success_b}
    \end{subfigure}
    \begin{subfigure}[t]{0.32\textwidth}
        \centering
        \includegraphics[width=\textwidth]{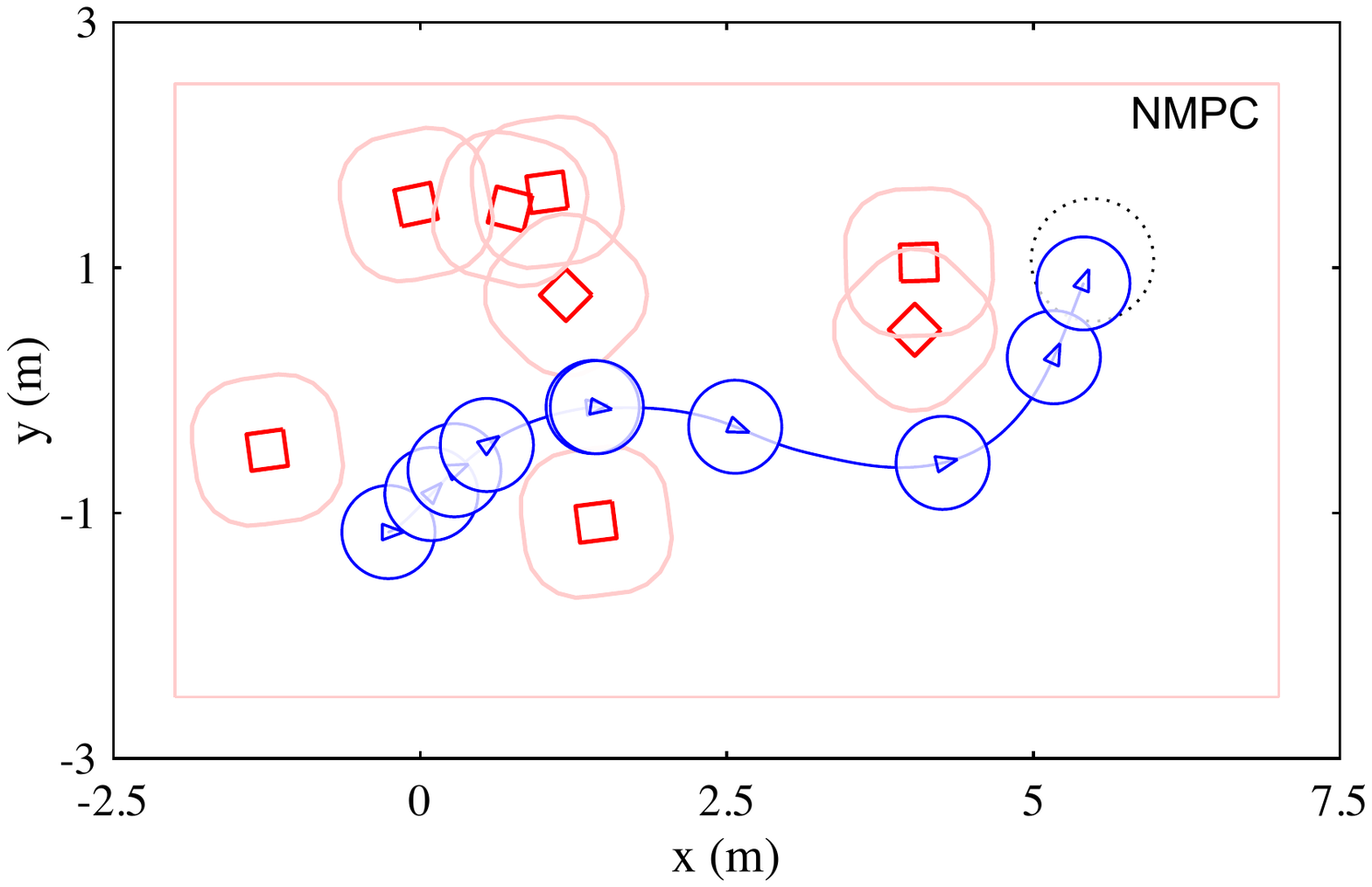}
        \caption{\centering}
        \label{subfig:segway_exp2_all_success_c}
    \end{subfigure}
    \begin{subfigure}[t]{0.32\textwidth}
        \centering
        \includegraphics[width=\textwidth]{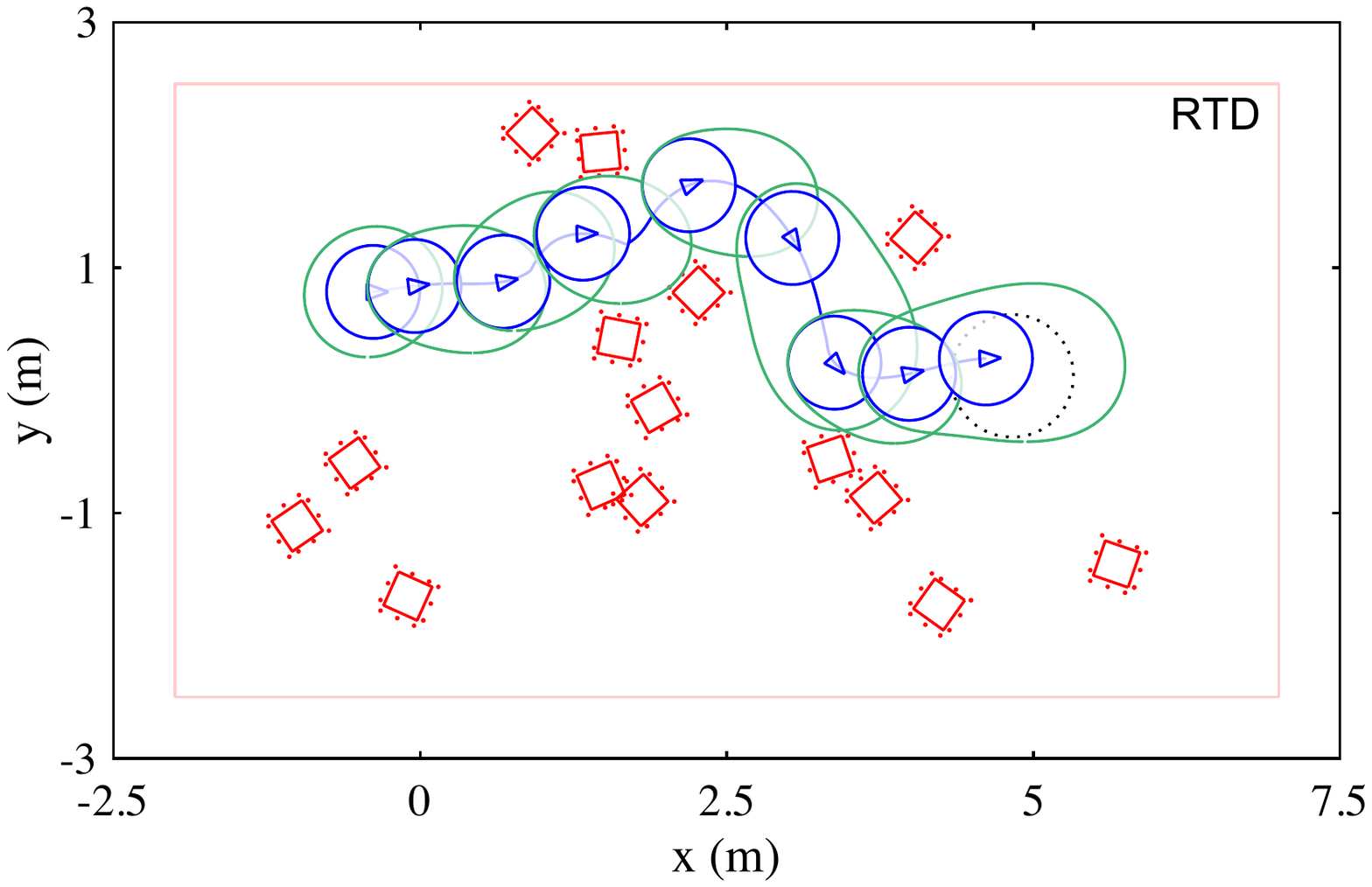}
        \caption{\centering}
        \label{subfig:segway_exp2_frs_success_b}
    \end{subfigure}
    \begin{subfigure}[t]{0.32\textwidth}
        \centering
        \includegraphics[width=\textwidth]{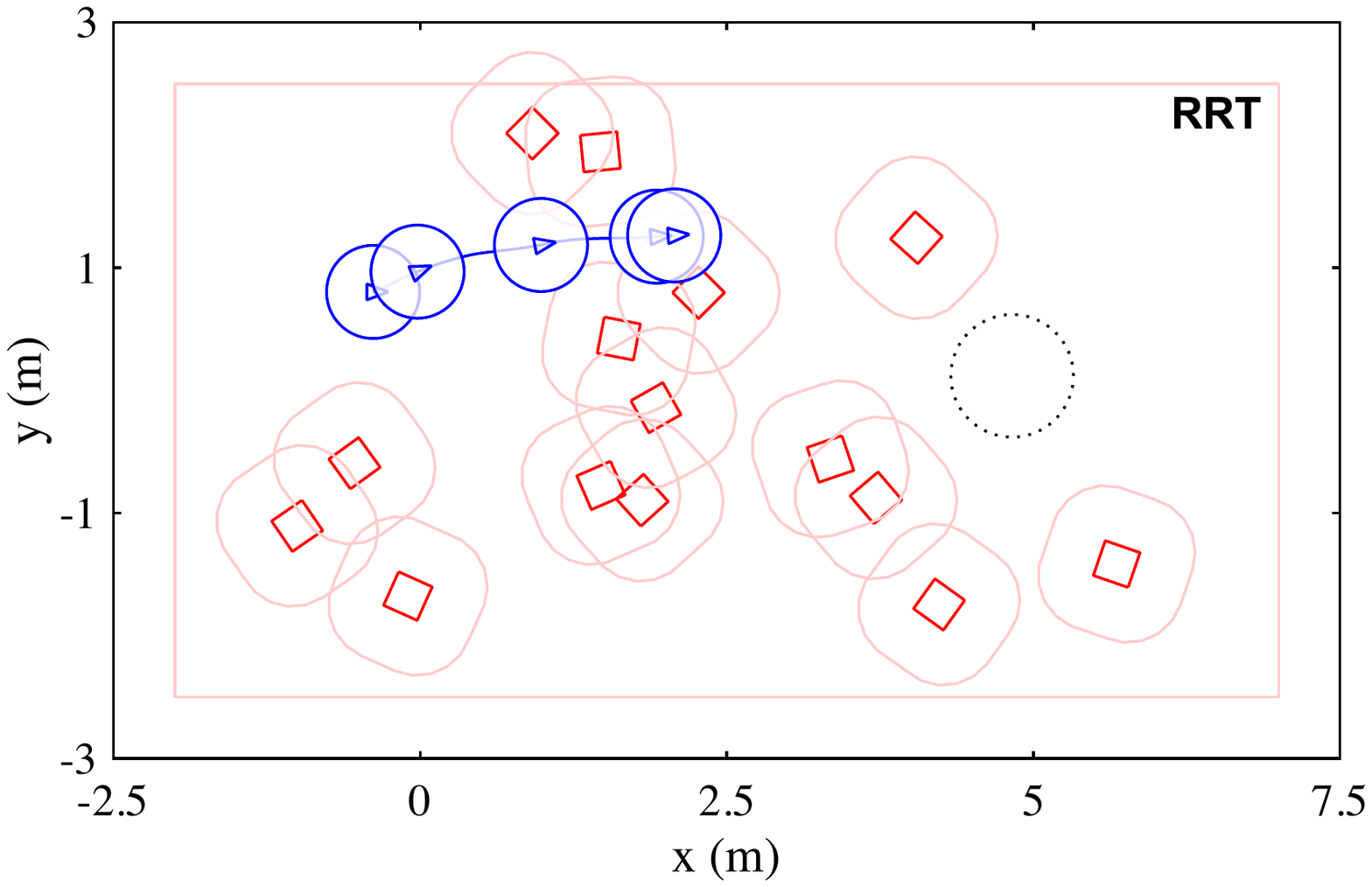}
        \caption{\centering}
        \label{subfig:segway_exp2_frs_success_e}
    \end{subfigure}
    \begin{subfigure}[t]{0.32\textwidth}
        \centering
        \includegraphics[width=\textwidth]{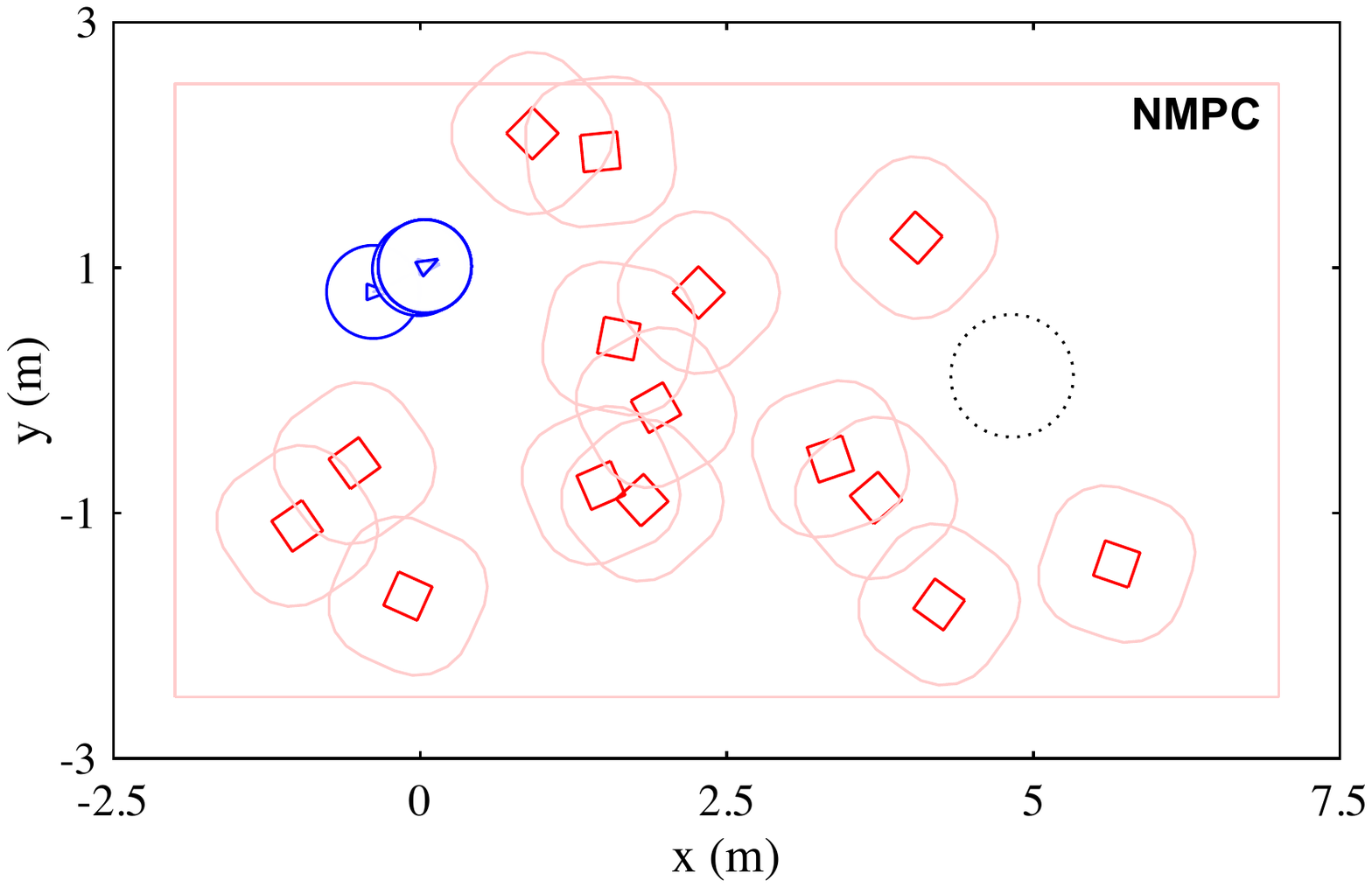}
        \caption{\centering}
        \label{subfig:segway_exp2_frs_success_f}
    \end{subfigure}
    \begin{subfigure}[t]{0.32\textwidth}
        \centering
        \includegraphics[width=\textwidth]{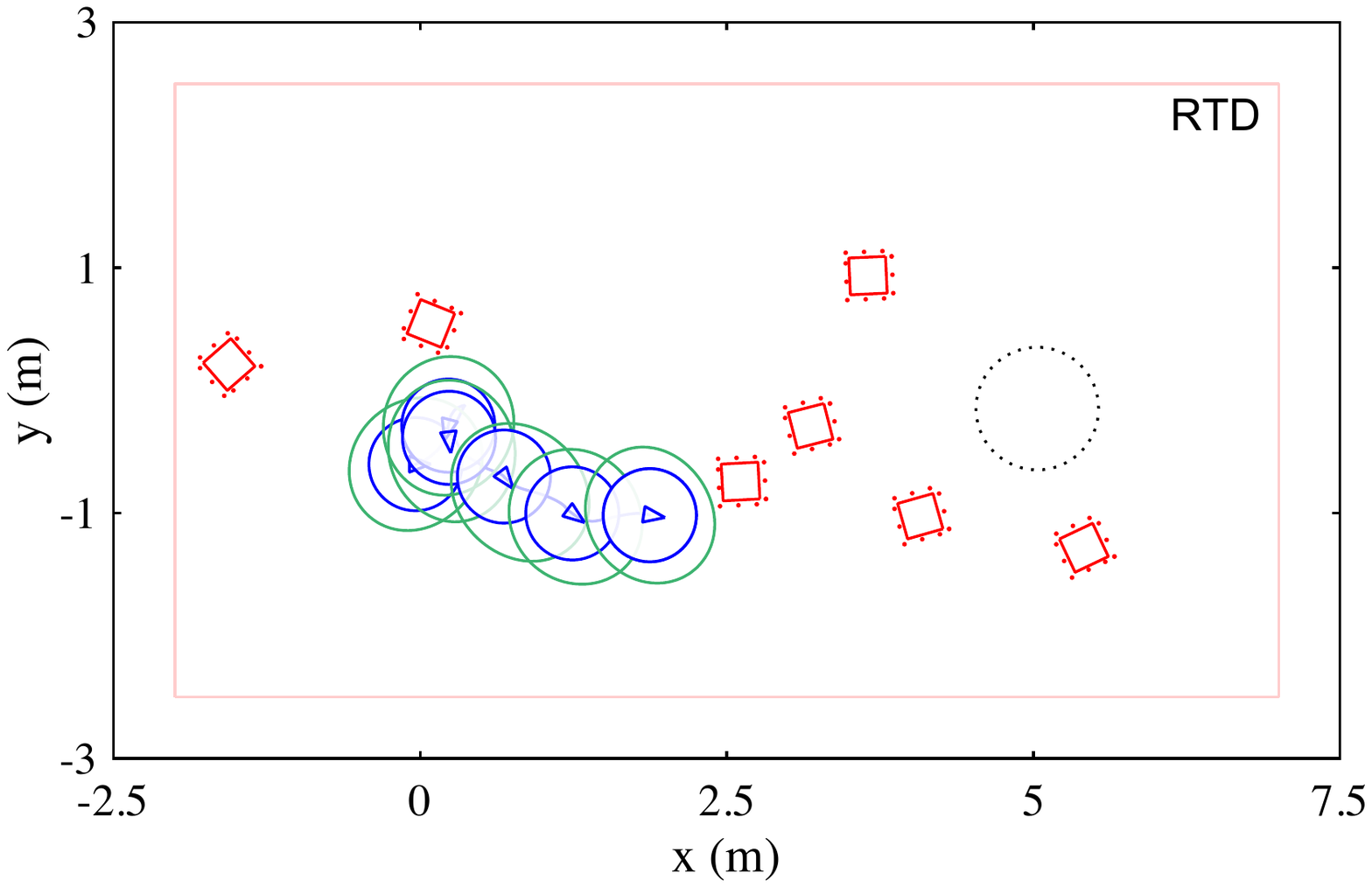}
        \caption{\centering}
        \label{subfig:segway_exp2_frs_fail_a}
    \end{subfigure}
    \begin{subfigure}[t]{0.32\textwidth}
        \centering
        \includegraphics[width=\textwidth]{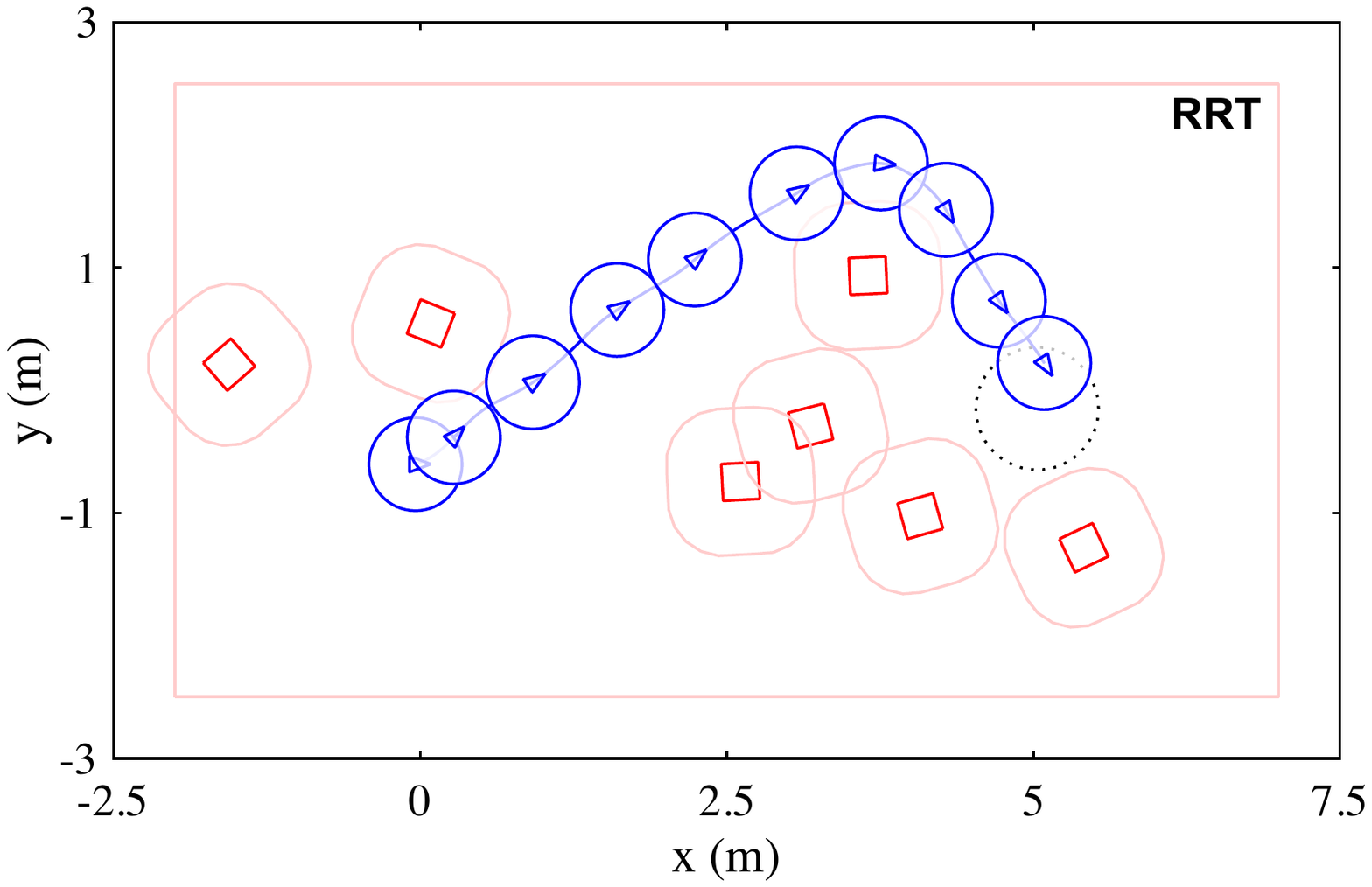}
        \caption{\centering}
        \label{subfig:segway_exp2_frs_fail_b}
    \end{subfigure}
    \begin{subfigure}[t]{0.32\textwidth}
        \centering
        \includegraphics[width=\textwidth]{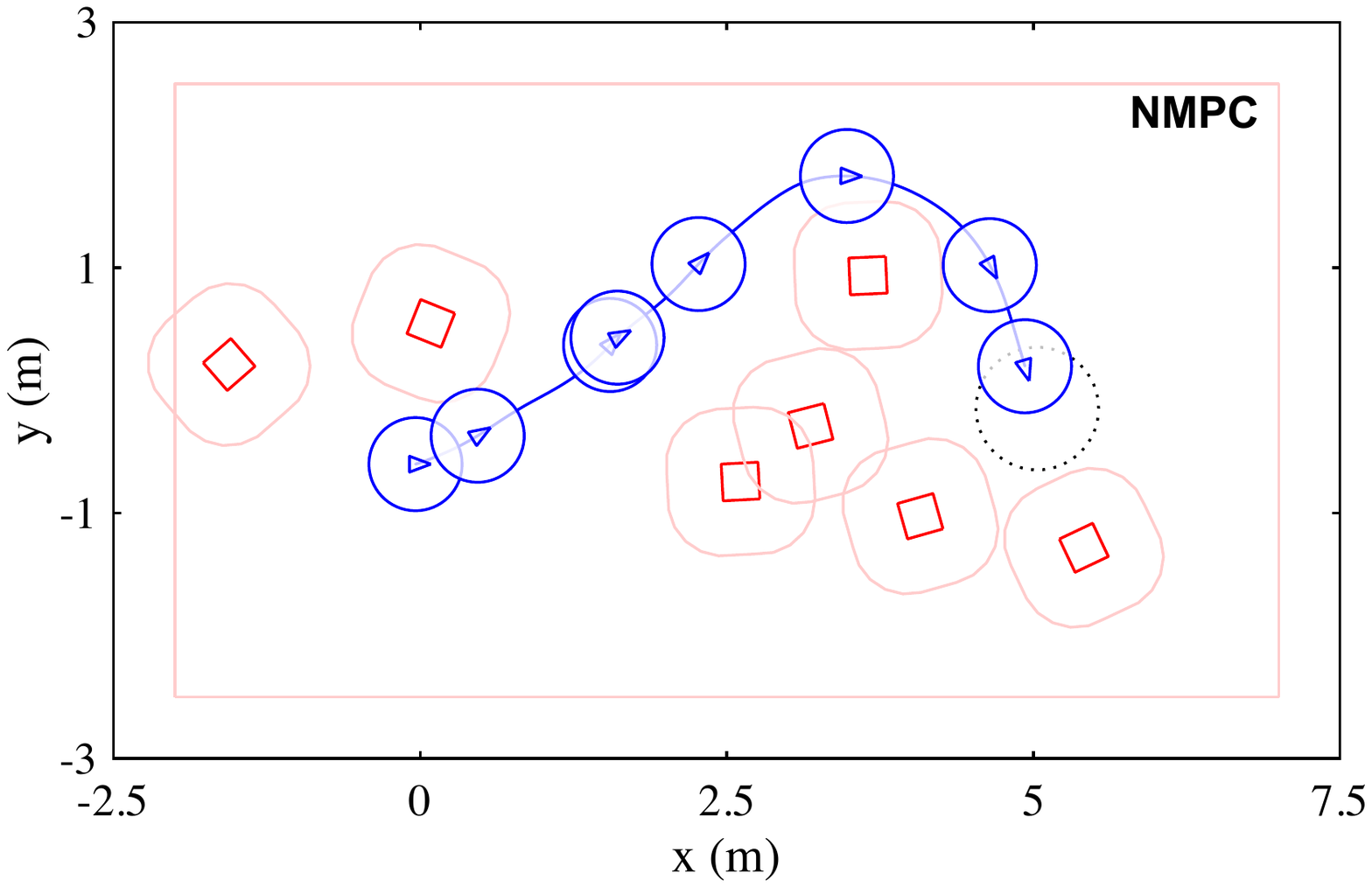}
        \caption{\centering}
        \label{subfig:segway_exp2_frs_fail_c}
    \end{subfigure}
    \caption{Sample environments from Experiments 1 (RRT and NMPC) and 2 (RTD) (Sections \ref{subsec:experiment_1} and \ref{subsec:experiment_2}) for the Segway, which starts on the west (left) side of the environment, with the goal plotted as a dotted circle on the east (right) side of the environment.
    Recall that RTD is not tested in Experiment 1, but the same environments are used in all three experiments, making this comparison possible.
    The Segway's pose is plotted as a solid circle every 1.5 s, or less frequently when the Segway is stopped or spinning in place.
    For RTD, contours of the FRS (i.e. the set $\pi_X^l(k_\regtext{opt})$ from \eqref{eq:pi_K_l}) are plotted.
    The actual (non-buffered) obstacles for all three planners are plotted as solid boxes.
    For RTD, the discretized obstacle is plotted as points around each box.
    For RRT and NMPC, the buffered obstacles are plotted as light lines around each box.
    Row 1 (Subfigures (a), (b), and (c)) shows an environment where all three planners are successful.
    Row 2 shows an environment where RTD is successful, but RRT and NMPC are not.
    Subfigure (d) shows RTD reaching the goal.
    Subfigure (e) shows RRT attempting to navigate a gap between several obstacles, where it is unable to find a new plan; it crashes when it tries to brake along its previously-planned trajectory.
    Subfigure (f) shows NMPC braking because it cannot compute a safe plan to navigate the same gap where RRT crashed; here, NMPC happens to brake safely and gets stuck because it cannot find a new plan fast enough.
    Row 3 shows an environment where RTD stops safely, but fails to reach the goal, whereas RRT and NMPC do reach the goal.
    Subfigure (g) shows that RTD initially turns north more sharply than RRT or NMPC, which forces it to brake safely; it then finds a safe path south, which causes the high-level planner to reroute it even farther south to where there is no feasible solution, causing RTD to get stuck because the southern route is considered feasible by the high-level planner.
    Subfigures (h) and (i) show RRT and NMPC reaching the goal because they do not turn north as sharply initially, so the high-level planner is able to route them north and around the obstacles.}
    \label{fig:segway_exp2_success}
\end{figure*}

\subsection{Experiment 2: Real-time Planning and Limited Sensor Horizon}\label{subsec:experiment_2}

\subsubsection{Goal}
The goal for Experiment 2 is to understand the performance of RTD, RRT, and NMPC when subject to real-time and limited sensor horizon requirements.
RTD is designed to satisfy these requirements while provably ensuring safety.
RRTs are typically capable of rapid planning, though not necessarily with arbitrary dynamics \citep{elbanhawi2014sampling,kuwata2009rrt}.
For NMPC, these requirements can cause wide variations in performance depending on how constraints are represented \citep{Frash2013_ACADO_MPC,Gao2014_robustMPC,gpopsii,howard2007roughterrainplanning,boss2008urbanchallenge}.

\subsubsection{Setup}
The parameters used for Experiment 2 are as follows.
For the Segway, $\tau\move = \tau\plan = 0.5$ s and $D\sense = 4.0$ m.
For the Rover, $\tau\move = \tau\plan = 0.5$ s, and $D\sense = 5$ m.
Since $\tau\move = \tau\plan$, the amount of time allowed for planning is the same as the amount of time that each robot executes from the previously-planned trajectory, meaning the real-time requirement is enforced.
Since $D\sense$ is smaller than the size of each robot's environment (see Section \ref{subsec:simulation_environment}), the limited sensor horizon requirement is enforced.
The RRT and NMPC buffer size is 0.45 m for the Segway, and (0.29, 0.26) m for the Rover. The buffer sizes used for RTD are given in Sections \ref{subsubsec:segway_obstacle_representation} and \ref{subsubsec:rover_frs_obs}

\subsubsection{Expected Results}
We expect the results of Experiment 2 to be as follows.
For both robots, we expect RTD to have a similar number of goals reached as RRT and NMPC, and we expect RRT and NMPC to reach the goal less often than in Experiment 1.
This is due to the limited sensor horizon, meaning the high-level planner no longer has access to the entire environment at time $0$, and therefore may make poor routing decisions.
As for crashes, RTD is designed with real-time performance as a requirement, and prescribes a minimum sensor horizon in Theorem \ref{thm:D_sense} that is less than $D\sense$ for both robots.
Therefore, we expect RTD to have no crashes.
We expect RRT and NMPC to have slightly more crashes than in Experiment 1, because the sensor horizon is shorter, and because the real-time requirement means that these two planners may be unable to find feasible plans as often, resulting in both planners braking more frequently.

\subsubsection{Results}
The results of Experiment 2 are summarized in Table \ref{tab:segway_experiments_1_thru_3} for the Segway and Table \ref{tab:rover_experiments_1_thru_3} for the Rover.
For the Segway, RTD reaches the goal more often than the other two planners do in Experiment 1 or in Experiment 2 (96.3\%); recall that the same environments are used in all three experiments, making this comparison possible.
RRT surprisingly reaches the goal less often in Experiment 2 than in Experiment 1 (78.2\% vs . 86.3\%); and NMPC is incapable of reaching the goal (0\% vs. 83.7\%).
RTD has no crashes, as expected;
RRT crashes less often (2.4\% vs. 3.6\%); and NMPC does not crash because it struggles to move the robot at all.
For the Rover, RTD reaches the goal 95.4\% of the time.
RRT reaches the goal slightly less often than in Experiment 1 (97.6\% vs. 99.8\%); and NMPC is incapable of reaching the goal (0\% vs. 99.6\%). 
RTD has no crashes;
RRT crashes once (0.01\%); and NMPC does not crash because it struggles to move the robot.

\subsubsection{Discussion}
We now discuss the results of Experiment 2.
For both the Segway and Rover, RTD's performance is as expected based on the theory in this paper: it is able to reach the goal, can plan in real time, and has no crashes.
The Segway's RRT has a reduction in crashes, which is surprising, but is likely because the real-time requirement means that RRT is less likely to find a feasible plan at every iteration, and must brake more often.
For the Segway's NMPC planner, we notice that GPOPS-II is able to find trajectories rapidly when the vehicle is not near obstacles; but, since the obstacles are randomly-placed and produce non-convex constraints, the solver struggles to solve quickly when near them, resulting in 0 goals and 0 crashes.
For the Rover, compared to Experiment 1, the RRT planner reaches the goal slightly less often, but still crashes, as expected due to the reduced planning time limit; unlike the Segway, the Rover cannot spin in place to potentially find a new plan after braking.
The Rover's NMPC planner suffers the same issues near the obstacle constraints as the Segway's NMPC planner.
It is worth noting that, for the Rover, we were able to generate heuristics for the RRT that exploited the structure of the environment, which enables the RRT to more goals than RTD.
However, we see in the random environments generated for the Segway that RTD is reaches more goals than RRT.

Figure \ref{fig:segway_exp2_success} demonstrates Experiments 1 and 2 for the Segway; the RRT and NMPC plots are from Experiment 1, and the RTD plots are from Experiment 2, since RTD is not run in Experiment 1.
The same randomly-generated environments are used in all experiments, so this direct comparison is possible.
The figure shows one environment where RTD, RRT, and NMPC all reach the goal without crashing; one environment where RTD reaches the goal, RRT crashes, and NMPC gets stuck; and one environment where RTD brakes safely whereas RRT and NMPC reach the goal.
In the second environment, RRT crashes because, while trying to navigate a gap between two obstacles, it is unable to find a feasible plan; it then attempts to brake along its previous trajectory, but touches an obstacle while doing so, because the Segway cannot necessarily brake exactly along its previous plan produced by RRT.
NMPC gets stuck trying to navigate this same gap where RRT crashes, because the gap is a non-convex region with enough obstacle constraints that the NMPC planner computes slowly.
Unlike RRT, NMPC brakes much earlier, which happens to be safe, but then is unable to find a plan to navigate the gap.
In the third environment, RTD gets stuck because, early on, it finds a different path from RRT and NMPC; this new path causes the high-level planner to reroute RTD towards a region where the high-level planner believes that the route is feasible, but RTD determines that it is not, resulting in RTD braking safely.
This demonstrates that, even if the high-level planner makes infeasible decisions, RTD is safe.

Figure \ref{fig:rover_exp_3} demonstrates Experiment 2 for the Rover with one environment where RTD succeeds, RRT crashes, and NMPC gets stuck; and one environment where all planners brake safely.
RRT crashes when it travels too close to an obstacle to find a feasible plan at the next planning iteration, causing it to try to brake, resulting in a crash.
In some environments, NMPC is able to find plans until the obstacles appear in its sensor horizon.

\begin{figure*}%[h]
    \centering
    \begin{subfigure}[t]{0.53\textwidth}
        \centering
        \includegraphics[width=1.0\textwidth]{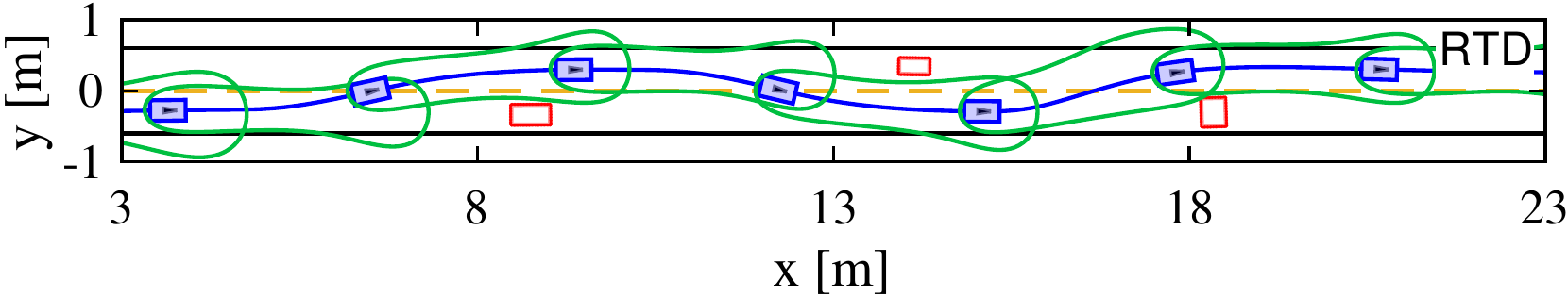}
        \caption{\centering}
        \label{subfig: rover_exp3_success_a}
    \end{subfigure}
    \hfill
    \begin{subfigure}[t]{0.45\textwidth}
        \centering
        \includegraphics[width=1.0\textwidth]{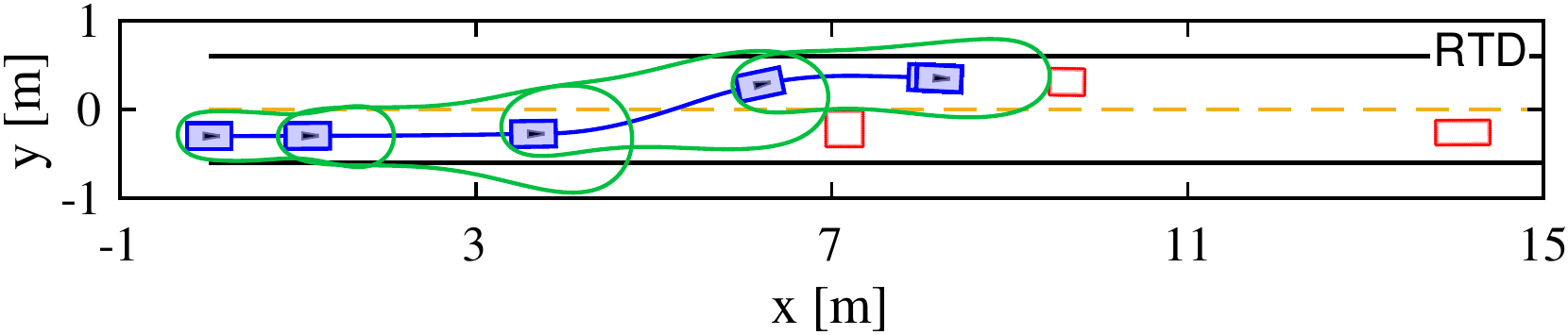}
        \caption{\centering}
        \label{subfig: rover_exp3_crash_a}
    \end{subfigure}
    \begin{subfigure}[t]{0.53\textwidth}
        \centering
        \includegraphics[width=1.0\textwidth]{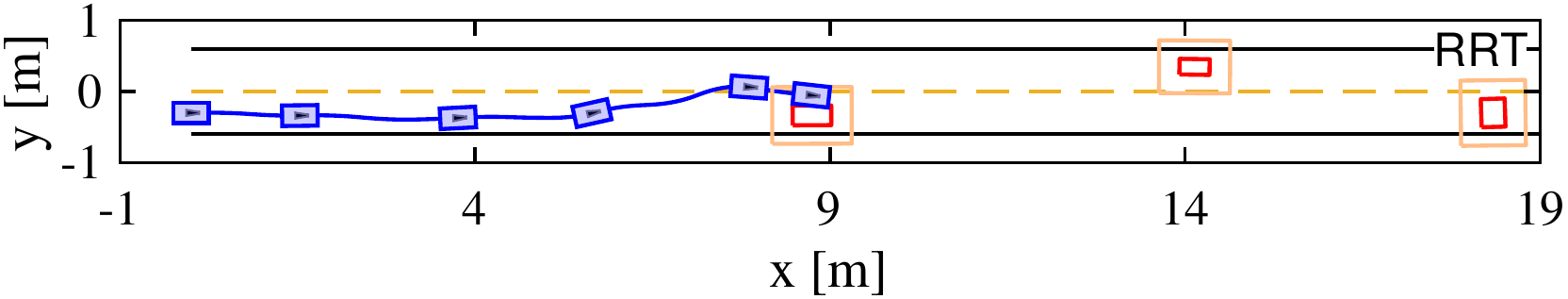}
        \caption{\centering}
        \label{subfig: rover_exp3_success_b}
    \end{subfigure}
    \hfill
    \begin{subfigure}[t]{0.45\textwidth}
        \centering
        \includegraphics[width=1.0\textwidth]{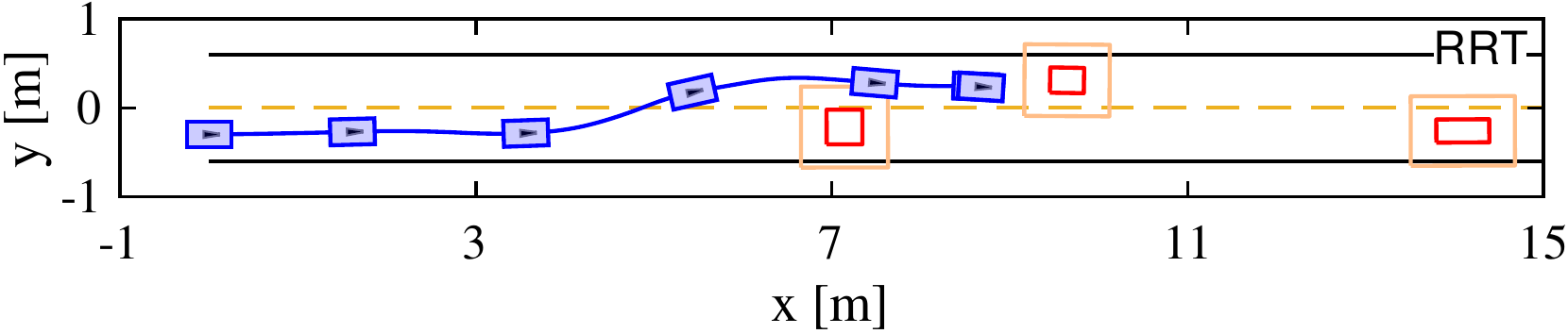}
        \caption{\centering}
        \label{subfig: rover_exp3_crash_b}
    \end{subfigure}
    \begin{subfigure}[t]{0.53\textwidth}
        \centering
        \includegraphics[width=1.0\textwidth]{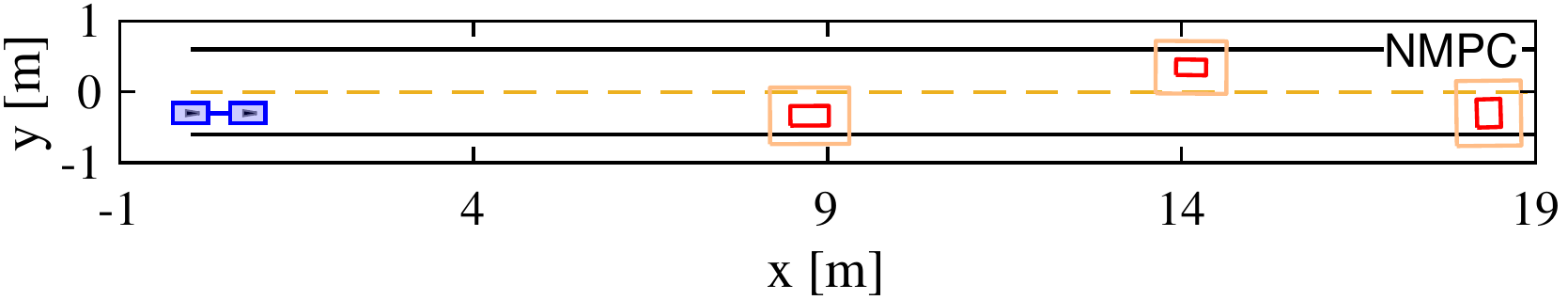}
        \caption{\centering}
        \label{subfig: rover_exp3_success_c}
    \end{subfigure}
    \hfill
    \begin{subfigure}[t]{0.45\textwidth}
        \centering
        \includegraphics[width=1.0\textwidth]{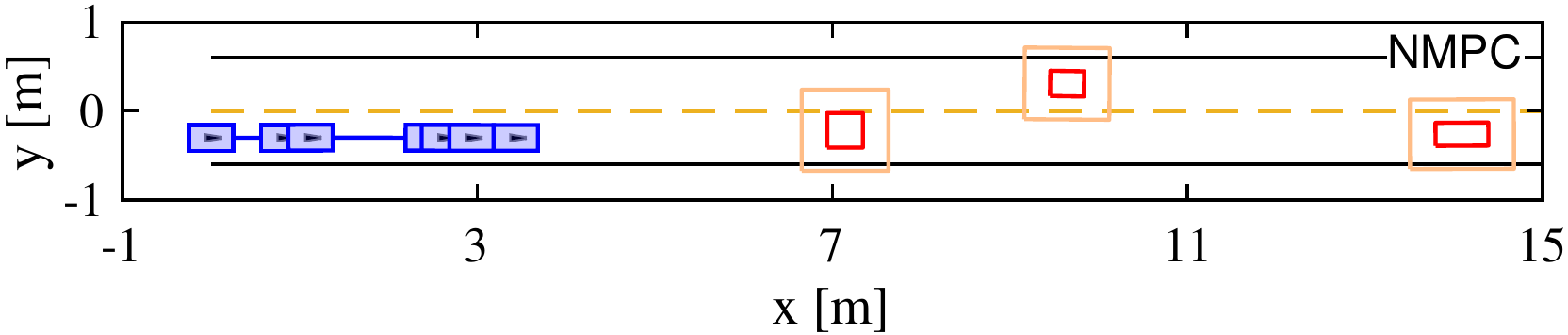}
        \caption{\centering}
        \label{subfig: rover_exp3_crash_c}
    \end{subfigure}
    
    \caption{Two sample environments from Experiment 2 for the Rover.
    The Rover's trajectory, starting from the far left, is a solid line, and its pose at several sample time instances is plotted with solid rectangles.
    Obstacles are plotted as red boxes.
    Buffered obstacles for RRT and NMPC are plotted with light solid lines.
    Subfigures (a) and (b) show RTD avoiding the obstacles.
    The subset of the FRS associated with the optimal parameter every $1.5$ s is plotted as a contour.
    Subfigures (c) and (d) show the RRT method.
    In Subfigure (c), RRT is unable to safely track its planned trajectory around the first obstacle.
    In Subfigure (d), RRT is able to come to a stop before the second obstacle.
    Subfigures (e) and (f) show NMPC, which stops due to enforcement of real-time planning limits.}
    \label{fig:rover_exp_3}
\end{figure*}

\subsection{Experiment 3: Real Planning Time and Minimal Sensor Horizon}\label{subsec:experiment_3}

\subsubsection{Goal}
The goal for Experiment 3 is to confirm that RTD performs safe, real-time trajectory planning even when the sensor horizon is the minimum possible as per Theorem \ref{thm:D_sense}.
This is useful because, to be practical, RTD must be able to tolerate environments where a robot's sensors are only effective in a small area.

\subsubsection{Setup}
The parameters used for Experiment 3 are as follows.
For the Segway, $\tau\move = \tau\plan = 0.5$ s and $D\sense = 1.9$ m.
For the Rover, $\tau\move = \tau\plan = 0.5$ s and $D\sense = 4$ m.
Since $\tau\move = \tau\plan$, the real-time planning requirement is enforced, as in Experiment 2.
The sensor horizon $D\sense$ is given by Theorem \ref{thm:D_sense}.
Buffer sizes for both robots are the same as in Experiment 2.

\subsubsection{Expected Results}
We expect the results of Experiment 3 to show that RTD has zero crashes for either robot.
We expect the number of goals reached to be less than those in Experiment 2, because a smaller sensor horizon means that the high-level planner for both robots has less information when making routing decisions.
So, there may be more environments where the high-level planners cause both robots to brake safely without reaching the goal.

\subsubsection{Results}
The results of Experiment 3 confirm the expectation.
Both robots have 0 crashes.
The Segway reaches the goal 96.2\% of the time, versus 96.3\% in Experiment 2.
The Rover reaches the goal 95.2\% of the time, versus 95.4\% in Experiment 2.

\subsubsection{Discussion}
We now discuss the results of Experiment 3.
Neither robot has any crashes with the minimal sensor horizon.
Furthermore RTD maintains performance in terms of goals reached; this is likely because the sensing requirement presented in Theorem \ref{thm:D_sense} assumes the robot is traveling at its maximum speed.
This means, intuitively, that a smaller sensor horizon is sufficient at lower speeds.

\subsection{Overall Simulation Discussion}\label{subsec:sim_discussion}

The experiments show that RTD is successful in reaching the desired goal comparably often to RRT and NMPC for both the Segway and Rover.
Importantly, RTD has 0 crashes in all of the simulations.

RRT crashes because its paths may take it near obstacles, where it is difficult to build a dense tree since most nodes are infeasible.
When this happens, RRT attempts to brake, but there is no guarantee that this can be done safely.
Interestingly, for the Segway, reducing the allowed planning time $\tau\plan$ reduces the crash rate.
This is because RRT cannot find a feasible plan as frequently with the lower planning time, so it brakes more often, and begins braking when further away from obstacles.

NMPC crashes because, when the robot is near an obstacle, there are a large number of non-convex constraints in the resulting optimization program, so finding a feasible solution within the planning time $\tau\plan$ is difficult.
If no plan is found, the robot attempts to continue executing its last feasible plan (which includes a braking maneuver), but the algorithm has no guarantee that doing so is safe.

We found that increasing the buffer size (Experiment 1) reduces the number of crashes for both RRT and NMPC for the Segway, as expected.
The tradeoff for buffer size is that a larger buffer reduces the free space available for the robot to move through, reducing how often each robot reaches the goal.
Importantly, crashes occur for both planners even when they are not required to plan in real-time or with a limited sensor horizon.
In other words, RRT and NMPC are shown to not be persistently feasible, confirming that persistent feasibility must be considered for robots operating with receding horizon trajectory planners.

RTD is sometimes unable to reach the goal, but still always brakes safely.
Note that RRT and NMPC on both the Segway and Rover platforms are sometimes also unable to reach the goal.
For the Segway, stopping safely before reaching the goal occurs when RTD plans a path too close to an obstacle, in which case the online optimization \texttt{OptK} struggles to find a non-stopped solution even after spinning the Segway in place.
This may be remedied by changing the high-level planner to penalize obstacles more, or by changing the cost function in the online optimization.
The Rover stops without reaching the goal when the reachable set is too large to make a lane change through a tight gap between two obstacles.
This may be due to the fact that the decomposition technique used to compute the FRS's is conservative when the footprint rotates.
This could be remedied by using a simpler trajectory parameterization, like the Segway's, in low-speed, tight scenarios.

For the Rover's environments we notice that RRT and NMPC have excellent performance in Experiment 1.
We found that the sparse (compared to the Segway), structured, and static environment, eases the development of heuristics for both the waypoint and trajectory planners.
The benefits of RTD are greater in the random environments generated for the Segway.

Overall, the simulation experiments confirm that RTD is safe and persistently feasible.
See Figures \ref{fig:segway_exp2_success} and \ref{fig:rover_exp_3} for examples of RTD performing trajectory planning for the Segway and Rover platforms.

Next, we discuss the RTD hardware demonstration.

\begin{table*}[t]
    \centering
    \begin{tabular}{|c|c|c|l|r|r|}
    \hline
    \multicolumn{6}{|l|}{Segway Simulation Results} \\
    \hline
    Experiment & $\tau\plan$ [s] & $D\sense$ [m] & Planner & Goals [\%] & Crashes [\%] \\
    \hline\hline
    \multirow{2}{*}{1} & \multirow{2}{*}{10.0} & \multirow{2}{*}{100}
        &  RRT & 86.2 & 3.6  \\
        \cline{4-6}
        &&& NMPC & \textbf{97.0} & \textbf{0.6}  \\
    \hline\hline
    \multirow{3}{*}{2} & \multirow{3}{*}{0.5} & \multirow{3}{*}{4.0} &  RTD & \textbf{96.3} & \textbf{0.0}   \\
        \cline{4-6}
        &&& RRT & 78.2 & 2.4  \\
        \cline{4-6}
        &&& NMPC & 0.0 & \textbf{0.0}  \\
    \hline\hline
    3 & 0.5 & 1.5 & RTD & 96.2 & \textbf{0.0}   \\
    \hline
    \end{tabular}
    \caption{Simulation results of Experiments 1--3 for the Segway.
    RTD is the only method that never experiences crashes, as expected; it also reaches the goal more frequently than RRT or NMPC.
    NMPC reaches the goal more often than RTD and RRT, with fewer crashes than RRT, but is unable to plan in real time (Experiment 2).
    In Experiment 3, RTD is capable of planning safely when given the smallest possible sensor horizon allowed for persistent feasibility by Theorem \ref{thm:D_sense}.}
     \label{tab:segway_experiments_1_thru_3}
\end{table*}

\begin{table*}[t]
    \centering
    \begin{tabular}{|c|c|c|l|r|r|}
    \hline
    \multicolumn{6}{|l|}{Rover Simulation Results} \\
    \hline
    Experiment & $\tau\plan$ [s] & $D\sense$ [m] & Planner & Goals [\%] & Crashes [\%] \\
    \hline\hline
    \multirow{2}{*}{1} & \multirow{2}{*}{10.0} & \multirow{2}{*}{$30$} & RRT & \textbf{99.8} & \textbf{0.0}   \\
        \cline{4-6}
        & & & NMPC & 99.6 & \textbf{0.0}  \\
    \hline\hline
    \multirow{3}{*}{2} & \multirow{3}{*}{0.5} & \multirow{3}{*}{5.0} & RTD & 95.4 & \textbf{0.0}   \\
        \cline{4-6}
        & & & RRT & \textbf{97.6} & 0.1  \\
        \cline{4-6}
        & & & NMPC & 0.0 & \textbf{0.0}  \\
    \hline\hline
    3 & 0.5 & 4.0 & RTD & 95.2 & \textbf{0.0}   \\
    \hline
    \end{tabular}
    \caption{
    Simulation results of Experiments 1--3 for the Rover.
    RTD is the only method that can both reach the goal and never crash when real time planning is enforced.
    In the Rover's road-like environment, RRT has excellent performance, but crashed in 1 out of 1000 trials when the real-time planning limit was enforced.
    In Experiment 3, RTD is capable of planning safely when given the smallest possible sensor horizon allowed for persistent feasibility by Theorem \ref{thm:D_sense}.}
     \label{tab:rover_experiments_1_thru_3}
\end{table*}

\section{Hardware Demonstration}\label{sec:hardware_demo}

This section details the application of RTD to the Segway (Figure \ref{subfig:segway_time_lapse}) and Rover (Figure \ref{subfig:rover_time_lapse}) hardware platforms.
Section \ref{sec:simulation_results} confirms that RTD is safe and persistently feasible, so it is able to plan safe trajectories in real time.
The hardware demonstrations affirm this point.
Videos of the robots are available at \url{https://youtu.be/FJns7YpdMXQ} for the Segway and \url{https://youtu.be/bgDEAi_Ewfw} for the Rover.

\subsection{Segway}

\subsubsection{Hardware Setup}
The first hardware demo uses the Segway Robotics Mobility Platform shown in Figure \ref{subfig:segway_time_lapse}.
Sensing is performed with a Hokuyo UTM-30LX planar lidar; in practice, we found this sensor to be accurate up to $D\sense = 4.0$ m away (recall that the Segway runs indoors, so the effective sensor horizon is small).
The robot is controlled by a 4.0 GHz laptop with 64 GB of memory, running MATLAB and the Robot Operating System (ROS).
Google Cartographer is used for localization and mapping \citep{google_cartographer}.
All computation is run onboard. 
Since SLAM and state estimation requires $\tau_\mathrm{process} = 0.2$ s per iteration (as in Assumption \ref{ass:tau_plan}), we enforce $\tau_\mathrm{trajopt} = 0.3$ s when calling \texttt{OptK} (as in Algorithm \ref{alg:trajopt} from Section \ref{sec:trajectory_optimization}).
We find in practice that the state estimation error is never more than $0.1$ m in the global $xy$-coordinate frame while the Segway tracks any parameterized trajectory, so we set $\vep_x = \vep_y = 0.1$ m as in Assumption \ref{ass:predict}.
The FRS is computed for the Segway as described in Section \ref{subsec:segway_application}.

\subsubsection{Demonstration}
The Segway is run on a $4\times8$ m\ts{2} tile floor with $30$ cm cubical obstacles randomly distributed just before run time.
The Segway has no prior knowledge of the obstacles.
Two points are picked on opposite ends of the room and used as the start and goal points in an alternating fashion.

A supplementary video illustrates the performance of RTD.
Despite the randomly-placed obstacles, the Segway RMP platform is able to operate safely while consistently reaching its goal.
As in the simulation, the Segway uses a low speed and a high speed FRS (see Section \ref{subsec:segway_FRS_computation}).
In the handful of instances where the Segway brakes, the high-level planner generates waypoints that require passing through a gap that is too small for the high speed FRS; the Segway swaps to the low speed FRS after stopping, and is then able to navigate the gap.

\subsection{Rover}

\subsubsection{Hardware Setup}
The second hardware demo uses a Rover car-like robot based on a Traxxas RC platform.
The Rover is tested on a 7 m long mock road, which is a tiled surface, as shown in Figure \ref{subfig:rover_time_lapse}.
This setup resembles the simulation environment, but with a shorter road and smaller obstacles.
The Rover is equipped with a front-mounted Hokuyo UST-10LX planar lidar for sensing and localization; as the Rover runs indoors, we found this sensor to be accurate up to at least $D\sense = 3.5$ m away given occlusions and obstacle density.
An NVIDIA TX-1 computer on-board is used to run the sensor drivers, state estimator, feedback controller, and low-level motor controller.
The Rover uses ROS to communicate with an Intel Core i7 7820HK (2.90 GHz) CPU/64 GB RAM laptop over wifi.
The laptop is used for localization and mapping, to capture experiment data, and to run the function \texttt{OptK} from Algorithm \ref{alg:trajopt}.
We use $\vep_x = \vep_y = 0.1$ m for the state estimation error as in Assumption \ref{ass:obs_error_buffer}.
The FRS is computed for the Rover as described in Section \ref{subsec:rover_application}.

\subsubsection{Demonstration}
For each trial, the Rover is placed at one end of the mock road and instructed to drive to a goal at the other end at speeds of 1--1.5 m/s.
One to three obstacles are placed between the Rover and the goal.
The obstacles are $0.3\times 0.3\times 0.3$ m\ts{3} cardboard cubes.
The Rover is not given prior knowledge of the obstacles for each trial, and uses its planar lidar to detect them in real-time. The Rover has an enforced planning time limit of $\tau_\mathrm{trajopt}+\tau_\mathrm{process} = 0.375$ s.
Contrary to the Segway, the timeouts were enforced together.
This is because localization and map updates were provided smoothly at 20 Hz, so the algorithm did not need to pause and wait for an update as often as the Segway did.
Eight trials were run back-to-back and filmed in one take, as presented in the supplementary video.
Several types of scenarios are constructed to encourage the Rover to change lanes or force it to brake to a stop.
Eighteen trials were run in addition to the filmed trials, and resulted in zero crashes.
The Rover uses one FRS to plan at speeds between 1.0--1.5 m/s.
Due to the minimum speed, the Rover is occasionally unable to navigate tight gaps; this could be remedied by using a low speed FRS with a different trajectory parameterization.

\section{Conclusion}
\label{sec:conclusion}
This paper presents the Reachability-based Trajectory Design (RTD) method, which plans provably safe trajectories in real time for arbitrary ground mobile robots. 
Other state-of-the-art methods for planning rely on spatial or temporal discretization and rely on heuristics to manage tradeoffs between accuracy and run time to simultaneously enable safety and real-time performance.
With RTD, the robot plans using a continuous set of parameterized trajectories.
The Forward Reachable Set (FRS), computed offline, contains reachable positions of the robot, including tracking error, when tracking these trajectories over a fixed time horizon.
RTD specifies criteria for the robot's sensor horizon and stopping distance to ensure that the robot is persistently feasible, meaning it is always able to find a safe trajectory.

RTD plans trajectories using a receding-horizon strategy.
In each receding-horizon planning iteration, a nonlinear optimization program is solved to select optimal trajectory parameters.
The FRS is used to create a map that sends obstacles from the state space to the parameter space as nonlinear constraints for the online solver in a provably safe way, ensuring that any trajectory selected to satisfy the constraints cannot cause a collision.
This paper presents a provably safe method for representing arbitrary obstacles with a discrete set of points.
This representation allows the online optimization program to solve in real-time.
This paper also adapts a system decomposition technique for computing the FRS, extending the application of RTD to higher dimensional systems.

In this paper, RTD has been applied to two systems in both simulation and hardware: a Segway RMP robot navigating a room full of random obstacles, and a car-like Rover robot performing lane change maneuvers on a mock two lane road.
The Segway's parameterized trajectories are Dubins paths with varying velocities and yaw rates.
Using a high-level planner to produce a coarse route, the Segway is able to safely traverse the room despite random, unforeseen obstacle configurations.
The Rover's FRS computation uses the presented system decomposition method.
The Rover's parameterized trajectories are lane change maneuvers generated with a bicycle model.
A high-level planner tells the Rover to change lanes if an obstacle is sensed nearby in the same lane.
RTD then synthesizes either a safe lane change or safe braking maneuver.
Simulation results for both the Segway and Rover compare the performance and safety of RTD to Rapidly-exploring Random Tree (RRT) and Nonlinear Model-Predictive Control (NMPC) methods.
When real-time planning limits are enforced, RTD is able to outperform RRT and NMPC in terms of number of goals reached without causing any crashes.
Videos of the hardware demonstrations are available at \url{https://www.youtube.com/watch?v=FJns7YpdMXQ} for the Segway and \url{https://www.youtube.com/watch?v=bgDEAi_Ewfw} for the Rover.
Code used for the reachable set computation and simulation results is available at \url{https://github.com/skvaskov/RTD}

\bibliographystyle{plainnat}
\bibliography{references}

\begin{thebibliography}{54}
\providecommand{\natexlab}[1]{#1}
\providecommand{\url}[1]{\texttt{#1}}
\expandafter\ifx\csname urlstyle\endcsname\relax
  \providecommand{\doi}[1]{doi: #1}\else
  \providecommand{\doi}{doi: \begingroup \urlstyle{rm}\Url}\fi

\bibitem[Ahnert and Mulansky(2011)]{odeint}
Karsten Ahnert and Mario Mulansky.
\newblock \href{http://arxiv.org/abs/1110.3397}{Odeint - Solving ordinary
  differential equations in {C++}}.
\newblock \emph{CoRR}, abs/1110.3397, 2011.

\bibitem[Althoff(2015)]{althoff_cora}
M.~Althoff.
\newblock \href{https://tumcps.github.io/CORA/}{An Introduction to CORA 2015},
  2015.

\bibitem[Althoff and Dolan(2014)]{althoff2014online}
Matthias Althoff and John~M Dolan.
\newblock \href{https://ieeexplore.ieee.org/document/6784493}{Online
  verification of automated road vehicles using reachability analysis}.
\newblock \emph{IEEE Transactions on Robotics}, 30\penalty0 (4):\penalty0
  903--918, 2014.

\bibitem[Ansari and Murphey(2017)]{ansari2017SAC}
Alex Ansari and Todd~D. Murphey.
\newblock
  \href{https://ieeexplore.ieee.org/stamp/stamp.jsp?arnumber=7562474}{Sequential
  Action Control: Closed-Form Optimal Control for Nonlinear and Nonsmooth
  Systems}.
\newblock \emph{CoRR}, abs/1708.09347, 2017.

\bibitem[Buehler et~al.(2009)Buehler, Iagnemma, and Singh]{buehler2009darpa}
Martin Buehler, Karl Iagnemma, and Sanjiv Singh.
\newblock \href{https://www.springer.com/gp/book/9783642039904}{The DARPA urban
  challenge: autonomous vehicles in city traffic}.
\newblock \emph{Springer Books}, 56, 2009.

\bibitem[{Chen} et~al.(2017){Chen}, {Herbert}, {Vashishtha}, {Bansal}, and
  {Tomlin}]{chen2016journal}
M.~{Chen}, S.~L. {Herbert}, M.~S. {Vashishtha}, S.~{Bansal}, and C.~J.
  {Tomlin}.
\newblock
  {\href{https://ieeexplore.ieee.org/stamp/stamp.jsp?arnumber=8267187}{Decomposition
  of Reachable Sets and Tubes for a Class of Nonlinear Systems}}.
\newblock \emph{ArXiv e-prints}, July 2017.

\bibitem[Chen et~al.(2016)Chen, Herbert, and Tomlin]{chen2016exact}
Mo~Chen, Sylvia Herbert, and Claire~J Tomlin.
\newblock \href{https://ieeexplore.ieee.org/document/7989015}{Exact and
  efficient Hamilton-Jacobi-based guaranteed safety analysis via system
  decomposition}.
\newblock \emph{arXiv preprint arXiv:1609.05248}, 2016.

\bibitem[Ding et~al.(2011)Ding, Li, Huang, and Tomlin]{ding2011reachability}
Jerry Ding, Eugene Li, Haomiao Huang, and Claire~J Tomlin.
\newblock
  \href{https://ieeexplore.ieee.org/stamp/stamp.jsp?arnumber=5980268}{Reachability-based
  synthesis of feedback policies for motion planning under bounded
  disturbances}.
\newblock In \emph{Robotics and Automation (ICRA), 2011 IEEE International
  Conference on}, pages 2160--2165. IEEE, 2011.

\bibitem[Elbanhawi and Simic(2014)]{elbanhawi2014sampling}
Mohamed Elbanhawi and Milan Simic.
\newblock \href{https://ieeexplore.ieee.org/document/6722915}{Sampling-based
  robot motion planning: A review}.
\newblock \emph{IEEE Access}, 2:\penalty0 56--77, 2014.

\bibitem[Falcone et~al.(2007)Falcone, Borrelli, Asgari, Tseng, and
  Hrovat]{falcone2007predictive}
Paolo Falcone, Francesco Borrelli, Jahan Asgari, Hongtei~Eric Tseng, and Davor
  Hrovat.
\newblock
  \href{https://ieeexplore.ieee.org/stamp/stamp.jsp?arnumber=4162483}{Predictive
  active steering control for autonomous vehicle systems}.
\newblock \emph{IEEE Transactions on control systems technology}, 15\penalty0
  (3):\penalty0 566--580, 2007.

\bibitem[Falcone et~al.(2008)Falcone, Borrelli, Asgari, Tseng, and
  Hrovat]{falcone2008low}
Paolo Falcone, Francesco Borrelli, J~Asgari, HE~Tseng, and Davor Hrovat.
\newblock
  \href{https://pdfs.semanticscholar.org/3b86/c4127e7a78e3e8d1d0dde94ba87328275ee0.pdf}{Low
  complexity MPC schemes for integrated vehicle dynamics control problems},
  2008.

\bibitem[Fogel et~al.(2012)Fogel, Halperin, and Wein]{minkowski_sum_fogel}
Efi Fogel, Dan Halperin, and Ron Wein.
\newblock
  \emph{\href{https://link.springer.com/chapter/10.1007/978-3-642-17283-0_9}{Minkowksi
  Sums and Offset Polygons}}.
\newblock Springer Berlin Heidelberg, Berlin, Heidelberg, 2012.
\newblock ISBN 978-3-642-17283-0.
\newblock \doi{10.1007/978-3-642-17283-0_9}.

\bibitem[Frasch et~al.(2013)Frasch, Gray, Zanon, Ferreau, Sager, Borrelli, and
  Diehl]{Frash2013_ACADO_MPC}
J.~V. Frasch, A.~Gray, M.~Zanon, H.~J. Ferreau, S.~Sager, F.~Borrelli, and
  M.~Diehl.
\newblock \href{https://ieeexplore.ieee.org/document/6669836}{An auto-generated
  nonlinear MPC algorithm for real-time obstacle avoidance of ground vehicles}.
\newblock In \emph{2013 European Control Conference (ECC)}, pages 4136--4141,
  July 2013.

\bibitem[Freeman and Shapira(1975)]{rectangle_bound_curve}
H.~Freeman and R.~Shapira.
\newblock \href{https://dl.acm.org/doi/10.1145/360881.360919}{Determining the
  Minimum-area Encasing Rectangle for an Arbitrary Closed Curve}.
\newblock \emph{Commun. ACM}, 18\penalty0 (7):\penalty0 409--413, July 1975.
\newblock ISSN 0001-0782.
\newblock \doi{10.1145/360881.360919}.

\bibitem[Fridovich-Keil et~al.(2019)Fridovich-Keil, Fisac, and
  Tomlin]{fridovich2019safely}
David Fridovich-Keil, Jaime~F Fisac, and Claire~J Tomlin.
\newblock \href{https://ieeexplore.ieee.org/abstract/document/8793905}{Safely
  probabilistically complete real-time planning and exploration in unknown
  environments}.
\newblock In \emph{2019 International Conference on Robotics and Automation
  (ICRA)}, pages 7470--7476. IEEE, 2019.

\bibitem[Gao et~al.(2014{\natexlab{a}})Gao, Gray, Carvalho, Tseng, and
  Borrelli]{Gao2014_robustMPC}
Y.~Gao, A.~Gray, A.~Carvalho, H.~E. Tseng, and F.~Borrelli.
\newblock \href{https://ieeexplore.ieee.org/document/6859253}{Robust nonlinear
  predictive control for semiautonomous ground vehicles}.
\newblock In \emph{2014 American Control Conference}, pages 4913--4918, June
  2014{\natexlab{a}}.
\newblock \doi{10.1109/ACC.2014.6859253}.

\bibitem[Gao et~al.(2014{\natexlab{b}})Gao, Gray, Tseng, and
  Borrelli]{gao2014tube}
Yiqi Gao, Andrew Gray, H~Eric Tseng, and Francesco Borrelli.
\newblock
  \href{https://ui.adsabs.harvard.edu/abs/2014VSD....52..802G/abstract}{A
  tube-based robust nonlinear predictive control approach to semiautonomous
  ground vehicles}.
\newblock \emph{Vehicle System Dynamics}, 52\penalty0 (6):\penalty0 802--823,
  2014{\natexlab{b}}.

\bibitem[Gonz{\'a}lez et~al.(2016)Gonz{\'a}lez, P{\'e}rez, Milan{\'e}s, and
  Nashashibi]{gonzalez2016review}
David Gonz{\'a}lez, Joshu{\'e} P{\'e}rez, Vicente Milan{\'e}s, and Fawzi
  Nashashibi.
\newblock \href{https://ieeexplore.ieee.org/document/7339478}{A review of
  motion planning techniques for automated vehicles}.
\newblock \emph{IEEE Transactions on Intelligent Transportation Systems},
  17\penalty0 (4):\penalty0 1135--1145, 2016.

\bibitem[Gray et~al.(2012)Gray, Gao, Lin, Hedrick, Tseng, and
  Borrelli]{gray2012predictive}
Andrew Gray, Yiqi Gao, Theresa Lin, J~Karl Hedrick, H~Eric Tseng, and Francesco
  Borrelli.
\newblock \href{https://ieeexplore.ieee.org/document/6315303}{Predictive
  control for agile semi-autonomous ground vehicles using motion primitives}.
\newblock In \emph{American Control Conference (ACC), 2012}, pages 4239--4244.
  IEEE, 2012.

\bibitem[Herbert et~al.(2017)Herbert, Chen, Han, Bansal, Fisac, and
  Tomlin]{herbert2017fastrack}
Sylvia~L Herbert, Mo~Chen, SooJean Han, Somil Bansal, Jaime~F Fisac, and
  Claire~J Tomlin.
\newblock
  \href{https://ieeexplore.ieee.org/stamp/stamp.jsp?arnumber=8263867}{FaSTrack:
  a Modular Framework for Fast and Guaranteed Safe Motion Planning}.
\newblock \emph{IEEE Conference on Decision and Control (submitted)}, 2017.

\bibitem[Hess et~al.(2016)Hess, Kohler, Rapp, and Andor]{google_cartographer}
Wolfgang Hess, Damon Kohler, Holger Rapp, and Daniel Andor.
\newblock \href{https://ieeexplore.ieee.org/document/7487258}{Real-Time Loop
  Closure in 2D LIDAR SLAM}.
\newblock In \emph{2016 IEEE International Conference on Robotics and
  Automation (ICRA)}, pages 1271--1278, 2016.
\newblock URL \url{https://research.google.com/pubs/pub45466.html}.

\bibitem[Howard and Kelly(2007)]{howard2007roughterrainplanning}
Thomas~M. Howard and Alonzo Kelly.
\newblock
  \href{https://journals.sagepub.com/doi/10.1177/0278364906075328}{Optimal
  Rough Terrain Trajectory Generation for Wheeled Mobile Robots}.
\newblock \emph{The International Journal of Robotics Research}, 26\penalty0
  (2):\penalty0 141--166, 2007.
\newblock \doi{10.1177/0278364906075328}.

\bibitem[Janson et~al.(2015)Janson, Schmerling, Clark, and
  Pavone]{janson2015fast}
Lucas Janson, Edward Schmerling, Ashley Clark, and Marco Pavone.
\newblock
  \href{https://journals.sagepub.com/doi/full/10.1177/0278364915577958}{Fast
  marching tree: A fast marching sampling-based method for optimal motion
  planning in many dimensions}.
\newblock \emph{The International journal of robotics research}, 34\penalty0
  (7):\penalty0 883--921, 2015.

\bibitem[Johnson-Roberson et~al.(2016)Johnson-Roberson, Barto, Mehta, Sridhar,
  and Vasudevan]{johnson2016driving}
Matthew Johnson-Roberson, Charles Barto, Rounak Mehta, Sharath~Nittur Sridhar,
  and Ram Vasudevan.
\newblock
  \href{https://ieeexplore.ieee.org/stamp/stamp.jsp?arnumber=7989092}{Driving
  in the Matrix: Can Virtual Worlds Replace Human-Generated Annotations for
  Real World Tasks?}
\newblock \emph{arXiv preprint arXiv:1610.01983}, 2016.

\bibitem[Karaman and Frazzoli(2011)]{karaman2011sampling}
Sertac Karaman and Emilio Frazzoli.
\newblock
  \href{https://journals.sagepub.com/doi/abs/10.1177/0278364911406761}{Sampling-based
  algorithms for optimal motion planning}.
\newblock \emph{The international journal of robotics research}, 30\penalty0
  (7):\penalty0 846--894, 2011.

\bibitem[Kavraki et~al.(1996)Kavraki, Svestka, Latombe, and
  Overmars]{kavraki1996probabilistic}
Lydia~E Kavraki, Petr Svestka, J-C Latombe, and Mark~H Overmars.
\newblock
  \href{https://ieeexplore.ieee.org/abstract/document/508439}{Probabilistic
  roadmaps for path planning in high-dimensional configuration spaces}.
\newblock \emph{IEEE transactions on Robotics and Automation}, 12\penalty0
  (4):\penalty0 566--580, 1996.

\bibitem[Kousik et~al.(2017)Kousik, Vaskov, Johnson-Roberson, and
  Vasudevan]{kousik2017safe}
Shreyas Kousik, Sean Vaskov, Matthew Johnson-Roberson, and Ram Vasudevan.
\newblock \href{https://arxiv.org/abs/1705.00091}{Safe Trajectory Synthesis for
  Autonomous Driving in Unforeseen Environments}.
\newblock In \emph{ASME 2017 Dynamic Systems and Control Conference}, pages
  V001T44A005--V001T44A005. American Society of Mechanical Engineers, 2017.

\bibitem[Kuwata et~al.(2009)Kuwata, Teo, Fiore, Karaman, Frazzoli, and
  How]{kuwata2009rrt}
Yoshiaki Kuwata, Justin Teo, Gaston Fiore, Sertac Karaman, Emilio Frazzoli, and
  Jonathan~P How.
\newblock \href{http://ieeexplore.ieee.org/document/5175292/}{Real-time motion
  planning with applications to autonomous urban driving}.
\newblock \emph{IEEE Transactions on Control Systems Technology}, 17\penalty0
  (5):\penalty0 1105--1118, 2009.

\bibitem[Lasserre(2009)]{lasserre2009moments}
Jean~Bernard Lasserre.
\newblock
  \href{https://www.worldscientific.com/worldscibooks/10.1142/p665}{Moments,
  positive polynomials and their applications}, 2009.

\bibitem[LaValle(2006)]{lavalle_textbook}
Steven~M. LaValle.
\newblock \emph{\href{http://planning.cs.uiuc.edu/}{Planning Algorithms}}.
\newblock Cambridge University Press, New York, NY, USA, 2006.
\newblock ISBN 0521862051.

\bibitem[LaValle and Kuffner~Jr.(2001)]{lavalle2001randomized}
Steven~M LaValle and James~J Kuffner~Jr.
\newblock
  \href{https://journals.sagepub.com/doi/abs/10.1177/02783640122067453}{Randomized
  kinodynamic planning}.
\newblock \emph{The International Journal of Robotics Research}, 20\penalty0
  (5):\penalty0 378--400, 2001.

\bibitem[Liniger and Lygeros(2017)]{liniger2017real}
Alexander Liniger and John Lygeros.
\newblock
  \href{https://ieeexplore.ieee.org/stamp/stamp.jsp?arnumber=8167318}{Real-Time
  Control for Autonomous Racing Based on Viability Theory}.
\newblock \emph{arXiv preprint arXiv:1701.08735}, 2017.

\bibitem[Liu et~al.(2016)Liu, Anguelov, Erhan, Szegedy, Reed, Fu, and
  Berg]{liu2016ssd}
Wei Liu, Dragomir Anguelov, Dumitru Erhan, Christian Szegedy, Scott Reed,
  Cheng-Yang Fu, and Alexander~C Berg.
\newblock \href{https://arxiv.org/abs/1512.02325}{SSD: Single shot multibox
  detector}.
\newblock In \emph{European Conference on Computer Vision}, pages 21--37.
  Springer, 2016.

\bibitem[Luders et~al.(2010)Luders, Kothari, and
  How]{Luders2010_RRTchanceconstraints}
Brandon Luders, Mangal Kothari, and Jonathan How.
\newblock \href{https://doi.org/10.2514/6.2010-8160}{Chance Constrained RRT for
  Probabilistic Robustness to Environmental Uncertainty}, Aug 2010.
\newblock 0.

\bibitem[Majumdar and Tedrake(2017)]{majumdar2016funnel}
Anirudha Majumdar and Russ Tedrake.
\newblock
  \href{https://journals.sagepub.com/doi/full/10.1177/0278364917712421#articleCitationDownloadContainer}{Funnel
  libraries for real-time robust feedback motion planning}.
\newblock \emph{The International Journal of Robotics Research}, 36\penalty0
  (8):\penalty0 947--982, 2017.

\bibitem[Majumdar et~al.(2014)Majumdar, Vasudevan, Tobenkin, and
  Tedrake]{majumdar2014convex}
Anirudha Majumdar, Ram Vasudevan, Mark~M Tobenkin, and Russ Tedrake.
\newblock
  \href{https://journals.sagepub.com/doi/full/10.1177/0278364914528059}{Convex
  optimization of nonlinear feedback controllers via occupation measures}.
\newblock \emph{The International Journal of Robotics Research}, 33\penalty0
  (9):\penalty0 1209--1230, 2014.

\bibitem[McNaughton et~al.(2011)McNaughton, Urmson, Dolan, and
  Lee]{mcnaughton2011motion}
Matthew McNaughton, Chris Urmson, John~M Dolan, and Jin-Woo Lee.
\newblock \href{https://ieeexplore.ieee.org/document/5980223}{Motion planning
  for autonomous driving with a conformal spatiotemporal lattice}.
\newblock In \emph{2011 IEEE International Conference on Robotics and
  Automation}, pages 4889--4895. IEEE, 2011.

\bibitem[Mitchell et~al.(2005)Mitchell, Bayen, and Tomlin]{mitchell2005time}
Ian~M Mitchell, Alexandre~M Bayen, and Claire~J Tomlin.
\newblock \href{https://ieeexplore.ieee.org/document/1463302}{A time-dependent
  Hamilton-Jacobi formulation of reachable sets for continuous dynamic games}.
\newblock \emph{IEEE Transactions on automatic control}, 50\penalty0
  (7):\penalty0 947--957, 2005.

\bibitem[{Mosek ApS}(2010)]{mosek2010mosek}
{Mosek ApS}.
\newblock \href{http://www. mosek. com}{The MOSEK optimization software}, 2010.

\bibitem[Munkres(2000)]{Munkres2000}
James Munkres.
\newblock \emph{\href{http://www.worldcat.org/isbn/0131816292}{Topology (2nd
  Edition)}}.
\newblock Pearson, 2 edition, January 2000.
\newblock ISBN 0131816292.

\bibitem[Nagurnas et~al.(2007)Nagurnas, Mitunevicius, Unarski, and
  Wach]{brakingdistance}
Saulius Nagurnas, Valentinas Mitunevicius, Jan Unarski, and Wojciech Wach.
\newblock \href{
  https://www.tandfonline.com/doi/abs/10.1080/16484142.2007.9638146}{Evaluation
  of veracity of car braking parameters used for the analysis of road
  accidents}.
\newblock \emph{Transport}, 22\penalty0 (4):\penalty0 307--311, 2007.
\newblock \doi{10.1080/16484142.2007.9638146}.

\bibitem[Nocedal and Wright(2006)]{nocedal2006numericaloptimization}
J.~Nocedal and S.~Wright.
\newblock \emph{\href{https://books.google.com/books?id=7wDpBwAAQBAJ}{Numerical
  Optimization}}.
\newblock Springer Series in Operations Research and Financial Engineering.
  Springer New York, 2006.
\newblock ISBN 9780387227429.

\bibitem[Palmieri et~al.(2016)Palmieri, Koenig, and Arras]{theta_star_rrt}
L.~Palmieri, S.~Koenig, and K.~O. Arras.
\newblock \href{10.1109/ICRA.2016.7487439}{RRT-based nonholonomic motion
  planning using any-angle path biasing}.
\newblock In \emph{2016 IEEE International Conference on Robotics and
  Automation (ICRA)}, pages 2775--2781, May 2016.
\newblock \doi{10.1109/ICRA.2016.7487439}.

\bibitem[Parrilo(2000)]{parrilo2000structured}
Pablo~A Parrilo.
\newblock
  \href{https://www.semanticscholar.org/paper/Structured-semidefinite-programs-and-semialgebraic-Parrilo/aeb035ba07a7f47a03fdd038303d5cf9e40690f4}{Structured
  semidefinite programs and semialgebraic geometry methods in robustness and
  optimization}, 2000.

\bibitem[Patterson and Rao(2014)]{gpopsii}
Michael~A. Patterson and Anil~V. Rao.
\newblock \href{http://doi.acm.org/10.1145/2558904}{GPOPS-II: A MATLAB Software
  for Solving Multiple-Phase Optimal Control Problems Using hp-Adaptive
  Gaussian Quadrature Collocation Methods and Sparse Nonlinear Programming}.
\newblock \emph{ACM Trans. Math. Softw.}, 41\penalty0 (1):\penalty0 1:1--1:37,
  October 2014.
\newblock ISSN 0098-3500.
\newblock \doi{10.1145/2558904}.

\bibitem[Pepy et~al.(2006)Pepy, Lambert, and Mounier]{unicycle_rrt}
R.~Pepy, A.~Lambert, and H.~Mounier.
\newblock \href{https://ieeexplore.ieee.org/document/1684472}{Path Planning
  using a Dynamic Vehicle Model}.
\newblock In \emph{2006 2nd International Conference on Information
  Communication Technologies}, volume~1, pages 781--786, 2006.
\newblock \doi{10.1109/ICTTA.2006.1684472}.

\bibitem[Rajamani(2011)]{rajamani2011vehicle}
Rajesh Rajamani.
\newblock
  \emph{\href{https://link.springer.com/book/10.1007/978-1-4614-1433-9}{Vehicle
  dynamics and control}}.
\newblock Springer Science \& Business Media, 2011.

\bibitem[Rudin(1976)]{rudin1976principles}
W.~Rudin.
\newblock
  \emph{\href{https://books.google.com/books?id=kwqzPAAACAAJ}{Principles of
  Mathematical Analysis}}.
\newblock International series in pure and applied mathematics. McGraw-Hill,
  1976.
\newblock ISBN 9780070856134.

\bibitem[Strang(1982)]{width_of_a_chair}
Gilbert Strang.
\newblock \href{https://doi.org/10.1080/00029890.1982.11995491}{The Width of a
  Chair}.
\newblock \emph{The American Mathematical Monthly}, 89\penalty0 (8):\penalty0
  529--534, 1982.
\newblock \doi{10.1080/00029890.1982.11995491}.

\bibitem[Tobenkin et~al.(2013)Tobenkin, Permenter, and
  Megretski]{tobenkin2013spotless}
Mark~M Tobenkin, Frank Permenter, and Alexandre Megretski.
\newblock \href{https://github.com/spot-toolbox/spotless}{Spotless polynomial
  and conic optimization}, 2013.

\bibitem[Urmson et~al.(2008)Urmson, Anhalt, Bagnell, Baker, Bittner, Clark,
  Dolan, Duggins, Galatali, Geyer, Gittleman, Harbaugh, Hebert, Howard, Kolski,
  Kelly, Likhachev, McNaughton, Miller, Peterson, Pilnick, Rajkumar, Rybski,
  Salesky, Seo, Singh, Snider, Stentz, Whittaker, Wolkowicki, Ziglar, Bae,
  Brown, Demitrish, Litkouhi, Nickolaou, Sadekar, Zhang, Struble, Taylor,
  Darms, and Ferguson]{boss2008urbanchallenge}
Chris Urmson, Joshua Anhalt, Drew Bagnell, Christopher Baker, Robert Bittner,
  M.~N. Clark, John Dolan, Dave Duggins, Tugrul Galatali, Chris Geyer, Michele
  Gittleman, Sam Harbaugh, Martial Hebert, Thomas~M. Howard, Sascha Kolski,
  Alonzo Kelly, Maxim Likhachev, Matt McNaughton, Nick Miller, Kevin Peterson,
  Brian Pilnick, Raj Rajkumar, Paul Rybski, Bryan Salesky, Young-Woo Seo,
  Sanjiv Singh, Jarrod Snider, Anthony Stentz, William Whittaker, Ziv
  Wolkowicki, Jason Ziglar, Hong Bae, Thomas Brown, Daniel Demitrish, Bakhtiar
  Litkouhi, Jim Nickolaou, Varsha Sadekar, Wende Zhang, Joshua Struble, Michael
  Taylor, Michael Darms, and Dave Ferguson.
\newblock
  \href{https://onlinelibrary.wiley.com/doi/abs/10.1002/rob.20255}{Autonomous
  driving in urban environments: Boss and the Urban Challenge}.
\newblock \emph{Journal of Field Robotics}, 25\penalty0 (8):\penalty0 425--466,
  July 2008.
\newblock \doi{10.1002/rob.20255}.

\bibitem[Vandiver and Gossard(2011)]{dynamics_MIT_OCW}
J.~Vandiver and D.~Gossard.
\newblock
  \href{https://ocw.mit.edu/courses/mechanical-engineering/2-003sc-engineering-dynamics-fall-2011/}{2.003SC
  Engineering Dynamics}, 2011.
\newblock Accessed: 2018-08-20.

\bibitem[Wilson et~al.(2015)Wilson, Schultz, Ansari, and
  Murphey]{wilson2015SAC}
A.~D. Wilson, J.~A. Schultz, A.~R. Ansari, and T.~D. Murphey.
\newblock
  \href{https://ieeexplore.ieee.org/abstract/document/7354071}{Real-time
  trajectory synthesis for information maximization using Sequential Action
  Control and least-squares estimation}.
\newblock In \emph{2015 IEEE/RSJ International Conference on Intelligent Robots
  and Systems (IROS)}, pages 4935--4940, Sept 2015.
\newblock \doi{10.1109/IROS.2015.7354071}.

\bibitem[Wurts et~al.(2018)Wurts, Stein, and Ersal]{wurts2018collision}
John Wurts, Jeffrey~L Stein, and Tulga Ersal.
\newblock
  \href{https://ieeexplore.ieee.org/abstract/document/8431536}{Collision
  imminent steering using nonlinear model predictive control}.
\newblock In \emph{2018 Annual American Control Conference (ACC)}, pages
  4772--4777. IEEE, 2018.

\end{thebibliography}

\begin{appendices}
\section{Reachability Analysis Proofs}\label{app:reachability_proofs}

In this appendix, we prove Lemma \ref{lem:v_is_negative} (Section \ref{sec:FRSmethod}) and Theorem \ref{thm:compute_FRS} (Section \ref{sec:system_decomp}).
We restate the lemma and theorem for ease of reading.

\begin{customlem}{\ref{lem:v_is_negative}}
If $(v,w,q)$ satisfies the constraints in \eqref{prog:sos_frs_inf_dim}, then $v$ is non-positive and decreasing along trajectories of the trajectory-tracking system \eqref{eq:traj_tracking_model}.
In other words, let $\z\in Z$ and $x=\proj_X(\z)$; then $(x,k) \in \X\frs$ implies that $\exists~t \in [0,T]$ such that $v(t,\z,k) \leq 0$.
\end{customlem}
\begin{proof}
Notice that $v(0,\z_0,k) \leq 0$ for all $\z_0 \in Z_0$ and $k \in K$ by $(D5)$.
So, for any $\tau \in [0,T]$, $k \in K$, and $d \in L_d$, we have:
\begin{align}\begin{split}
v(\tau,\z(\tau),k) =~&v(0,\z(0),k) + \smallint\limits_0^{\tau} \left(\Lf v(t,\z(t),k)\right)dt~+ \\
&+ \smallint\limits_0^{\tau} \left(\Lg v(t,\z(t),k)\circ d(t) \right) dt \label{eq:v_fund_thm_of_calc}\end{split} \\
\begin{split}\leq~&v(0,\z(0),k) + \smallint_0^{\tau} \left( \Lf v(t,\z(t),k)\right)dt~+ \\
&+ \smallint_0^{\tau} q(t,\z(t),k) dt \label{eq:v_lt_lfv_plus_q}\end{split} \\
\leq~&v(0,\z(0),k), \label{eq:v_lt_v0} &
\end{align}
where \eqref{eq:v_fund_thm_of_calc} follows from the Fundamental Theorem of Calculus; \eqref{eq:v_lt_lfv_plus_q} follows from $(D2)$ and $(D3)$; and \eqref{eq:v_lt_v0} follows from $(D1)$.
\end{proof}

\begin{customthm}{\ref{thm:compute_FRS}}
Let $w_r$ be a feasible solution to $(R)$.
Then $\X\frs$ is a subset of the $1$-superlevel set of $w_r$.
\end{customthm}
\begin{proof}
Let $\z_0 \in Z_0$, $k \in K$, and $d \in L_d$ be arbitrary such that $\z: [0,T] \to Z$ is a trajectory of the full system \eqref{eq:full_sys_SCS}.
Let $x=\idx(\z)$ and $X=\idx(Z)$.
Let $\z_1(t) = \proj_{Z_1}(\z(t))$ give the corresponding trajectory of subsystem 1, and similarly let $\z_2$ give the trajectory of subsystem 2.
By Lemma \ref{lem:projection_chen}, since the full system \eqref{eq:full_sys_SCS} is decomposable, $\z(t) \in \proj\inv(\z_1(t)) \cap \proj\inv(\z_2(t))$.
Recall that $(v_1,w_1,q_1)$ is a feasible solution to $(D_1)$, which denotes $(D)$ solved with the dynamics of subsystem 1.
By Lemma \ref{lem:v_is_negative}, $v_1(t,\z_1(t),k)$ is non-positive and decreasing along the trajectory $\z_1(t)$ for every $t \in [0,T]$, and similarly $v_2(t,\z_2(t),k) \leq 0$ for $\z_2(t)$.
The set $\mathcal{V}$ \eqref{eq:v_backproj_intersection} contains $(x,k)$ in $X\times K$, such that $v_1(t,\z_1,k)\leq 0$, $v_2(t,\z_2,k)\leq 0$ and $t\in [0,T]$.
Constraint $(R1)$ requires that $w_r(x,k)\geq 1$ if $(x,k)\in \mathcal{V}$.
Since $\z_0$, $k$, and $d$ were arbitrary, the proof is complete.
\end{proof}

\section{Conditions for Persistent Feasibility}\label{app:pers_feas}

In this appendix, we provide conditions to ensure that, at any speed, the robot's planned trajectory \eqref{eq:traj-producing_model} is spatially longer than the corresponding braking trajectory, i.e. the robot achieves a larger displacement in $X$ when not braking as opposed to braking.
This is because, if we know that a non-braking trajectory is safe over its entire distance, then Assumption \ref{ass:brake_in_pi_X}, that the robot can stop safely in its direction of travel, is plausible.
To do this, we first prove Theorem \ref{thm:D_sense}, which ensures that the robot is able to sense obstacles that could cause a collision during any plan.
We then state the minimum planning time horizon with Remark \ref{rem:planning_time_horizon_min}.

\subsection{Proof of Theorem \ref{thm:D_sense}}

\begin{customthm}{\ref{thm:D_sense}}
Let $X\obs \subset X$ be a set of obstacles as in Definition \ref{def:obs}.
Let $v_\regtext{max}$ be the robot's maximum speed as in Assumption \ref{ass:max_speed_and_yaw_rate}.
Let $\tau\plan$ be the planning time as in Assumption \ref{ass:tau_plan}.
Suppose that $T$ is large enough that Assumption \ref{ass:brake_in_pi_X} holds; so, for any $\z\hio \in Z\hio$ and any $k \in K$, the spatial component of the robot's braking trajectory lies within $\pi_X(k)$.
At time $0$, suppose that the robot has a safe plan $k_0 \in K$ (as in Definition \ref{def:safe_plan}).
Recall that $\vep_x$ and $\vep_y$ are the robot's maximum state estimation error in the $x$ and $y$ coordinates of $X$ as in Assumption \ref{ass:predict}, and let $\vep = \sqrt{\vep_x^2 + \vep_y^2}$.
Suppose the sensor horizon $D\sense$ obeys Assumption \ref{ass:sense} and satisfies
\begin{align}
D\sense & \geq (T + \tau\plan)\cdot\vmax + 2\vep.\tag{\ref{eq:min_sensor_horizon}}
\end{align}
Then, the robot can find either find a new safe plan every $\tau\plan$ seconds, or can brake safely if no new safe plan is found.
\end{customthm}
\begin{proof}
In this proof, we check that the robot can brake within any safe plan, and that it can sense obstacles far away enough to generate safe plans.

First, we check that the robot can begin braking safely at any time $t = j\cdot\tau\plan$ where $j \in \N$.
Recall that the robot replans over each time horizon $[j\cdot\tau\plan ,(j+1)\cdot\tau\plan]$, so it will either have a new safe plan $k_j$ or will begin braking at each $t = j\cdot\tau\plan$.
We know that the robot is safe over $t \in [0,T]$ by the premises, and it can brake safely (i.e., within $\pi_X(k_0)$ by Assumption \ref{ass:brake_in_pi_X}).
Similarly, if the robot has a safe plan of duration $T$ at $t = j\cdot\tau\plan$, then the robot can still brake safely if a new safe plan cannot be found before $(j+1)\cdot\tau\plan$.

Now, we check that the sensor horizon in \eqref{eq:min_sensor_horizon} is large enough for the robot to sense all possible obstacles that are reachable at each $t = j\cdot\tau\plan$ despite state estimation error.
Recall that the obstacles are static by Definition \ref{def:obs} and that, by Assumption \ref{ass:predict}, at any time $t$, the robot can predict its future position at $t + \tau\plan$ to within a box of size $\vep_x\times\vep_y$.
Also recall that, by Assumption \ref{ass:obs_error_buffer}, a sensed obstacle $X\sense$ is expanded as in \eqref{eq:Xobs_expanded} to the set $X\obs$ to compensate for state estimation error.
So, at each time $t = j\cdot\tau\plan$ the robot must plan with respect to all obstacles that are reachable within the time horizon $T$ from the robot's future position at $t = (j+1)\cdot\tau\plan$.
This means that a safe plan $k_{j+1}$ found over the time horizon $[j\cdot\tau\plan ,(j+1)\cdot\tau\plan]$ must avoid all obstacles within the distance $D_T = T\cdot\vmax + \vep$ of the robot's future position at $t = (j+1)\cdot\tau\plan$.
Notice that the maximum possible distance between the robot's position at $j\cdot\tau\plan $ and at $t = (j+1)\cdot\tau\plan$ is $D\plan = \tau\plan\cdot\vmax + \vep$.
Therefore, at each time $t = j\cdot\tau\plan $, the robot must sense all obstacles that are within the distance $D\plan + D_T = \tau\plan\cdot\vmax + T\cdot\vmax + 2\vep$.
Since $D\sense \geq (T + \tau\plan)\cdot\vmax + 2\vep$ and the robot senses all obstacles within $D\sense$ at $t = j\cdot\tau\plan$, we are done.
\end{proof}

\subsection{Choosing the Time Horizon}

To choose the time horizon $T$, we begin by defining the braking trajectory with controller $u\brk$ from \eqref{eq:ubrk_braking_controller}:
\begin{align}
    \z\brk(t;\z\hio,k) = \z\hio + \int_0^{\tau'} f\hi(\tau,\z\brk(\tau),u\brk(\tau))d\tau.\label{eq:braking_traj}
\end{align}
where $\tau' = \tau\plan + \tau\brk(\z\hio,k)$.
Recall that, by Assumption \ref{ass:brake_ctrl_and_traj}, for every $\z\hio$ and $k$, there exists a finite \defemph{braking distance} $D\brk: Z\hio\times K \to \R_{\geq 0}$ given by
\begin{align}
    D\brk(\z\hio,k) = \int_{\tau\plan}^{\tau'} \norm{\idx\left(f\hi(\tau,\z\brk,u\brk)\right)}_2 d\tau,\label{eq:D_brake}
\end{align}
where again $\tau' = \tau\plan + \tau\brk(\z\hio,k)$.
Equation \eqref{eq:D_brake} follows from the formula for length along a differentiable parametric curve \citep[Theorem 6.27]{rudin1976principles}.
Recall that the robot begins braking at $t = \tau\plan$ while tracking $k$ from initial condition $\z\hio$.
Since $Z\hio$ and $K$ are compact, there exists a \defemph{maximum braking distance}:
\begin{align}
    D\stp =  \max_{\z\hio,\ k}~D\brk(\z\hio,k). \label{eq:D_stop}
\end{align}
where where $\z\hio \in Z\hio$ and $k \in K$.
Recall from Assumption \ref{ass:max_speed_and_yaw_rate} that the robot's high-fidelity model \eqref{eq:high-fidelity_model} has a state that tracks its speed in the subspace $X$, and a max speed $\vmax$.
Let $D\brkmax: [0,\vmax] \to \R_{\geq 0}$ be the maximum braking distance at a particular speed:
\begin{align}
    D\brkmax(v) = \sup_{\z\hio,\ k} \left\{D\brk(\z\hio,k)~\mid~\proj_V(\z\hio) = v\right\},
\end{align}
where $\z\hio \in Z\hio$, $k \in K$, and $\proj_V$ returns the value of speed state in $\z\hio$.
The maximum is achieved for each $v$ because $\proj_V$ is continuous (see Definition \ref{def:projection_operators}), so the preimage $\proj_V\inv\left(\{v\}\right)$ is a closed subset of $Z\hio$, and therefore compact \citep[Theorem 26.2]{Munkres2000}.
We now relate $D\brkmax$ to the vehicle speed to formalize the idea that, as the robot travels faster, its maximum braking distance increases.

\begin{assum}\label{ass:brake_dist_grows_sublinearly}
The maximum braking distance at any speed is upper bounded by a linear function of speed:
\begin{align}
    D\brkmax(v) \quad\leq\quad C\stp \cdot v,\label{eq:braking_dist_less_than_linear_fcn}
\end{align}
where $C\stp \in \R_{\geq0}$.
\end{assum}

\noindent 
To see why $D\brkmax$ can be upper-bounded by a linear function of speed, consider the following example.
For automobiles, the maximum braking distance is proportional to the kinetic energy of the vehicle, which is proportional to the square of the vehicle's speed \citep{brakingdistance}.

Figure \ref{fig:brk_sublinearly} shows that this relationship holds for the high-fidelity models of the Segway and Rover robots described in Section \ref{sec:application}.

\begin{figure}
    \centering
    \begin{subfigure}[t]{0.48\textwidth}
        \centering
        \includegraphics[width=0.95\textwidth]{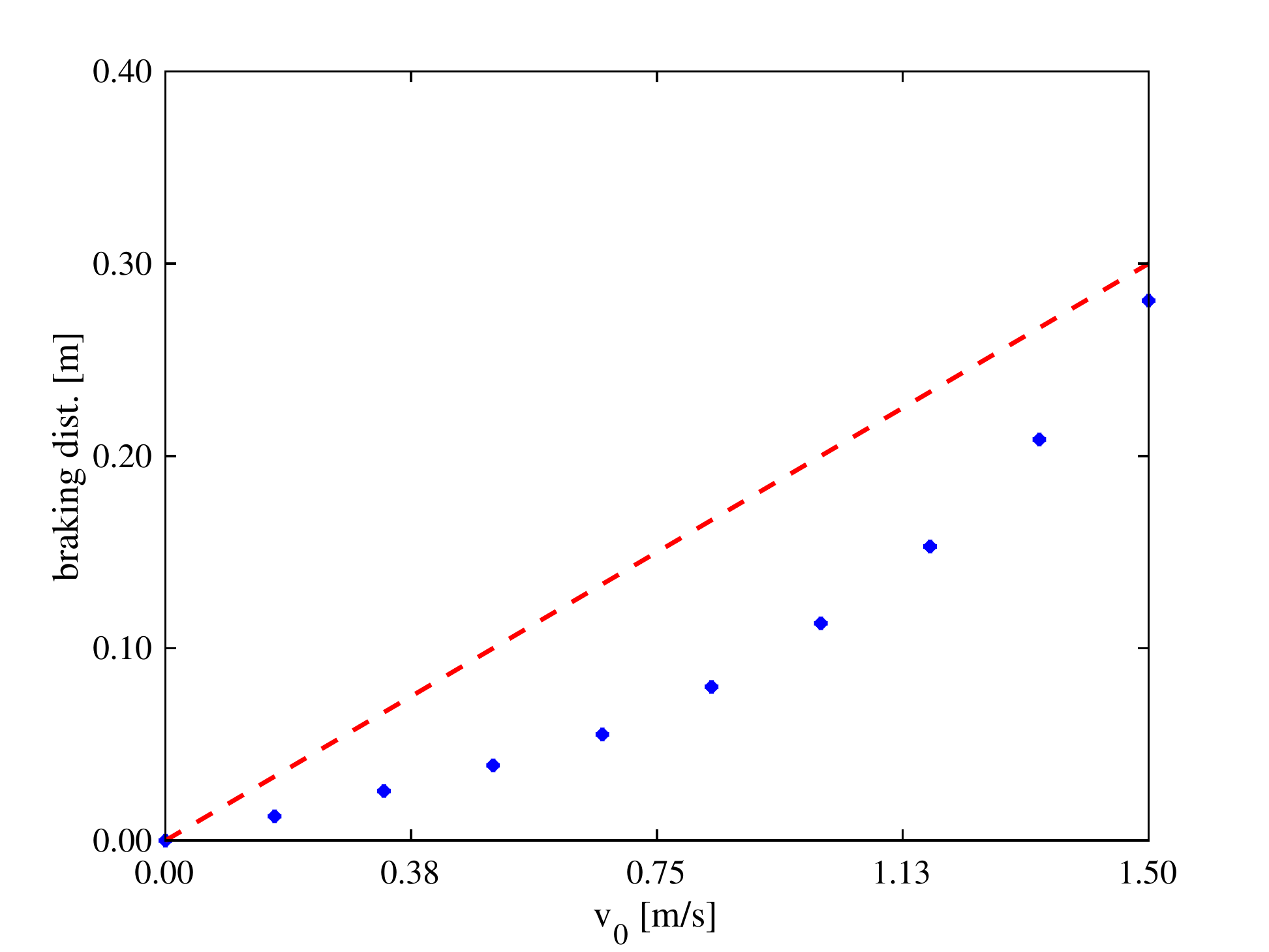}
        \caption{\centering}
        \label{subfig:segway_braking_tau_v}
    \end{subfigure}
    \begin{subfigure}[t]{0.48\textwidth}
        \centering
        \includegraphics[width=0.95\textwidth]{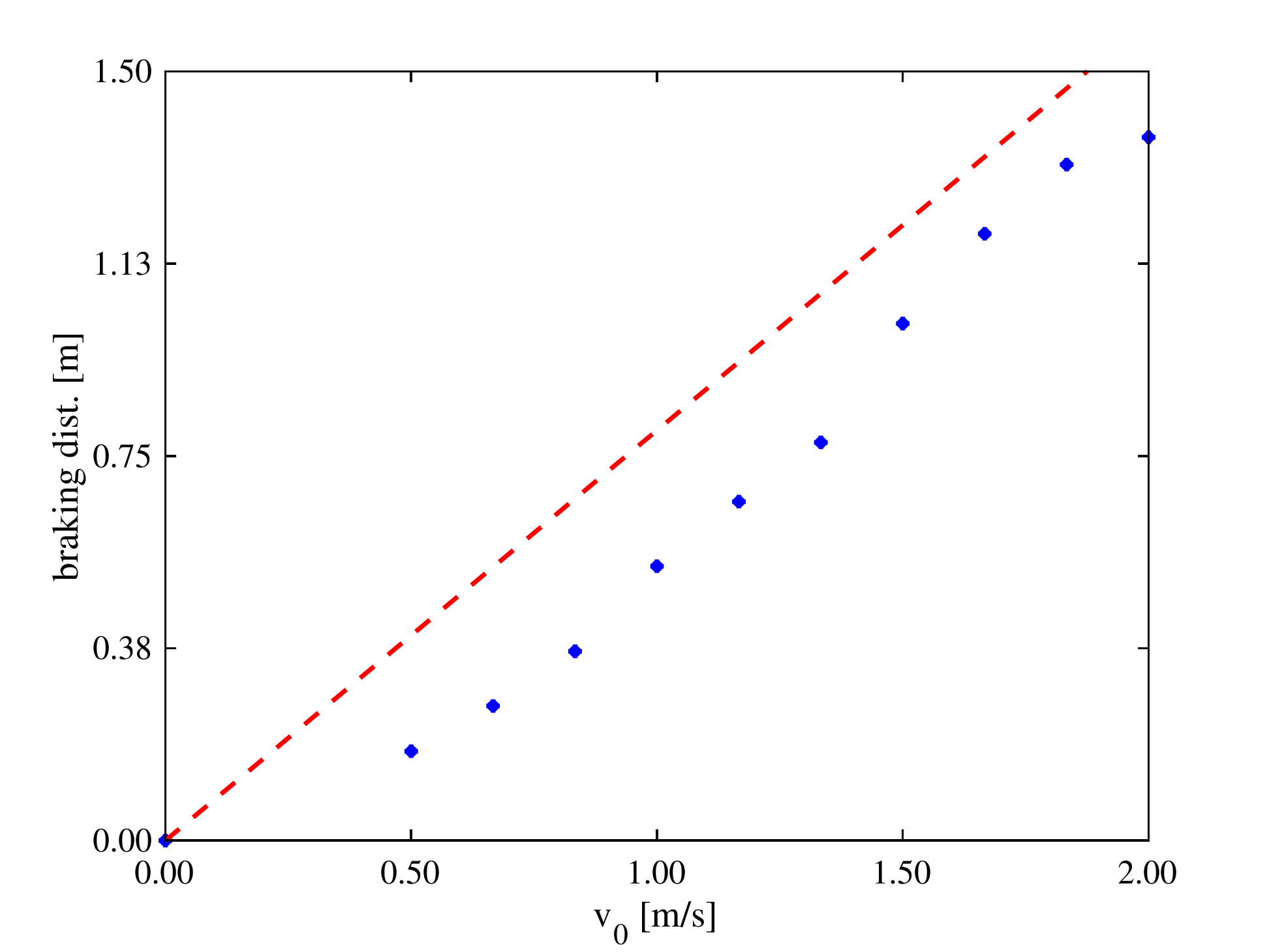}
        \caption{\centering}
        \label{subfig:rover_braking_tau_v}
    \end{subfigure}
    \caption{
    Braking distance of the high fidelity models for the Segway shown in Figure \ref{subfig:segway_braking_tau_v}, and the Rover in Figure \ref{subfig:rover_braking_tau_v} versus initial speed $v_0$ are plotted as blue asterisk's. Notice that both robots have a braking distance which is upper bounded by a linear function (red-dashed line) of speed as in Assumption \ref{ass:brake_dist_grows_sublinearly}.}
    \label{fig:brk_sublinearly}
\end{figure}

\begin{assum}\label{ass:existence_of_tau_v}
Let the trajectory producing model be as in \eqref{eq:traj-producing_model} and the braking trajectory be as in \eqref{eq:braking_traj}.
There exists a duration $\tau_v$ such that the distance traveled by integrating the trajectory producing model over the time interval $t\in [\tau\plan,\tau\plan + \tau_v]$ is greater than or equal to the distance traveled by the high-fidelity model when executing the braking trajectory over the time interval $t\in [\tau\plan,\tau\plan + \tau\brk(\z\hio,k)]$:

\begin{equation}\begin{split}
    &\int_{\tau\plan}^{\tau\plan + \tau\brk(\z\hio,k)} \norm{\idx\left( f\hi(\tau,\z\brk,u\brk)\right)}_2 d\tau \\
    &\leq \int_{\tau\plan}^{\tau\plan + \tau_v} \norm{\idx\left( f(\tau,\z,k)\right)}_2,\label{eq:brake_dist_less_than_nonbrake_dist_tv}
\end{split}\end{equation}
where we are again using the arclength formula as in \eqref{eq:D_brake}, and the arguments to $\z\brk,\ u\brk$, and $\z$ have been dropped for ease of notation.
\end{assum}

To see how this assumption can be easily satisfied, take the example of the Segway's model \eqref{eq:traj_prod_segway}; which plans trajectories with a constant speed.
By setting $\tau_v = C\stp$ from Assumption \ref{ass:brake_dist_grows_sublinearly}, Assumption \ref{ass:existence_of_tau_v} is satisfied.

\begin{rem}\label{rem:planning_time_horizon_min}
For an arbitrary $\z\hio \in Z\hio$ and $k \in K$, let the robot's non-braking trajectory be given by \eqref{eq:non-braking_traj}, and let $\tau_v$ be given by \eqref{eq:brake_dist_less_than_nonbrake_dist_tv}.
If the planning time horizon $T$ satisfies
\begin{align}
    T \geq \tau\plan + \tau_v,
\end{align}
then, for a particular choice of $k$, the total distance traveled by braking trajectories is less than the total distance traveled by the trajectory tracking model \eqref{eq:traj_tracking_model} at $t=T$ for a particular choice of $d\in L_d$.
To see why this is true, first note, by Lemma \ref{lem:traj_prod_matches_hi_fid_model}, there exists a choice of $d\in L_d$, such that the trajectory tracking model matches the non-braking trajectory \eqref{eq:non-braking_traj} from $t=0$ to $t=\tau\plan$. Furthermore, the non-braking and \eqref{eq:non-braking_traj} and braking \eqref{eq:braking_traj} trajectories are equivalent from $t=0$ to $t=\tau\plan$. 
Next, by Assumption \ref{ass:existence_of_tau_v}, the distance traveled from $t=\tau\plan$ to $t=T$ of the trajectory tracking model \eqref{eq:traj_tracking_model} with $d(t)\equiv 0$ is greater than the braking distance of the robot.
\end{rem}

\noindent Since trajectories produced by the trajectory tracking model \eqref{eq:traj_tracking_model} are contained in the FRS, selecting the time horizon to satisfy he inequality in Remark \ref{rem:planning_time_horizon_min} makes it possible to satisfy Assumption \ref{ass:brake_in_pi_X}; where the robot is required to brake within the FRS.

\section{Set Intersection}\label{app:set_intersection}

RTD performs trajectory planning by optimizing over $k \in K_\text{safe}$, as described in Section \ref{sec:trajectory_optimization}.
This requires determining $K_\text{safe}$ at run-time.
\citet{kousik2017safe} show that a \defemph{set intersection} procedure can be used to approximate $K_\text{safe}$ using an SDP, as mentioned in Section \ref{sec:FRSmethod}.
In this appendix, we demonstrate that set intersection is too slow for real-time trajectory planning.

\subsection{Set Intersection with SOS}
Suppose $X\obs \subset X$ is an obstacle represented as a semi-algebraic set, with the list of polynomials $H\obs = \{h_{i,\text{obs}}\}_{i=1}^{n\obs} \subset \R[x]$.
Then Program (19) from \citet{kousik2017safe} is used to find a polynomial $h \in \R[k]$ that is provably an inner approximation of $K\safe$.
Here, we restate Program (21) from \citet{kousik2017safe}, which implements set intersection using SOS programming.
Suppose that $w^l$ is a solution to $(D^l)$ from Section \ref{subsec:FRS_implementation}.
Recall the map $Q_{2l'}$ that gives a space of SOS polynomials, as defined in Section \ref{subsubsec:compute_the_FRS}.
Then, picking $l' \in \N$, we find the SOS polynomial $h \in Q_{2l'}(H_K) \subset \R[k]$ with the program
\begin{align} \label{lp:set_intersection}
		\underset{h}{\text{inf}} \hspace*{0.25cm} & y_K^\top\text{vec}(h) \\
        & 1 - w^l - h &&\in Q_{2l'}(H\obs,H_K) \\
          & h &&\in Q_{2l'}(H_K).
\end{align}
This program is translated into an SDP at runtime.
Notice that, given its size and simplicity, this program does not have the same memory usage problems as the FRS computation (see Section \ref{subsec:FRS_memory_usage}).
However, this program may run slowly depending upon the obstacle representation in the set $H\obs$, as we describe next.

\subsection{Inspecting Set Intersection Speed}

\begin{table*}
    \centering
    \begin{tabular}{|c|l|r|r|}
    \hline
    \multicolumn{4}{|l|}{Set Intersection vs. Obstacle Discretization} \\
    \hline
    Obstacle Shape & \multicolumn{1}{|c|}{Method} & Mean Time [ms] & Std. Dev [ms] \\
    \hline
    Box & Set Intersection \citep{kousik2017safe} & 17,800 & 1010 \\
    \hline
    Line & Set Intersection & 1050 & 73 \\
    \hline
    Box & Discretization (proposed) & 4 & 7 \\
    \hline
    Line & Discretization & 3 & 2 \\
    \hline
    \end{tabular}
    \caption{Timing results of the set intersection procedure from \citet{kousik2017safe} versus the proposed obstacle discretization procedure, both of which provably represent the set $K\safe$ of safe trajectory parameters to be used for online optimization.
    The proposed method is two orders of magnitude faster.}
     \label{tab:set_intersection_timing}
\end{table*}

To determine the speed of the set intersection SDP, we run \eqref{lp:set_intersection} 100 times with $H\obs$ representing a single 2-D, box-shaped obstacle at a random position, similar to what is used in the simulation results of Section \ref{sec:simulation_results}.
As with $(D^l)$ in Section \ref{subsec:FRS_implementation}, we implement this SDP using MATLAB's Spotless toolbox \citep{tobenkin2013spotless}, and solve the resulting conic program with MOSEK \citep{mosek2010mosek}.
Running \eqref{lp:set_intersection} on the box obstacles takes a mean solve time of 17.8 s.
For comparison, we also run Program \eqref{lp:set_intersection} 100 times with $H\obs$ representing a single, randomly-generated 1-D line-segment obstacle in each trial.
We test this type of obstacle because more complex obstacles can be constructed from line segments \citep{kousik2017safe}.
The polynomial $w^l$ is taken from the solution to $(D^l)$ for the FRS of the Segway dynamics from Example \ref{ex:segway} (see Section \ref{subsec:segway_application} for further details).

The results are as follows, and are summarized in Table \ref{tab:set_intersection_timing}.
Program \eqref{lp:set_intersection} solves in 1.05 s on average.
The set intersection timing results show that representing a polygonal obstacle with a collection of line segment obstacles is faster than representing the obstacle as a 2-D semi-algebraic set.
However, even a line segment representation would require approximately 4 s to solve \eqref{lp:set_intersection} for a single box, because the solve time increases linearly with the number of line segments \citep{kousik2017safe}.

Section \ref{sec:obstacle_representation} of this paper presents a discretized obstacle representation that eliminates the need for set intersection, and allows for the online trajectory optimization to run in real time.
For comparison with set intersection, we tested the proposed method (see Algorithm \ref{alg:construct_X_p} in Section \ref{subsec:proving_X_p_works}) to discretize each box and line obstacle from the test of \eqref{lp:set_intersection} described above.
We evaluated $w^l$ on the resulting discrete set of points to produce a list of nonlinear constraints that overapproximate $K\safe$ (as proven in Theorem \ref{thm:X_p} in Section \ref{subsec:proving_X_p_works}).
The proposed method is three orders of magnitude faster than set intersection, as reported in Table \ref{tab:set_intersection_timing}.
% OBSTACLE REPRESENTATION APPENDIX %
\section{Obstacle Representation}\label{app:obs_rep}

This appendix contains the proofs from Section \ref{sec:obstacle_representation}.
In addition, we state Lemmas \ref{lem:middle_chord_is_not_shortest} and \ref{lem:parallel_chords_are_shorter} that provide geometric tools for finding point spacings, and Lemma \ref{lem:rotate_then_translate_to_penetrate} that provides a method for constructing the penetration distance $\bbar$ for arbitrary convex robot footprints.

\subsection{Proofs from Section \ref{subsec:FRS_projections}}
\begin{customlem}{\ref{lem:when_param_cant_cause_crash}}
Consider an arbitrary point $p \in X \setminus X_0$.
Let $k \in \pi_K(p)^C$.
At $t = 0$, let the robot, described by the high-fidelity model \eqref{eq:high-fidelity_model}, be at the state $\z_{\text{hi},0} \in Z\hi$.
Suppose the robot tracks the trajectory parameterized by $k$, producing the high-fidelity model trajectory $\z\hi: [0,T] \to Z\hi$.
Then, no point on the robot's body ever reaches $p$.
More precisely, there does not exist any pair $(t,\z\hio) \in [0,T]\times Z\hio$ such that $p = \idx(\z\hi(t))$.
\end{customlem}
\begin{proof}
Suppose for the sake of contradiction that there exists some $t \in [0,T]$ and $\z\hio \in Z\hio$ for which $p = \idx(\z\hi(t))$.
By Lemma \ref{lem:traj_prod_matches_hi_fid_model}, there exists $d \in L_d$ such that the trajectory-tracking model \eqref{eq:traj_tracking_model} has a trajectory $\z: [0,T] \to Z$ for which $\proj_Z(\z\hi(t)) = \z(t)$ at $t$.
Then, $w(p,k) = 1$ by Lemma \ref{lem:w_geq_1_on_frs}.
But, by \eqref{eq:pi_K}, $k \in \pi_K(p)^C$ implies that $w(p,k) = 0$, which is a contradiction.
\end{proof}

\subsection{Proof from Section \ref{subsec:robot_obstacle_geometry_motivation}}

\begin{customlem}{\ref{lem:buffered_obs_arcs_and_lines}}
Let $n_L \in \N$ (resp. $n_A \in \N$) denote the number of line segments (resp. arcs).
Let $L_i \in L$ (resp. $A_i \in A$) denote the $i$\ts{th} line segment (resp. arc).
Note that each $L_i$ and $A_i$ is a subset of $X$.
Then the boundary of the buffered obstacle can be written as the union of all of the lines and arcs:
\begin{align}
    \bd X\obs^b\quad=\quad\left(\bigcup_{i = 1}^{n_L} L_i\right)~\cup~\left(\bigcup_{i = 1}^{n_A} A_i\right).
\end{align}
\end{customlem}
\begin{proof}
The following statements paraphrase Section 9.2 of \citet{minkowski_sum_fogel}, which shows that the set $X\obs^b$ is equivalent to the Minkowski sum of $X\obs$ with a closed disk of radius $b$.
The procedure of constructing $X\obs^b$, which we call buffering, is also called ``offsetting'' a polygon.
Offsetting a closed and bounded polygon by a distance $b$ produces a closed and bounded shape with a boundary that consists of line segments and circular arcs of radius $b$.
So, the sets $L$ and $A$ are finite because $X\obs$ is closed and bounded by Assumption \ref{ass:obs_are_polygons}.
\end{proof}

\subsection{Proof from Section \ref{subsubsec:rbar}}
\begin{customlem}{\ref{lem:largest_gap_rbar}}
\citep[Theorem 1]{width_of_a_chair} Let $I \subset (X\setminus X_0)$ be a line segment with endpoints $E_I$ and length $L > 0$ (as in Definition \ref{def:line_segment_I}).
Let $X_0$ be the robot's footprint at time $0$ (as in Definition \ref{def:X_and_X_0}), with width $W > 0$ (as in Definition \ref{def:thickness_and_width}).
Then $X_0$ can pass through $I$ (as in Definition \ref{def:pass_through}) if and only if $W < L$.
\end{customlem}
\begin{proof}
See \citet{width_of_a_chair} for a more detailed proof.
We only sketch out the intuition here.
Recall that $X_0$ is convex and compact with nonzero volume by Assumption \ref{ass:X0_cpt_cvx}.

Suppose a transformation family $\{R_t\}$ passes $X_0$ through $I$ as in Definition \ref{def:pass_through}.
Then there exists an interval of time $[t_0,t_1] \subset (0,T]$ for which $R_tX_0 \cap (I\setminus E_I)$ is nonempty for all $t \in [t_0,t_1]$; note that $t_1 > t_0$ because $X_0$ has nonzero volume.
The set $R_tX_0 \cap (I\setminus E_I)$ is a chord (as in Definition \ref{def:chord}) of $R_tX_0$ with length greater than or equal to the width $W$ as in Definition \ref{def:thickness_and_width}.
Since $X_0$ can pass fully through $I$, the endpoints $E_I$ never intersect any $R_tX_0$.
Therefore the length of the chord $R_tX_0 \cap I$ is always less than $L$, so $L > W$.

Now suppose $W < L$.
If $X_0$ has diameter $D$, then $X_0$ can fit completely inside a rectangle with short side length $W$ and long side length $D$ \citep[Theorem 3]{rectangle_bound_curve}.
This rectangle can be rotated so that its short side is parallel to $I$, then pass fully through $I$ by pure translation, i.e. with no further rotations.
Since $X_0$ fits inside the rectangle, $X_0$ can pass fully through $I$.
\end{proof}

\subsection{Proof from Section \ref{subsubsec:bbar}}

\begin{customlem}{\ref{lem:max_penetration_bbar}}
Let $X_0$ be the robot's footprint at time 0 (as in Definition \ref{def:X_and_X_0}), with width ${\rbar}$ (as in Definition \ref{def:rbar}).
Let $\irbar \subset (X\setminus X_0)$ be a line segment of length ${\rbar}$ (as in Definition \ref{def:line_segment_I}).
Then there exists a maximum penetration distance $\bbar$ (as in Definition \ref{def:penetrate}) that can be achieved by passing $X_0$ through $\irbar$ (as in Definition \ref{def:pass_through}).
\end{customlem}

\begin{proof}
This proof is illustrated in Figure \ref{fig:max_penetration_bbar}.
We sketch the intuition first.
To find $\bbar$, we use transformation families $\{R_t\}$ to pass $X_0$ through $\irbar$.
Recall that $X_0$ cannot pass fully through $\irbar$ by Lemma \ref{lem:largest_gap_rbar}.
Then, we measure the penetration distance corresponding to each transformation family to find a supremum.

Now we proceed rigorously.
Note that $X_0$ is compact and convex with nonzero volume as in Assumption \ref{ass:X0_cpt_cvx}.
Recall by Assumption \ref{ass:X_subset_R2_contains_origin_xy-axes} that the $xy$-subspace $X \subset \R^2$ contains the origin of $\R^2$.
To ease the exposition, suppose without loss of generality that $X_0$ lies entirely in the intersection of $X$ with the left half-plane of $\R^2$, and that $\irbar$ is fixed to the origin and oriented vertically in the upper half-plane, so $\irbar = \{0\}\times[0,{\rbar}]$.
In this case, the half-plane $\pirbar$ defined by $\irbar$ (as in Definition \ref{def:halfplane_P_I}) is the closed left half-plane.
This can be done without loss of generality because, when passing $X_0$ through $\irbar$ with a transformation family $\{R_t\}$ (as in Definition \ref{def:R_t_translation_and_rotation_family}), we only care about the relative position of $X_0$ to $\irbar$ at each $t \in [0,T]$.
If $X_0$ and $\irbar$ are oriented arbitrarily in $X$, we can first rotate and translate both $X_0$ and $\irbar$ with the same transformation to move the ``lower'' endpoint of $\irbar$ to the origin, then pass $X_0$ through $\irbar$, and finally undo the first rotation and translation to return $X_0$ and $\irbar$ to their original positions.

Let $\mc{R}_{\rbar}$ denote the set of all transformation families $\{R_t\}$ that attempt to pass $X_0$ through $\irbar$ as per Definition \ref{def:pass_through}.
By Lemma \ref{lem:largest_gap_rbar}, $X_0$ cannot pass fully through $\irbar$ because $\irbar$ is of length ${\rbar}$; but $X_0$ may penetrate $\irbar$ by some distance (as in Definition \ref{def:penetrate}), which depends upon the transformation family $\{R_t\}$.
We must show that, across all $\{R_t\} \in \mc{R}_{\rbar}$, there is a maximum penetration distance.

Consider an arbitrary $\{R_t\} \in \mc{R}_{\rbar}$.
Since $\irbar$ is collinear with the $y$-axis, we can find the penetration distance of $X_0$ through $\irbar$ corresponding to $\{R_t\}$ using a function $\delh: \P(\R^2) \to \R$, which returns the right-most point of a set $A \subset \R^2$:
\begin{align}
    \delh(A)~=~\sup_a\left\{\ a_x~\mid~{a \in A}\right\}, \label{eq:delh_find_horz_penetration_dist}
\end{align}
where $a_x$ is the $x$-component of the point $a$.
So,  given a particular $\{R_t\} \in \mc{R}_{\rbar}$, $\delh(R_TX_0)$ is the penetration distance of $X_0$ through $\irbar$ by Definition \ref{def:penetrate}.
Recall that $X_0$ is compact (i.e. closed and bounded in $X$) and that $X_0$ cannot pass fully through $\irbar$ by Lemma \ref{lem:largest_gap_rbar} (i.e. the horizontal displacement achieved by $R_TX_0$ is bounded).
Therefore, $\delh(R_TX_0)$ is upper bounded.

We have shown that the penetration distance is bounded for each family $\{R_t\} \in \mc{R}_{\rbar}$.
To prove the claim that there is a maximum penetration distance, we must show that the value of $\delh$ is upper bounded across all $\{R_t\} \in \mc{R}_{\rbar}$.
In other words, we want to know that the following supremum is finite:
\begin{flalign}
    \bbar\quad=\quad\underset{\{R_t\}}{\sup}\quad&\delh(R_TX_0)\label{prog:find_bbar_max_penetration} \\
    \text{s.t.}\quad&\{R_t\}\,\in\,\mc{R}_{\rbar}.
\end{flalign}
Recall from Definition \ref{def:thickness_and_width} that $X_0$ has a finite diameter $D$, which is the largest possible distance between two parallel lines that are tangent to $X_0$.
So, for any $\{R_t\} \in \mc{R}_{\rbar}$, if $\delh(R_TX_0) > D$, then $X_0$ has passed fully through $I$.
But this is impossible by Lemma \ref{lem:largest_gap_rbar}.
Since $\bbar \leq D$, \eqref{prog:find_bbar_max_penetration} is upper bounded.
\end{proof}

See Figure \ref{fig:max_penetration_bbar_suboptimal} for an illustration of a suboptimal solution to \eqref{prog:find_bbar_max_penetration}.

\begin{figure}
    \centering
    \includegraphics[width=0.6\columnwidth]{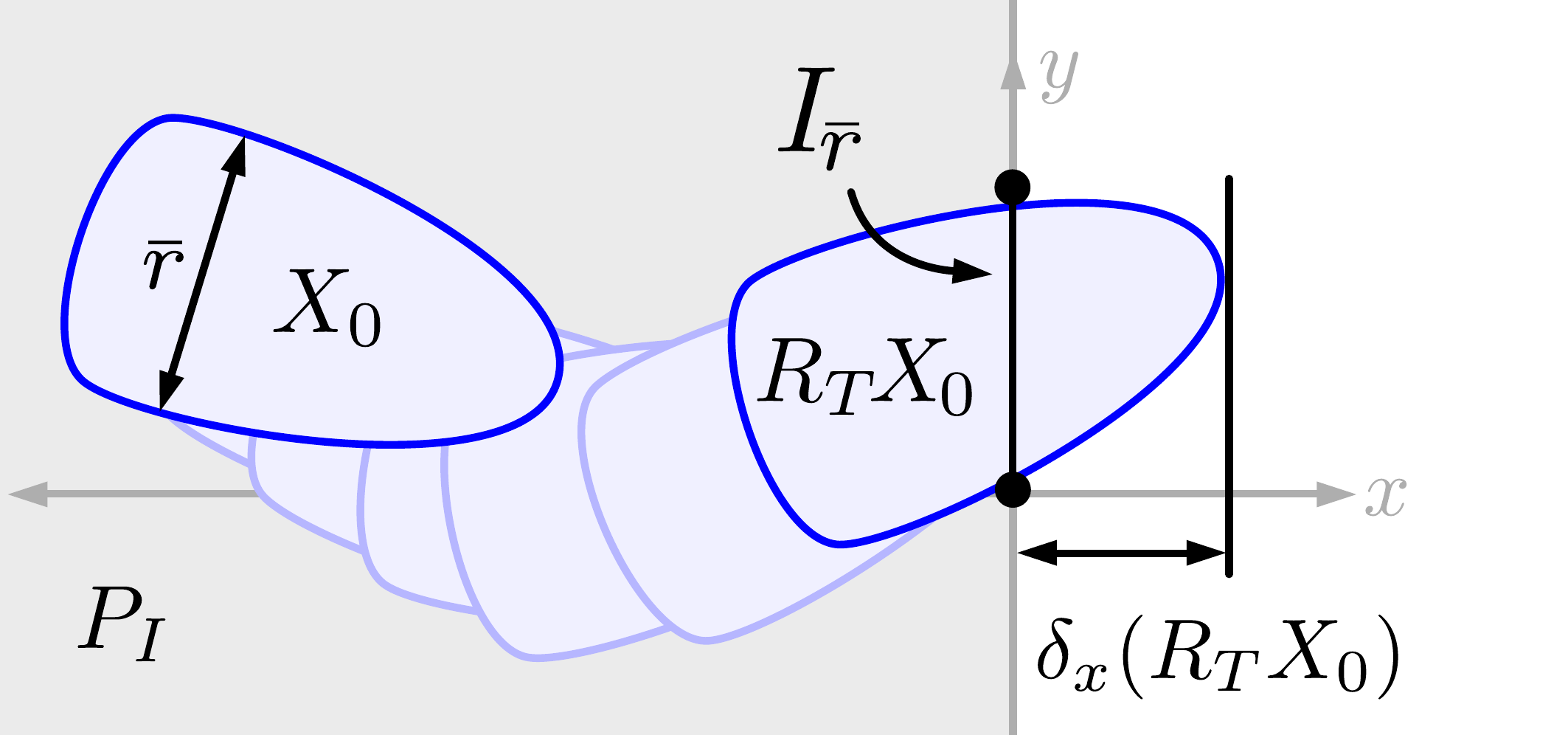}
    \caption{
    An arbitrary compact, convex set $X_0$ of width $\rbar$ penetrates a line segment $\irbar$ when a transformation family $\{R_t\}_{t \in [0,T]}$ is applied to pass $X_0$ through $\irbar$ as in Definition \ref{def:pass_through}.
    Since $\irbar$ is of length $\rbar$, $X_0$ cannot pass fully through by Lemma \ref{lem:largest_gap_rbar}.
    At the initial index $t = 0$ and the final index $t = T$, the sets $R_0X_0$ and $R_TX_0$ are shown with dark outlines.
    A sampling of intermediate indices $t \in (0,T)$ are shown with light outlines.
    In the present figure, $X_0$ penetrates through $\irbar$ by the distance $\delh(R_TX_0)$; this produces a suboptimal, feasible value to \eqref{prog:find_bbar_max_penetration}.
    Note, Figure \ref{fig:max_penetration_bbar} shows the optimal solution.}
    \label{fig:max_penetration_bbar_suboptimal}
\end{figure}

\subsection{Proofs from Section \ref{subsubsec:find_r}}

To find the point spacing $r$, we first prove two lemmas about chords (see Definition \ref{def:chord}).
Then, for $b \in (0,\bbar)$, we find $r \in (0,\rbar)$ with Lemma \ref{lem:find_r}.

\begin{lem}\label{lem:middle_chord_is_not_shortest}
Given any three distinct, parallel chords of a convex, compact set in $\R^2$, the middle chord is not the shortest of the three.

We now restate this more formally.
Let $A \subset \R^2$ be a convex, compact set with nonzero volume.
Let $\kp_1$, $\kp_2$, and $\kp_3$ be three chords of $A$ (as in Definition \ref{def:chord}) such that $\kp_1 \parallel \kp_2 \parallel \kp_3$ and $\kp_i \cap \kp_j = \emptyset$ for any $i \neq j$.
Suppose the chords have lengths $L_1$, $L_2$, and $L_3$, respectively.
Furthermore, assume that there exists at least one line segment (as in Definition \ref{def:line_segment_I}) within $A$ that intersects $\kp_2$, and that has one endpoint on $\kp_1$ and the other endpoint on $\kp_3$; in other words, $\kp_2$ lies between $\kp_1$ and $\kp_3$.
Then$L_1 \geq L_3$ implies that $L_2 \geq L_3$, and $L_1 > L_3$ implies that $L_2 > L_3$.
\end{lem}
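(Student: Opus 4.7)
The plan is to exploit a well-known consequence of convexity: the length of a parallel-chord family of a planar convex body is a concave function of the chord's position. Once concavity is in hand, the lemma follows immediately from the definition of concavity.

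First I would set up coordinates so that the three chords are horizontal. Because $\kp_1\parallel\kp_2\parallel\kp_3$ and no two intersect, there is a direction orthogonal to all three chords; rotate and translate $A$ so that this direction is the $y$-axis. Then each $\kp_i$ lies on a horizontal line $y=s_i$, and the hypothesis that $\kp_2$ lies between $\kp_1$ and $\kp_3$ means that $s_2$ lies strictly between $s_1$ and $s_3$ on the real line. Define $L:\R\to\R_{\geq 0}$ by $L(s)=\mathrm{vol}_1\!\left(A\cap(\R\times\{s\})\right)$, i.e., the length of the horizontal chord of $A$ at height $s$. Since $A$ is convex and compact, the support of $L$ is a compact interval, and by construction $L(s_i)=L_i$ for $i=1,2,3$.

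Next I would prove that $L$ is concave on its support. Fix $s,s'$ in the support of $L$ with chord endpoints $(x_-(s),s),(x_+(s),s)$ and $(x_-(s'),s'),(x_+(s'),s')$ in $A$, and let $\lambda\in[0,1]$. By convexity of $A$, the points
\begin{align*}
    \lambda(x_-(s),s)+(1-\lambda)(x_-(s'),s')\ \text{and}\ \lambda(x_+(s),s)+(1-\lambda)(x_+(s'),s')
\end{align*}
both lie in $A$, and they share the $y$-coordinate $s_\lambda:=\lambda s+(1-\lambda)s'$. Hence the horizontal chord of $A$ at height $s_\lambda$ contains a subinterval of length $\lambda L(s)+(1-\lambda)L(s')$, yielding $L(s_\lambda)\geq \lambda L(s)+(1-\lambda)L(s')$. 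This is the one-dimensional specialization of Brunn's concavity theorem.

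Finally, since $s_2$ lies strictly between $s_1$ and $s_3$, write $s_2=\lambda s_1+(1-\lambda)s_3$ with $\lambda\in(0,1)$. Concavity gives $L_2\geq \lambda L_1+(1-\lambda)L_3$. If $L_1\geq L_3$ then $\lambda L_1+(1-\lambda)L_3\geq L_3$, so $L_2\geq L_3$; if in addition $L_1>L_3$ then $\lambda L_1+(1-\lambda)L_3>L_3$ (because $\lambda>0$), so $L_2>L_3$. Since $A$ has nonzero volume and the three chords are genuine chords (endpoints on $\partial A$), all the lengths are finite and the inequalities are meaningful, completing the proof.

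I do not anticipate any real obstacle; the only subtlety is verifying that the endpoints of the chord at height $s_\lambda$ really extend at least as far as the convex combination of endpoints, which is immediate from the convex-combination argument above. No case analysis on the relative orientation of the chords is needed once coordinates are chosen.
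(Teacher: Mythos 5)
Your proof is correct. The key step---joining corresponding endpoints of $\kp_1$ and $\kp_3$ by segments that lie in $A$ by convexity, and concluding that the cross-section at the intermediate height is at least the corresponding convex combination of $L_1$ and $L_3$---is the same geometric idea the paper uses, but you package it differently. The paper builds the quadrilateral with edges $\kp_1$, $\kp_3$, and the two connecting segments, and then argues by cases: a parallelogram when $L_1 = L_3$ and a trapezoid when $L_1 > L_3$, reading off the length of the middle cross-section in each case. You instead establish the concavity of the parallel-chord-length function $s \mapsto L(s)$ (Brunn's theorem in the plane) and then apply the definition of concavity once, with $s_2 = \lambda s_1 + (1-\lambda)s_3$, $\lambda \in (0,1)$. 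Your route avoids the case split, yields the stronger quantitative inequality $L_2 \geq \lambda L_1 + (1-\lambda)L_3$, and makes the strict-inequality claim transparent ($\lambda > 0$); the paper's version is more elementary in that it never names a general principle. One shared imprecision, which I do not count against you since the paper's proof has it too: both arguments implicitly treat each $\kp_i$ as the \emph{full} intersection of $A$ with its supporting line (equivalently, that the lines carrying the three chords are distinct and that $\kp_2$ contains the whole cross-section of $A$, or at least of the constructed quadrilateral, at height $s_2$); this holds in every application of the lemma in the paper, where the chords arise as intersections $R_tX_0 \cap \ell$.
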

\begin{proof}
Let $e_{i,1}$ and $e_{i,2}$ denote the endpoints (as in Definition \ref{def:chord}) of each chord $\kp_i$ where $i = 1,2,3$.
By definition, these endpoints lie in $\bd A$.
Without loss of generality, assume that all three chords are oriented vertically (rotating the chords and the shape $A$ does not change the relative position of the chords to each other or to $A$).
Also suppose without loss of generality that each $e_{i,1}$ is the ``upper'' endpoint (we can do this without loss of generality because each chord is a line segment by Definition \ref{def:chord}, and because we can swap the labels of the endpoints of a line segment without changing the set of points in the line segment).
Define the line segments $I_1$ from $e_{1,1}$ to $e_{3,1}$ and $I_2$ from $e_{1,2}$ to $e_{3,2}$.
Since $A$ is convex, $I_1, I_2 \subset A$.

Suppose $\kp_1$ and $\kp_3$ have the same length, so $L_1 = L_3$.
Then the quadrilateral with edges given by the line segments $\kp_1$, $I_1$, $\kp_3$, and $I_3$ is a parallelogram $Q_P$ (two of its sides are parallel and of equal length).
So, every line segment inside $Q_P$ that is parallel to $\kp_1$ has length $L_1 = L_3$.
Furthermore, $Q_P$ lies completely inside $A$ because $A$ is convex; this means that $\kp_2 \cap Q_P$ is a chord of $Q_P$ that is parallel to $\kp_1$, and $\kp_2 \cap Q_P \subseteq \kp_2$.
Then, since the length of $\kp_2 \cap Q_P = L_1$, the length of $\kp_2$ is $L_2 \geq L_1 \geq L_3$.

Now suppose $L_1 > L_3$.
Then the quadrilateral with edges $\kp_1$, $I_1$, $\kp_3$, and $I_3$ is a trapezoid $Q_T$ (two of its sides are parallel and of different lengths) that lies within $A$.
Since $L_1 > L_3$, every line segment inside $Q_T$ that is parallel to $\kp_1$ is strictly shorter than $\kp_1$.
So, similar to the logic for $Q_P$ above, the length of $\kappa_2 \cap Q_T$ is greater than $L_3$, meaning that $L_2 > L_3$.
\end{proof}

Next, we use Lemma \ref{lem:middle_chord_is_not_shortest} to understand the shape of the footprint as it passes through a line segment in Lemma \ref{lem:parallel_chords_are_shorter}.
In particular, Lemma \ref{lem:parallel_chords_are_shorter} shows that, as the robot penetrates farther through a line segment, the size of the intersection between the robot and the line segment increases.
We use this result in Lemma \ref{lem:find_r} to bound $r$ above and below.

\begin{lem}\label{lem:parallel_chords_are_shorter}
Let $X_0$ be the robot's footprint at time $0$ (as in Definition \ref{def:X_and_X_0}), with width $\rbar$ (as in Definition \ref{def:rbar}).
Let $\irbar \subset (X\setminus X_0)$ be a line segment (as in Definition \ref{def:line_segment_I}) of length ${\rbar}$.
Let $\pirbar$ be the closed half-plane defined by $\irbar$ (as in Definition \ref{def:halfplane_P_I}) and containing $X_0$, and suppose that $X_0 \subset \pirbar$.
Suppose the transformation family $\{R_t\}$ attempts to pass $X_0$ through $\irbar$ (as in Definition \ref{def:pass_through}).
Suppose $t_0 > 0$ such that, for each $t \in [t_0,T]$, the set $\kp_t := R_tX_0 \cap \irbar$ is nonempty and is a chord of $R_tX_0$.
Then, for any $t > t_0$, every chord of $R_tX_0$ that is parallel to $\irbar$ and lies in $\pirbar^{C}$ is shorter than $\kp_t$.
\end{lem}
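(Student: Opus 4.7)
The plan is to work in coordinates where $\ell_{\irbar}$ is the $x$-axis, exploit the concavity of the horizontal cross-section length function of the convex body $R_tX_0$, and then use the passing-through constraint together with continuity of $\{R_t\}$ to locate the maximizer of this function at or below the level of $\irbar$.

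I would begin by applying a rigid motion so that $\irbar = [-\rbar/2,\rbar/2]\times\{0\}$, $\pirbar = \{y \le 0\}$, and $R_0X_0 = X_0 \subset \pirbar$. For each $t$, let $L_t(y)$ denote the length of the horizontal chord of $R_tX_0$ at height $y$; since $R_tX_0$ is convex and compact, $L_t$ is a non-negative concave function on a compact support $[y_{\min}(t), y_{\max}(t)]$ that vanishes at both endpoints, so it is unimodal and I can pick a maximizer $y^{**}(t)$. The value $L_t(y^{**}(t))$ is the horizontal thickness of $R_tX_0$, which by Definition~\ref{def:thickness_and_width} is at least the width $\rbar$ of $X_0$ (width is preserved under rigid motions). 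Combining the hypothesis that $\kp_t$ is a chord of $R_tX_0$ with the passing-through constraint $R_tX_0 \cap E_{\irbar} = \emptyset$ for $t < T$ forces the endpoints of $\kp_t$ to lie on $\partial R_tX_0$ strictly inside $\irbar$, so $\kp_t = R_tX_0 \cap \ell_{\irbar}$ and $L_t(0) = |\kp_t| < \rbar$ throughout $[t_0, T)$.

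The \emph{main obstacle} is then to prove that $y^{**}(t) \le 0$ throughout $[t_0, T)$. At $t = 0$ the support of $L_0$ is contained in $\{y\le 0\}$, so $y^{**}(0) \le 0$. Suppose for contradiction that $y^{**}(t^*) > 0$ at some $t^* \in (t_0, T)$; by continuity of $\{R_t\}$, and hence of $L_t$ jointly in $(t,y)$, there must exist an intermediate time $s \in [t_0, t^*]$ with $y^{**}(s) = 0$. At this $s$ the widest horizontal chord of $R_sX_0$ coincides with $R_sX_0 \cap \ell_{\irbar} = \kp_s$ and has length at least $\rbar$; combined with $\kp_s \subseteq \irbar$ of length exactly $\rbar$, this forces $\kp_s = \irbar$, and in particular $E_{\irbar} \subseteq R_sX_0$, contradicting the passing-through hypothesis at $s < T$. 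The technical subtlety here is that the argmax of $L_t$ need not be single-valued; I would handle this by choosing $y^{**}(t)$ to be, say, the largest maximizer, using upper hemicontinuity of $\arg\max$ to run an intermediate-value argument on this choice.

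Finally, once $y^{**}(t) \le 0$ has been established, the concavity of $L_t$ makes it nonincreasing on $[0, y_{\max}(t)]$ and strictly decreasing on any subinterval on which it is not constant at its maximum. For $t > t_0$ the shape has strictly penetrated into $\pirbar^C$, so $L_t$ cannot be constant on any right-neighbourhood of $0$, and therefore every horizontal chord $\kp'$ of $R_tX_0$ lying strictly in $\pirbar^C$ satisfies $|\kp'| = L_t(y') < L_t(0) = |\kp_t|$, which is the desired conclusion.
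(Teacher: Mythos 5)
Your overall architecture — concavity of the parallel-chord-length function $y\mapsto L_t(y)$ of the convex body $R_tX_0$ — is the same idea the paper packages as its three-parallel-chords lemma (Lemma \ref{lem:middle_chord_is_not_shortest}), so the skeleton is right. But the two load-bearing claims are not established. First, $\max_y L_t(y)$ is \emph{not} the horizontal thickness of $R_tX_0$: by Definition \ref{def:thickness_and_width} the thickness along a direction is the distance between the two tangent lines perpendicular to that direction, and for a thin, strongly slanted parallelogram the longest horizontal chord is far shorter than the horizontal extent. The inequality you actually need, $\max_y L_t(y)\ge\rbar$, is true, but it does not follow from the definition you cite; it requires the affine-diameter fact that a longest chord in a fixed direction admits parallel supporting lines at its endpoints, so the thickness in the direction normal to those lines is at most that chord's length, whence the width is too.

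The deeper gap is the intermediate-value argument for $y^{**}(t)\le 0$. The map $(t,y)\mapsto L_t(y)$ is not jointly continuous (it jumps at the boundary of the support, e.g.\ when a flat edge of a polygonal footprint crosses the line $\ell_{\irbar}$), and the ``largest maximizer'' selection of an upper-hemicontinuous argmax correspondence is only upper semicontinuous, so it does not satisfy the intermediate value theorem: it can jump from a negative value to a positive one without ever equalling $0$. Your acknowledged fix therefore does not close the hole. A repair is possible (use that the argmax of a concave function is an interval, take limits of maximizers from both sides of the critical time, and conclude the interval at that time contains $0$), but the paper avoids all continuity-in-$t$ reasoning with a static argument you should compare against: at each fixed $t$, if every chord of $R_tX_0$ parallel to $\irbar$ and lying in $\pirbar$ had length $<\rbar$, then $R_tX_0$ could be passed fully through $\irbar$ by translation alone, contradicting Lemma \ref{lem:largest_gap_rbar}; hence some chord of length $\ge\rbar$ lies in $\pirbar$, and Lemma \ref{lem:middle_chord_is_not_shortest} applied to that chord, $\kp_t$, and any chord in $\pirbar^C$ finishes the proof. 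Finally, your strictness step (``$L_t$ cannot be constant on a right-neighbourhood of $0$'') is asserted rather than proved; the correct route is that $L_t(y^{**})\ge\rbar>L_t(0)$ with $y^{**}<0$ forces, by concavity, $L_t(y')<L_t(0)$ for every $y'>0$.
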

\begin{proof}
This proof follows directly from Definition \ref{def:pass_through} of passing through and from Lemma \ref{lem:middle_chord_is_not_shortest}.
Recall that $X_0$ is convex and compact with nonzero volume as in Assumption \ref{ass:X0_cpt_cvx}.

As in Lemma \ref{lem:max_penetration_bbar}, without loss of generality assume $\irbar$ lies along the $y$-axis with its lower endpoint fixed to the origin, i.e. $\irbar = \{0\}\times[0,{\rbar}]$, and that $X_0$ lies in the closed left half-plane, which is $\pirbar$.
We can do this without loss of generality because $X$ contains the origin by Assumption \ref{ass:X_subset_R2_contains_origin_xy-axes}, so moving $X_0$ and $\irbar$ in this way is a translation and rotation that can be undone.

Let $t \in (t_0,T]$ be arbitrary and let $\kp_t$ denote the chord $R_tX_0 \cap \irbar$.
Note that $t_0$ exists by Definition \ref{def:pass_through}.
In addition, for any $t \in (t_0,T]$, the set $R_tX_0 \cap \irbar$ is a chord of $R_tX_0$ \citep[Theorem 1]{width_of_a_chair}.
Notice that the length of $\kp_t$ is less than or equal to ${\rbar}$ by Definition \ref{def:pass_through} of passing through.
By Lemma \ref{lem:largest_gap_rbar}, $X_0$ cannot pass fully through $\irbar$.
Therefore, there exists a chord $\kp^-$ of $R_tX_0$ that lies in $\pirbar$, is parallel to $\irbar$, and has length greater than or equal to ${\rbar}$.
Otherwise, $R_tX_0$ could pass fully through $\irbar$ by translation.
Since $t > t_0$, $R_tX_0 \cap \pirbar^C$ is nonempty by Definition \ref{def:pass_through} of passing through.
Therefore, there exist chords of $R_tX_0$ that lie in $\pirbar^C$ and are parallel to $\irbar$.
Let $\kp^+$ be any such chord.
The chords $\kp^-$, $\kp_t$, and $\kp^+$ are three parallel, distinct chords of the convex, compact set $R_tX_0$, and the length of $\kp^-$ is greater than the length of $\kp_t$.
Therefore, by Lemma \ref{lem:middle_chord_is_not_shortest}, $\kp^+$ is shorter than $\kp_t$.
Since $\kp^+$ was arbitrary, we are done.
\end{proof}

Now we find the point spacing $r$ using the previous two lemmas.
The procedure to find $r$ is shown in Figure \ref{fig:find_r_and_a}.

\begin{customlem}{\ref{lem:find_r}}
Let $X_0 \subset \R^2$ be the robot's footprint at time $0$ (as in Definition \ref{def:X_and_X_0}), with width $\rbar$ (as in Definition \ref{def:rbar}).
Let $\bbar$ be the maximum penetration depth corresponding to $X_0$ (as in Lemma \ref{lem:max_penetration_bbar}).
Pick $b \in (0,\bbar)$.
Then there exists $r \in (0,\rbar]$ such that, if $I_r$ is a line segment of length $r$ (as in Definition \ref{def:line_segment_I}), and if $\{R_t\}$ is any transformation family that attempts to pass $X_0$ through $I_r$ (as in Definition \ref{def:pass_through}), then the penetration distance of $X_0$ through $I_r$ (as in Definition \ref{def:penetrate}) is less than or equal to $b$.
\end{customlem}
\begin{proof}
We first sketch the intuition for the proof.
As in Lemma \ref{lem:max_penetration_bbar}, we attempt to pass $X_0$ through a line segment $\irbar$ of length $\rbar$, but $X_0$ cannot pass fully through $\irbar$ by Lemma \ref{lem:largest_gap_rbar}.
Each time we pass $X_0$ through $\irbar$, we halt passing it through when the penetration distance of $X_0$ through $\irbar$ is equal to $b$.
Then, we measure the length of the line segment $X_0 \cap \irbar$.
The length of the smallest such line segment is the desired point spacing $r$.

We now proceed rigorously.
Let $\irbar \subset (X\setminus X_0)$ be a line segment of length $\rbar$  (as in Definition \ref{def:line_segment_I}).
Without loss of generality, suppose that $\irbar$ is vertical with its lower endpoint at the origin, so $\irbar = \{0\}\times[0,\rbar]$; and suppose that $X_0 \subset X \subset \R^2$ lies entirely in the closed left half-plane.
See the proof of Lemma \ref{lem:max_penetration_bbar} for why $\irbar$ and $X_0$ can be placed this way without loss of generality; in brief, the rotations and translations required can be undone.

Next, we discuss how we measure horizontal distance (to constrain the penetration distance to $b$) and vertical span (to find the distance $r$).
Unlike in Lemma \ref{lem:max_penetration_bbar}, instead of letting $X_0$ penetrate through $\irbar$ by the distance $\bbar$, we limit the penetration distance to $b < \bbar$.
Since $\irbar$ is oriented vertically at the origin, we can measure the penetration distance through $\irbar$ using the horizontal distance given by $\delh$ from \eqref{eq:delh_find_horz_penetration_dist}, which returns the maximum $x$-coordinate over all points in a set in $\R^2$.
To measure vertical span, we define the map $\delv: \P(\R^2) \to \R_{\geq 0}$ as follows:
\begin{align}
    \delv(A)~=~ \sup\{a_y~|~a \in A \}~-~\inf\{a_y~|~a \in A\},\label{eq:delv_find_r_vertical_span}
\end{align}
where $a_y$ denotes the $y$-component of $a$.
% Note that, if $A = \emptyset$, then $\delv(A) = \emptyset$.

Now, we find $r$ by constructing the line segment $I_r$.
Let $\mc{R}_\rbar$ be the set of all transformation families $\{R_t\}$ (as in Definition \ref{def:R_t_translation_and_rotation_family}) that attempt to pass $X_0$ through $\irbar$ (as in Definition \ref{def:pass_through}).
Suppose that $\{R_t\} \in \mc{R}_\rbar$ is a transformation family for which, at $t = T$, the penetration distance of $X_0$ through $\irbar$ is $b$ (as in Definition \ref{def:penetrate}).
In other words, $\delh(R_TX_0) = b$.
Consider the line segment $I_r = R_TX_0 \cap I_\rbar$ (this is a line segment by Theorem 1 of \citet{width_of_a_chair}).
Then, under the transformation family $\{R_t\}$, $X_0$ penetrates through $I_r$ by the distance $b$, and the length of $I_r$ is given by $\delv(R_TX_0 \cap \irbar)$.
So, our goal is to find the shortest $I_r$ over all such $\{R_t\}$; the length of the shortest $I_r$ is the distance $r$ claimed by the premises.
Consider the following program to achieve this goal:
\begin{flalign}\label{prog:find_r}
    r\quad =\quad\inf_{\{R_t\}}\quad &\delv(R_T X_0 \cap \irbar)\\
    \mathrm{s.t.}\hspace{0.3cm} &\{R_t\} \in \mc{R}_\rbar, \\
    &\delh(R_TX_0) = b.\label{cons:delh_pentr_dist_equals_b}
\end{flalign}

We first check that feasible solutions exist for \eqref{prog:find_r}.
By Lemma \ref{lem:max_penetration_bbar}, there exist $\{R_t\} \in \mathcal{R}_\rbar$ for which $\delh(R_TX_0) = \bbar > b$.
For any such $\{R_t\}$, since $R_0X_0 = X_0$ (which lies in the left half-plane), we have that $\delh(R_0X_0) \leq 0$.
Then, since $\{R_t\}$ is continuous in $t$ by Definition \ref{def:R_t_translation_and_rotation_family}, there must exist some $\tau \in (0,T)$ for which $\delh(R_\tau X_0) = b$.
So, again using that $\{R_t\}$ is continuous, we can ``cut off'' the time index $t$ at $\tau$ and then rescale time so that $\tau$ becomes $T$ as follows.
For $t \in [0,\tau]$, let $t' = \frac{T}{\tau}t$.
Then the family $\{R_{t'}\ |\ t' \in [0,T]\}$ for which $R_{t'} = R_t$ is a family in $\mc{R}_\rbar$ for which $X_0$ penetrates through $\irbar$ by the distance $b$.

Now we check that $r \in (0,\rbar]$.
Suppose that $\{R_t\}$ is a feasible solution to \eqref{prog:find_r}.
Notice that $\{R_t\}$ cannot pass $X_0$ fully through $\irbar$ by Lemma \ref{lem:largest_gap_rbar}, so $\delv(R_TX_0\cap\irbar) \leq \rbar$ is immediate.
By Definition \ref{def:pass_through} of passing through, $R_TX_0 \cap \irbar$ must be nonempty, so $r = \delv(R_TX_0\cap\irbar) \geq 0$.

Finally, we show that \eqref{prog:find_r} achieves a minimum $r > 0$.
Let $\{R_t\}$ be a feasible solution.
Suppose for the sake of contradiction that there is no $\vep > 0$ for which $r \geq \vep$.
Let  $\kp_r = R_T X_0 \cap \irbar$, which is a chord (as in Definition \ref{def:chord}) of $R_TX_0$ \citep[Theorem 1]{width_of_a_chair}.
By Lemma \ref{lem:parallel_chords_are_shorter}, no chord parallel and to the right of $\kp_r$ can be longer than $\kp_r$, because $\irbar$ is of length $\rbar \geq r$ and parallel to $\kp_r$.
But then, if $\vep = 0$, since $X_0$ has nonzero volume by Assumption \ref{ass:X0_cpt_cvx}, there can be no nonempty chords to the right of $\kp_r$, which contradicts the fact that $\{R_t\}$ attempts to pass $X_0$ through $I$ and as a result violates \eqref{cons:delh_pentr_dist_equals_b}.
\end{proof}

A suboptimal, feasible solution to \eqref{prog:find_r} is shown in Figure \ref{subfig:find_r_suboptimal}; an optimal solution for the same $X_0$ is shown in Figure \ref{subfig:find_r_optimal}.
With Lemma \ref{lem:find_r}, and specifically \eqref{prog:find_r}, we find the \defemph{point spacing} $r$ as in Definition \ref{def:point_and_arc_spacing}.

\begin{figure}
    \centering
    \begin{subfigure}[t]{0.5\columnwidth}
        \centering
        \includegraphics[scale=0.25]{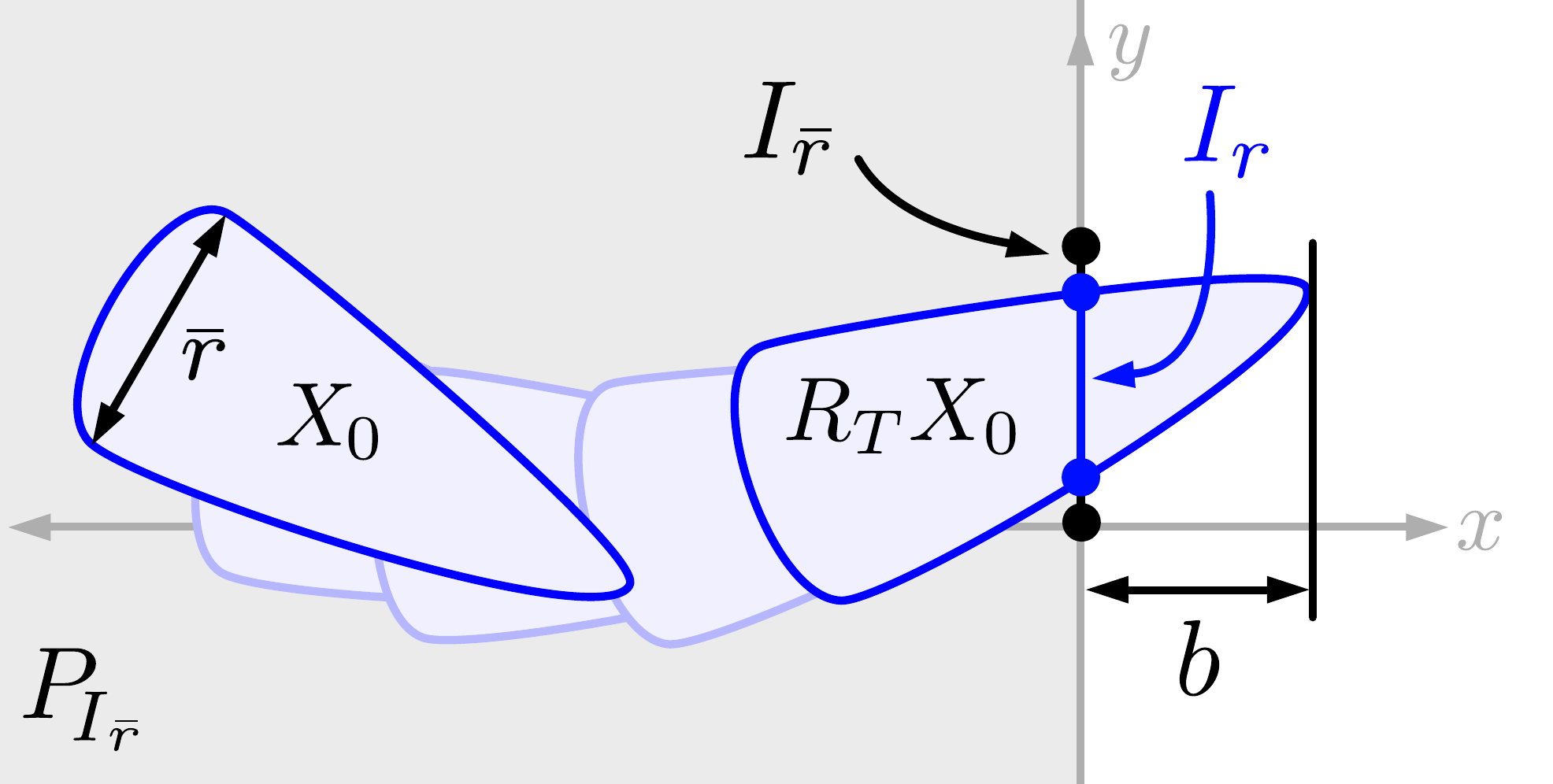}
        \caption{\centering}
        \label{subfig:find_r_suboptimal}
    \end{subfigure}%
    
    \begin{subfigure}[t]{0.5\columnwidth}
        \centering
        \includegraphics[scale=0.25]{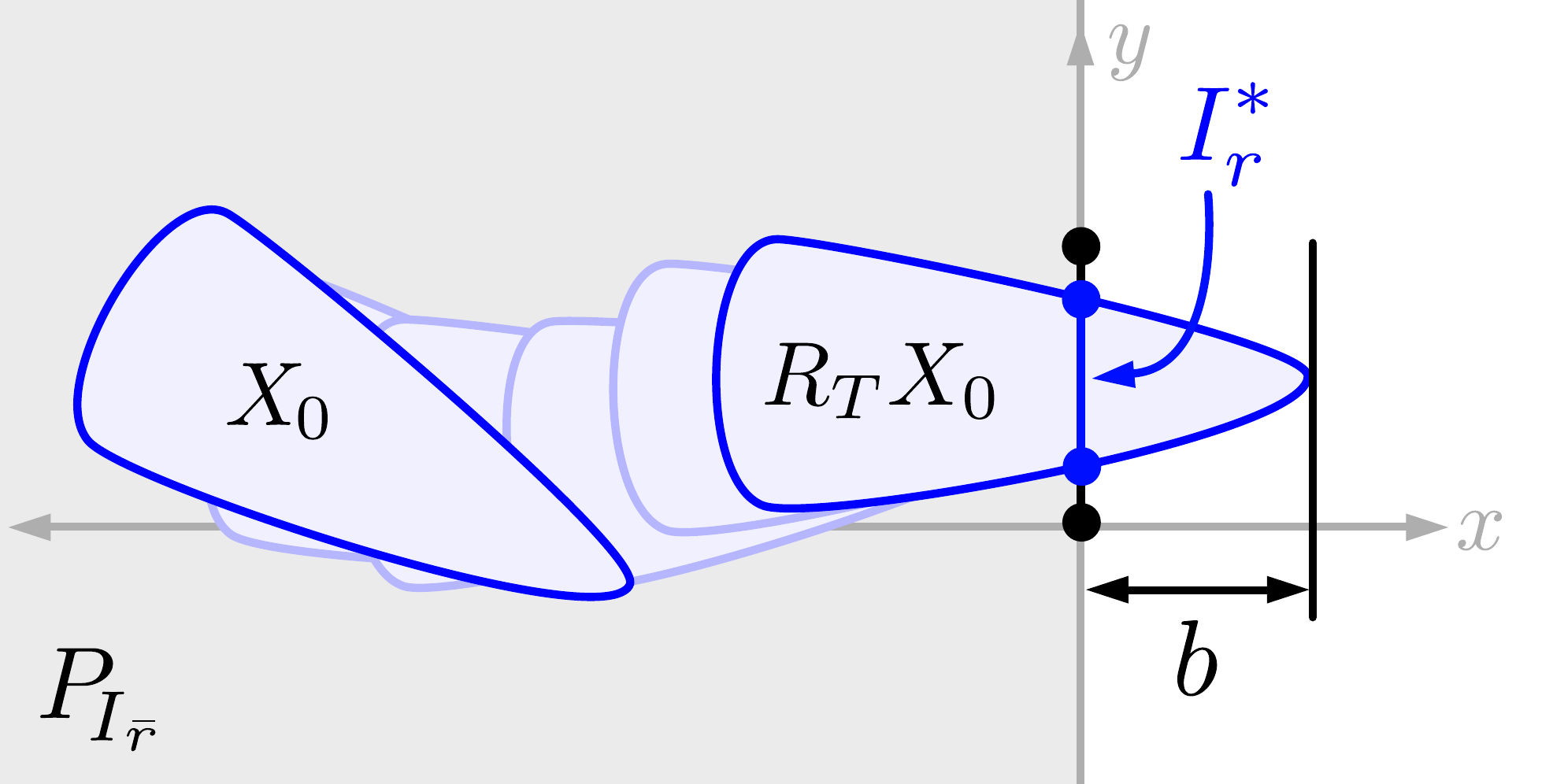}
        \caption{\centering}
        \label{subfig:find_r_optimal}
    \end{subfigure}
    
    \begin{subfigure}[t]{0.5\columnwidth}
        \centering
        \includegraphics[scale=0.25]{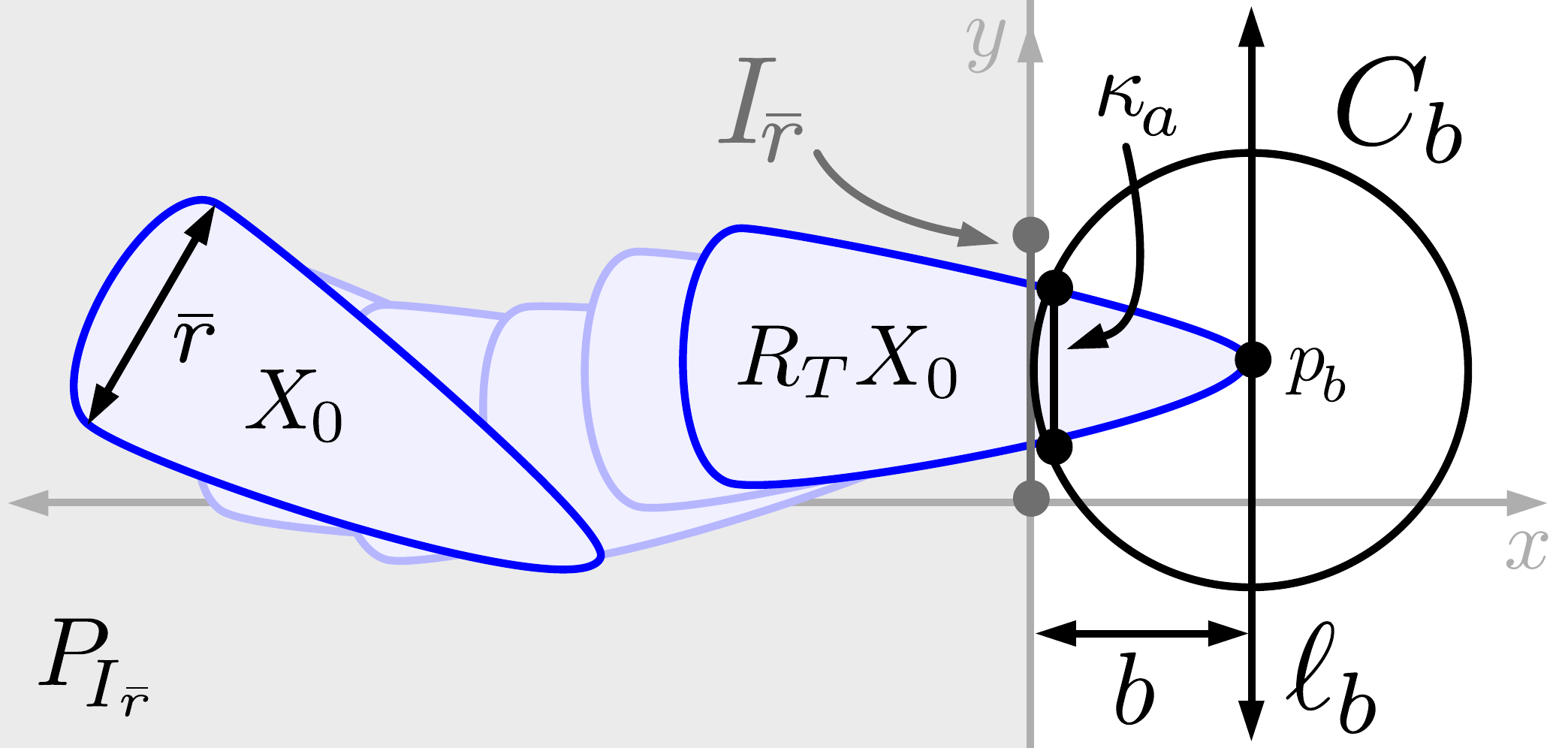}
        \caption{\centering}
        \label{subfig:find_a}
    \end{subfigure}
    \caption{
    An illustration of Program \eqref{prog:find_r} in Figures \ref{subfig:find_r_suboptimal} and \ref{subfig:find_r_optimal}, and Program \eqref{prog:find_a} in Figure \ref{subfig:find_a}.
    The set $X_0$ is an arbitrary convex, compact shape, and starts at $t = 0$ in the left half-plane $P_I$.
    The transformation family $\{R_t\ |\ t \in [0,T]\}$ attempts to pass $X_0$ through $\irbar$.
    At time $T$, $R_TX_0$ is stopped such that its penetration distance through $\irbar$ is the distance $b$.
    Program \eqref{prog:find_r} attempts to find the smallest line segment $I_r$ that can be created when passing $X_0$ through $\irbar$ up to the penetration distance $b$; a suboptimal, feasible solution is shown in Figure \ref{subfig:find_r_suboptimal}, and an optimal solution is shown in Figure \ref{subfig:find_r_optimal}.
    Program \eqref{prog:find_a} attempts to find the smallest chord $\kp_a$ of a circle $C_b$ for which $X_0$ cannot penetrate farther than $b$ into $C_b$ through $\kp_a$.
    This is shown in Figure \ref{subfig:find_a}, which starts from a feasible solution to \eqref{prog:find_r}, then centers the circle $C_b$ on a point of $R_TX_0$ that has penetrated to the distance $b$ past $\irbar$.
    The chord $\kp_a$ is defined by points in the intersection of $\bd R_TX_0$ with $C_b$, and is therefore also a chord of $R_TX_0$.
    In this case, the optimal $\kp_a$ is shown.}
    \label{fig:find_r_and_a}
\end{figure}

\subsection{Proof from Section \ref{subsubsec:find_a}}

\begin{customlem}{\ref{lem:find_a}}
Let $X_0$ be the robot's footprint at time $0$ (as in Definition \ref{def:X_and_X_0}), with width $\rbar$ (as in Definition \ref{def:rbar}).
Let $\bbar$ be the maximum penetration distance corresponding to $X_0$ (as in Lemma \ref{lem:max_penetration_bbar}).
Pick $b \in (0,\bbar)$, and let $C \subset (X\setminus X_0)$ be a circle of radius $b$ centered at a point $p \in X$ (as in Definition \ref{def:circle_and_arc}).
Then there exists a number $a \in (0,\rbar)$ such that, if $\kp_a$ is any chord of $C$ of length $a$ (as in Definition \ref{def:chord}), then the penetration of $X_0$ into $C$ through $\kp$ (as in Definition \ref{def:pass_and_penetrate_circles_and_arcs}) is no larger than $b$.
\end{customlem}

\begin{proof}
We begin with a sketch of the proof to build intuition.
This proof proceeds much as for Lemma \ref{lem:find_r} to find the point spacing $r$.
To prove that $a$ exists, we pass $X_0$ through a line segment $\irbar$ of length $\rbar$, up to a penetration distance of $b$.
Then, we translate the circle $C$ of radius $b$ such that $X_0$ is penetrating into this circle.
From the intersection of the circle with $X_0$, we find a chord $\kp_a$.
The length of $\kp_a$ depends on the transformation family $\{R_t\}$ used to pass $X_0$ through $\irbar$.
We search across all such transformation families to find the smallest $\kp_a$, the length of which is the desired arc point spacing $a$.

Now we proceed rigorously.
Recall by Assumption \ref{ass:X0_cpt_cvx} that $X_0$ is compact, convex, and has nonzero volume, and by Assumption \ref{ass:X_subset_R2_contains_origin_xy-axes} that $X \subset \R^2$ contains the origin of $\R^2$.
Let $\irbar \subset (X\setminus X_0)$ be a line segment of length $\rbar$ (as in Definition \ref{def:line_segment_I}).
As in Lemma \ref{lem:max_penetration_bbar} (used to find $\bbar$), suppose without loss of generality that $\irbar$ is oriented vertically, with its lower endpoint fixed at the origin, so $\irbar = \{0\}\times[0,\rbar]$.
Suppose without loss of generality that $X_0$ lies fully in the left half-plane, which is $\pirbar$, the half-plane defined by $\irbar$ (as in Definition \ref{def:halfplane_P_I}).
This can be done without loss of generality because it only requires rotation and translation of $X_0$ and $\irbar$, which can be undone.

Let $\mc{R}_\rbar$ be the set of all transformation families that attempt to pass $X_0$ through $\irbar$ (as in Definition \ref{def:pass_through}).
By Lemma \ref{lem:find_r}, there exist $\{R_t\} \in \mc{R}_\rbar$ for which the penetration distance of $X_0$ through $\irbar$ is equal to $b$.
Such $\{R_t\}$ are feasible solutions to \eqref{prog:find_r}.
Let $\ell_b = \{b\}\times\R$ be the vertical line at $x = b$.
Let $\{R_t\}$ be a feasible solution to \eqref{prog:find_r}.
Then, there exists at least one point in $R_TX_0$ that lies on $\ell_b$.
Let $\kp_b$ denote the set $R_T X_0 \cap \ell_b$, which is a chord of $R_T X_0$  \citep[Theorem 1]{width_of_a_chair}.
Note that $\kp_b$ may have length $0$, i.e. it is a point, and that $\kp_b$ is compact, because it is the intersection of two compact sets \citep[Theorem 17.1 and Theorem 26.2]{Munkres2000}.
Place the circle $C$ (with radius $b$) tangent to the $y$-axis, and centered at any point $p_b \in \kp_b$.
Let $C_b$ denote this translation of $C$.
Recall the function $\delh$ from \eqref{eq:delh_find_horz_penetration_dist}, which returns the right-most point of a set in $\R^2$.
With these objects, we pose following program to find the shortest chord $\kp_a$ for which $X_0$ penetrates into $C_b$ through $\kp_a$ by the distance $b$:
\begin{flalign}\label{prog:find_a}
    a\quad =\quad\inf_{\{R_t\}, p_b, p_1, p_2}\quad &\norm{p_1 - p_2}_2\\
    \mathrm{s.t.}\hspace{0.7cm} &\{R_t\} \in \mc{R}_\rbar \label{cons:find_a_with_feas_to_find_r} \\
    &\delh(R_TX_0) = b,\label{cons:delh_RTX0_is_b}\\
    &p_b \in \ell_b \cap R_T X_0, \label{cons:C_centered_on_bdRX_0} \\
    &p_1, p_2 \in C_b \cap \bd R_T X_0, \label{cons:endpoints_of_arc_for_a}
\end{flalign}
where $p_1$ and $p_2$ are the endpoints of $\kp_a$ (as in Definition \ref{def:circle_and_arc}).

We now construct a feasible solution to \eqref{prog:find_a}.
Let $\{R_t\}$ be a feasible solution to \eqref{prog:find_r}, so $\delh(R_TX_0) = b$, which satisfies \eqref{cons:find_a_with_feas_to_find_r} and \eqref{cons:delh_RTX0_is_b}.
Since $\ell_b \cap R_TX_0$ is nonempty as discussed above, we can pick $p_b$ to satisfy \eqref{cons:C_centered_on_bdRX_0}, and create $C_b$ centered at $p_b$.
Then $A_b = C_b \cap R_T X_0$ is an arc of radius $b$ (as in Definition \ref{def:circle_and_arc}); we justify that $A_b$ is indeed an arc in the next paragraph.
Let $p_1$ and $p_2$ be the endpoints (as in Definition \ref{def:circle_and_arc}) of $A_b$, satisfying \eqref{cons:endpoints_of_arc_for_a}.
Let $\kp_a$ be the chord that lies between the endpoints of $A_b$.
Then, $R_TX_0$ penetrates into $C_b$ through $\kp_a$ by the distance $b$ (as in Definition \ref{def:pass_and_penetrate_circles_and_arcs}).
This is illustrated in Figure \ref{subfig:find_a}.

Now we justify that $A_b$ is indeed an arc of radius $b$ with two endpoints.
First, notice that the intersection $C_b \cap R_T X_0$ is nonempty for two reasons.
One, because $C_b$ is centered on a point in $\bd R_T X_0$; and two, because $\delh(R_T X_0) = b$, which implies that there exists at least one line segment inside $R_TX_0$ that is in the open right half-plane and of length $b$.
Furthermore, because $R_TX_0$ has nonzero volume (Assumption \ref{ass:X0_cpt_cvx}), $A_b$ has exactly two endpoints, which lie on the boundary of $R_TX_0$.
Otherwise, there would exist a pair of points in $R_TX_0$ that are connected by a line segment that does not lie fully in $R_\tau X_0$, which would violate the convexity of $R_T X_0$.

Now, we check that $a \in (0,\rbar)$.
Let $\{R_t\}, p_b, p_1, p_2$ be a feasible solution to \eqref{prog:find_a}.
By construction, $X_0$ penetrates into $C_b$ through $\kp_a$ by $b < \bbar$.
Then the length $a$ of $\kp_a$ is less than $\rbar$, otherwise, by Lemma \ref{lem:max_penetration_bbar}, $X_0$ could penetrate into $C_b$ through $\kp_a$ by no more than $b$.
Now suppose that $a = 0$.
Then, by Lemma \ref{lem:parallel_chords_are_shorter}, there can be no nonempty chords of $R_TX_0$ between $\kp_a$ and the center of the circle $p_b$, but then $X_0$ does not penetrate into $C_b$ through $\kp_a$.
\end{proof}

\noindent Suppose $r$ is the point spacing found with Lemma \ref{lem:find_r}.
Then we can prove that $a \in (0,r)$ using the same techniques from the proof of Lemma \ref{lem:find_a}, by replacing $\irbar$ with $I_r$, a line segment of length $r$.

\subsection{Proof from Section \ref{subsec:proving_X_p_works}}

\begin{customthm}{\ref{thm:X_p}}
Let $X_0$ be the robot's footprint at time 0 as in Definition \ref{def:X_and_X_0}, with width $\rbar$ as in Definition \ref{def:rbar}.
Let $X\obs \subset (X\setminus X_0)$ be a set of obstacles as in Definition \ref{def:obs}.
Suppose that the maximum penetration depth $\bbar$ is found for $X_0$ as in Lemma \ref{lem:max_penetration_bbar}.
Pick $b \in (0,\bbar)$, and find the point spacing $r$ with \eqref{prog:find_r} and the arc point spacing $a$ with \eqref{prog:find_a}.
Construct the discretized obstacle $X_p$ in Algorithm \ref{alg:construct_X_p}.
Then, the set of all unsafe trajectory parameters corresponding to $X\obs$ is a subset of the trajectory parameters corresponding to $X_p$, i.e. $\pi_K(X_p) \supseteq \pi_K(X\obs)$.
\end{customthm}

\begin{proof}
We show that any trajectory parameter outside of those corresponding to $X_p$ cannot cause any point on the robot to enter the set $X\obs$ at any time $t \in [0,T]$.
If no $q \in \pi_K(X_p)^C$ can cause a collision, then $\pi_K(X_p)^C \subseteq K\safe$, which implies that $\pi_K(X_p) \supseteq \pi_K(X\obs)$.
First, recall that the robot's high-fidelity model in \eqref{eq:high-fidelity_model} produces continuous trajectories (by Assumption \ref{ass:dyn_are_lipschitz_cont}) of the robot's footprint in $\R^2$, so we can represent the motion of the robot over the time horizon $[0,T]$ using a transformation family $\{R_t\}$ as in Definition \ref{def:R_t_translation_and_rotation_family}.

Suppose $k \in \pi_K(X_p)^C$ is arbitrary, and the robot begins at an arbitrary $\z\hio \in Z\hio$.
Let $\{R_t\}$ be the transformation family that describes the robot's motion when tracking the trajectory parameterized by $k$.
Consider a pair $(p_1,p_2)$ of adjacent points (as in Definition \ref{def:adjacent_points}) of $X_p$.
Recall that the function \texttt{sample} returns the endpoints of any line segment (as in Definition \ref{def:line_segment_I}) or arc (as in Definition \ref{def:circle_and_arc}), in addition to points spaced along the line segment or arc if necessary.
Therefore, by Algorithm \ref{alg:construct_X_p}, $(p_1,p_2)$ is either from a line segment or from an arc of $\bd X\obs^b$.
Recall that, by Lemma \ref{lem:buffered_obs_arcs_and_lines}, $\bd X\obs^b$ consists exclusively of line segments and arcs.
By construction, if $p_1$ is on a line segment (resp. arc), then $p_2$ is within the distance $r$ (resp. $a$) along the line segment; this also holds if either point is an endpoint of a line segment or arc.

Consider the case when $(p_1,p_2)$ is from an arbitrary line segment $L_i$ of $\bd X\obs^b$.
By \eqref{eq:X_obs_b_buffered_obstacle}, the distance from $X\obs$ to any point on $L_i$ is $b$.
By Lemma \ref{lem:when_param_cant_cause_crash}, when tracking the trajectory parameterized by $k$, the robot can approach infinitesimally close to $p_1$ and/or $p_2$, but cannot contain them, for any $t \in [0,T]$.
So, by Lemma \ref{lem:find_r} and continuity of the robot's trajectory, no point in the robot can penetrate farther than $b$ through $L_i$.

Now consider when $(p_1,p_2)$ is from an arbitrary arc $A_i$ of $\bd X\obs^b$.
By Equation \eqref{eq:X_obs_b_buffered_obstacle}, the distance from $X\obs$ to any point on $A_i$ is $b$.
Each such arc is a section of a circle of radius $b$.
By Lemma \ref{lem:when_param_cant_cause_crash}, the robot cannot contain $p_1$ or $p_2$ for any $t \in [0,T]$.
So, by Lemma \ref{lem:find_a} and continuity of the robot's trajectory, the robot cannot pass farther than the distance $b$ into $A_i$ through the chord of $A_i$ with endpoints $p_1$ and $p_2$.

Since $L_i$ and $A_i$ were arbitrary, there does not exist any $t \in [0,T]$ for which $R_tX_0 \cap X\obs$ is nonempty.
In other words, the robot does not collide with $X\obs$ by passing through any line segment or arc of $\bd X\obs^b$.
Since $k$ was arbitrary, we conclude that there does not exist any $k \in \pi_K(X_p)^C$ for which the robot collides with any obstacle.
Therefore, $\pi_K(X_p)^C \subseteq K\safe$.
\end{proof}

\subsection{Finding the Maximum Penetration Distance}
To conclude this appendix, we present a geometric method for finding the maximum penetration distance $\bbar$ (Lemma \ref{lem:max_penetration_bbar}) for an arbitrary robot footprint $X_0$ with width $\rbar$ (Definition \ref{def:rbar}).

\begin{lem}\label{lem:rotate_then_translate_to_penetrate}
Let $X_0$ be the robot's footprint at time $0$ (as in Definition \ref{def:X_and_X_0}) with width $\rbar$ (as in Definition \ref{def:rbar}).
Let $I_{\rbar} \subset (X\setminus X_0)$ be a line segment (Definition \ref{def:line_segment_I}) of length ${\rbar}$.
Let $\bbar$ denote the maximum penetration distance of $X_0$ through $I_\rbar$ (as in Lemma \ref{lem:max_penetration_bbar}).
Then, there exists at least one angle of rotation $\ta \in [0,2\pi)$ for which, if $X_0$ is rotated by $\ta$, then passed through $I_{\rbar}$ by translation only (where passing through is as in Definition \ref{def:pass_through}), $X_0$ penetrates $I_{\rbar}$ by $\bbar$.
\end{lem}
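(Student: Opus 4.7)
The plan is to reduce the infinite-dimensional optimization over transformation families in \eqref{prog:find_bbar_max_penetration} to a finite-dimensional optimization over the rotation angle alone. For each $\theta \in [0, 2\pi)$, let $\mathcal{R}_\rbar^\theta \subseteq \mathcal{R}_\rbar$ denote the subclass of transformation families with rotation fixed at $\theta$ throughout (so only the translation vector varies continuously with $t$), and define
\[
b(\theta) \ = \ \sup_{\{R_t\} \in \mathcal{R}_\rbar^\theta}\ \delh(R_T X_0).
\]
By an argument directly analogous to that of Lemma \ref{lem:max_penetration_bbar} applied to the constant-rotation subfamily, this supremum is attained and $b(\theta) \in [0,\rbar]$.

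Next, I would show that $b$ is upper semicontinuous on $[0,2\pi)$. Since $X_0$ is compact, the map $\theta \mapsto X_0^\theta$ is continuous in the Hausdorff distance, so the set of ``forbidden'' translations (those $s \in \R^2$ for which $X_0^\theta + s$ contains an endpoint of $\irbar$) varies continuously in $\theta$, and a standard envelope argument then yields upper semicontinuity of $b$. Extending $b$ by $2\pi$-periodicity to the compact set $[0, 2\pi]$, it attains its maximum $b^* := b(\theta^*)$ at some $\theta^* \in [0,2\pi)$. Since $\mathcal{R}_\rbar^{\theta^*} \subseteq \mathcal{R}_\rbar$, the inequality $b^* \leq \bbar$ is immediate.

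The main obstacle will be the reverse inequality $b^* \geq \bbar$. I would proceed by taking any $\{R_t\} \in \mathcal{R}_\rbar$ with penetration $\delh(R_TX_0) = b$ arbitrarily close to $\bbar$, and letting $\theta_T$ denote its final orientation, so $R_T X_0 = X_0^{\theta_T} + s_T$ for some $s_T \in \R^2$. It suffices to exhibit a pure-translation family in $\mathcal{R}_\rbar^{\theta_T}$ that reaches $s_T$ while obeying all passing-through conditions of Definition \ref{def:pass_through}. The key observation is that the set of forbidden translations for angle $\theta_T$ — those $s$ for which $X_0^{\theta_T} + s$ contains $(0,0)$ or $(0,\rbar)$ — is the union of the two compact convex sets $-X_0^{\theta_T}$ and $(0,\rbar) - X_0^{\theta_T}$ in translation space, and the complement of two bounded convex sets in $\R^2$ is path-connected. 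Hence a continuous path $s: [0,T] \to \R^2$ can be constructed from a far-left starting translation $s_0$ (for which $X_0^{\theta_T} + s_0 \subset \mathrm{int}(P_\irbar)$) to $s_T$ that avoids the forbidden set for all $t \in [0,T)$; routing the path so that $X_0^{\theta_T} + s(t)$ crosses the relative interior of $\irbar$ on some time interval $[t_0,t_1] \subset (0,T)$ fulfills the remaining conditions of Definition \ref{def:pass_through}. This yields $b(\theta_T) \geq b$, hence $b^* \geq b$; letting $b \to \bbar$ gives $b^* \geq \bbar$, so $b(\theta^*) = \bbar$, and $\theta^*$ is the claimed rotation angle.
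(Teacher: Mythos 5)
Your framing—reduce to the one-parameter family $b(\theta)$ of pure-translation penetration depths, then sandwich $\sup_\theta b(\theta)$ between $\bbar$ from both sides—is a legitimate reorganization of the argument, and the reduction ``it suffices to exhibit a pure-translation family in $\mathcal{R}_\rbar^{\theta_T}$ reaching $s_T$'' is exactly the right target (it is also what the paper establishes, working backwards from an optimal family). However, the step that is supposed to produce that family has a genuine gap. Path-connectedness of the complement of the forbidden set $F = (-X_0^{\theta_T}) \cup \big((0,\rbar) - X_0^{\theta_T}\big)$ in translation space only yields a path along which the footprint never contains an endpoint of $I_\rbar$. Definition \ref{def:pass_through} demands much more: the footprint must remain in $P_{I_\rbar}$ until it first meets $I_\rbar \setminus E_{I_\rbar}$, i.e., it is not allowed to ``go around'' the segment by crossing the line $\ell_{I_\rbar}$ above or below $I_\rbar$. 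An arbitrary path in $F^C$ can do exactly that—translate the footprint far above the segment and then into the right half-plane, never touching either endpoint—and such paths reach translations of arbitrarily large penetration. So if path-connectedness of $F^C$ were sufficient, every $b(\theta)$ would be unbounded, which is absurd. The clause ``routing the path so that $X_0^{\theta_T}+s(t)$ crosses the relative interior of $I_\rbar$'' is therefore carrying the entire weight of the proof and is not justified: you must show $s_T$ lies in the same component as $s_0$ of the strictly smaller, non-convex set of translations consistent with a legitimate passing-through motion.

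The missing ingredient is precisely Lemma \ref{lem:parallel_chords_are_shorter}: every chord of $R_TX_0$ parallel to $I_\rbar$ and lying in $P_{I_\rbar}^C$ is strictly shorter than $\rbar$. This is what guarantees that the final configuration can be slid straight back out of $P_{I_\rbar}^C$ by translation alone without meeting $E_{I_\rbar}$ (if the straight retraction ever contained a chord of length $\geq \rbar$ on $\ell_{I_\rbar}$, the lemma would be violated); reversing that straight translation gives the required family in $\mathcal{R}_\rbar^{\theta_T}$ and closes the inequality $b(\theta_T) \geq b$. A secondary, lesser issue: your upper-semicontinuity claim for $b(\cdot)$ is asserted via a ``standard envelope argument'' rather than proved; note that it is only needed if the supremum $\bbar$ in \eqref{prog:find_bbar_max_penetration} is not attained by some family, since otherwise the final angle of an optimal family already serves as $\theta^*$.
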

\begin{proof}
Let $\pirbar$ be the closed half-plane defined by $\irbar$ as in Definition \ref{def:halfplane_P_I}.
Let $\{R_t\}$ be a transformation family as in Definition \ref{def:R_t_translation_and_rotation_family} such that the penetration of $X_0$ into $\irbar$ (as in Definition \ref{def:penetrate}) is the number $\bbar$; such an $\{R_t\}$ exists by Lemma \ref{lem:max_penetration_bbar}.
As per Definition \ref{def:R_t_translation_and_rotation_family}, every $R_t \in \{R_t\}$ has an associated translation $s_t$ and rotation $\ta_t$, both of which are continuous in $t$.
By \citep[Theorem 1]{width_of_a_chair}, the line segment $R_TX_0 \cap \irbar$ is a chord of $R_TX_0$ (as in Definition \ref{def:chord}).
By Lemma \ref{lem:parallel_chords_are_shorter}, every chord of $R_TX_0$ that is parallel to $\irbar$ and lies in $\pirbar^C$ is strictly shorter than $\irbar$.
Therefore, one can translate $R_TX_0$ ``out'' of $\pirbar^C$ (i.e., ``undo'' passing $X_0$ through $\irbar$), while leaving $R_TX_0$ rotated at the angle $\ta_T$ associated with $R_T$.
More precisely, there exists a family $\{s_t\}_{t \geq T}$ of translations such that the set $\{p + s_t~|~p \in R_TX_0\}$ does not intersect the endpoints of $\irbar$ for all $t > T$; otherwise, there exists a chord of $R_TX_0$ that is parallel to $\irbar$ and longer than $\irbar$ that lies in $\pirbar^C$, which contradicts Lemma \ref{lem:parallel_chords_are_shorter} and the fact that the family $\{R_t\}$ passes $X_0$ through $\irbar$.
So, we can cause $X_0$ to penetrate through $\irbar$ by the distance $\bbar$ by first rotating it to an angle $\ta_T$, then passing it through by translation only.
\end{proof}

\noindent Note that this lemma starts with the penetration distance of $\bbar$ and works ``backwards.''
However, we can find $\ta_T$ in a ``forward'' direction, which we now discuss informally.
Assume the premises of Lemma \ref{lem:rotate_then_translate_to_penetrate}, and recall that $X_0$ cannot pass fully through $\irbar$.
Rotate $X_0$ by an angle $\ta \in [0,2\pi)$.
Next, pass $X_0$ through $\irbar$ ``as far as possible'' into $\pirbar^C$ by translation only; this means that both endpoints of $\irbar$ lie in the boundary of the translated $X_0$.
Consider the following to see why the translated $X_0$ must contain both endpoints to be translated ``as far as possible.''
If $\bd X_0$ does not contain both endpoints of $\irbar$ after translation, then there are two possibilities.
In the first case, the translated $X_0$ contains neither endpoint, so it can be translated farther into $\pirbar^C$.
In the second case, the translated $X_0$ contains one endpoint, in which case it can be translated a small distance towards the other endpoint so that it no longer contains both endpoints, which means that we are backin the first case.
This procedure of rotation-then-translation need only be checked for $\theta \in [0,2\pi)$, and every $\theta$ is associated with a finite penetration distance, so at least one $\theta$ produces the maximum penetration distance $\bbar$.
\end{appendices}

\end{document}